\def\includehome{./}
\def\fighome{./figures}
\newcommand{\tabincell}[2]{\begin{tabular}{@{}#1@{}}#2\end{tabular}}
\pgfplotsset{compat=newest}
\theoremstyle{definition}
\def\ourNNLong{{CP Layer}\xspace}
\def\ourNN{{CPL}\xspace}
\def\ourNNlong{{CP layers}\xspace}
\def\ourcpnn{{CP}}
\def\ourNNLong{{CP Layer}\xspace}
\def\ourNN{{CPL}\xspace}
\def\ourNNlong{{CP layers}\xspace}
\def\ourTFC{{Tensorial-FC}\xspace}
\newcommand*{\mytensor}[1]{\mathcal{#1}}
\newcommand*{\mymatrix}[1]{\bm{#1}}
\newcommand*{\myvector}[1]{\bm{#1}}
\newcommand{\acc}{\mu}
\newcommand\abs[1]{\lvert#1\rvert}
\newcommand\absbig[1]{\big|#1\big|}
\newcommand\relu[1]{\textsf{ReLU}\left(#1\right)}
\newcommand\reluname{\textsf{ReLU}\xspace}
\newcommand\norm[1]{\left\lVert#1\right\rVert}
\newcommand\nucnorm[1]{\left\lVert#1\right\rVert_*}
\newcommand\fronorm[1]{\left\lVert#1\right\rVert_{\textsf{F}}}
\def\orignn{\mathbb{M}}
\def\tnn{\mathbb{M}}
\def\nnweight{\mytensor{M}}
\def\originput{x}
\def\tinput{X}
\def\origOut{y}
\def\tOut{Y}
\def\origcnn{\mathbb{M}}
\def\cnnweight{\mytensor{M}}
\newcommand{\ourfft}[2]{\mathcal{F}^{#1}_{#2}}
\def\Rbb{\mathbb{R}}
\def\Cbb{\mathbb{C}}
\def\tha{{\mbox{\tiny th}}}
\def\beq{\begin{equation}}
\def\eeq{\end{equation}\noindent}
\newcommand{\bp}{\begin{psfrags}}
\newcommand{\ep}{\end{psfrags}}
\newcommand{\bc}{\begin{center}}
\newcommand{\ec}{\end{center}}
\newcommand{\Conv}{\mathop{\scalebox{1.5}{\raisebox{-0.2ex}{$\ast$}}}}%
\DeclarePairedDelimiterX{\inp}[2]{\langle}{\rangle}{#1, #2}
\DeclareMathOperator{\block}{blk}
\DeclareMathOperator{\Tr}{Tr}
\newtheorem{theorem}{Theorem}[section]
\newtheorem{lemma}[theorem]{Lemma}
\newtheorem{definition}[theorem]{Definition}
\newtheorem{assumption}[theorem]{Assumption}
\newtheorem{proposition}[theorem]{Proposition}
\newtheorem{fact}[theorem]{Fact}
\newcommand\rf{\rho} 
\newcommand\tf{t} 
\newcommand\nb{\xi} 
\newcommand\lc{\zeta} 
\newcommand\cfr{\theta} 
\newcommand\tflong{tensorization factor\xspace} 
\newcommand\tfshort{TF\xspace} 
\newcommand\nblong{tensor noise bound\xspace} 
\newcommand\nbshort{TNB\xspace} 
\newcommand\lcshort{LC\xspace} 
\newcommand\fflong{Fourier factors\xspace} 
\def\FBRC{\textsf{FBRC}\xspace} 
\def\FBR{\textsf{FBR}\xspace} 
\def\CNNPROJECT{\textsf{CNN-Project}\xspace}
\def\TNNPROJECT{\textsf{TNN-Project}\xspace}
\newcommand{\mytitle}{Understanding Generalization in Deep Learning via Tensor Methods}
\newcommand\uoft{Graduate School of Information Science and Technology, The University of Tokyo}
\newcommand\riken{Center for Advanced Intelligence Project, RIKEN}
\newcommand\umdece{Department of Electrical and Computer Engineering, University of Maryland, College Park}
\newcommand\umdcs{Department of Computer Science, University of Maryland, College Park}
\begin{document}

%

%

\twocolumn[

\aistatstitle{\mytitle}

\aistatsauthor{
    Jingling Li${ }^{1,3}$
    \And  
    Yanchao Sun${ }^{1}$
    \And  
    Jiahao Su${ }^{4}$
    \And 
    Taiji Suzuki${ }^{2,3}$
    \And
    Furong Huang${ }^{1}$ 
}

\vspace{1em}

\aistatsaddress{
    ${ }^{1}$\umdcs \\
    ${ }^{2}$\uoft \\
    ${ }^{3}$\riken \\
    ${ }^{4}$\umdece \\
} 
\vspace{-1.2em}
]
\begin{abstract}
Deep neural networks generalize well on unseen data though the number of parameters often far exceeds the number of training examples. 
Recently proposed complexity measures have provided insights to understanding the generalizability in neural networks from perspectives of PAC-Bayes, robustness, overparametrization, compression and so on. 
In this work, we advance the understanding of the relations between the network's architecture and its generalizability from the compression perspective.
Using tensor analysis, we propose a series of intuitive, data-dependent and easily-measurable properties that tightly characterize the compressibility and generalizability of neural networks; 
thus, in practice, our generalization bound outperforms the previous compression-based ones, especially for neural networks using tensors as their weight kernels (e.g. CNNs). 
Moreover, these intuitive measurements provide further insights into designing neural network architectures with properties favorable for better/guaranteed generalizability. 
Our experimental results demonstrate that through the proposed measurable properties, our generalization error bound matches the trend of the test error well.
Our theoretical analysis further provides justifications for the empirical success and limitations of some widely-used tensor-based compression approaches. 
We also discover the improvements to the compressibility and robustness of current neural networks when incorporating tensor operations via our proposed layer-wise structure.
\end{abstract}

\section{Introduction}
\label{sec:intro}
Deep neural networks recently have made major breakthroughs in solving many difficult learning problems, especially in image classification~\citep{simonyan2014very, szegedy2015going, he2016deep, zagoruyko2016wide} and object recognition~\citep{krizhevsky2012imagenet,sermanet2013overfeat,simonyan2014very,zeiler2014visualizing}. 
The success of deep neural networks depends on the high expressive power and the ability to generalize. 
The high expressive power has been demonstrated empirically~\citep{he2016deep, zagoruyko2016wide} and theoretically~\citep{hornik1989multilayer, mhaskar2016deep}. 
Yet, fundamental questions on why deep neural networks generalize and what enables their generalizability remain unsettled.

A recent work by~\cite{arora2018stronger} characterizes the generalizability of a neural network from a compression perspective ---
the capacity of the network is characterized through its compressed version. 
The compression algorithm in~\cite{arora2018stronger} is based on random projection: each weight matrix of the compressed network are represented by a linear combination of basis matrices with entries i.i.d. sampled from $\pm 1$. 
The effective number of parameters in the weight matrix is the number of coefficients in this linear combination obtained via projection --- the inner product between the original weight matrix and these basis matrices. 
Though the idea of using compression in deriving the generalization bounds is novel, 
the compression scheme in~\cite{arora2018stronger} could be made more practical since
(1) the cost of forwarding pass in the compressed network still remains the same as the cost in the original one, even though the effective number of parameters to represent the original weight matrices decreases; 
(2) storing these random projection matrices could require more spaces than storing the original set of parameters. 
We propose a new theoretical analysis based on a more practical, well-developed, and principled compression scheme using tensor methods. 
Besides, we use tensor analysis to derive a much tighter bound for the layer-wise error propagation by exploiting additional structures in the weight tensors of neural networks, 
which as a result significantly tightens the generalization error bound in~\cite{arora2018stronger}.

Our approach aims to characterize the network's compressibility by measuring the low-rankness of the weight kernels. 
Existing compression methods in~\citep{jaderberg2014speeding, denton2014exploiting, lebedev2014speeding, kim2015compression, garipov2016ultimate, wang2018wide, su2019tensorial} implement low-rank approximations by performing matrix/tensor decomposition on weight matrices/kernels of well-trained models.  
However, the layers of SOTA networks, such as VGG~\citep{simonyan2014very} and WRN~\citep{zagoruyko2016wide}, are not necessarily low-rank: 
we apply CP-tensor decompositions~\citep{kolda2009tensor, anandkumar2014tensor,huang2015online,li2018guaranteed} to the weight tensors of well-trained VGG-16 and WRN-28-10, and \emph{the amplitudes of the components from the CP decomposition} (a.k.a \emph{\textbf{CP spectrum}}) are demonstrated by the brown curves in Figure~\ref{fig:eig_compare}, which indicate that the layers of these pre-trained networks are not low-rank. 
Therefore a straightforward compression of the network cannot be easily achieved and computationally expensive fine tuning is often needed.

\begin{figure}[!htbp]
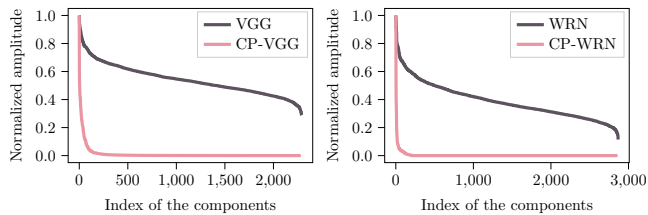

\centering
\begin{subfigure}[b]{0.49 \columnwidth}
	\centering
	\input{\fighome/VGG_conv13_eigenvalue_compare.tex}	
	\vspace{-0.5em}
	\caption{VGG16 (layer 13)}
	\label{fig:vgg_eig}
\end{subfigure}
\hfill
\begin{subfigure}[b]{0.49 \columnwidth}
	\centering
	\input{\fighome/WRN_conv27_eigenvalue_compare.tex}
	\vspace{-0.5em}
	\caption{WRN-28-10 (layer 28)}
	\label{fig:wrn_eig}
\end{subfigure}
\vspace{-1.5em}
\caption{CP spectrum comparison (CP-VGG and CP-WRN are neural networks with \ourNNlong).}
\vspace{-1em}
\label{fig:eig_compare}
\end{figure}

To overcome this limitation, we propose a layer-wise structure design, \emph{\ourNNLong} (\ourNN), by incorporating the variants of CP decompositions in~\citep{jaderberg2014speeding, kossaifi2017tensor2, howard2017mobilenets}. \ourNN re-parametrizes the weight tensors such that a \emph{Polyadic form} (CP form)~\citep{kolda2009tensor} can be easily learned in an end-to-end fashion. 

We demonstrate that empirically, \ourNN allows the network to learn a low-rank structure more easily, and thus helps with compression. 
For example, from the pink curves in Figure~\ref{fig:eig_compare}, we see that neural networks with \ourNN have a spiky CP spectrum, which is an indication of low-rankness. 
We rigorously prove that this low-rankness in return leads to a tighter generalization bound. 
Moreover, we are the first to provide theoretical guarantees for the usage of CP decomposition in deep neural networks in terms of compressibility and generalizability. 

\vspace{-0.5em}
\begin{definition}[\textbf{Proposed Architecture Layer}] \label{def:cpl}
A \ourNNLong (\ourNN) with width $R$ consists of $R$ set of parameters $\Big\{ \lambda^{(r)}, \big\{\myvector{v}^{(r)}_j\big\}_{j=1}^{N}  \Big\}_{r=1}^R$ where 
  $\myvector{v}^{(r)}_j$ is a vector in $\Rbb^{d_j}$ with unit norm. 
  The weight kernel of this \ourNN is a $N$-order tensor defined as $\mytensor{K} := \sum_{r=1}^{R} \lambda^{(r)} \myvector{v}^{(r)}_1 \otimes \cdots \otimes \myvector{v}^{(r)}_N$, where $\otimes$ denotes the vector outer-product (tensor product) defined in Appendix \ref{def:tensor_product})~\footnote{The $(i_1,i_2,\ldots,i_N)^{\tha}$ element of the weight kernel is $\sum_{r = 1}^{R} \lambda^{(r)} \myvector{v}^{(r)}_1(i_1) \times \cdots \times \myvector{v}^{(r)}_N(i_N)$.}. Note that $\mytensor{K} \in \Rbb^{d_1 \times \cdots \times d_N}$.
\end{definition} 
\vspace{-0.5em}
\textit{Remark.} \ourNN allows for flexible choices of the structures since the number of components $R$ is a tunable hyper-parameter that controls the number of parameters in \ourNN. 
The CP spectrum of this layer is denoted by $\{\lambda^{(r)}\}_{r=1}^R$ in a descending order.  
The size of the weight kernel is $d_0 \times d_1 \times \cdots \times d_N$, while the number of parameters in \ourNN is $(d_0+d_1+\cdots+d_N+1) \times R$. 

In contrast with existing works which apply CP decomposition to each layer of a reference network, no CP decomposition is needed since the components are explicitly stored as model parameters so that they can be learned from scratch via back-propagation. 
Moreover, compression in \ourNNlong is natural -- simply picking the top $\hat{R}$ components to retain and pruning out the rest of them.
Thus, the compression procedure using \ourNN does not require any costly fine-tuning while existing works on tensor-based compression may use hundreds of epochs for fine-tuning.

We further propose a series of simple, intuitive, data-dependent and easily-measurable \emph{properties} to measure the low-rankness in current neural networks. These properties not only guide the selection of the number of components to generate a good compression, but also tighten the bound of the layer-wise error propagation via tensor analysis. 
The proposed properties 
\begin{itemize}[leftmargin=*]
\vspace{-0.5em}
\item characterize the compressibility of the neural network,
 i.e., how much the original network can be compressed without compromising the performance on a training dataset more than certain range.
\vspace{-0.5em}
\item characterize the generalizability of the compressed network, i.e. tell if a neural network is trained using normal data or corrupted data.
\end{itemize}
\vspace{-0.5em}
In our theoretical analysis, we derive generalization error bounds for neural networks with \ourNNlong, which take both the input distribution and the compressibility of the network into account.  
We present a rigorous proof showing the connection of our proposed properties to the generalization error of a network. 
We will see in experiment section that our proposed bound is very effective at predicting the generalization error.
 
Notice that, in this paper, the Polyadic form is chosen simply as a demonstration on how tensor methods could be used to improve the analysis of generalization bounds of deep neural networks. Therefore, follow-ups works could potentially analyze the effects of other tensor decomposition methods using our theoretical framework.
 
\textbf{Summary of Contributions}
\begin{enumerate}[leftmargin=*,itemsep=0em,topsep=0em]
	\item \textbf{Better generalization bound of practical use.} 
	We verify that our generalization bounds can be used to guide the training of neural networks, since the calculated bound matches the trend of the test error on unseen data during the training process as shown in Figure~\ref{fig:bound}. 
Moreover, we demonstrate that our generalization bound is in practice tighter than the bound proposed by~\citep{arora2018stronger} as shown in Figure~\ref{fig:bound_compare} and Table~\ref{tbl:effective}. 
Notice that the generalization bound in~\citep{arora2018stronger} is already orders of magnitude better than previous norm-based or compression based bounds.
	\item \textbf{Intuitive measurements of compressibility and generalizability.} 
	We propose a set of properties to characterize the low-rankness in the weight tensors of neural networks in Section~\ref{sec:cnn_properties}. 
	Our theoretical analysis connects the measured low-rankness with the generalizability of the model, and such connections are verified in Figure~\ref{fig:properties}. 
	\item \textbf{First theoretical guarantee} on the generalizability and robustness for neural network architectures that allow fast and real time predictions on devices with limited memory (e.g. the architecture designs proposed in~\citep{jaderberg2014speeding, kossaifi2017tensor2, howard2017mobilenets}, which uses variants of the Polyadic form).

	\vspace{-0.25em}
	\item \textbf{Practical improvements.} 
	We demonstrate that pruning out the smaller components of CP decomposition in \ourNNlong roughly preserves the test performance without computationally expensive fine tuning (see Section~\ref{sec:exp_cpl_compression} and Table~\ref{tbl:compression}) as our proposed layer-wise structure is easily compressible. 
	Moreover, we discover that incorporating tensor operations via \ourNN reduces the generalization error of some well-known neural network architectures, and further improves the robustness of SOTA methods for learning under noisy labels (see Table~\ref{tbl:memorization}, Table~\ref{tbl:mnist}, Figure~\ref{fig:pf_sym_cr}, and Figure~\ref{fig:learning curve}). 
\end{enumerate}

\section{Related Works}
\vspace{-0.25em}
\label{sec:other_related}
\textbf{Existing Metrics to Characterizing Generalization.} Classical and recent works have analyzed the generalizability of neural networks 
from different perspective such as VC-dimension~\citep{bartlett1999almost, harvey2017nearly}, sharpness of the solution~\citep{keskar2016large}, robustness of the algorithm~\citep{xu2012robustness}, stability and robustness of the model~\citep{Hardt2016, Kuzborskij2018, Gonen2017, sokolic2016generalization} and over-parameterization~\citep{neyshabur2018towards, du2018power}, 
or using various approaches such as PAC-Bayes theory~\citep{mcallester1999some, mcallester1999pac, langford2002not, neyshabur2015norm, neyshabur2017exploring, dziugaite2017computing, golowich2018}, norm-based analysis~\citep{bartlett2002rademacher, neyshabur2015path, kawaguchi2017generalization, golowich2017size}, compression based approach~\citep{arora2018stronger}, and combinations of the above approaches~\citep{neyshabur2017exploring, neyshabur2017pac, bartlett2017spectrally, zhou2018non} (see~\citep{jakubovitz2018generalization} for a complete survey). 
While these works provide deep theoretical insights to the understanding of the generalizability in neural networks, they did not provide practical techniques to improve generalization.  

For the progress on non-vacuous generalization bounds, \cite{dziugaite2017computing} use non-convex optimization and PAC-Bayesian analysis to obtain a non-vacuous sample bound on MNIST, and ~\cite{zhou2018non} use a PAC-Bayesian compression approach to obtain non-vacuous generalization bounds on both MNIST and ImageNet via smart choices of the prior. 
While being creative, both bounds are less intuitive and provide little insight into what properties are favorable for networks to have better generalizability. 
In addition, the tensor-based compression methods are complementary to the compression approach used in~\citep{zhou2018non}, which combines pruning, quantization and huffman coding~\citep{han2015deep};
the tensor-based  compression methods can be combined with the approaches used in~\citep{han2015deep} to potentially tighten the generalization bound obtained in~\citep{zhou2018non}.  

\textbf{Improving generalization in practice.} Authors of~\citep{neyshabur2015path} proposed an optimization method PATH-SGD which improves the generalization performance empirically. While~\citep{neyshabur2015path} focuses on the optimization approach, we provide a different practical approach that helps the understanding of the relations between the network architecture and its generalization ability.

\textbf{Comparison with Arora et al.~\citep{arora2018stronger}.} Besides practical improvements of generalization error, our work improves the results obtained by~\citep{arora2018stronger}: 1) we provide a tightened layer-wise analysis using tensor method to directly bound the operator norm of the weight kernel (e.g. Lemma~\ref{lem:rcpl-fc-operator-norm} and Lemma~\ref{lem:rcpl-conv-operator-norm}). The interlayer properties introduced by~\citep{arora2018stronger} are orthogonal to our proposed layer-wise properties and they can be well-combined; 2) in practice, our bound outperforms that of~\citep{arora2018stronger} in terms of the achieved degree of compression (detailed discussions in Section~\ref{sec:exp_gen_imp} and Section~\ref{sec:comp_arora});  3) for fully connected (FC) neural networks, our proposed reshaping factor (definition~\ref{def:rf}) further tightens the generalization bound as long as the inputs to the FC layers have some low-rank structures; 4) we extend our theoretical analysis to neural networks with skip connections, while the theoretical analysis in~\cite{arora2018stronger} only applies to FC and CNN.

\textbf{Comparison with existing CP decomposition for network compression.} While CP decomposition has been commonly used in neural network compression~\citep{denton2014exploiting, lebedev2014speeding, kossaifi2017tensor2}, our proposed compression method is very different from theirs. First, the the tensor contraction layer~\cite{kossaifi2017tensor2} is a special case of our \ourNN for FC layers when we set the number of components to be 1. Second, the number of components in our proposed \ourNN can be arbitrarily large (as it is a tunable hyper-parameter), while the number of components of layers in~\citep{denton2014exploiting, lebedev2014speeding, kossaifi2017tensor2} are determined by the compression ratio. Third, no tensor decomposition is needed for evaluating the generalizability and compressing neural networks with \ourNNlong as the components from the CP decomposition are already stored as model parameters. 
Moreover, as the smaller components in \ourNN are pruned during the compression, the performance of the compressed neural net is often preserved and thus no expensive fine tuning is required (see Table~\ref{tbl:compression}).  The depthwise-separable convolution used in MobileNet~\citep{howard2017mobilenets} is a specific implementation of \ourNN; thus, our theoretical analysis can provide generalization guarantees for the MobileNet architecture.

\vspace{-0.25em}
\section{Notations and Preliminaries}
\vspace{-0.25em}
\label{sec:setup}
In this paper, we use $S$ to denote the set of training samples drawn from a distribution $D$ with $|S| = m$. Let $n$ denote the number of layers in a given neural network, and superscripts of form ${ }^{(k)}$ denote properties related to the $k^{\tha}$ layer. We put ``CP'' in front of a network's name to denote such a network with \ourNNlong (e.g. CP-VGG denotes a VGG with \ourNNlong).
For any positive integer $n$, let $[n] := \{1,2, ..., n\}$. Let $\abs{a}$ denote the absolute value of a scalar $a$. 
Given a vector $\myvector{a} \in \Rbb^d$, a matrix $\mymatrix{A} \in \Rbb^{d\times k}$, and a tensor $\mytensor{A} \in \Rbb^{d_1\times d_2 \times d_3}$,  their norms are defined as follows:
(1) \textbf{Vector norm:} $\norm{\myvector{a}}$ denotes the $\ell_2$ norm.
(2) \textbf{Matrix norms:} Let $\nucnorm{\mymatrix{A}}$
denote its nuclear norm,  $\fronorm{\mymatrix{A}}$
denote its Frobenius norm, and $\norm{\mymatrix{A}}$
denote its operator norm (spectral norm), where $\sigma_i(\mymatrix{A})$ denotes the $i^{\tha}$ largest singular value of $\mymatrix{A}$.
(3) \textbf{Tensor norms:} Let $\norm{ \mytensor{A}} = \max_{x \in \Rbb^{d_1}, y \in \Rbb^{d_2}, z \in \Rbb^{d_3}} \frac{\abs{A(x,y,z)} }{\norm{x}\norm{y}\norm{z}}$ denote its operator norm, and $\fronorm{ \mytensor{A}}$ its Frobenius norm. 
Moreover, we use $\otimes$ to denote the \textbf{outer product operator}, and $\Conv$ to denote \textbf{the convolution operator}.
We use \textbf{$\ourfft{}{m}$} to denote \textbf{$m$-dimensional discrete Fourier transform}, and use tilde symbols  
to denote tensors after DFT (e.g. $\Tilde{\mytensor{T}} = \ourfft{}{m}(\mytensor{T})$).
A \textbf{Polyadic decomposition (CP decomposition)}~\citep{kruskal1989rank, kolda2009tensor} of a $N$-order tensor $\mytensor{K} \in \Rbb^{d_1 \times d_2 \times \cdots \times d_N}$ is a linear combination of rank-one tensors that is equal to $\mytensor{K}$:
$\mytensor{K} = \sum_{r=1}^R \lambda^{(r)} \myvector{v}^{(r)}_1 \otimes \cdots \otimes \myvector{v}^{(r)}_N$
 where $\forall r \in [R], \forall j \in [N], \norm{\myvector{v}^{(r)}_{j}} = 1$.
\textbf{Margin loss~\cite{arora2018stronger}:} we use $L_{\gamma}(\origcnn)$ and $\hat{L}_{\gamma}(\origcnn)$ to denote the expected and empirical margin loss of a neural network $\origcnn$ with respect to a margin $\gamma \geq 0$. The expected margin loss of a neural network $\origcnn$ is defined as
 $L_{\gamma}(\origcnn):=\mathbb{P}_{(\myvector{x},y)\in D}\big[\origcnn(\myvector{x})[y]\leq \gamma+\max_{i\neq y}\origcnn(\myvector{x})[i]\big].$

\section{CNNs with \ourNN: Compressibility and Generalization}
\label{sec:cnn}
In this section, we derive the generalization bound for a convolutional neural network (denoted as $\origcnn$) using tensor methods and standard Fourier analysis. 
The complete proof is in Appendix Section~\ref{app:cnn_proofs}.
For simplicity, we assume that there is no pooling layer (e.g.~max pooling) in $\origcnn$ since adding pooling layer will only lead to a smaller generalization bound (the perturbation error in our analysis decreases with the presence of pooling layers). 
The derived generalization bound can be directly extended to various neural network architectures (e.g. neural networks with pooling layers, and neural networks with batch normalization). 
The generalization bounds for fully connected neural networks and neural networks with skip connections are presented in Appendix Section~\ref{app:proofs} and ~\ref{app:sc_proofs} respectively.

\subsection{Compression of a CNN with \ourNN}
 We first illustrate how to compress any given CNN $\origcnn$ by presenting a compression algorithm (Algorithm~\ref{alg:cnn_compress}). We will see that this compression algorithm guarantees a good estimation of the generalization bound for the compressed network $\hat{\origcnn}$.

\textbf{Original CNN $\origcnn$} is of \textbf{$n$} layers with \reluname activation, its $k^{\tha}$ layer weight tensor $\mytensor{\cnnweight}^{(k)}$
    is a $4^{\tha}$ order tensor of size = \emph{\# of input channel} $s^{(k)}$ $\times$ \emph{\# of output channel} $o^{(k)}$  $\times$ kernel height  $k_x^{(k)}$ $\times$ kernel width $k_y^{(k)}$. 
    Let the $3^{\mbox{\tiny rd}}$ order tensor $\mytensor{\tinput}^{(k)} \in \mathbb{R}^{H^{(k)} \times W^{(k)} \times s^{(k)}}$ denote the input to the $k^{\tha}$ layer, and $\mytensor{\tOut}^{(k)} \in \mathbb{R}^{H^{(k)} \times W^{(k)} \times o^{(k)}}$ denote the output of the $k^{\tha}$ layer before activation.
    Therefore $\mytensor{\tinput}^{(k)} = \relu{\mytensor{\tOut}^{(k-1)}}$. We use $i$ to denote the index of input channels, and $j$ to denote the index of output channels. 
	We further use $f$ and $g$ to denote the indices of width and height in the frequency domain. 

\vspace{-0.5em}
\begin{proposition}[\textbf{Polyadic Form of original CNN $\origcnn$}]
\label{assum:CP-cnn}
For each layer $k$, the weight tensor $\mytensor{\cnnweight}^{(k)}$ has a Polyadic form with number of components $R^{(k)} \leq \min\{s^{(k)} o^{(k)}, s^{(k)} k_x^{(k)} k_y^{(k)}, o^{(k)} k_x^{(k)} k_y^{(k)}\}$ ~\citep{kolda2009tensor}:  
$\mytensor{\cnnweight}^{(k)} = \sum_{r = 1}^{R^{(k)}} \lambda_{r}^{(k)} \myvector{a}_{r}^{(k)} \otimes \myvector{b}_{r}^{(k)} \otimes \mymatrix{C}_{r}^{(k)}$, 
where the CP-spectrum is in a descending order, i.e., $\lambda_{1}^{(k)} \geq \lambda_{2}^{(k)} \geq \cdots \geq \lambda_{R^{(k)}}^{(k)}$. All $\myvector{a}_r^{(k)}, \myvector{b}_r^{(k)}$ are unit vectors in $\Rbb^{s^{(k)}}$ and $\Rbb^{o^{(k)}}$ respectively, and $\mymatrix{C}_r^{(k)}$ is a matrix in $\Rbb^{k_x^{(k)}\times k_y^{(k)}}$ with $\lVert \mymatrix{C}_r^{(k)}\rVert_{\textsf{F}}=1$.
The $R^{(k)}$ required for the Polyadic Form is called tensor rank. 
\end{proposition}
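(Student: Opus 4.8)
The plan is to reduce the statement to the classical fact that every third-order tensor admits an exact Polyadic decomposition whose rank is bounded by the product of any two of its three mode dimensions, and then carry out a routine normalization-and-sorting step to put the decomposition into exactly the form asserted.

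First I would view the fourth-order kernel $\mytensor{\cnnweight}^{(k)} \in \Rbb^{s^{(k)} \times o^{(k)} \times k_x^{(k)} \times k_y^{(k)}}$ as a third-order tensor $\mytensor{W} \in \Rbb^{s^{(k)} \times o^{(k)} \times (k_x^{(k)} k_y^{(k)})}$ by vectorizing the two spatial modes. A unit vector in $\Rbb^{k_x^{(k)} k_y^{(k)}}$ corresponds bijectively and Frobenius-isometrically to a matrix $\mymatrix{C} \in \Rbb^{k_x^{(k)} \times k_y^{(k)}}$ with $\fronorm{\mymatrix{C}} = 1$, so any CP decomposition of $\mytensor{W}$ translates verbatim into a decomposition of $\mytensor{\cnnweight}^{(k)}$ of the stated shape $\sum_r \lambda_r^{(k)}\, \myvector{a}_r^{(k)} \otimes \myvector{b}_r^{(k)} \otimes \mymatrix{C}_r^{(k)}$. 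It therefore suffices to work with $\mytensor{W}$.

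Next I would establish the three rank bounds by the ``slice'' construction. Writing $\myvector{e}_i, \myvector{e}_j$ for standard basis vectors and letting $\myvector{f}_{ij} \in \Rbb^{k_x^{(k)} k_y^{(k)}}$ collect the entries $\mytensor{W}(i,j,\cdot)$, one has $\mytensor{W} = \sum_{i \in [s^{(k)}]} \sum_{j \in [o^{(k)}]} \myvector{e}_i \otimes \myvector{e}_j \otimes \myvector{f}_{ij}$, an exact sum of $s^{(k)} o^{(k)}$ rank-one tensors; applying the same idea while instead leaving mode $1$ (resp. mode $2$) un-expanded yields exact expansions with $o^{(k)} k_x^{(k)} k_y^{(k)}$ (resp. $s^{(k)} k_x^{(k)} k_y^{(k)}$) rank-one terms. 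Since the tensor rank $R^{(k)}$ is by definition the least number of rank-one terms over all exact CP decompositions, it is at most the minimum of these three quantities, which is exactly the claimed bound. Then, starting from any minimal exact decomposition $\mytensor{W} = \sum_{r=1}^{R^{(k)}} \myvector{u}_r \otimes \myvector{w}_r \otimes \myvector{z}_r$, I would normalize: no factor vector of a minimal decomposition can vanish, so set $\lambda_r^{(k)} := \norm{\myvector{u}_r}\,\norm{\myvector{w}_r}\,\norm{\myvector{z}_r} > 0$, rescale each factor to unit norm, and absorb the remaining sign ambiguity into $\myvector{a}_r^{(k)}$; reshape the normalized third factor into $\mymatrix{C}_r^{(k)}$ as above; and finally relabel $r$ by the permutation that sorts $\lambda_1^{(k)} \ge \cdots \ge \lambda_{R^{(k)}}^{(k)}$.

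There is essentially no hard step here — the argument is the standard tensor-rank bound of~\citep{kolda2009tensor}. The only points that deserve a line of care are that the tensor rank is well defined (it is, since at least the finite slice decomposition above exists, so the infimum over decomposition lengths is attained) and that one is proving the inequality $R^{(k)} \le \min\{s^{(k)} o^{(k)},\, s^{(k)} k_x^{(k)} k_y^{(k)},\, o^{(k)} k_x^{(k)} k_y^{(k)}\}$ rather than an equality, since generic layers will in fact have rank equal to this bound but the statement only needs the upper bound.
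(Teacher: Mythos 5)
Your proof is correct and is exactly the standard argument the paper relies on: the paper does not prove this proposition itself but cites it from Kolda and Bader, and your slice construction plus normalization-and-sorting is precisely that classical proof of the tensor-rank upper bound. No gaps.
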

\vspace{-0.5em}

\textbf{Transform original CNN to a CNN with \ourNNlong.} By Proposition~\ref{assum:CP-cnn}, each weight tensor $\mytensor{\cnnweight}^{(k)}$ in $\origcnn$ can be represented in a Polyadic form (CP form) and thus is transformed to a \ourNN. 
The total number of parameters in \ourNN is $R^{(k)} \times (s^{(k)} + o^{(k)} + k_x^{(k)}  k_y^{(k)}+1)$. 
Thus, a smaller $R^{(k)}$ leads to fewer number of effective parameters and indicates more compression.

\textbf{Compress Original CNN $\origcnn$ to $\hat{\origcnn}$.} We illustrate the compression procedure in Algorithm~\ref{alg:cnn_compress}. Feeding a CNN $\origcnn$ to the compression algorithm, we obtain a compressed CNN $\hat{\origcnn}$, where for each layer $k$, the weight tensor in $\hat{\origcnn}$ is
$
\mytensor{\hat{M}}^{(k)} = \sum_{r = 1}^{\hat{R}^{(k)}} \lambda_{r}^{(k)} \myvector{a}_{r}^{(k)} \otimes \myvector{b}_{r}^{(k)} \otimes \myvector{c}_{r}^{(k)} \text{ for some $\hat{R}^{(k)} \leq R^{(k)}$ }.
$ 
Similarly, we use $\hat{\mytensor{\tinput}}^{(k)}$ to denote the input tensor of the $k^{\tha}$ layer in $\hat{\origcnn}$ and $\hat{\mytensor{\tOut}}^{(k)}$ to denote the output tensor of the $k^{\tha}$ layer in $\hat{\origcnn}$ before activation. 
Therefore $\hat{\mytensor{\tinput}}^{(k)} = \relu{\hat{\mytensor{\tOut}}^{(k-1)}}$. 
Notice that $\hat{\mytensor{\tinput}}^{(k)}, \hat{\mytensor{\tOut}}^{(k)}$ are of the same shapes as $\mytensor{\tinput}^{(k)}, \mytensor{\tOut}^{(k)}$  respectively and  $\mytensor{\tinput}^{(1)} = \hat{\mytensor{\tinput}}^{(1)}$ since the input data to both networks $\origcnn$ and $\hat{\origcnn}$ is the same. 

The compression Algorithm~\ref{alg:cnn_compress} is designed to compress any CNN, and therefore requires applying explicit CP decompositions to the weight tensors of traditional CNNs (the step 3 in Algorithm~\ref{alg:cnn_compress}).
However, for a CNN with \ourNNlong, these CP components are already stored as weight parameters in our \ourNN structure, and thus are known to the compression algorithm in advance. Therefore, no tensor decomposition is needed when compressing CNNs with \ourNN as we can prune out the components with smaller amplitudes directly. 
\setlength{\textfloatsep}{0pt}

\begin{algorithm}[h!]
\caption{\textbf{Compression of Convolutional Neural Networks} \\ 
\footnotesize{${}^\square$\FBRC (in Appendix~\ref{app:algos}) calculates a set of number of components $\{\hat{R}^{(k)}\}_{k=1}^n$ for the compressed network such that $\fronorm{\origcnn(\mytensor{\tinput}) - \hat{\origcnn}(\mytensor{\tinput})} \leq \epsilon \fronorm{\origcnn(\mytensor{\tinput})}$ holds for any given $\epsilon$ and for any input $\mytensor{\tinput}$ in the training dataset $S$.\smallskip\\
${}^{\triangle}$\CNNPROJECT (in Appendix~\ref{app:algos}) takes a given set of number of components $\{\hat{R}^{(k)}\}_{k=1}^n$ and returns a compressed network $\hat{\origcnn}$ by pruning out the smaller components in the CP spectrum of the weight tensors of $\origcnn$. \smallskip\\
More intuitions of the sub-procedures \FBRC and \CNNPROJECT are described in Section~\ref{sec:cnn_properties} and Appendix~\ref{app:algos}.}
}
\label{alg:cnn_compress}
\begin{algorithmic}[1]
\REQUIRE A CNN $\origcnn$ of $n$ layers 
and a margin $\gamma$
\ENSURE A compressed $\hat{\origcnn}$ whose expected error $L_0(\hat{\origcnn}) \le \hat{L}_{\gamma}(\origcnn) + \tilde{O}\Big(\sqrt{\frac{ \sum_{k=1}^{n} \hat{R}^{(k)}(s^{(k)} + o^{(k)} + k_x^{(k)} \times k_y^{(k)} + 1)}{m}}\Big)$ 
\STATE Calculate all layer cushions $\{ \lc^{(k)} \}_{k=1}^{n}$ based on definition~\ref{def:cnn_lc}
\STATE Pick $R^{(k)} = \min\{s^{(k)} o^{(k)}, s^{(k)} k_x^{(k)} k_y^{(k)}, o^{(k)} k_x^{(k)} k_y^{(k)}\}$ for each layer $k$
\STATE If $\origcnn$ does not have \ourNN, apply a CP-decomposition to the weight tensor of each layer $k$
\STATE Set the perturbation parameter $\epsilon := \frac{\gamma}{2\max_{\mytensor{\tinput}} \fronorm{\origcnn(\mytensor{\tinput})}} $
\STATE Compute number of components needed for each layer of the compressed network $\{ \hat{R}^{(k)} \}_{k=1}^{n} \gets \hyperref[alg:find_best_rank_cnn]{\text{\FBRC}}^\square\Big(\{\cnnweight^{(k)}\}_{k=1}^n, \{R^{(k)}\}_{k=1}^n, \{ \lc^{(k)} \}_{k=1}^{n }, \epsilon\Big)$ 
\STATE $\hat{\origcnn} \gets$ \hyperref[alg:cnn_project]{\CNNPROJECT}${}^{\triangle}\Big(\origcnn, \{\hat{R}^{(k)}\}_{i=1}^n\Big)$
\STATE Return the compressed convolutional neural network $\hat{\origcnn}$ 
\end{algorithmic}
\end{algorithm}

\subsection{Characterizing Compressibility of CNN with \ourNN: Network Properties}
\label{sec:cnn_properties}
In this section, we propose the following layer-wise properties that can be evaluated based on the training data  $S$: \textit{\tflong (\tfshort)}, \textit{\nblong (\nbshort)}, and \textit{layer cushion (\lcshort)}~\citep{arora2018stronger}. These proposed properties are very effective at characterizing the compressibility of a neural network. As Algorithm~\ref{alg:cnn_compress}'s sub-procedure \FBRC selects a set of number of components $\{\hat{R}^{(k)}\}_{k=1}^{n}$ to obtain a compressed network $\hat{\origcnn}$ whose output is similar to that of the original network (i.e., $\fronorm{\origcnn(\mytensor{\tinput}) -  \hat{\origcnn}(\mytensor{\tinput})} \leq \epsilon \fronorm{\origcnn(\mytensor{\tinput})}$ for any input $\mytensor{\tinput} \in S$), our proposed properties will assist the selections of $\{\hat{R}^{(k)}\}_{k=1}^{n}$  to guarantee that Algorithm~\ref{alg:cnn_compress} returns a ``good'' compressed network.

\begin{definition} \label{def:cnn_tf}
[\tflong $\tf^{(k)}_j$] 
The {\em {\tflong}s} $\left\{ \tf^{(k)}_j \right\}_{j=1}^{R^{(k)}}$ of the $k^{\tha}$ layer is defined as
\begin{equation}
\tf^{(k)}_j := \max_{f, g} \sum_{r = 1}^{j} \left| \lambda^{(k)}_{r} \right| \left| \Tilde{C}^{(f, g)}_r \right|
\label{eq:cnn_tf}
\end{equation}
where $\lambda_r^{(k)}$ is the $r^{\tha}$ largest value in the CP spectrum of the weight tensor $\mytensor{M}^{(k)}$ and $\Tilde{C}^{(f, g)}_r$ denotes the amplitude at the frequency $(f, g)$.
\end{definition}
\textit{Remark.} 
The {\tflong} characterizes both the generalizability and the expressive power of a given network. 
For a fixed $j$, a smaller {\tflong} indicates the original network is more compressible and thus has a smaller generalization bound. 
However, a smaller {\tflong} may also indicate that the given network do not possess enough expressive power. 
Thus, during the compression of a neural network with good generalizability, we need to find a ``good'' $j$ that generates a {\tflong} demonstrating the balance between a small generalization gap and high expressive power.
\begin{definition} \label{def:cnn_tnb}  
[\nblong $\nb^{(k)}_j$] 
The {\em \nblong} $\left\{ \nb^{(k)}_j \right\}_{j=1}^{R^{(k)}}$ of the $k^{\tha}$ layer 
measures the amplitudes of the remaining components after pruning the ones
with amplitudes smaller than the $\lambda^{(k)}_j$:
\begin{equation}
\nb^{(k)}_j := \max_{f, g} \sum_{r=j+1}^{R^{(k)}} \left| \lambda_{r}^{(k)} \right| \left| \Tilde{C}^{(f, g)}_r \right|
\label{eq:cnn_tnb}
\end{equation}
\end{definition}
\textit{Remark.} For a fixed $j$, a smaller tensor noise bound indicates the original neural network's weight tensor is more low-rank and thus more compressible. 

\begin{definition} \label{def:cnn_lc} 
[layer cushion $\lc^{(k)}$]
As introduced in~\cite{arora2018stronger}, the layer cushion of the $k^{\tha}$ layer is defined to be the largest value $\lc^{(k)}$ such that for any $\mytensor{X}^{(k)} \in S$, 
\begin{equation}
\lc^{(k)} \left( \fronorm{\mytensor{M}^{(k)}} \Big/ \sqrt{H^{(k)}W^{(k)}} \right)  \fronorm{\mytensor{X}^{(k)}}  \leq \fronorm{ \mytensor{M}^{(k+1)}}
\label{eq:cnn_lc}
\end{equation}
Following~\citet{arora2018stronger}, layer cushion considers 
how much the output tensor $\fronorm{\mytensor{M}^{(k+1)} }$
grows w.r.t. the weight tensor $\fronorm{\mytensor{M}^{(k)}}$ and the input $\fronorm{ \mytensor{X}^{(k)}}$. 
\end{definition}
\textit{Remark}. 
As introduced in~\cite{arora2018stronger}, the layer cushion considers how much smaller the output $\fronorm{ \mytensor{\tinput}^{(k+1)} }$ of the $k^{\tha}$ layer (after activation) compares with the product between the weight tensor $\fronorm{\cnnweight^{(k)} }$ and the input $\fronorm{ \mytensor{\tinput}^{(k)}} $. 
Note that our layer cushion can be larger than $1$ if models use batchnorm, and larger layer cushions will render smaller generalization bounds as also shown in~\citep{arora2018stronger}.

Our proposed properties, orthogonal to the interlayer properties introduced in~\citep{arora2018stronger},  provide better measurements of the compressibility in each individual convolutional layer via the use of tensor analysis and Fourier analysis, and thus lead to a tighter bound of the layer-wise error propagation.

\subsection{Generalization Guarantee of CNNs}
\label{sec:cnn_gen_thm}

Based on Algorithm~\ref{alg:cnn_compress} and our proposed properties in section~\ref{sec:cnn_properties}, 
we obtain a generalization bound for the compressed convolutional neural network $\hat{\origcnn}$ and, in section~\ref{sec:experiments}, we will evaluate this bound explicitly.

\begin{theorem} [\textbf{Main Theorem}]
\label{thm:cnn_thm}
For any convolutional neural network $\origcnn$ with $n$ layers, Algorithm~\ref{alg:cnn_compress} generates a compressed CNN $\hat{\origcnn}$ such that with high probability, the expected error $L_0(\hat{\origcnn})$ is bounded by the empirical margin loss $\hat{L}_{\gamma}(\origcnn)$ (for any margin $\gamma \geq 0$) and a complexity term defined as follows
\begin{equation}
\begin{aligned}
& L_0(\hat{\origcnn}) \leq \hat{L}_{\gamma}(\origcnn) + \\
~ & \tilde{O}\left(\sqrt{\frac{ \sum_{k=1}^{n} \hat{R}^{(k)}(s^{(k)} + o^{(k)} + k_x^{(k)}k_y^{(k)} + 1)}{m}}\right) 
\label{eq:bound}
\end{aligned}
\end{equation}
given that for all layer $k$, the number of components $\hat{R}^{(k)}$ in the compressed network satisfies that
\begin{equation}
\label{eq:main_eq}
\hat{R}^{(k)} = \min \Big\{ j \in [R^{(k)}] | \nb^{(k)}_j \Pi_{i=k+1}^{n} \tf^{(i)}_j \leq C \Big\} 
\end{equation}
 $$ \text{ with }C = \frac{\gamma}{2 n \max_{\mytensor{\tinput}  \in S } \fronorm{\origcnn(\mytensor{\tinput})}} \Pi_{i=k}^n \lc^{(i)} \fronorm{\cnnweight^{(i)}}$$
where $\tf^{(k)}_j$,  $\nb^{(k)}_j$ and $\lc^{(k)}$ are data dependent measurable properties --- \tflong, \nblong, and layer cushion of the $k^{\tha}$ layer in definitions~\ref{def:cnn_tf},~\ref{def:cnn_tnb} and~\ref{def:cnn_lc} respectively.
\end{theorem}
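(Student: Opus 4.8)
The plan is to instantiate the compression-based generalization framework of~\citep{arora2018stronger}. That framework guarantees: if the trained network $\origcnn$ is $(\gamma,S)$-compressible --- i.e. there is a network $\hat{\origcnn}$ drawn from a class parametrized by $q$ real numbers, each roundable to $\tilde O(1)$ bits with negligible effect on the output, such that $\fronorm{\origcnn(\mytensor{\tinput})-\hat{\origcnn}(\mytensor{\tinput})}\le \gamma/2$ for every $\mytensor{\tinput}\in S$ --- then a union bound over the induced finite hypothesis class gives, with high probability, $L_0(\hat{\origcnn})\le \hat L_0(\hat{\origcnn})+\tilde O(\sqrt{q/m})\le \hat L_\gamma(\origcnn)+\tilde O(\sqrt{q/m})$; the last step holds because a $\gamma$-margin of $\origcnn$ on a training point survives a perturbation of size $\gamma/2$ (it shifts the true logit and the best competing logit by at most $\gamma/2$ each) and hence becomes a correct zero-margin prediction of $\hat{\origcnn}$. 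For the $\hat{\origcnn}$ returned by Algorithm~\ref{alg:cnn_compress}, layer $k$ is stored as $\hat R^{(k)}$ rank-one CP components, each a scalar $\lambda^{(k)}_r$ plus unit vectors in $\Rbb^{s^{(k)}}$ and $\Rbb^{o^{(k)}}$ plus a $k_x^{(k)}\times k_y^{(k)}$ matrix, so $q=\sum_{k=1}^n \hat R^{(k)}(s^{(k)}+o^{(k)}+k_x^{(k)}k_y^{(k)}+1)$, exactly the numerator in~\eqref{eq:bound}. Hence the theorem reduces to the perturbation claim: if the $\hat R^{(k)}$ satisfy~\eqref{eq:main_eq}, then $\fronorm{\origcnn(\mytensor{\tinput})-\hat{\origcnn}(\mytensor{\tinput})}\le \gamma/2$ on all of $S$.

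To prove this claim I would introduce hybrid networks $g_1,\dots,g_{n+1}$, where $g_k$ uses the compressed kernels $\mytensor{\hat{M}}^{(k)},\dots,\mytensor{\hat{M}}^{(n)}$ in layers $k,\dots,n$ and the original kernels in layers $1,\dots,k-1$, so $g_{n+1}=\origcnn$ and $g_1=\hat{\origcnn}$; by the triangle inequality $\fronorm{\origcnn(\mytensor{\tinput})-\hat{\origcnn}(\mytensor{\tinput})}\le \sum_{k=1}^n \fronorm{g_{k+1}(\mytensor{\tinput})-g_k(\mytensor{\tinput})}$. The $k$-th summand is the output error caused by swapping $\cnnweight^{(k)}$ for $\mytensor{\hat{M}}^{(k)}$ and propagating that perturbation through the already-compressed layers $k+1,\dots,n$. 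Two operator-norm facts control it. First, via the Fourier/tensor argument behind Lemma~\ref{lem:rcpl-conv-operator-norm}: the convolution operator with residual kernel $\cnnweight^{(k)}-\mytensor{\hat{M}}^{(k)}=\sum_{r=\hat R^{(k)}+1}^{R^{(k)}}\lambda^{(k)}_r\,\myvector{a}^{(k)}_r\otimes\myvector{b}^{(k)}_r\otimes\mymatrix{C}^{(k)}_r$ has operator norm $\max_{f,g}\norm{\widetilde{(\cnnweight^{(k)}-\mytensor{\hat{M}}^{(k)})}^{(f,g)}}$, and since the DFT turns the kernel into a per-frequency sum of rank-one matrices with unit-norm factors scaled by $|\lambda^{(k)}_r||\tilde C^{(f,g)}_r|$, this is at most $\max_{f,g}\sum_{r=\hat R^{(k)}+1}^{R^{(k)}}|\lambda^{(k)}_r||\tilde C^{(f,g)}_r|=\nb^{(k)}_{\hat R^{(k)}}$. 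Second, by the same argument each compressed layer $i>k$ has convolution operator norm at most $\tf^{(i)}_{\hat R^{(i)}}$, and, since the ReLUs are $1$-Lipschitz, the layer-$k$ residual reaches the output scaled by at most $\prod_{i=k+1}^n \tf^{(i)}_{\hat R^{(i)}}$, up to the $\sqrt{H^{(i)}W^{(i)}}$ normalizations appearing in Definition~\ref{def:cnn_lc}.

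Combining these, the $k$-th summand is at most a normalized version of $\nb^{(k)}_{\hat R^{(k)}}\,\fronorm{\hat{\mytensor{\tinput}}^{(k)}}\prod_{i=k+1}^n\tf^{(i)}_{\hat R^{(i)}}$. I would then apply the layer cushions of Definition~\ref{def:cnn_lc} repeatedly: after bounding the accumulated input perturbation so $\fronorm{\hat{\mytensor{\tinput}}^{(k)}}$ may be replaced by $\fronorm{\mytensor{\tinput}^{(k)}}$, the chain $\fronorm{\mytensor{\tinput}^{(i+1)}}\ge \lc^{(i)}\big(\fronorm{\cnnweight^{(i)}}/\sqrt{H^{(i)}W^{(i)}}\big)\fronorm{\mytensor{\tinput}^{(i)}}$ telescopes $\fronorm{\mytensor{\tinput}^{(k)}}$ down from $\fronorm{\origcnn(\mytensor{\tinput})}$, and all the $\sqrt{H^{(i)}W^{(i)}}$ factors cancel, leaving the $k$-th summand bounded by $\frac{\gamma}{2n}\cdot\frac{\nb^{(k)}_{\hat R^{(k)}}\prod_{i=k+1}^n\tf^{(i)}_{\hat R^{(i)}}}{C}$ with $C$ exactly the constant in~\eqref{eq:main_eq}. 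Requiring $\nb^{(k)}_j\prod_{i=k+1}^n\tf^{(i)}_j\le C$ at $j=\hat R^{(k)}$ --- which is the definition of $\hat R^{(k)}$ in~\eqref{eq:main_eq}, with \FBRC choosing all the $\hat R^{(k)}$ jointly so that this common-index condition dominates the product $\prod_{i>k}\tf^{(i)}_{\hat R^{(i)}}$ actually incurred --- then forces every summand below $\gamma/(2n)$ and the total below $\gamma/2$. A discretization argument (rounding each $\lambda^{(k)}_r,\myvector{a}^{(k)}_r,\myvector{b}^{(k)}_r,\mymatrix{C}^{(k)}_r$ to $\tilde O(1)$ bits, with the operator-norm bounds controlling the induced output change) supplies the $\tilde O(\sqrt{q/m})$ rate and closes the argument.

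The genuinely hard part is the bookkeeping of the last two paragraphs, which has three interlocking pieces. (i) The residual from compressing layer $k$ travels through the \emph{already-compressed} layers $k+1,\dots,n$, so the operator-norm factor one truly pays is $\prod_{i>k}\tf^{(i)}_{\hat R^{(i)}}$, which must be reconciled with the common-index product $\prod_{i>k}\tf^{(i)}_j$ appearing in~\eqref{eq:main_eq}; this is precisely where it matters that \FBRC picks all $\hat R^{(k)}$ together rather than layer-by-layer in isolation. (ii) The input $\hat{\mytensor{\tinput}}^{(k)}$ to layer $k$ in the hybrid network is itself perturbed by the compression of layers $1,\dots,k-1$, so one needs an induction on $k$ showing the cumulative input perturbation stays a small multiple of $\fronorm{\mytensor{\tinput}^{(k)}}$, keeping the layer-cushion substitutions valid. (iii) Turning ``$q$ real parameters'' into $\tilde O(\sqrt{q/m})$ requires the discretization step to change the output by $o(\gamma)$, which is routine given the operator-norm estimates but must be spelled out. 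Throughout I take as given Proposition~\ref{assum:CP-cnn}, the operator-norm lemmas (Lemmas~\ref{lem:rcpl-conv-operator-norm} and~\ref{lem:rcpl-fc-operator-norm}), and the base compression lemma of~\citep{arora2018stronger}.
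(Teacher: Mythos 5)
Your proposal is correct and follows essentially the same route as the paper: your hybrid-network telescoping is an equivalent repackaging of the forward induction in Lemma~\ref{lem:cnn_compression} (both charge each layer's pruned residual, evaluated on the original activation $\mytensor{X}^{(k)}$, an operator-norm factor $\nb^{(k)}$ times the downstream product of $\tf^{(i)}$'s, and then telescope via the layer cushions to reach $\fronorm{\origcnn(\mytensor{\tinput})}$), while your discretization step plays the role of the paper's covering-number bound (Lemma~\ref{lem:cnn_covering_number}) feeding into Theorem 2.1 of \citet{arora2018stronger}. The common-index wrinkle in~\eqref{eq:main_eq} that you flag is present in the paper's own statement, and your concern (ii) is actually moot under your own hybrid ordering, since the input to the swapped layer $k$ is the unperturbed $\mytensor{X}^{(k)}$.
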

\textit{Remark.} How well the compressed neural network approximates the original network is related to the choice of $\hat{R}^{(k)}$.  Inside equation~\eqref{eq:main_eq}, $C$ is some value independent of the choice of $j$ in the inequality. Therefore, the number of components for the $k^{\tha}$ layer in the compressed network, $\hat{R}^{(k)}$, is the smallest $j \in [R^{(k)}] $ such that the inequality $\nb^{(k)}_j \Pi_{i=k+1}^{n} \tf^{(i)}_j \leq C$ holds. Hence, smaller {\tflong}s and {\nblong}s will make the LHS smaller, and larger layer cushions will make the RHS, $C$, larger. As a result, if the above inequality for each layer can be satisfied by a smaller $j$, the obtained generalization bound will be tighter as we can obtain a smaller $\hat{R}^{(k)}$.

\textbf{Analysis of generalization bounds in Theorem~\ref{thm:cnn_thm}}: 
This proposed generalization error bound is proportional to the number of components in the {\ourNNlong} of the compressed neural network. 
Therefore, when the original neural network is highly compressible or very low-rank, the number of components needed will be lower, which thus renders a smaller generalization error bound.

The proof of Theorem~\ref{thm:cnn_thm} is in Appendix Section~\ref{app:cnn_proofs}), and the proof sketch is as follows.

\textbf{Proof sketch of Theorem~\ref{thm:cnn_thm}:} 
We first establish that the difference of the outputs between the compressed CNN $\hat{\origcnn}$ and the original CNN $\origcnn$ is bounded by $\frac{\gamma}{2\max_{\mytensor{\tinput}} \fronorm{\origcnn(\mytensor{\tinput})}}$ using Lemma~\ref{lem:cnn_lemma}.
Then we show the covering number of the compressed network $\hat{\origcnn}$ is $\tilde{O}(d)$ via Lemma~\ref{lem:cnn_covering_number}, where $d$ denotes the total number of parameters in the compressed network. Bounding the covering number of CNNs with \ourNN to be of order $\tilde{O}(d)$ is non-trivial as we need careful handlings of the error propagations to avoid a dependence on the product of number of components.
After bounding the covering number, the rest of the proof follows from conventional learning theory and Theorem 2.1 in~\citep{arora2018stronger}.

\section{Experiments}
\label{sec:experiments}
\begin{table}
\begin{center}
    \caption{ Comparison of the training and test accuracies between neural networks (NNs) with \ourNN (\ourcpnn-VGG-16, \ourcpnn-WRN-28-10) and traditional NNs (VGG-16. WRN-28-10) on CIFAR10 dataset.
    }
    \label{table:exp_power}
\vspace{-0.25em}
\resizebox{\columnwidth}{!}{%
\begin{tabular}{cc|cc|cc}
        \multicolumn{2}{c|}{\multirow{2}{*}{\diagbox{Dataset}{Acc.}{Architect.}}} & \multicolumn{2}{c|}{VGG-16} & \multicolumn{2}{c}{WRN-28-10} \\\cline{3-6} 
        & & \tabincell{c}{with CPL} & \tabincell{c}{without CPL} & \tabincell{c}{with CPL} &  \tabincell{c}{without CPL}  \\[5pt] \hline
        \multirow{2}{*}{CIFAR10} & Training & {100\%} & {100\%} & {100\%} & {100\%}  \\\cline{2-6} 
        & Test & 93.68\%     & 92.64\%$^\dag$  & 95.09\%    & 95.83\%$^*$ \\\hline 
        \multirow{2}{*}{CIFAR100} & Training & {100\%} & {100\%} & {100\%} & {100\%}  \\\cline{2-6} 
        & Test & 71.8\% & 70.84\%$^\ddag$  & 76.36\% & 79.5\%$^*$ \tablefootnote{$^\dag$https://github.com/kuangliu/pytorch-cifar\\$^\ddag$https://github.com/geifmany/cifar-vgg\\ $^*$~\cite{zagoruyko2016wide}}
\end{tabular}
}
\end{center}
\vspace{-1em}
\end{table}

\begin{table}
	\centering
	\caption{Test accuracy on CIFAR10 with various label corruptions rates (CR).}
    \vspace{-0.25em}
    \label{tbl:memorization}
	\resizebox{\columnwidth}{!}{%
	\begin{tabular}{ ll | llll}
			\multicolumn{2}{c|}{Network / CR}  & 0.2  & 0.4 & 0.6 & 0.8\\
			\hline
			\multirow{2}{*}{CIFAR10}
			& VGG-16
			& $68.76$ 
			& $44.26$ 
			& $24.89$ 
			& $13.21$ 
			\\
			\cline{2-6}
			&\ourcpnn-VGG-16 
			& \textbf{71.09}
			& \textbf{51.76}
			& \textbf{35.60}
			& \textbf{20.06}
			\\
			\hline
			\multirow{2}{*}{CIFAR100}
			& VGG-16
			& $50.94$ 
			& $30.46$ 
			& $13.6$ 
			& $1.11$ 
			\\
			\cline{2-6}
			&\ourcpnn-VGG-16 
			& \textbf{54.51}
			& \textbf{34.13}
			& \textbf{15.23}
			& \textbf{3.10}
		\end{tabular}
	}
\end{table}

\begin{table*}[bthp]
\small
	\centering
	\caption{Average test accuracy on MNIST over the last ten epochs. Baseline simply denotes training a neural network on the corrupted training set without further processing.
	PairFlip denotes that the label mistakes can only happen within very similar classes and Symmetric denotes that the label mistakes may happen across different classes uniformly~\citep{han2018co}. 
	 }
	\vspace{-0.75em}
	\begin{tabular}{ l | lllllll}
			Task: Rate  & \tabincell{l}{Baseline\\~\citep{han2018co}}   & \tabincell{l}{F-correction\\~\citep{han2018co}} & \tabincell{l}{MentorNet\\~\citep{jiang2017mentornet}} & \tabincell{l}{CT\\~\citep{han2018co}} & CT + CPL\\
			\hline
			PairFlip: 45\% 
			& $56.52 \pm 0.55$ 
			& $0.24 \pm 0.03$ 
			& $80.88\pm 4.45$ 
			& $87.63\pm 0.21$ 
			& $\mathbf{92.43 \pm 0.01} $
			\\
			\hline
			
			Symmetric: 50\% 
			& $66.05 \pm 0.61$
			& $79.61 \pm 1.96$
			& $90.05 \pm 0.30$
			& $91.32 \pm 0.06$
			& $\mathbf{94.70 \pm 0.05}$ 
			
			\\
			\hline
			Symmetric: 20\% 
			&$ 94.05 \pm 0.16$ 
			&$\mathbf{98.80 \pm 0.12}$
			&$ 96.70 \pm 0.22$
			&$ 97.25 \pm 0.03$
			&$ 97.91 \pm 0.01$ 
			\\
	\end{tabular}
	\label{tbl:mnist}
	\vspace{-1.75em}
\end{table*}

\textbf{Architecture and optimization setting.} The architectures we use in the experiments consist of VGG-16~\citep{simonyan2014very},  \ourcpnn-VGG-16, WRN-28-10~\citep{zagoruyko2016wide} and \ourcpnn-WRN-28-10 (all with batch normalization). Details of the optimization settings are in~\ref{app:opt_setting}.

\subsection{Evaluation of Proposed Properties and Generalization Bounds}
\label{sec:exp_compare}
\textbf{Tighter Generalization Bound.} As shown in Fig~\ref{fig:bound_compare}, our bound is much tighter than the the state-of-the-art bound achieved in~\cite{arora2018stronger}. The effective number of parameters in~\cite{arora2018stronger} is orders of magnitude tighter than other capacity measures, such as $\ell_{1,\infty}$~\citep{bartlett2002rademacher}, Frobenius~\citep{neyshabur2015norm}, spec $\ell_{1,2}$~\citep{bartlett2017spectrally} and spec-fro~\citep{neyshabur2017pac} as shown in their Figure 4 Left. 
The use of a more effective and practical compression approach allows us to achieve better compression (detailed discussions are in Appendix Section~\ref{sec:comp_arora}).

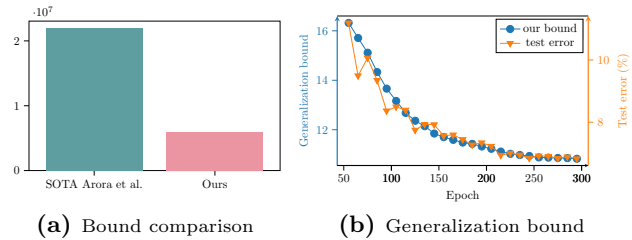
\begin{figure}[!htbp]
\centering
	\begin{subfigure}[b]{0.44\columnwidth}
\begin{tikzpicture}[scale=0.45]

\definecolor{color0}{rgb}{0.36328125,0.32421875,0.3671875}
\definecolor{color1}{rgb}{0.372549019607843,0.619607843137255,0.627450980392157}
\definecolor{color2}{rgb}{0.921875,0.5859375,0.640625}

\begin{axis}[
height=6cm,
tick align=outside,
tick pos=left,
width=9cm,
x grid style={white!69.01960784313725!black},
xmin=1.46, xmax=3.54,
xtick style={color=black},
xtick={1,2,3},
xticklabels={Original, SOTA Arora et al.
, Ours},
y grid style={white!69.01960784313725!black},
ymin=0, ymax=23049148.91265,
ytick style={color=black}
]
\draw[fill=color1,draw opacity=0] (axis cs:1.6,0) rectangle (axis cs:2.4,21951570.393);
\draw[fill=color2,draw opacity=0] (axis cs:2.6,0) rectangle (axis cs:3.4,5870050);
\end{axis}

\end{tikzpicture}	 
		\vspace{-1em} 	
	  	\caption{\scriptsize{Bound comparison}}
	  	\label{fig:bound_compare}
	\end{subfigure}
		\hfill
\begin{subfigure}[b]{0.54\columnwidth}
\begin{tikzpicture}[scale=0.45]

\definecolor{color0}{rgb}{0.12156862745098,0.466666666666667,0.705882352941177}
\definecolor{color1}{rgb}{1,0.498039215686275,0.0549019607843137}

\begin{axis}[
axis x line=bottom,
axis y line=left,
every outer y axis line/.append style={color0},
every y tick label/.append style={color0},
height=6cm,
legend cell align={left},
legend style={draw=white!80.0!black},
tick align=outside,
tick pos=left,
width=9cm,
x grid style={white!69.01960784313725!black},
xlabel={Epoch},
xmin=43, xmax=307,
xtick style={color=black},
xtick={0,100,200,300,400},
xticklabels={0,100,200,300,400},
y grid style={white!69.01960784313725!black},
ylabel={\textcolor{color0}{Generalization bound}},
ymin=10.5511979595808, ymax=16.5945650192399,
ytick style={color=color0},
y axis line style={line width = 1pt}
]
\addplot [semithick, color0, mark=*, mark size=3, mark options={solid}]
table {%
55 16.3198665165281
65 15.709834313028
75 15.1147231160606
85 14.3312949569999
95 13.6620870807006
105 13.1635882028691
115 12.679106535392
125 12.363489950726
135 12.1433536062648
145 11.8458991677855
155 11.7012741208891
165 11.590252513624
175 11.4810139814931
185 11.4274971201708
195 11.3217105349083
205 11.2291564696134
215 11.1168581985279
225 11.0256229972104
235 10.9789575313964
245 10.9382643441207
255 10.879300125757
265 10.8716633181911
275 10.8588421588122
285 10.8533043258825
295 10.8258964622926
};
\label{plot_one}
\addlegendentry{our bound}
\end{axis}

\begin{axis}[
axis y line=right,
every outer y axis line/.append style={color1},
every y tick label/.append style={color1},
height=6cm,
tick align=outside,
width=9cm,
x grid style={white!69.01960784313725!black},
xmin=43, xmax=307,
xtick pos=left,
xtick style={color=black},
y grid style={white!69.01960784313725!black},
ylabel={\textcolor{color1}{Test error (\%)}},
ymin=6.61175, ymax=11.41325,
ytick pos=right,
ytick style={color=color1},
y axis line style={line width = 1pt}
]
\addlegendimage{/pgfplots/refstyle=plot_one}\addlegendentry{our bound}
\addplot [semithick, color1, mark=triangle*, mark size=3, mark options={solid,rotate=180}]
table {%
55 11.195
65 9.494
75 10.064
85 9.347
95 8.368
105 8.494
115 8.393
125 7.749
135 7.936
145 7.923
155 7.567
165 7.595
175 7.447
185 7.274
195 7.342
205 7.231
215 6.941
225 7.007
235 6.951
245 6.849
255 6.921
265 6.906
275 6.84
285 6.889
295 6.83
};
\addlegendentry{test error}
\end{axis}

\end{tikzpicture}	  	
		\vspace{-1.5em}
	  	\caption{\scriptsize{Generalization bound}}
	  	\label{fig:bound}
	\end{subfigure}
\caption{
(a) Effective number of parameters (proportional to the generalization bound) compared with the one derived by the current state-of-the-art~\citep{arora2018stronger} 
for VGG-16. 
(b) Generalization bound vs test error for \ourcpnn-VGG-16. Two y-axes are applied for better visualization of the comparisons between the bound and the actual generalization/test error. 
}
\label{fig:bound_summary}
\end{figure}

\textbf{Generalization Bounds Correlated with Test Error.} We demonstrate how our generalization bound in Theorem~\ref{thm:cnn_thm} is practically useful in characterizing the generalizability during training.
In Figure~\ref{fig:bound}, 
\textbf{(1)} our calculated generalization bound matches well with the trend of the generalization error: after 140 epochs, the training error is almost zero but the test error continues to decrease in later epochs and our computed generalization bound captures these improvements especially well since epoch 150; 
\textbf{(2)} 
our calculated bound in Figure~\ref{fig:bound} for the well-trained \ourcpnn-VGG-16 at epoch 300 is around $10$ while the total number of parameters in this \ourcpnn-VGG-16 is around 14.7M.

\textbf{Compressibility of \ourNN: Property Evaluation.}
\label{exp:prop}
We evaluate and compare our proposed properties measuring compressibility, \emph{\tflong (\tfshort), \nblong (\nbshort) and layer cushion (\lcshort)}, on two different sets of models --- well-trained models with small generalization errors (thus expected to obtain small $\{\hat{R}^{(k)}\}_{k=1}^n$) vs. corrupted models with large generalization errors (thus expected to obtain large  $\{\hat{R}^{(k)}\}_{k=1}^n$). 
In Figure~\ref{fig:properties}\textbf{(a)}, the number of components $\{\hat{R}^{(k)}\}_{k=1}^n$ returned by the compression algorithm is much smaller for well-trained models than that for corrupted models, which indicates that well-trained models have higher compressibility compared to corrupted ones as expected in our theory. Moreover, in Figure~\ref{fig:properties}\textbf{(b-d)}, we can indeed tell if the model is trained using ``good'' data or corrupted data by evaluating our proposed properties.

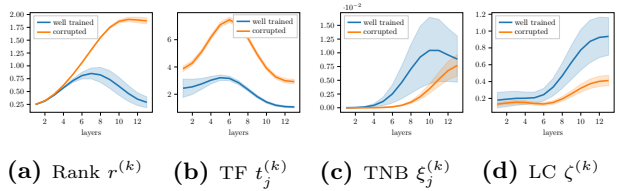
\begin{figure}[h]
\centering
	\begin{subfigure}[t]{0.225\columnwidth}
		\centering
\begin{tikzpicture}[scale=0.3]

\definecolor{color0}{rgb}{0.12156862745098,0.466666666666667,0.705882352941177}
\definecolor{color1}{rgb}{1,0.498039215686275,0.0549019607843137}

\begin{axis}[
height=6cm,
legend cell align={left},
legend style={at={(0.03,0.97)}, anchor=north west, draw=white!80.0!black},
tick align=outside,
tick pos=left,
width=7cm,
x grid style={white!69.01960784313725!black},
xlabel={layers},
xmin=0.4, xmax=13.6,
xtick style={color=black},
y grid style={white!69.01960784313725!black},
ymin=95.9359080579708, ymax=2035.96016146836,
ytick style={color=black},
ytick={0,250,500,750,1000,1250,1500,1750,2000,2250},
yticklabels={0.00,0.25,0.50,0.75,1.00,1.25,1.50,1.75,2.00,2.25}
]
\path [draw=color0, fill=color0, opacity=0.2]
(axis cs:1,249.470496460325)
--(axis cs:1,247.869121893568)
--(axis cs:2,312.367284545135)
--(axis cs:3,424.881274534894)
--(axis cs:4,557.305196064375)
--(axis cs:5,675.425604087054)
--(axis cs:6,745.807645836515)
--(axis cs:7,746.749152240661)
--(axis cs:8,678.049017870757)
--(axis cs:9,560.967624987871)
--(axis cs:10,427.978350345089)
--(axis cs:11,309.765283850459)
--(axis cs:12,226.083423934901)
--(axis cs:13,184.118828667534)
--(axis cs:13,395.637516544983)
--(axis cs:13,395.637516544983)
--(axis cs:12,476.725097964097)
--(axis cs:11,617.631165586481)
--(axis cs:10,777.164163179594)
--(axis cs:9,906.700819491146)
--(axis cs:8,970.558755034376)
--(axis cs:7,957.163052053696)
--(axis cs:6,874.909323834581)
--(axis cs:5,743.466908273738)
--(axis cs:4,588.335088783523)
--(axis cs:3,437.195105450879)
--(axis cs:2,316.684698483768)
--(axis cs:1,249.470496460325)
--cycle;

\path [draw=color1, fill=color1, opacity=0.2]
(axis cs:1,250.322124852704)
--(axis cs:1,250.318819923541)
--(axis cs:2,319.794176686693)
--(axis cs:3,448.198737643791)
--(axis cs:4,621.380514402018)
--(axis cs:5,828.041008650377)
--(axis cs:6,1060.20985265151)
--(axis cs:7,1306.49078846146)
--(axis cs:8,1542.04898213765)
--(axis cs:9,1728.80330465388)
--(axis cs:10,1835.37706502227)
--(axis cs:11,1861.45406238791)
--(axis cs:12,1840.01679559375)
--(axis cs:13,1814.90694011668)
--(axis cs:13,1935.21527372302)
--(axis cs:13,1935.21527372302)
--(axis cs:12,1947.77116100006)
--(axis cs:11,1947.7772408588)
--(axis cs:10,1896.98126869121)
--(axis cs:9,1767.6808817796)
--(axis cs:8,1563.54287026963)
--(axis cs:7,1316.79734730177)
--(axis cs:6,1064.45331148018)
--(axis cs:5,829.524040866063)
--(axis cs:4,621.814945297828)
--(axis cs:3,448.304102447599)
--(axis cs:2,319.81438310002)
--(axis cs:1,250.322124852704)
--cycle;

\addplot [line width=2.0pt, color0]
table {%
1 248.669809176947
2 314.525991514452
3 431.038189992886
4 572.820142423949
5 709.446256180396
6 810.358484835548
7 851.956102147179
8 824.303886452567
9 733.834222239509
10 602.571256762342
11 463.69822471847
12 351.404260949499
13 289.878172606259
};
\addlegendentry{well trained}
\addplot [line width=2.0pt, color1]
table {%
1 250.320472388123
2 319.804279893357
3 448.251420045695
4 621.597729849923
5 828.78252475822
6 1062.33158206585
7 1311.64406788162
8 1552.79592620364
9 1748.24209321674
10 1866.17916685674
11 1904.61565162335
12 1893.8939782969
13 1875.06110691985
};
\addlegendentry{corrupted}
\end{axis}

\end{tikzpicture}	  	
		\vspace{-1em}
	  	\caption{\scriptsize{Rank $r^{(k)}$}}\label{fig:compare_rank}
	\end{subfigure}
	\hfill
	\begin{subfigure}[t]{0.225\columnwidth}
		\centering
\begin{tikzpicture}[scale=0.3]

\definecolor{color0}{rgb}{0.12156862745098,0.466666666666667,0.705882352941177}
\definecolor{color1}{rgb}{1,0.498039215686275,0.0549019607843137}

\begin{axis}[
height=6cm,
legend cell align={left},
legend style={draw=white!80.0!black},
tick align=outside,
tick pos=left,
width=7cm,
x grid style={white!69.01960784313725!black},
xlabel={layers},
xmin=0.4, xmax=13.6,
xtick style={color=black},
y grid style={white!69.01960784313725!black},
ymin=0.676689391181921, ymax=7.98190552755115,
ytick style={color=black}
]
\path [draw=color0, fill=color0, opacity=0.2]
(axis cs:1,3.11147761506754)
--(axis cs:1,1.76958935982834)
--(axis cs:2,1.98659098076038)
--(axis cs:3,2.35782043796456)
--(axis cs:4,2.75422472572999)
--(axis cs:5,3.01035138131269)
--(axis cs:6,2.9981468650283)
--(axis cs:7,2.7001148471897)
--(axis cs:8,2.22158972814575)
--(axis cs:9,1.72626081702029)
--(axis cs:10,1.34475073573918)
--(axis cs:11,1.12304547168705)
--(axis cs:12,1.03152404895329)
--(axis cs:13,1.00874467010779)
--(axis cs:13,1.12412095391132)
--(axis cs:13,1.12412095391132)
--(axis cs:12,1.15425093012)
--(axis cs:11,1.26203475292156)
--(axis cs:10,1.51023890509413)
--(axis cs:9,1.92768339441211)
--(axis cs:8,2.46450097054077)
--(axis cs:7,2.98500364448577)
--(axis cs:6,3.32772713794017)
--(axis cs:5,3.4113136919585)
--(axis cs:4,3.30142091800641)
--(axis cs:3,3.15968233386837)
--(axis cs:2,3.10210337896098)
--(axis cs:1,3.11147761506754)
--cycle;

\path [draw=color1, fill=color1, opacity=0.2]
(axis cs:1,4.05151376909588)
--(axis cs:1,3.67865293372726)
--(axis cs:2,4.11017633255532)
--(axis cs:3,4.9438695482516)
--(axis cs:4,6.0013619608106)
--(axis cs:5,6.9108002890078)
--(axis cs:6,7.26847633256024)
--(axis cs:7,6.8929050872591)
--(axis cs:8,5.93665147278224)
--(axis cs:9,4.76980534159105)
--(axis cs:10,3.76200596259813)
--(axis cs:11,3.11916905657531)
--(axis cs:12,2.83560414353281)
--(axis cs:13,2.76343212348929)
--(axis cs:13,3.09022607450169)
--(axis cs:13,3.09022607450169)
--(axis cs:12,3.14647842570808)
--(axis cs:11,3.4135653506836)
--(axis cs:10,4.05768706125219)
--(axis cs:9,5.09030174880748)
--(axis cs:8,6.29265380654819)
--(axis cs:7,7.27346257896795)
--(axis cs:6,7.64985024862528)
--(axis cs:5,7.27457009373261)
--(axis cs:4,6.34682453355547)
--(axis cs:3,5.28652030501051)
--(axis cs:2,4.46711643593077)
--(axis cs:1,4.05151376909588)
--cycle;

\addplot [line width=2.0pt, color0]
table {%
1 2.44053348744794
2 2.54434717986068
3 2.75875138591647
4 3.0278228218682
5 3.2108325366356
6 3.16293700148423
7 2.84255924583773
8 2.34304534934326
9 1.8269721057162
10 1.42749482041665
11 1.19254011230431
12 1.09288748953664
13 1.06643281200956
};
\addlegendentry{well trained}
\addplot [line width=2.0pt, color1]
table {%
1 3.86508335141157
2 4.28864638424305
3 5.11519492663105
4 6.17409324718303
5 7.09268519137021
6 7.45916329059276
7 7.08318383311352
8 6.11465263966521
9 4.93005354519926
10 3.90984651192516
11 3.26636720362945
12 2.99104128462044
13 2.92682909899549
};
\addlegendentry{corrupted}
\end{axis}

\end{tikzpicture}	 
		\vspace{-1em} 	
	  	\caption{\scriptsize{\tfshort $\tf_j^{(k)}$}}\label{fig:compare_tf}
	\end{subfigure}
	\hfill
	\begin{subfigure}[t]{0.225\columnwidth}
		\centering
\begin{tikzpicture}[scale=0.3]

\definecolor{color0}{rgb}{0.12156862745098,0.466666666666667,0.705882352941177}
\definecolor{color1}{rgb}{1,0.498039215686275,0.0549019607843137}

\begin{axis}[
height=6cm,
legend cell align={left},
legend style={at={(0.03,0.97)}, anchor=north west, draw=white!80.0!black},
tick align=outside,
tick pos=left,
width=7cm,
x grid style={white!69.01960784313725!black},
xlabel={layers},
xmin=0.4, xmax=13.6,
xtick style={color=black},
y grid style={white!69.01960784313725!black},
ymin=-0.000824013055236008, ymax=0.0173060787351642,
ytick style={color=black},
ytick={-0.0025,0,0.0025,0.005,0.0075,0.01,0.0125,0.015,0.0175},
yticklabels={−0.25,0.00,0.25,0.50,0.75,1.00,1.25,1.50,1.75}
]
\path [draw=color0, fill=color0, opacity=0.2]
(axis cs:1,1.53902596291674e-05)
--(axis cs:1,2.66226851258948e-06)
--(axis cs:2,1.11351921472824e-05)
--(axis cs:3,4.40221991176032e-05)
--(axis cs:4,0.00014359983933723)
--(axis cs:5,0.00038771649488702)
--(axis cs:6,0.000871032248657139)
--(axis cs:7,0.00164252806247223)
--(axis cs:8,0.00263179545278577)
--(axis cs:9,0.00363474016678337)
--(axis cs:10,0.00439696511388856)
--(axis cs:11,0.00475781482149379)
--(axis cs:12,0.00476704349688634)
--(axis cs:13,0.00466019643860711)
--(axis cs:13,0.0130474327990112)
--(axis cs:13,0.0130474327990112)
--(axis cs:12,0.0144609516034074)
--(axis cs:11,0.0160997268083117)
--(axis cs:10,0.0164819836537824)
--(axis cs:9,0.0148380452523704)
--(axis cs:8,0.011503239324214)
--(axis cs:7,0.00757640234967034)
--(axis cs:6,0.00419284278449118)
--(axis cs:5,0.00193270957121377)
--(axis cs:4,0.000738121397133778)
--(axis cs:3,0.000233181503658105)
--(axis cs:2,6.10811174860585e-05)
--(axis cs:1,1.53902596291674e-05)
--cycle;

\path [draw=color1, fill=color1, opacity=0.2]
(axis cs:1,1.37079868835902e-07)
--(axis cs:1,8.20261458199086e-08)
--(axis cs:2,4.86742732813909e-07)
--(axis cs:3,2.62028316249816e-06)
--(axis cs:4,1.12131447222204e-05)
--(axis cs:5,4.03308783883031e-05)
--(axis cs:6,0.000124141775072579)
--(axis cs:7,0.000330041618538635)
--(axis cs:8,0.000759639693864529)
--(axis cs:9,0.0015122630734359)
--(axis cs:10,0.00260410966887317)
--(axis cs:11,0.00388910178162672)
--(axis cs:12,0.00506067560052088)
--(axis cs:13,0.00576459428216655)
--(axis cs:13,0.00964982429365226)
--(axis cs:13,0.00964982429365226)
--(axis cs:12,0.00844018087728082)
--(axis cs:11,0.00645185501980959)
--(axis cs:10,0.00430207732271953)
--(axis cs:9,0.00249504103616969)
--(axis cs:8,0.00125531771330198)
--(axis cs:7,0.000547193224414612)
--(axis cs:6,0.000206541597406848)
--(axis cs:5,6.72620406203852e-05)
--(axis cs:4,1.87209568113716e-05)
--(axis cs:3,4.38996728667519e-06)
--(axis cs:2,8.13344055094636e-07)
--(axis cs:1,1.37079868835902e-07)
--cycle;

\addplot [line width=2.0pt, color0]
table {%
1 9.02626407087846e-06
2 3.61081548166704e-05
3 0.000138601851387854
4 0.000440860618235504
5 0.00116021303305039
6 0.00253193751657416
7 0.00460946520607129
8 0.00706751738849989
9 0.00923639270957686
10 0.0104394743838355
11 0.0104287708149028
12 0.00961399755014686
13 0.00885381461880914
};
\addlegendentry{well trained}
\addplot [line width=2.0pt, color1]
table {%
1 1.09553007327905e-07
2 6.50043393954273e-07
3 3.50512522458667e-06
4 1.4967050766796e-05
5 5.37964595043441e-05
6 0.000165341686239714
7 0.000438617421476623
8 0.00100747870358325
9 0.0020036520548028
10 0.00345309349579635
11 0.00517047840071815
12 0.00675042823890085
13 0.0077072092879094
};
\addlegendentry{corrupted}
\end{axis}

\end{tikzpicture}
		\vspace{-1em}
	  	\caption{\scriptsize{\nbshort $\nb_j^{(k)}$}}\label{fig:compare_tnb}
	\end{subfigure}
	\hfill
	\begin{subfigure}[t]{0.225\columnwidth}
		\centering
\begin{tikzpicture}[scale=0.3]

\definecolor{color0}{rgb}{0.12156862745098,0.466666666666667,0.705882352941177}
\definecolor{color1}{rgb}{1,0.498039215686275,0.0549019607843137}

\begin{axis}[
height=6cm,
legend cell align={left},
legend style={at={(0.03,0.97)}, anchor=north west, draw=white!80.0!black},
tick align=outside,
tick pos=left,
width=7cm,
x grid style={white!69.01960784313725!black},
xlabel={layers},
xmin=0.4, xmax=13.6,
xtick style={color=black},
y grid style={white!69.01960784313725!black},
ymin=0.0348557833582163, ymax=1.21884182803333,
ytick style={color=black},
ytick={0,0.2,0.4,0.6,0.8,1,1.2,1.4},
yticklabels={0.0,0.2,0.4,0.6,0.8,1.0,1.2,1.4}
]
\path [draw=color0, fill=color0, opacity=0.2]
(axis cs:1,0.269719034433365)
--(axis cs:1,0.0886733308434486)
--(axis cs:2,0.101778842508793)
--(axis cs:3,0.117614582180977)
--(axis cs:4,0.12878143787384)
--(axis cs:5,0.141245678067207)
--(axis cs:6,0.170574516057968)
--(axis cs:7,0.22951602935791)
--(axis cs:8,0.318743884563446)
--(axis cs:9,0.426498681306839)
--(axis cs:10,0.534554243087769)
--(axis cs:11,0.624752044677734)
--(axis cs:12,0.685113430023193)
--(axis cs:13,0.71358597278595)
--(axis cs:13,1.16064357757568)
--(axis cs:13,1.16064357757568)
--(axis cs:12,1.1650242805481)
--(axis cs:11,1.13013088703156)
--(axis cs:10,1.0173534154892)
--(axis cs:9,0.832127630710602)
--(axis cs:8,0.619668424129486)
--(axis cs:7,0.435828536748886)
--(axis cs:6,0.319092273712158)
--(axis cs:5,0.274275302886963)
--(axis cs:4,0.274758607149124)
--(axis cs:3,0.283312439918518)
--(axis cs:2,0.27944341301918)
--(axis cs:1,0.269719034433365)
--cycle;

\path [draw=color1, fill=color1, opacity=0.2]
(axis cs:1,0.150583237409592)
--(axis cs:1,0.113911338150501)
--(axis cs:2,0.124691992998123)
--(axis cs:3,0.133493185043335)
--(axis cs:4,0.130237728357315)
--(axis cs:5,0.119351603090763)
--(axis cs:6,0.11540262401104)
--(axis cs:7,0.130473926663399)
--(axis cs:8,0.167038708925247)
--(axis cs:9,0.21879443526268)
--(axis cs:10,0.273710668087006)
--(axis cs:11,0.317655593156815)
--(axis cs:12,0.342094480991364)
--(axis cs:13,0.350226551294327)
--(axis cs:13,0.473463535308838)
--(axis cs:13,0.473463535308838)
--(axis cs:12,0.463163495063782)
--(axis cs:11,0.429913192987442)
--(axis cs:10,0.368514031171799)
--(axis cs:9,0.291701257228851)
--(axis cs:8,0.220419570803642)
--(axis cs:7,0.171347752213478)
--(axis cs:6,0.151937887072563)
--(axis cs:5,0.157805278897285)
--(axis cs:4,0.172511711716652)
--(axis cs:3,0.176814839243889)
--(axis cs:2,0.165004163980484)
--(axis cs:1,0.150583237409592)
--cycle;

\addplot [line width=2.0pt, color0]
table {%
1 0.179196193814278
2 0.190611124038696
3 0.200463518500328
4 0.201770007610321
5 0.207760497927666
6 0.244833394885063
7 0.332672268152237
8 0.469206154346466
9 0.629313170909882
10 0.775953829288483
11 0.877441465854645
12 0.925068795681
13 0.937114775180817
};
\addlegendentry{well trained}
\addplot [line width=2.0pt, color1]
table {%
1 0.132247284054756
2 0.144848078489304
3 0.155154019594193
4 0.151374712586403
5 0.138578444719315
6 0.133670255541801
7 0.150910839438438
8 0.193729132413864
9 0.255247831344604
10 0.321112334728241
11 0.373784393072128
12 0.402628988027573
13 0.411845058202744
};
\addlegendentry{corrupted}
\end{axis}

\end{tikzpicture}
		\vspace{-1em}	  	
	  	\caption{\scriptsize{\lcshort $\lc^{(k)}$}}\label{fig:compare_lc}
	\end{subfigure}
	\hfill
	\vspace{-0.25em}
  	\caption{Comparison of our proposed properties across layers between well-trained and corrupted \ourcpnn-VGG-16. The statistics are obtained from 200 models trained under the same optimization settings.} 
	\label{fig:properties}
	\vspace{-0.25em}
\end{figure}

We further apply Algorithm~\ref{alg:cnn_compress} to these well-trained and corrupted models to investigate the consistency between the compression performance of Algorithm~\ref{alg:cnn_compress} and our theoretical results: on average, Algorithm~\ref{alg:cnn_compress} achieves a 31.83\% compression rate on the well-trained models, but only an 89.7\% compression rate on the corrupted models (lower compression rate is better as it implies a smaller generalization error bound). Clearly, the low-rank structures in well-trained models allow them to be compressed much further, consistent with our theoretical analysis of Algorithm~\ref{alg:cnn_compress}.

\subsection{Generalization Improvement on Real Data Experiments}
\label{sec:exp_gen_imp}

\textbf{Expressive Power of Neural Networks with \ourNNlong.} As shown in Table~\ref{table:exp_power}, neural networks equipped with \ourNNlong maintain competitive training and test accuracies. 

\textbf{Generalization Improvements under Label Noise.}
The memorization effect is directly linked to the deteriorated generalization performance of the network~\citep{zhang2016understanding}. Therefore we study how our proposed \ourNN structure affects the generalizability of a neural network with presence of strong memorization effect --- under label noise setting.
We assign random labels to a proportion of the training data and train the neural network until convergence. Then we test the network's performance on the uncorrupted test data.
As shown in Table~\ref{tbl:memorization},  \ourcpnn-VGG consistently achieves better generalization performance compared to the traditional VGG under various label corruption ratios.

Our \ourNN, combined with co-teaching (CT)~\citep{han2018co} (the SOTA method for defeating label noise) further improves its performance as shown in Table~\ref{tbl:mnist} where we also compare our method CT+\ourNN against other different label-noise methods~\citep{han2018co}.
Besides, in Figure~\ref{fig:pf_sym_cr}, our method CT+\ourNN consistently outperforms the SOTA method (CT) with various choices of number of components.

\subsection{
\ourNN Is Natural for Compression}
\label{sec:exp_cpl_compression}

Applying \ourNN for neural network compression is extensively studied in~\cite{su2019tensorial}, therefore we focus on explaining why \ourNN is natural for compression and analyzing the compressibility of {\ourNN}s.

\textbf{Low Rankness in Neural Networks with \ourNN vs Traditional Neural Networks}. The low rankness of a \ourcpnn-VGG and a traditional VGG is demonstrated by Figure~\ref{eig_spectrum} where we display the ratios of the number of components with amplitudes above a given threshold 0.2. We clearly see that VGG with \ourNN exhibits low rankness consistently for all layers while the traditional VGG is not low-rank. Notice that the CP spectrum in each \ourNN is normalized by dividing the largest amplitude and the CP components of traditional VGG are obtained via explicit CP decompositions with reconstruction error set to 1e-3.

\begin{figure}[!htbp]
\centering
\vspace{-0.25em}
\begin{subfigure}[b]{0.495 \columnwidth}
\centering
\begin{tikzpicture}[scale=0.45]


\definecolor{color1}{rgb}{0.921875,0.5859375,0.640625}
\definecolor{color0}{rgb}{0.36328125, 0.32421875, 0.3671875}

\begin{axis}[
height=5cm,
legend cell align={left},
legend style={draw=white!80.0!black},
tick align=outside,
tick pos=left,
width=9cm,
x grid style={white!69.01960784313725!black},
xlabel={layers $k$},
xmin=0.0149999999999999, xmax=13.985,
xtick style={color=black},
y grid style={white!69.01960784313725!black},
ylabel={\% of components $\ge$ 0.2},
ymin=0, ymax=1.05,
ytick style={color=black}
]
\draw[fill=color0,draw opacity=0] (axis cs:0.65,0) rectangle (axis cs:1,0.791666666666667);
\addlegendimage{ybar,ybar legend,fill=color0,draw opacity=0};
\addlegendentry{VGG}

\draw[fill=color0,draw opacity=0] (axis cs:1.65,0) rectangle (axis cs:2,1);
\draw[fill=color0,draw opacity=0] (axis cs:2.65,0) rectangle (axis cs:3,1);
\draw[fill=color0,draw opacity=0] (axis cs:3.65,0) rectangle (axis cs:4,1);
\draw[fill=color0,draw opacity=0] (axis cs:4.65,0) rectangle (axis cs:5,1);
\draw[fill=color0,draw opacity=0] (axis cs:5.65,0) rectangle (axis cs:6,1);
\draw[fill=color0,draw opacity=0] (axis cs:6.65,0) rectangle (axis cs:7,1);
\draw[fill=color0,draw opacity=0] (axis cs:7.65,0) rectangle (axis cs:8,1);
\draw[fill=color0,draw opacity=0] (axis cs:8.65,0) rectangle (axis cs:9,1);
\draw[fill=color0,draw opacity=0] (axis cs:9.65,0) rectangle (axis cs:10,1);
\draw[fill=color0,draw opacity=0] (axis cs:10.65,0) rectangle (axis cs:11,1);
\draw[fill=color0,draw opacity=0] (axis cs:11.65,0) rectangle (axis cs:12,1);
\draw[fill=color0,draw opacity=0] (axis cs:12.65,0) rectangle (axis cs:13,1);
\draw[fill=color1,draw opacity=0] (axis cs:1,0) rectangle (axis cs:1.35,0.545454545454545);
\addlegendimage{ybar,ybar legend,fill=color1,draw opacity=0};
\addlegendentry{\ourcpnn-VGG}

\draw[fill=color1,draw opacity=0] (axis cs:2,0) rectangle (axis cs:2.35,0.104089219330855);
\draw[fill=color1,draw opacity=0] (axis cs:3,0) rectangle (axis cs:3.35,0.448087431693989);
\draw[fill=color1,draw opacity=0] (axis cs:4,0) rectangle (axis cs:4.35,0.474820143884892);
\draw[fill=color1,draw opacity=0] (axis cs:5,0) rectangle (axis cs:5.35,0.513333333333333);
\draw[fill=color1,draw opacity=0] (axis cs:6,0) rectangle (axis cs:6.35,0.226148409893993);
\draw[fill=color1,draw opacity=0] (axis cs:7,0) rectangle (axis cs:7.35,0.147526501766784);
\draw[fill=color1,draw opacity=0] (axis cs:8,0) rectangle (axis cs:8.35,0.0322793148880105);
\draw[fill=color1,draw opacity=0] (axis cs:9,0) rectangle (axis cs:9.35,0.00306614104248795);
\draw[fill=color1,draw opacity=0] (axis cs:10,0) rectangle (axis cs:10.35,0.00700832238282961);
\draw[fill=color1,draw opacity=0] (axis cs:11,0) rectangle (axis cs:11.35,0.011826544021025);
\draw[fill=color1,draw opacity=0] (axis cs:12,0) rectangle (axis cs:12.35,0.00569426193604906);
\draw[fill=color1,draw opacity=0] (axis cs:13,0) rectangle (axis cs:13.35,0.0135786246167324);
\end{axis}

\end{tikzpicture}
	\caption{VGG-16}
	\label{fig:vgg_eig_layers}
\end{subfigure}%
\hfill
\begin{subfigure}[b]{0.495\columnwidth}
\begin{tikzpicture}[scale=0.45]


\definecolor{color1}{rgb}{0.921875,0.5859375,0.640625}
\definecolor{color0}{rgb}{0.36328125, 0.32421875, 0.3671875}

\begin{axis}[
height=5cm,
legend cell align={left},
legend style={draw=white!80.0!black},
tick align=outside,
tick pos=left,
width=9cm,
x grid style={white!69.01960784313725!black},
xlabel={layers $k$},
xmin=-0.735, xmax=29.735,
xtick style={color=black},
y grid style={white!69.01960784313725!black},
ylabel={\% of components $\ge$ 0.2},
ymin=0, ymax=1.05,
ytick style={color=black}
]
\draw[fill=color0,draw opacity=0] (axis cs:0.65,0) rectangle (axis cs:1,0.944444444444444);
\addlegendimage{ybar,ybar legend,fill=color0,draw opacity=0};
\addlegendentry{WRN}

\draw[fill=color0,draw opacity=0] (axis cs:1.65,0) rectangle (axis cs:2,0.937007874015748);
\draw[fill=color0,draw opacity=0] (axis cs:2.65,0) rectangle (axis cs:3,1);
\draw[fill=color0,draw opacity=0] (axis cs:3.65,0) rectangle (axis cs:4,0.8);
\draw[fill=color0,draw opacity=0] (axis cs:4.65,0) rectangle (axis cs:5,1);
\draw[fill=color0,draw opacity=0] (axis cs:5.65,0) rectangle (axis cs:6,1);
\draw[fill=color0,draw opacity=0] (axis cs:6.65,0) rectangle (axis cs:7,1);
\draw[fill=color0,draw opacity=0] (axis cs:7.65,0) rectangle (axis cs:8,1);
\draw[fill=color0,draw opacity=0] (axis cs:8.65,0) rectangle (axis cs:9,1);
\draw[fill=color0,draw opacity=0] (axis cs:9.65,0) rectangle (axis cs:10,1);
\draw[fill=color0,draw opacity=0] (axis cs:10.65,0) rectangle (axis cs:11,1);
\draw[fill=color0,draw opacity=0] (axis cs:11.65,0) rectangle (axis cs:12,1);
\draw[fill=color0,draw opacity=0] (axis cs:12.65,0) rectangle (axis cs:13,1);
\draw[fill=color0,draw opacity=0] (axis cs:13.65,0) rectangle (axis cs:14,1);
\draw[fill=color0,draw opacity=0] (axis cs:14.65,0) rectangle (axis cs:15,1);
\draw[fill=color0,draw opacity=0] (axis cs:15.65,0) rectangle (axis cs:16,1);
\draw[fill=color0,draw opacity=0] (axis cs:16.65,0) rectangle (axis cs:17,1);
\draw[fill=color0,draw opacity=0] (axis cs:17.65,0) rectangle (axis cs:18,1);
\draw[fill=color0,draw opacity=0] (axis cs:18.65,0) rectangle (axis cs:19,1);
\draw[fill=color0,draw opacity=0] (axis cs:19.65,0) rectangle (axis cs:20,1);
\draw[fill=color0,draw opacity=0] (axis cs:20.65,0) rectangle (axis cs:21,1);
\draw[fill=color0,draw opacity=0] (axis cs:21.65,0) rectangle (axis cs:22,1);
\draw[fill=color0,draw opacity=0] (axis cs:22.65,0) rectangle (axis cs:23,1);
\draw[fill=color0,draw opacity=0] (axis cs:23.65,0) rectangle (axis cs:24,1);
\draw[fill=color0,draw opacity=0] (axis cs:24.65,0) rectangle (axis cs:25,1);
\draw[fill=color0,draw opacity=0] (axis cs:25.65,0) rectangle (axis cs:26,1);
\draw[fill=color0,draw opacity=0] (axis cs:26.65,0) rectangle (axis cs:27,0.998604813393791);
\draw[fill=color0,draw opacity=0] (axis cs:27.65,0) rectangle (axis cs:28,0.951517265434252);
\draw[fill=color1,draw opacity=0] (axis cs:1,0) rectangle (axis cs:1.35,0.266666666666667);
\addlegendimage{ybar,ybar legend,fill=color1,draw opacity=0};
\addlegendentry{\ourcpnn-WRN}

\draw[fill=color1,draw opacity=0] (axis cs:2,0) rectangle (axis cs:2.35,0.137096774193548);
\draw[fill=color1,draw opacity=0] (axis cs:3,0) rectangle (axis cs:3.35,0.07);
\draw[fill=color1,draw opacity=0] (axis cs:4,0) rectangle (axis cs:4.35,0.714285714285714);
\draw[fill=color1,draw opacity=0] (axis cs:5,0) rectangle (axis cs:5.35,0.00142857142857143);
\draw[fill=color1,draw opacity=0] (axis cs:6,0) rectangle (axis cs:6.35,0.00142857142857143);
\draw[fill=color1,draw opacity=0] (axis cs:7,0) rectangle (axis cs:7.35,0.00142857142857143);
\draw[fill=color1,draw opacity=0] (axis cs:8,0) rectangle (axis cs:8.35,0.00142857142857143);
\draw[fill=color1,draw opacity=0] (axis cs:9,0) rectangle (axis cs:9.35,0.00142857142857143);
\draw[fill=color1,draw opacity=0] (axis cs:10,0) rectangle (axis cs:10.35,0.00571428571428571);
\draw[fill=color1,draw opacity=0] (axis cs:11,0) rectangle (axis cs:11.35,0.0222929936305732);
\draw[fill=color1,draw opacity=0] (axis cs:12,0) rectangle (axis cs:12.35,0.0380281690140845);
\draw[fill=color1,draw opacity=0] (axis cs:13,0) rectangle (axis cs:13.35,0.0849056603773585);
\draw[fill=color1,draw opacity=0] (axis cs:14,0) rectangle (axis cs:14.35,0.00985915492957747);
\draw[fill=color1,draw opacity=0] (axis cs:15,0) rectangle (axis cs:15.35,0.0105633802816901);
\draw[fill=color1,draw opacity=0] (axis cs:16,0) rectangle (axis cs:16.35,0.0133802816901408);
\draw[fill=color1,draw opacity=0] (axis cs:17,0) rectangle (axis cs:17.35,0.0176056338028169);
\draw[fill=color1,draw opacity=0] (axis cs:18,0) rectangle (axis cs:18.35,0.0359154929577465);
\draw[fill=color1,draw opacity=0] (axis cs:19,0) rectangle (axis cs:19.35,0.0147887323943662);
\draw[fill=color1,draw opacity=0] (axis cs:20,0) rectangle (axis cs:20.35,0.0846477392218717);
\draw[fill=color1,draw opacity=0] (axis cs:21,0) rectangle (axis cs:21.35,0.0863938440013991);
\draw[fill=color1,draw opacity=0] (axis cs:22,0) rectangle (axis cs:22.35,0.187793427230047);
\draw[fill=color1,draw opacity=0] (axis cs:23,0) rectangle (axis cs:23.35,0.0108429520811473);
\draw[fill=color1,draw opacity=0] (axis cs:24,0) rectangle (axis cs:24.35,0.00909408884225254);
\draw[fill=color1,draw opacity=0] (axis cs:25,0) rectangle (axis cs:25.35,0.0101434067855894);
\draw[fill=color1,draw opacity=0] (axis cs:26,0) rectangle (axis cs:26.35,0.0101434067855894);
\draw[fill=color1,draw opacity=0] (axis cs:27,0) rectangle (axis cs:27.35,0.0118922700244841);
\draw[fill=color1,draw opacity=0] (axis cs:28,0) rectangle (axis cs:28.35,0.00384749912556838);
\end{axis}

\end{tikzpicture}
	\caption{WRN-28-10}
	\label{fig:wrn_eig_layers}
\end{subfigure}%
\vspace{-0.25em}
\caption{Comparison of low rankness (compressibility) across layers between neural networks with \ourNN and standard neural networks  
}
\label{eig_spectrum}
\vspace{-0.75em}
\end{figure}
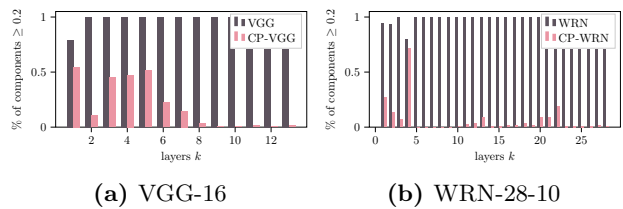

\textbf{No Fine-tuning Needed for \ourNN.} Many works using tensor methods for neural network compression require computationally expensive fine-tuning (e.g. 200 epochs end-to-end training on the compressed networks) to recover the compressed network's test performance~\cite{jaderberg2014speeding, denton2014exploiting, lebedev2014speeding, kim2015compression, garipov2016ultimate, wang2018wide, su2019tensorial}. 
However, the compression we perform does not require any fine tuning since it directly prunes out the components with amplitudes below some given threshold. 
\emph{In experiments, we compress a \ourcpnn-WRN-28-10, which has the same number of parameters as WRN-28-10, by 8$\times$ with only 0.56\% performance drop on CIFAR10 image classification.}
The full compression results for \ourcpnn-WRN-28-10 under different cutting-off thresholds are shown in Table~\ref{tbl:compression}, where components whose amplitudes are under the cutting-off threshold are pruned.

\section{Conclusion and Discussion}
In this work, we derive a practical compression-based generalization bound via the proposed layerwise structure \ourNNlong, and demonstrate the effectiveness of using tensor methods in theoretical analyses of deep neural networks.
With a series of benchmark experiments, we show the practical usage of our generalization bound and the effectiveness of our proposed structure \ourNN in terms of compression and generalization.
A possible future direction is studying the effectiveness of other tensor decomposition methods such as Tucker or Tensor Train. 

\section*{Acknowledgement}
This research was supported by startup fund from Department of Computer Science of University of Maryland, National Science Foundation IIS-1850220 CRII Award 030742- 00001, DOD-DARPA-Defense Advanced Research Projects Agency Guaranteeing AI Robustness against Deception (GARD), Laboratory for Physical Sciences at University of Maryland.
This research was also supported in part by JSPS Kakenhi (26280009, 15H05707 and 18H03201), Japan Digital Design and JST-CREST.
Huang was also supported by Adobe, Capital One and JP Morgan faculty fellowships.
We thank Ziyin Liu for supporting this research with great advice and efforts.
We thank Jin-peng Liu, Kai Wang, and Dongruo Zhou for helpful discussions and comments.
We thank Jingxiao Zheng for supporting additional computing resources.

\bibliographystyle{plainnat}
\bibliography{supp_bib}

\newpage
\appendix
\onecolumn

\section*{Supplementary Material}
Supplementary material for the paper: ``Understanding Generalization in Deep Learning via Tensor Methods''.
This appendix is organized as follows:
\begin{itemize}
    \item Appendix A: Experimental details and additional results
    \item Appendix B: Technical definitions and propositions
    \item Appendix C: Main technical contributions 
    \item Appendix D, E, and F: Generalization bounds on three types of neural networks: convolutional neural networks, fully-connected neural networks, and neural networks with residual connections
    \item Appendix G: Additional algorithms and algorithmic details
\end{itemize}

\section{Additional Experimental Results}
\label{app:experiments}

\subsection{Architecture and optimization setting}
\label{app:opt_setting}
We train these four models (VGG-16, \ourcpnn-VGG-16, WRN-28-10 and \ourcpnn-WRN-28-10) using standard optimization settings with no dropouts and default initializations provided by PyTorch~\citep{paszke2017automatic}. We use a SGD optimizer with momentum=0.9, weight decay=5e-4, and initial learning rate=0.05 to start the training process. The learning rate is scheduled to be divided by 2 every 30 epochs for VGG-16 and \ourcpnn-VGG-16. While for WRN-28-10 and \ourcpnn-WRN-28-10, the learning rate is scheduled to be divided by 5 at the $60^{\tha}, 120^{\tha}$ and $160^{\tha}$ epoch. We run 300 epochs to train each VGG-16 and \ourcpnn-VGG-16, and we run 200 epochs to train each WRN-28-10 and \ourcpnn-WRN-28-10. 

\subsection{Generalization bounds comparison with~\citep{arora2018stronger}}
\label{sec:comp_arora}
The generalization bound we calculated for a well-trained \ourcpnn-VGG-16 (with the same \# of parameters as VGG-16) on CIFAR10 dataset is around $12$ (thus, of order $10^1$) according to the transformation $f(x) = x / 20 - 0.5$ applied in Figure~\ref{fig:bound}. Our evaluated bound is much better than naive counting of \# parameters. Although we may not be able to directly compare our calculated bound with that in~\citep{arora2018stronger}, which is roughly of order $10^5$ as~\citep{arora2018stronger} uses a VGG-19 to evaluate their generalization bound while our evaluation is done using a \ourcpnn-VGG-16, we present in Table~\ref{tbl:effective} the effective number of parameters identified by our proposed bound. Compared with the effective number of parameters in~\citep{arora2018stronger} (Table 1 of~\citep{arora2018stronger}), we can see that 
\textbf{(1)} our effective number of parameters is upper bounded by the total number of parameters in original network (thus, the compression ratio is bounded by $1$), while the effective number identified by~\citep{arora2018stronger} could be several times larger than the original number of parameters (e.g. based on Table 1 of~\citep{arora2018stronger}, their effective number of parameters in layer 4 and 6 are more than 4 times of the original number of parameters);
\textbf{(2)} the effective number of parameters in~\citep{arora2018stronger} ignores the dependence on depth, log factors and constants, while our effective number of parameters in Table~\ref{tbl:effective} is exactly the actual number of parameters in the compressed network without these dependences.
\begin{table}[!htbp]
	\centering
	\caption{Effective number of parameters identified by our proposed bound in Theorem~\ref{thm:cnn_thm}.}
    \label{tbl:effective}
	\begin{tabular}{ c | c | c c | c c}
		       layer & \tabincell{c}{original \\ \# of params} & \tabincell{c}{our effective \\ \# of params} &  \tabincell{c}{our \\ compression ratio} & 
		       \tabincell{c}{effective \# of params \\ in \cite{arora2018stronger}} & \tabincell{c}{compression ratio\\ in \cite{arora2018stronger}} \\
			\hline
1 & 1728 & 1694 & 0.980324 & 10728.622 & 6.208693 \\ \hline 
2 & 36864 & 36984 & 1.003255 & 63681.09 & 1.727460 \\ \hline 
3 & 73728 & 73932 & 1.002767 & 116967.945 & 1.586479 \\ \hline 
4 & 147456 & 147630 & 1.001180 & 910160.75 & 6.172423 \\ \hline 
5 & 294912 & 295106 & 1.000658 & 817337.9 & 2.771464 \\ \hline 
6 & 589824 & 590904 & 1.001831 & 3913927.2 & 6.635754 \\ \hline 
7 & 589824 & 590904 & 1.001831 & 15346982.0 & 26.019596 \\ \hline 
8 & 1179648 & 1177892 & 0.998511 & 367775.12 & 0.311767 \\ \hline 
9 & 2359296 & 2288242 & 0.969883 & 95893.41 & 0.040645 \\ \hline 
10 & 2359296 & 1774344 & 0.752065 & 87476.836 & 0.037078 \\ \hline 
11 & 2359296 & 350526 & 0.148572 & 42480.465 & 0.018006 \\ \hline 
12 & 2359296 & 42394 & 0.017969 & 40184.535 & 0.017032 \\ \hline 
13 & 2359296 & 124080 & 0.052592 & 137974.52 & 0.058481
	\end{tabular}
\end{table}

\subsection{Neural networks with \ourNN are natural for compression}
The compression results in Table~\ref{tbl:compression} are obtained directly without any fine tuning. 
\begin{table}[!htbp]
	\centering
	\caption{The compression results of a 28-layer Wide-ResNet equipped with \ourNN (\ourcpnn-WRN-28-10) on CIFAR10 dataset. The compression is done via normalizing the CP spectrum and then deleting the components in \ourNN which have amplitudes smaller than the given cut-off-threshold.}
	\label{tbl:compression}
	\begin{tabular}{ c | c l c l c}
		       Cut-off threshold & Compression ratio  & \# params & Test acc \% \\
			\hline
			0 & $1\times$ & 36.5M & 95.09  \\ \hline 
			1e-4 & $0.229$ ($4\times$) & 8.36M & 95.08  \\ \hline 
			1e-3 & $0.164$ ($5\times$) & 6.90M & 95.05  \\ \hline 
			1e-2 & $0.124$ ($8\times$) & 4.52M & \textbf{94.53}
	\end{tabular}
\end{table}

\subsection{Improved Generalization Achieved by \ourNN}
We provide additional experimental details in the improved generalization ability achieved by \ourNN under label noise setting. 
Our \ourNN combined with co-teaching (CT)~\citep{han2018co} outperforms SOTA method. Co-teaching~\citep{han2018co} is a training procedure for defeating label noise: it avoids overfitting to noisy labels by selecting clean samples out of the noisy ones and using them to update the network. Given the experimental results that neural networks with \ourNN tend to overfit less to noisy labels (Table 3), we combine Co-teaching to train networks with \ourNN on three different types of corrupted data (Table~\ref{tbl:mnist}). The hyperparameters we use in these experiments are the same as the ones in Co-teaching [2].

As shown in Table~\ref{tbl:mnist}, we compare our method CT+\ourNN against various label-noise methods~\citep{han2018co} under standard label noise setting~\citep{han2018co}.
\textbf{(1)} As shown in Figure~\ref{fig:pf_sym_cr}, our method (CT+\ourNN) consistently outperforms the SOTA method with various choices of the number of components. \textbf{(1.1)} Specifically, according to Table~\ref{tbl:mnist}, we see that combining \ourNN with co-teaching achieves the SOTA results on MNIST for PairFlip\footnote{PairFlip denotes that the label mistakes can only happen within very similar classes~\citep{han2018co}} with corruption rate $45\%$ and Symmetric\footnote{Symmetric denotes that the label mistakes may happen across different classes uniformly~\citep{han2018co}} with corruption rate $50\%$. \textbf{(1.2)} We also investigate the learning curve of our method compared with the SOTA (see Figure~\ref{fig:learning curve}.).  
The models first reach best test accuracy early in the training, and then the test accuracy deteriorates as training goes on due to memorization effect. We see that our method always dominates the vanilla CT method when generalizability of the model starts to deteriorate due to memorization effect. This clearly shows that a neural network with \ourNN has better generalizability property than the plain neural network under this label noise setting.  
\textbf{(2)} For the Symmetric-$20\%$ in Table~\ref{tbl:mnist}, as the label corruption rate is low, our method has a low effect in improving the generalization, which is expected. 
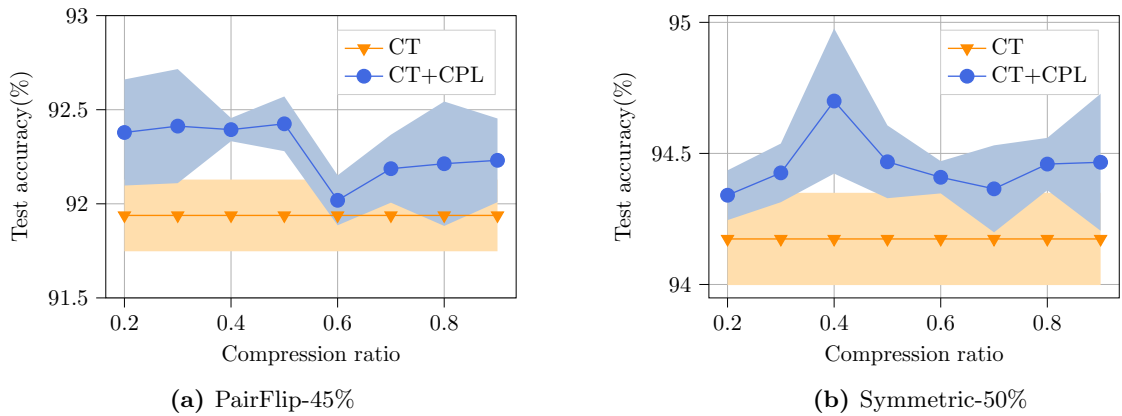
\begin{figure}[!htbp]
\centering
\begin{subfigure}[t]{0.49 \columnwidth}
\centering
\begin{tikzpicture}[scale=0.85]

\definecolor{color0}{rgb}{1,0.870588235294118,0.67843137254902}
\definecolor{color1}{rgb}{0.690196078431373,0.768627450980392,0.870588235294118}
\definecolor{color2}{rgb}{1,0.549019607843137,0}
\definecolor{color3}{rgb}{0.254901960784314,0.411764705882353,0.882352941176471}

\begin{axis}[
height=6cm,
legend cell align={left},
legend style={at={(0.95,0.95)}, draw=white!80.0!black},
tick align=outside,
tick pos=left,
width=8cm,
x grid style={white!69.01960784313725!black},
xlabel={Compression ratio},
xmajorgrids,
xmin=0.165, xmax=0.935,
xtick style={color=black},
y grid style={white!69.01960784313725!black},
ylabel={Test accuracy(\%)},
ymajorgrids,
ymin=91.5, ymax=93,
ytick style={color=black}
]
\path [fill=color0]
(axis cs:0.2,92.1287741335948)
--(axis cs:0.2,91.7477616570185)
--(axis cs:0.3,91.7477616570185)
--(axis cs:0.4,91.7477616570185)
--(axis cs:0.5,91.7477616570185)
--(axis cs:0.6,91.7477616570185)
--(axis cs:0.7,91.7477616570185)
--(axis cs:0.8,91.7477616570185)
--(axis cs:0.9,91.7477616570185)
--(axis cs:0.9,92.1287741335948)
--(axis cs:0.9,92.1287741335948)
--(axis cs:0.8,92.1287741335948)
--(axis cs:0.7,92.1287741335948)
--(axis cs:0.6,92.1287741335948)
--(axis cs:0.5,92.1287741335948)
--(axis cs:0.4,92.1287741335948)
--(axis cs:0.3,92.1287741335948)
--(axis cs:0.2,92.1287741335948)
--cycle;

\path [fill=color1]
(axis cs:0.2,92.6607779291573)
--(axis cs:0.2,92.0971681178561)
--(axis cs:0.3,92.1096945737014)
--(axis cs:0.4,92.3326054880042)
--(axis cs:0.5,92.2796992885223)
--(axis cs:0.6,91.885862663825)
--(axis cs:0.7,92.0061236327976)
--(axis cs:0.8,91.8825378190951)
--(axis cs:0.9,92.0087861416052)
--(axis cs:0.9,92.4533532814748)
--(axis cs:0.9,92.4533532814748)
--(axis cs:0.8,92.5435439116816)
--(axis cs:0.7,92.3672070295924)
--(axis cs:0.6,92.1512634045383)
--(axis cs:0.5,92.5703941943811)
--(axis cs:0.4,92.4557225034358)
--(axis cs:0.3,92.7160265801586)
--(axis cs:0.2,92.6607779291573)
--cycle;

\addplot [semithick, color2, mark=triangle*, mark size=3, mark options={solid,rotate=180}]
table {%
0.2 91.9382678953067
0.3 91.9382678953067
0.4 91.9382678953067
0.5 91.9382678953067
0.6 91.9382678953067
0.7 91.9382678953067
0.8 91.9382678953067
0.9 91.9382678953067
};
\addlegendentry{CT}
\addplot [semithick, color3, mark=*, mark size=3, mark options={solid}]
table {%
0.2 92.3789730235067
0.3 92.41286057693
0.4 92.39416399572
0.5 92.4250467414517
0.6 92.0185630341817
0.7 92.186665331195
0.8 92.2130408653883
0.9 92.23106971154
};
\addlegendentry{CT+CPL}
\end{axis}

\end{tikzpicture}
	\caption{PairFlip-45\%}
	\label{fig:pairflip_cr_0.2_0.9}
\end{subfigure}
\hfill
\begin{subfigure}[t]{0.49\columnwidth}
\begin{tikzpicture}[scale=0.85]

\definecolor{color0}{rgb}{1,0.870588235294118,0.67843137254902}
\definecolor{color1}{rgb}{0.690196078431373,0.768627450980392,0.870588235294118}
\definecolor{color2}{rgb}{1,0.549019607843137,0}
\definecolor{color3}{rgb}{0.254901960784314,0.411764705882353,0.882352941176471}

\begin{axis}[
height=6cm,
legend cell align={left},
legend style={at={(0.95,0.95)}, draw=white!80.0!black},
tick align=outside,
tick pos=left,
width=8cm,
x grid style={white!69.01960784313725!black},
xlabel={Compression ratio},
xmajorgrids,
xmin=0.165, xmax=0.935,
xtick style={color=black},
y grid style={white!69.01960784313725!black},
ylabel={Test accuracy(\%)},
ymajorgrids,
ymin=93.9487603669785, ymax=95.0256484464235,
ytick style={color=black}
]
\path [fill=color0]
(axis cs:0.2,94.3496459441016)
--(axis cs:0.2,93.9977098251351)
--(axis cs:0.3,93.9977098251351)
--(axis cs:0.4,93.9977098251351)
--(axis cs:0.5,93.9977098251351)
--(axis cs:0.6,93.9977098251351)
--(axis cs:0.7,93.9977098251351)
--(axis cs:0.8,93.9977098251351)
--(axis cs:0.9,93.9977098251351)
--(axis cs:0.9,94.3496459441016)
--(axis cs:0.9,94.3496459441016)
--(axis cs:0.8,94.3496459441016)
--(axis cs:0.7,94.3496459441016)
--(axis cs:0.6,94.3496459441016)
--(axis cs:0.5,94.3496459441016)
--(axis cs:0.4,94.3496459441016)
--(axis cs:0.3,94.3496459441016)
--(axis cs:0.2,94.3496459441016)
--cycle;

\path [fill=color1]
(axis cs:0.2,94.4353752900561)
--(axis cs:0.2,94.2455141330305)
--(axis cs:0.3,94.3138522991806)
--(axis cs:0.4,94.4230072082998)
--(axis cs:0.5,94.3291288342621)
--(axis cs:0.6,94.3474493522935)
--(axis cs:0.7,94.1988784168527)
--(axis cs:0.8,94.3593489231341)
--(axis cs:0.9,94.2048729520247)
--(axis cs:0.9,94.7274187146386)
--(axis cs:0.9,94.7274187146386)
--(axis cs:0.8,94.5592541751392)
--(axis cs:0.7,94.530755664334)
--(axis cs:0.6,94.4703257545565)
--(axis cs:0.5,94.6068353751445)
--(axis cs:0.4,94.9766989882669)
--(axis cs:0.3,94.5376434273094)
--(axis cs:0.2,94.4353752900561)
--cycle;

\addplot [semithick, color2, mark=triangle*, mark size=3, mark options={solid,rotate=180}]
table {%
0.2 94.1736778846183
0.3 94.1736778846183
0.4 94.1736778846183
0.5 94.1736778846183
0.6 94.1736778846183
0.7 94.1736778846183
0.8 94.1736778846183
0.9 94.1736778846183
};
\addlegendentry{CT}
\addplot [semithick, color3, mark=*, mark size=3, mark options={solid}]
table {%
0.2 94.3404447115433
0.3 94.425747863245
0.4 94.6998530982833
0.5 94.4679821047033
0.6 94.408887553425
0.7 94.3648170405933
0.8 94.4593015491367
0.9 94.4661458333317
};
\addlegendentry{CT+CPL}
\end{axis}

\end{tikzpicture}
	\caption{Symmetric-50\%}
	\label{fig:symmetric_cr_0.2_0.9}
\end{subfigure}
\caption{Test accuracy vs. different compression ratios}
\label{fig:pf_sym_cr}
\end{figure}

\begin{figure}[!htbp]
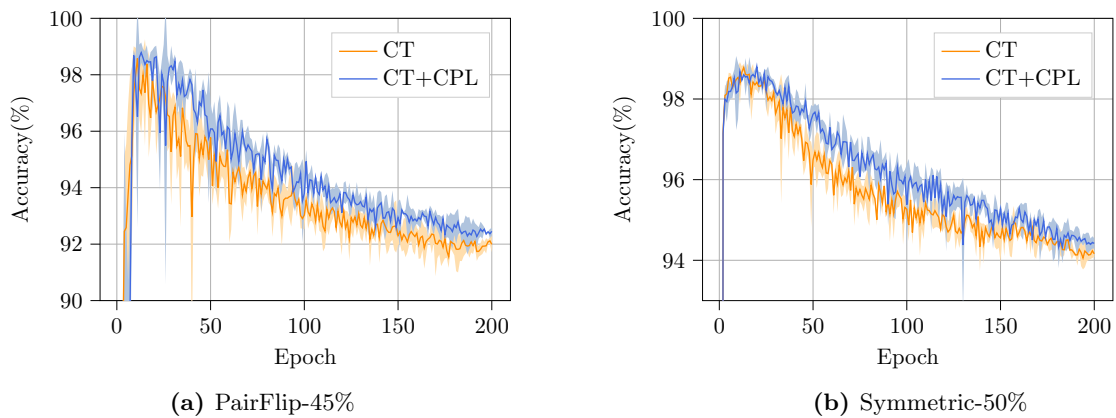

\centering
\begin{subfigure}[t]{0.49 \columnwidth}
\centering
	\input{\fighome/label_noise_mnist_cr_0.5_comparison_pairflip_0.5.tex}
	\caption{\small PairFlip-45\% }
	\label{fig:convergence}
\end{subfigure}
\hfill
\begin{subfigure}[t]{0.49\columnwidth}
	\input{\fighome/label_noise_mnist_cr_0.5_comparison_symmetric_0.5.tex}
	\caption{\small Symmetric-50\%}
	\label{fig:property1-setting2}
\end{subfigure}
\caption{Convergence plots of test accuracy vs. number of epochs on MNIST data}
\label{fig:learning curve}
\end{figure}

\textit{Remark.} The results displayed in Figure~\ref{fig:pf_sym_cr} and Figure~\ref{fig:learning curve} are based on our implementation of the CT method in order to achieve a fair comparison, while the results displayed in Table~\ref{tbl:mnist} are based on the reported accuracies by~\citep{han2018co} as we would like to compare our CT+\ourNN with other different label-noise methods as well.

\subsection{Compressibility of CPL: Property Evaluation \ourNN}
\begin{figure}[!htbp]
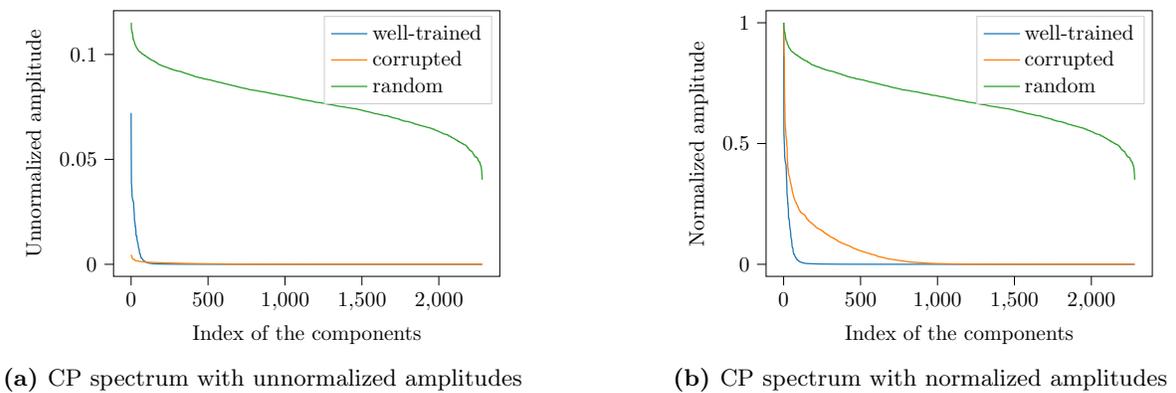

\centering
\begin{subfigure}[t]{0.49\columnwidth}
\centering
	\input{\fighome/VGG_conv12_CP_spectrum_compare_all_unnormalize.tex}
	\caption{CP spectrum with unnormalized amplitudes}
	\label{fig:cp_spec_all}
\end{subfigure}
\hfill
\begin{subfigure}[t]{0.49\columnwidth}
\centering
	\input{\fighome/VGG_conv12_CP_spectrum_compare_all_normalize.tex}
	\caption{CP spectrum with normalized amplitudes}
	\label{fig:cp_spec_all_n}
\end{subfigure}
\caption{Comparison of the CP spectra of a well-trained, a corrupted, and a randomly initialized \ourcpnn-VGG-16 a the $13^{\tha}$ convolutional layer  
}
\label{fig:eig_spectrum_all}
\end{figure}

Figure~\ref{fig:cp_spec_all_n} displays the CP spectral of a well-trained, a corrupted, and a randomly initialized \ourcpnn-VGG-16 (at the $13^{\tha}$ convolutional layer). For the unnormalized CP spectra of three models in Figure~\ref{fig:cp_spec_all_n}(a), we can see that the largest amplitude in the CP spectrum of the corrupted \ourcpnn-VGG-16 is much smaller than that of well-trained and random models. Yet, a smaller leading value in the CP spectrum does not necessarily mean that the corrupted is more low rank. As shown in Figure~\ref{fig:cp_spec_all_n}(b), after normalizing the CP spectrum of each model by its largest amplitude, well-trained \ourcpnn-VGG-16 still has the most low-rank CP spectrum (the blue curve) than that of corrupted or random models. Notice that the random model has the least low rankness since its weight tensors are the closest to random noise and thus it is hard to compress them.

We also compare our proposed properties among the three different sets of \ourcpnn-VGG-16: well-trained, corrupted, and randomly initialized. As shown in Figure~\ref{fig:properties_full}, since random models have the least compressibility as their weight tensors are closest to random noise, properties that focus more on the compressibility of the model are larger on random models (e.g \nblong), which will lead to larger generalization bounds. In the meantime, properties that focus more on measuring the information loss after compression as well as the expressive power of the models (e.g. \fflong) are smaller for random models. 
\begin{figure*}[h]
\centering
	\begin{subfigure}[t]{0.25\textwidth}
		\centering
\begin{tikzpicture}[scale=0.6]

\definecolor{color0}{rgb}{0.12156862745098,0.466666666666667,0.705882352941177}
\definecolor{color1}{rgb}{1,0.498039215686275,0.0549019607843137}
\definecolor{color2}{rgb}{0.235294117647059,0.701960784313725,0.443137254901961}

\begin{axis}[
height=6cm,
legend cell align={left},
legend style={at={(0.03,0.97)}, anchor=north west, draw=white!80.0!black},
tick align=outside,
tick pos=left,
title={rank across layers},
width=8cm,
x grid style={white!69.01960784313725!black},
xlabel={layers},
xmin=0.4, xmax=13.6,
xtick style={color=black},
y grid style={white!69.01960784313725!black},
ylabel={values},
ymin=79.8711587487176, ymax=2373.31989696268,
ytick style={color=black},
ytick={0,500,1000,1500,2000,2500},
yticklabels={0.0,0.5,1.0,1.5,2.0,2.5}
]
\path [draw=color0, fill=color0, opacity=0.2]
(axis cs:1,249.470496460325)
--(axis cs:1,247.869121893568)
--(axis cs:2,312.367284545135)
--(axis cs:3,424.881274534894)
--(axis cs:4,557.305196064375)
--(axis cs:5,675.425604087054)
--(axis cs:6,745.807645836515)
--(axis cs:7,746.749152240661)
--(axis cs:8,678.049017870757)
--(axis cs:9,560.967624987871)
--(axis cs:10,427.978350345089)
--(axis cs:11,309.765283850459)
--(axis cs:12,226.083423934901)
--(axis cs:13,184.118828667534)
--(axis cs:13,395.637516544983)
--(axis cs:13,395.637516544983)
--(axis cs:12,476.725097964097)
--(axis cs:11,617.631165586481)
--(axis cs:10,777.164163179594)
--(axis cs:9,906.700819491146)
--(axis cs:8,970.558755034376)
--(axis cs:7,957.163052053696)
--(axis cs:6,874.909323834581)
--(axis cs:5,743.466908273738)
--(axis cs:4,588.335088783523)
--(axis cs:3,437.195105450879)
--(axis cs:2,316.684698483768)
--(axis cs:1,249.470496460325)
--cycle;

\path [draw=color1, fill=color1, opacity=0.2]
(axis cs:1,250.322124852704)
--(axis cs:1,250.318819923541)
--(axis cs:2,319.794176686693)
--(axis cs:3,448.198737643791)
--(axis cs:4,621.380514402018)
--(axis cs:5,828.041008650377)
--(axis cs:6,1060.20985265151)
--(axis cs:7,1306.49078846146)
--(axis cs:8,1542.04898213765)
--(axis cs:9,1728.80330465388)
--(axis cs:10,1835.37706502227)
--(axis cs:11,1861.45406238791)
--(axis cs:12,1840.01679559375)
--(axis cs:13,1814.90694011668)
--(axis cs:13,1935.21527372302)
--(axis cs:13,1935.21527372302)
--(axis cs:12,1947.77116100006)
--(axis cs:11,1947.7772408588)
--(axis cs:10,1896.98126869121)
--(axis cs:9,1767.6808817796)
--(axis cs:8,1563.54287026963)
--(axis cs:7,1316.79734730177)
--(axis cs:6,1064.45331148018)
--(axis cs:5,829.524040866063)
--(axis cs:4,621.814945297828)
--(axis cs:3,448.304102447599)
--(axis cs:2,319.81438310002)
--(axis cs:1,250.322124852704)
--cycle;

\path [draw=color2, fill=color2, opacity=0.2]
(axis cs:1,250.329321898876)
--(axis cs:1,250.329321898876)
--(axis cs:2,319.86024551627)
--(axis cs:3,448.548902825917)
--(axis cs:4,622.852369554504)
--(axis cs:5,833.147549137253)
--(axis cs:6,1075.0722105309)
--(axis cs:7,1343.2530683415)
--(axis cs:8,1620.14988711627)
--(axis cs:9,1872.48693463644)
--(axis cs:10,2065.95417305382)
--(axis cs:11,2186.66181365902)
--(axis cs:12,2246.4250521283)
--(axis cs:13,2268.22388096713)
--(axis cs:13,2269.07222704386)
--(axis cs:13,2269.07222704386)
--(axis cs:12,2247.0960814877)
--(axis cs:11,2187.08145076518)
--(axis cs:10,2066.16146625203)
--(axis cs:9,1872.56772355122)
--(axis cs:8,1620.174693531)
--(axis cs:7,1343.25906039552)
--(axis cs:6,1075.07334749846)
--(axis cs:5,833.147700313281)
--(axis cs:4,622.852369554504)
--(axis cs:3,448.548902825917)
--(axis cs:2,319.86024551627)
--(axis cs:1,250.329321898876)
--cycle;

\addplot [line width=2.0pt, color0]
table {%
1 248.669809176947
2 314.525991514452
3 431.038189992886
4 572.820142423949
5 709.446256180396
6 810.358484835548
7 851.956102147179
8 824.303886452567
9 733.834222239509
10 602.571256762342
11 463.69822471847
12 351.404260949499
13 289.878172606259
};
\addlegendentry{well trained}
\addplot [line width=2.0pt, color1]
table {%
1 250.320472388123
2 319.804279893357
3 448.251420045695
4 621.597729849923
5 828.78252475822
6 1062.33158206585
7 1311.64406788162
8 1552.79592620364
9 1748.24209321674
10 1866.17916685674
11 1904.61565162335
12 1893.8939782969
13 1875.06110691985
};
\addlegendentry{corrupted}
\addplot [line width=2.0pt, color2]
table {%
1 250.329321898876
2 319.86024551627
3 448.548902825917
4 622.852369554504
5 833.147624725267
6 1075.07277901468
7 1343.25606436851
8 1620.16229032363
9 1872.52732909383
10 2066.05781965292
11 2186.8716322121
12 2246.760566808
13 2268.6480540055
};
\addlegendentry{random}
\end{axis}

\end{tikzpicture}	
		\vspace{-1.25em}  	
	  	\caption{\scriptsize{Rank $R$}}\label{fig:compare_rank_full}
	\end{subfigure}
	\hfill
	\begin{subfigure}[t]{0.25\textwidth}
		\centering
\begin{tikzpicture}[scale=0.6]

\definecolor{color0}{rgb}{0.12156862745098,0.466666666666667,0.705882352941177}
\definecolor{color1}{rgb}{1,0.498039215686275,0.0549019607843137}
\definecolor{color2}{rgb}{0.235294117647059,0.701960784313725,0.443137254901961}

\begin{axis}[
height=6cm,
legend cell align={left},
legend style={at={(0.03,0.97)}, anchor=north west, draw=white!80.0!black},
tick align=outside,
tick pos=left,
title={tensorization factor across layers},
width=8cm,
x grid style={white!69.01960784313725!black},
xlabel={layers},
xmin=0.4, xmax=13.6,
xtick style={color=black},
y grid style={white!69.01960784313725!black},
ylabel={values},
ymin=-4.44489397342257, ymax=115.535156184245,
ytick style={color=black},
ytick={-20,0,20,40,60,80,100,120},
yticklabels={−0.2,0.0,0.2,0.4,0.6,0.8,1.0,1.2}
]
\path [draw=color0, fill=color0, opacity=0.2]
(axis cs:1,3.11147761506754)
--(axis cs:1,1.76958935982834)
--(axis cs:2,1.98659098076038)
--(axis cs:3,2.35782043796456)
--(axis cs:4,2.75422472572999)
--(axis cs:5,3.01035138131269)
--(axis cs:6,2.9981468650283)
--(axis cs:7,2.7001148471897)
--(axis cs:8,2.22158972814575)
--(axis cs:9,1.72626081702029)
--(axis cs:10,1.34475073573918)
--(axis cs:11,1.12304547168705)
--(axis cs:12,1.03152404895329)
--(axis cs:13,1.00874467010779)
--(axis cs:13,1.12412095391132)
--(axis cs:13,1.12412095391132)
--(axis cs:12,1.15425093012)
--(axis cs:11,1.26203475292156)
--(axis cs:10,1.51023890509413)
--(axis cs:9,1.92768339441211)
--(axis cs:8,2.46450097054077)
--(axis cs:7,2.98500364448577)
--(axis cs:6,3.32772713794017)
--(axis cs:5,3.4113136919585)
--(axis cs:4,3.30142091800641)
--(axis cs:3,3.15968233386837)
--(axis cs:2,3.10210337896098)
--(axis cs:1,3.11147761506754)
--cycle;

\path [draw=color1, fill=color1, opacity=0.2]
(axis cs:1,4.05151376909588)
--(axis cs:1,3.67865293372726)
--(axis cs:2,4.11017633255532)
--(axis cs:3,4.9438695482516)
--(axis cs:4,6.0013619608106)
--(axis cs:5,6.9108002890078)
--(axis cs:6,7.26847633256024)
--(axis cs:7,6.8929050872591)
--(axis cs:8,5.93665147278224)
--(axis cs:9,4.76980534159105)
--(axis cs:10,3.76200596259813)
--(axis cs:11,3.11916905657531)
--(axis cs:12,2.83560414353281)
--(axis cs:13,2.76343212348929)
--(axis cs:13,3.09022607450169)
--(axis cs:13,3.09022607450169)
--(axis cs:12,3.14647842570808)
--(axis cs:11,3.4135653506836)
--(axis cs:10,4.05768706125219)
--(axis cs:9,5.09030174880748)
--(axis cs:8,6.29265380654819)
--(axis cs:7,7.27346257896795)
--(axis cs:6,7.64985024862528)
--(axis cs:5,7.27457009373261)
--(axis cs:4,6.34682453355547)
--(axis cs:3,5.28652030501051)
--(axis cs:2,4.46711643593077)
--(axis cs:1,4.05151376909588)
--cycle;

\path [draw=color2, fill=color2, opacity=0.2]
(axis cs:1,14.9773363226597)
--(axis cs:1,14.3369366275349)
--(axis cs:2,17.8316906306252)
--(axis cs:3,24.293176231109)
--(axis cs:4,33.0160624556476)
--(axis cs:5,43.492005565454)
--(axis cs:6,55.4625910958894)
--(axis cs:7,68.5149219757918)
--(axis cs:8,81.544644135253)
--(axis cs:9,92.8397820289554)
--(axis cs:10,101.000883243268)
--(axis cs:11,105.791433147679)
--(axis cs:12,108.034073555391)
--(axis cs:13,108.817592582014)
--(axis cs:13,110.081517540715)
--(axis cs:13,110.081517540715)
--(axis cs:12,109.293739025265)
--(axis cs:11,107.040047354228)
--(axis cs:10,102.224211872036)
--(axis cs:9,94.0145582964942)
--(axis cs:8,82.6463563195459)
--(axis cs:7,69.5292526574481)
--(axis cs:6,56.3881939515255)
--(axis cs:5,44.3339367601693)
--(axis cs:4,33.781788499771)
--(axis cs:3,24.9959088014886)
--(axis cs:2,18.4920043497714)
--(axis cs:1,14.9773363226597)
--cycle;

\addplot [line width=2.0pt, color0]
table {%
1 2.44053348744794
2 2.54434717986068
3 2.75875138591647
4 3.0278228218682
5 3.2108325366356
6 3.16293700148423
7 2.84255924583773
8 2.34304534934326
9 1.8269721057162
10 1.42749482041665
11 1.19254011230431
12 1.09288748953664
13 1.06643281200956
};
\addlegendentry{well trained}
\addplot [line width=2.0pt, color1]
table {%
1 3.86508335141157
2 4.28864638424305
3 5.11519492663105
4 6.17409324718303
5 7.09268519137021
6 7.45916329059276
7 7.08318383311352
8 6.11465263966521
9 4.93005354519926
10 3.90984651192516
11 3.26636720362945
12 2.99104128462044
13 2.92682909899549
};
\addlegendentry{corrupted}
\addplot [line width=2.0pt, color2]
table {%
1 14.6571364750973
2 18.1618474901983
3 24.6445425162988
4 33.3989254777093
5 43.9129711628116
6 55.9253925237074
7 69.0220873166199
8 82.0955002273995
9 93.4271701627248
10 101.612547557652
11 106.415740250954
12 108.663906290328
13 109.449555061364
};
\addlegendentry{random}
\end{axis}

\end{tikzpicture}	  
		\vspace{-1.25em}  	
	  	\caption{\scriptsize{\tfshort $\tf_j^{(k)}$}}\label{fig:compare_tf_full}
	\end{subfigure}
	\hfill
	\begin{subfigure}[t]{0.25\textwidth}
		\centering
\begin{tikzpicture}[scale=0.6]

\definecolor{color0}{rgb}{0.12156862745098,0.466666666666667,0.705882352941177}
\definecolor{color1}{rgb}{1,0.498039215686275,0.0549019607843137}
\definecolor{color2}{rgb}{0.235294117647059,0.701960784313725,0.443137254901961}

\begin{axis}[
height=6cm,
legend cell align={left},
legend style={at={(0.03,0.97)}, anchor=north west, draw=white!80.0!black},
tick align=outside,
tick pos=left,
title={tensor noise bound across layers},
width=8cm,
x grid style={white!69.01960784313725!black},
xlabel={layers},
xmin=0.4, xmax=13.6,
xtick style={color=black},
y grid style={white!69.01960784313725!black},
ylabel={values},
ymin=-0.000824099182689119, ymax=0.0173060828364715,
ytick style={color=black},
ytick={-0.0025,0,0.0025,0.005,0.0075,0.01,0.0125,0.015,0.0175},
yticklabels={−0.25,0.00,0.25,0.50,0.75,1.00,1.25,1.50,1.75}
]
\path [draw=color0, fill=color0, opacity=0.2]
(axis cs:1,1.53902596291674e-05)
--(axis cs:1,2.66226851258948e-06)
--(axis cs:2,1.11351921472824e-05)
--(axis cs:3,4.40221991176032e-05)
--(axis cs:4,0.00014359983933723)
--(axis cs:5,0.00038771649488702)
--(axis cs:6,0.000871032248657139)
--(axis cs:7,0.00164252806247223)
--(axis cs:8,0.00263179545278577)
--(axis cs:9,0.00363474016678337)
--(axis cs:10,0.00439696511388856)
--(axis cs:11,0.00475781482149379)
--(axis cs:12,0.00476704349688634)
--(axis cs:13,0.00466019643860711)
--(axis cs:13,0.0130474327990112)
--(axis cs:13,0.0130474327990112)
--(axis cs:12,0.0144609516034074)
--(axis cs:11,0.0160997268083117)
--(axis cs:10,0.0164819836537824)
--(axis cs:9,0.0148380452523704)
--(axis cs:8,0.011503239324214)
--(axis cs:7,0.00757640234967034)
--(axis cs:6,0.00419284278449118)
--(axis cs:5,0.00193270957121377)
--(axis cs:4,0.000738121397133778)
--(axis cs:3,0.000233181503658105)
--(axis cs:2,6.10811174860585e-05)
--(axis cs:1,1.53902596291674e-05)
--cycle;

\path [draw=color1, fill=color1, opacity=0.2]
(axis cs:1,1.37079868835902e-07)
--(axis cs:1,8.20261458199086e-08)
--(axis cs:2,4.86742732813909e-07)
--(axis cs:3,2.62028316249816e-06)
--(axis cs:4,1.12131447222204e-05)
--(axis cs:5,4.03308783883031e-05)
--(axis cs:6,0.000124141775072579)
--(axis cs:7,0.000330041618538635)
--(axis cs:8,0.000759639693864529)
--(axis cs:9,0.0015122630734359)
--(axis cs:10,0.00260410966887317)
--(axis cs:11,0.00388910178162672)
--(axis cs:12,0.00506067560052088)
--(axis cs:13,0.00576459428216655)
--(axis cs:13,0.00964982429365226)
--(axis cs:13,0.00964982429365226)
--(axis cs:12,0.00844018087728082)
--(axis cs:11,0.00645185501980959)
--(axis cs:10,0.00430207732271953)
--(axis cs:9,0.00249504103616969)
--(axis cs:8,0.00125531771330198)
--(axis cs:7,0.000547193224414612)
--(axis cs:6,0.000206541597406848)
--(axis cs:5,6.72620406203852e-05)
--(axis cs:4,1.87209568113716e-05)
--(axis cs:3,4.38996728667519e-06)
--(axis cs:2,8.13344055094636e-07)
--(axis cs:1,1.37079868835902e-07)
--cycle;

\path [draw=color2, fill=color2, opacity=0.2]
(axis cs:1,0)
--(axis cs:1,0)
--(axis cs:2,0)
--(axis cs:3,0)
--(axis cs:4,0)
--(axis cs:5,8.15859842736295e-07)
--(axis cs:6,6.13593430476065e-06)
--(axis cs:7,3.23376418891844e-05)
--(axis cs:8,0.000133874119468258)
--(axis cs:9,0.000435997904055374)
--(axis cs:10,0.00111871040958425)
--(axis cs:11,0.00226467825748865)
--(axis cs:12,0.00362138042142467)
--(axis cs:13,0.00457831513601737)
--(axis cs:13,0.00686169970779245)
--(axis cs:13,0.00686169970779245)
--(axis cs:12,0.00542750427641177)
--(axis cs:11,0.0033941617551411)
--(axis cs:10,0.00167665498387386)
--(axis cs:9,0.00065344708740546)
--(axis cs:8,0.000200642371515606)
--(axis cs:7,4.84656868978081e-05)
--(axis cs:6,9.19616439130373e-06)
--(axis cs:5,1.22276101102403e-06)
--(axis cs:4,0)
--(axis cs:3,0)
--(axis cs:2,0)
--(axis cs:1,0)
--cycle;

\addplot [line width=2.0pt, color0]
table {%
1 9.02626407087846e-06
2 3.61081548166704e-05
3 0.000138601851387854
4 0.000440860618235504
5 0.00116021303305039
6 0.00253193751657416
7 0.00460946520607129
8 0.00706751738849989
9 0.00923639270957686
10 0.0104394743838355
11 0.0104287708149028
12 0.00961399755014686
13 0.00885381461880914
};
\addlegendentry{well trained}
\addplot [line width=2.0pt, color1]
table {%
1 1.09553007327905e-07
2 6.50043393954273e-07
3 3.50512522458667e-06
4 1.4967050766796e-05
5 5.37964595043441e-05
6 0.000165341686239714
7 0.000438617421476623
8 0.00100747870358325
9 0.0020036520548028
10 0.00345309349579635
11 0.00517047840071815
12 0.00675042823890085
13 0.0077072092879094
};
\addlegendentry{corrupted}
\addplot [line width=2.0pt, color2]
table {%
1 0
2 0
3 0
4 0
5 1.01931042688016e-06
6 7.66604934803219e-06
7 4.04016643934963e-05
8 0.000167258245491932
9 0.000544722495730417
10 0.00139768269672905
11 0.00282942000631488
12 0.00452444234891822
13 0.00572000742190491
};
\addlegendentry{random}
\end{axis}

\end{tikzpicture}
		\vspace{-1.25em} 
	  	\caption{\scriptsize{\nbshort $\nb_j^{(k)}$}}\label{fig:compare_tnb_full}
	\end{subfigure}
	\hfill
	
	\begin{subfigure}[t]{0.25\textwidth}
		\centering
\begin{tikzpicture}[scale=0.6]

\definecolor{color0}{rgb}{0.12156862745098,0.466666666666667,0.705882352941177}
\definecolor{color1}{rgb}{1,0.498039215686275,0.0549019607843137}
\definecolor{color2}{rgb}{0.235294117647059,0.701960784313725,0.443137254901961}

\begin{axis}[
height=6cm,
legend cell align={left},
legend style={at={(0.03,0.97)}, anchor=north west, draw=white!80.0!black},
tick align=outside,
tick pos=left,
width=8cm,
x grid style={white!69.01960784313725!black},
xlabel={layers},
xmin=0.4, xmax=13.6,
xtick style={color=black},
y grid style={white!69.01960784313725!black},
ymin=-0.054899944702629, ymax=1.22311591032194,
ytick style={color=black},
ytick={-0.2,0,0.2,0.4,0.6,0.8,1,1.2,1.4},
yticklabels={−0.2,0.0,0.2,0.4,0.6,0.8,1.0,1.2,1.4}
]
\path [draw=color0, fill=color0, opacity=0.2]
(axis cs:1,0.269719034433365)
--(axis cs:1,0.0886733308434486)
--(axis cs:2,0.101778842508793)
--(axis cs:3,0.117614582180977)
--(axis cs:4,0.12878143787384)
--(axis cs:5,0.141245678067207)
--(axis cs:6,0.170574516057968)
--(axis cs:7,0.22951602935791)
--(axis cs:8,0.318743884563446)
--(axis cs:9,0.426498681306839)
--(axis cs:10,0.534554243087769)
--(axis cs:11,0.624752044677734)
--(axis cs:12,0.685113430023193)
--(axis cs:13,0.71358597278595)
--(axis cs:13,1.16064357757568)
--(axis cs:13,1.16064357757568)
--(axis cs:12,1.1650242805481)
--(axis cs:11,1.13013088703156)
--(axis cs:10,1.0173534154892)
--(axis cs:9,0.832127630710602)
--(axis cs:8,0.619668424129486)
--(axis cs:7,0.435828536748886)
--(axis cs:6,0.319092273712158)
--(axis cs:5,0.274275302886963)
--(axis cs:4,0.274758607149124)
--(axis cs:3,0.283312439918518)
--(axis cs:2,0.27944341301918)
--(axis cs:1,0.269719034433365)
--cycle;

\path [draw=color1, fill=color1, opacity=0.2]
(axis cs:1,0.150583237409592)
--(axis cs:1,0.113911338150501)
--(axis cs:2,0.124691992998123)
--(axis cs:3,0.133493185043335)
--(axis cs:4,0.130237728357315)
--(axis cs:5,0.119351603090763)
--(axis cs:6,0.11540262401104)
--(axis cs:7,0.130473926663399)
--(axis cs:8,0.167038708925247)
--(axis cs:9,0.21879443526268)
--(axis cs:10,0.273710668087006)
--(axis cs:11,0.317655593156815)
--(axis cs:12,0.342094480991364)
--(axis cs:13,0.350226551294327)
--(axis cs:13,0.473463535308838)
--(axis cs:13,0.473463535308838)
--(axis cs:12,0.463163495063782)
--(axis cs:11,0.429913192987442)
--(axis cs:10,0.368514031171799)
--(axis cs:9,0.291701257228851)
--(axis cs:8,0.220419570803642)
--(axis cs:7,0.171347752213478)
--(axis cs:6,0.151937887072563)
--(axis cs:5,0.157805278897285)
--(axis cs:4,0.172511711716652)
--(axis cs:3,0.176814839243889)
--(axis cs:2,0.165004163980484)
--(axis cs:1,0.150583237409592)
--cycle;

\path [draw=color2, fill=color2, opacity=0.2]
(axis cs:1,0.0242142248898745)
--(axis cs:1,0.0161362588405609)
--(axis cs:2,0.0141245359554887)
--(axis cs:3,0.0112250940874219)
--(axis cs:4,0.00866283476352692)
--(axis cs:5,0.00694219395518303)
--(axis cs:6,0.0059116967022419)
--(axis cs:7,0.00524378288537264)
--(axis cs:8,0.00472062267363071)
--(axis cs:9,0.00425695907324553)
--(axis cs:10,0.00384829612448812)
--(axis cs:11,0.00352222588844597)
--(axis cs:12,0.00330125400796533)
--(axis cs:13,0.00319168507121503)
--(axis cs:13,0.00358926481567323)
--(axis cs:13,0.00358926481567323)
--(axis cs:12,0.00372587400488555)
--(axis cs:11,0.00400323327630758)
--(axis cs:10,0.00441949628293514)
--(axis cs:9,0.00495862262323499)
--(axis cs:8,0.0056027821265161)
--(axis cs:7,0.00638120993971825)
--(axis cs:6,0.0074461093172431)
--(axis cs:5,0.00915412977337837)
--(axis cs:4,0.0120156528428197)
--(axis cs:3,0.0162326209247112)
--(axis cs:2,0.0209565851837397)
--(axis cs:1,0.0242142248898745)
--cycle;

\addplot [line width=2.0pt, color0]
table {%
1 0.179196193814278
2 0.190611124038696
3 0.200463518500328
4 0.201770007610321
5 0.207760497927666
6 0.244833394885063
7 0.332672268152237
8 0.469206154346466
9 0.629313170909882
10 0.775953829288483
11 0.877441465854645
12 0.925068795681
13 0.937114775180817
};
\addlegendentry{well trained}
\addplot [line width=2.0pt, color1]
table {%
1 0.132247284054756
2 0.144848078489304
3 0.155154019594193
4 0.151374712586403
5 0.138578444719315
6 0.133670255541801
7 0.150910839438438
8 0.193729132413864
9 0.255247831344604
10 0.321112334728241
11 0.373784393072128
12 0.402628988027573
13 0.411845058202744
};
\addlegendentry{corrupted}
\addplot [line width=2.0pt, color2]
table {%
1 0.0201752409338951
2 0.0175405610352755
3 0.0137288579717278
4 0.010339243337512
5 0.0080481618642807
6 0.0066789030097425
7 0.00581249641254544
8 0.00516170216724277
9 0.00460779061540961
10 0.00413389643654227
11 0.0037627296987921
12 0.00351356389001012
13 0.00339047494344413
};
\addlegendentry{random}
\end{axis}

\end{tikzpicture}
		\vspace{-1.25em} 	  	
	  	\caption{\scriptsize{\lcshort $\lc^{(k)}$}}\label{fig:compare_lc_full}
	\end{subfigure}
	\hfill
	\begin{subfigure}[t]{0.25\textwidth}
		\centering
\begin{tikzpicture}[scale=0.6]

\definecolor{color0}{rgb}{0.12156862745098,0.466666666666667,0.705882352941177}
\definecolor{color1}{rgb}{1,0.498039215686275,0.0549019607843137}
\definecolor{color2}{rgb}{0.235294117647059,0.701960784313725,0.443137254901961}

\begin{axis}[
height=6cm,
legend cell align={left},
legend style={at={(0.03,0.97)}, anchor=north west, draw=white!80.0!black},
tick align=outside,
tick pos=left,
title={norm across layers},
width=8cm,
x grid style={white!69.01960784313725!black},
xlabel={layers},
xmin=0.4, xmax=13.6,
xtick style={color=black},
y grid style={white!69.01960784313725!black},
ylabel={values},
ymin=0.259939684424103, ymax=7.88762995397984,
ytick style={color=black}
]
\path [draw=color0, fill=color0, opacity=0.2]
(axis cs:1,1.60861100510454)
--(axis cs:1,0.837724205125173)
--(axis cs:2,0.833891261883632)
--(axis cs:3,0.83592674082408)
--(axis cs:4,0.846233033438614)
--(axis cs:5,0.850768130427016)
--(axis cs:6,0.830019604723903)
--(axis cs:7,0.777030487776936)
--(axis cs:8,0.705202413658147)
--(axis cs:9,0.640772266896689)
--(axis cs:10,0.606652878494819)
--(axis cs:11,0.608704066851846)
--(axis cs:12,0.632802822061768)
--(axis cs:13,0.654269209343345)
--(axis cs:13,0.691189326526635)
--(axis cs:13,0.691189326526635)
--(axis cs:12,0.668018212880103)
--(axis cs:11,0.641876358290143)
--(axis cs:10,0.638835006400607)
--(axis cs:9,0.673364676191563)
--(axis cs:8,0.739361678035985)
--(axis cs:7,0.815641497884022)
--(axis cs:6,0.884399551806022)
--(axis cs:5,0.952688677591885)
--(axis cs:4,1.05722909364123)
--(axis cs:3,1.23167125416328)
--(axis cs:2,1.4494092377057)
--(axis cs:1,1.60861100510454)
--cycle;

\path [draw=color1, fill=color1, opacity=0.2]
(axis cs:1,1.92452668083742)
--(axis cs:1,1.79414890350294)
--(axis cs:2,1.77223078583635)
--(axis cs:3,1.76725883955574)
--(axis cs:4,1.80172107718851)
--(axis cs:5,1.84695169185468)
--(axis cs:6,1.84890765826342)
--(axis cs:7,1.77640994273292)
--(axis cs:8,1.64207450198184)
--(axis cs:9,1.48877002533571)
--(axis cs:10,1.36262263782975)
--(axis cs:11,1.28964409477502)
--(axis cs:12,1.26555073900297)
--(axis cs:13,1.26493429323197)
--(axis cs:13,1.32703435526486)
--(axis cs:13,1.32703435526486)
--(axis cs:12,1.32069692774067)
--(axis cs:11,1.33547350354057)
--(axis cs:10,1.40173361509727)
--(axis cs:9,1.52541370013006)
--(axis cs:8,1.67897380949724)
--(axis cs:7,1.81430904995821)
--(axis cs:6,1.88858724362443)
--(axis cs:5,1.89201609593833)
--(axis cs:4,1.86016529469353)
--(axis cs:3,1.84919332743526)
--(axis cs:2,1.88248700436756)
--(axis cs:1,1.92452668083742)
--cycle;

\path [draw=color2, fill=color2, opacity=0.2]
(axis cs:1,3.17511011335172)
--(axis cs:1,3.03292496973157)
--(axis cs:2,3.26079320182151)
--(axis cs:3,3.67456663014467)
--(axis cs:4,4.20936826192481)
--(axis cs:5,4.80548972425391)
--(axis cs:6,5.42424242338369)
--(axis cs:7,6.03451354876476)
--(axis cs:8,6.58742904256368)
--(axis cs:9,7.02180384530525)
--(axis cs:10,7.30380979371004)
--(axis cs:11,7.45073811267229)
--(axis cs:12,7.51109204771044)
--(axis cs:13,7.52967917966157)
--(axis cs:13,7.54091675990913)
--(axis cs:13,7.54091675990913)
--(axis cs:12,7.52246244684736)
--(axis cs:11,7.46247140450084)
--(axis cs:10,7.31626897172671)
--(axis cs:9,7.03544484440024)
--(axis cs:8,6.60278352348985)
--(axis cs:7,6.05250104800435)
--(axis cs:6,5.44715743201733)
--(axis cs:5,4.83864992976907)
--(axis cs:4,4.26216981173801)
--(axis cs:3,3.75787360409863)
--(axis cs:2,3.37887532817814)
--(axis cs:1,3.17511011335172)
--cycle;

\addplot [line width=2.0pt, color0]
table {%
1 1.22316760511486
2 1.14165024979467
3 1.03379899749368
4 0.951731063539922
5 0.901728404009451
6 0.857209578264963
7 0.796335992830479
8 0.722282045847066
9 0.657068471544126
10 0.622743942447713
11 0.625290212570995
12 0.650410517470935
13 0.67272926793499
};
\addlegendentry{well trained}
\addplot [line width=2.0pt, color1]
table {%
1 1.85933779217018
2 1.82735889510196
3 1.8082260834955
4 1.83094318594102
5 1.86948389389651
6 1.86874745094393
7 1.79535949634556
8 1.66052415573954
9 1.50709186273288
10 1.38217812646351
11 1.3125587991578
12 1.29312383337182
13 1.29598432424841
};
\addlegendentry{corrupted}
\addplot [line width=2.0pt, color2]
table {%
1 3.10401754154165
2 3.31983426499982
3 3.71622011712165
4 4.23576903683141
5 4.82206982701149
6 5.43569992770051
7 6.04350729838456
8 6.59510628302676
9 7.02862434485274
10 7.31003938271838
11 7.45660475858656
12 7.5167772472789
13 7.53529796978535
};
\addlegendentry{random}
\end{axis}

\end{tikzpicture}
		\vspace{-1.25em} 	  	
	  	\caption{\scriptsize{$\| \cnnweight\|_F$}}\label{fig:compare_norm_full}
	\end{subfigure}
	\hfill
	\begin{subfigure}[t]{0.25\textwidth}
		\centering
\begin{tikzpicture}[scale=0.6]

\definecolor{color0}{rgb}{0.12156862745098,0.466666666666667,0.705882352941177}
\definecolor{color1}{rgb}{1,0.498039215686275,0.0549019607843137}
\definecolor{color2}{rgb}{0.235294117647059,0.701960784313725,0.443137254901961}

\begin{axis}[
height=6cm,
legend cell align={left},
legend style={at={(0.03,0.97)}, anchor=north west, draw=white!80.0!black},
tick align=outside,
tick pos=left,
title={right hand side across layers},
width=8cm,
x grid style={white!69.01960784313725!black},
xlabel={layers},
xmin=0.4, xmax=13.6,
xtick style={color=black},
y grid style={white!69.01960784313725!black},
ylabel={values},
ymin=-0.00169396865621594, ymax=0.0355733444145008,
ytick style={color=black},
ytick={-0.005,0,0.005,0.01,0.015,0.02,0.025,0.03,0.035,0.04},
yticklabels={−0.5,0.0,0.5,1.0,1.5,2.0,2.5,3.0,3.5,4.0}
]
\path [draw=color0, fill=color0, opacity=0.2]
(axis cs:1,0.000101730870933412)
--(axis cs:1,1.96271189421916e-05)
--(axis cs:2,4.27917733511355e-05)
--(axis cs:3,0.000109941297283075)
--(axis cs:4,0.000268924910424118)
--(axis cs:5,0.000597071668712975)
--(axis cs:6,0.00117883987260182)
--(axis cs:7,0.00204989603112582)
--(axis cs:8,0.0031354122584105)
--(axis cs:9,0.00423462118707886)
--(axis cs:10,0.00508426802598728)
--(axis cs:11,0.00549510102622146)
--(axis cs:12,0.00550168002896586)
--(axis cs:13,0.00537037849170034)
--(axis cs:13,0.0142894231207172)
--(axis cs:13,0.0142894231207172)
--(axis cs:12,0.0156078723862379)
--(axis cs:11,0.0170247124726032)
--(axis cs:10,0.0171250693391927)
--(axis cs:9,0.0153062312149207)
--(axis cs:8,0.0119930968560415)
--(axis cs:7,0.00819224166461073)
--(axis cs:6,0.00487315186951737)
--(axis cs:5,0.00253868930823604)
--(axis cs:4,0.00117666017911986)
--(axis cs:3,0.000499394671738476)
--(axis cs:2,0.00020577262231601)
--(axis cs:1,0.000101730870933412)
--cycle;

\path [draw=color1, fill=color1, opacity=0.2]
(axis cs:1,0.000318636783974072)
--(axis cs:1,0.000191602264895965)
--(axis cs:2,0.000319614420956772)
--(axis cs:3,0.000661813480253461)
--(axis cs:4,0.00142388373189941)
--(axis cs:5,0.0029261826154943)
--(axis cs:6,0.00545533291456229)
--(axis cs:7,0.0090068331679096)
--(axis cs:8,0.0131315421865161)
--(axis cs:9,0.0170236878510498)
--(axis cs:10,0.0197724374874255)
--(axis cs:11,0.0207659329225993)
--(axis cs:12,0.0202408986354753)
--(axis cs:13,0.0193819254221389)
--(axis cs:13,0.0315113807063427)
--(axis cs:13,0.0315113807063427)
--(axis cs:12,0.03295966896203)
--(axis cs:11,0.0338793756385591)
--(axis cs:10,0.0322813711536247)
--(axis cs:9,0.0277483871346609)
--(axis cs:8,0.021315435155914)
--(axis cs:7,0.0145350881979989)
--(axis cs:6,0.00875076607213996)
--(axis cs:5,0.00467309421684007)
--(axis cs:4,0.00227321693652571)
--(axis cs:3,0.00106389898834645)
--(axis cs:2,0.00052188057672078)
--(axis cs:1,0.000318636783974072)
--cycle;

\path [draw=color2, fill=color2, opacity=0.2]
(axis cs:1,1.88712844484039e-10)
--(axis cs:1,1.19725731621237e-10)
--(axis cs:2,1.03704445592821e-09)
--(axis cs:3,1.14362609041035e-08)
--(axis cs:4,1.37130050896594e-07)
--(axis cs:5,1.71120429960652e-06)
--(axis cs:6,1.05861346898875e-05)
--(axis cs:7,5.01986027010191e-05)
--(axis cs:8,0.000192229305706235)
--(axis cs:9,0.000591435331456011)
--(axis cs:10,0.00145691708834582)
--(axis cs:11,0.00286774260091616)
--(axis cs:12,0.00450582201457596)
--(axis cs:13,0.00564844671954007)
--(axis cs:13,0.00798588062280282)
--(axis cs:13,0.00798588062280282)
--(axis cs:12,0.00637184754469401)
--(axis cs:11,0.00405724815058762)
--(axis cs:10,0.00206271345712485)
--(axis cs:9,0.000838209532391121)
--(axis cs:8,0.00027280886370169)
--(axis cs:7,7.13694127182469e-05)
--(axis cs:6,1.50862186287761e-05)
--(axis cs:5,2.44851180043275e-06)
--(axis cs:4,1.99997993228185e-07)
--(axis cs:3,1.68752669369147e-08)
--(axis cs:2,1.57565404893428e-09)
--(axis cs:1,1.88712844484039e-10)
--cycle;

\addplot [line width=2.0pt, color0]
table {%
1 6.06789949378017e-05
2 0.000124282197833573
3 0.000304667984510776
4 0.00072279254477199
5 0.00156788048847451
6 0.0030259958710596
7 0.00512106884786827
8 0.00756425455722599
9 0.00977042620099976
10 0.01110466868259
11 0.0112599067494123
12 0.0105547762076019
13 0.00982990080620876
};
\addlegendentry{well trained}
\addplot [line width=2.0pt, color1]
table {%
1 0.000255119524435019
2 0.000420747498838776
3 0.000862856234299954
4 0.00184855033421256
5 0.00379963841616718
6 0.00710304949335113
7 0.0117709606829542
8 0.0172234886712151
9 0.0223860374928553
10 0.0260269043205251
11 0.0273226542805792
12 0.0266002837987527
13 0.0254466530642408
};
\addlegendentry{corrupted}
\addplot [line width=2.0pt, color2]
table {%
1 1.54219288052638e-10
2 1.30634925243125e-09
3 1.41557639205091e-08
4 1.6856402206239e-07
5 2.07985805001964e-06
6 1.28361766593318e-05
7 6.0784007709633e-05
8 0.000232519084703963
9 0.000714822431923566
10 0.00175981527273534
11 0.00346249537575189
12 0.00543883477963499
13 0.00681716367117144
};
\addlegendentry{random}
\end{axis}

\end{tikzpicture}	
		\vspace{-1.25em} 
	  	\caption{\scriptsize{RHS}}\label{fig:compare_rhs_full}
	\end{subfigure}
  	\caption{Comparison of proposed properties among well-trained, corrupted and randomly-initialized \ourcpnn-VGG-16 models} 
	\label{fig:properties_full}
\end{figure*}
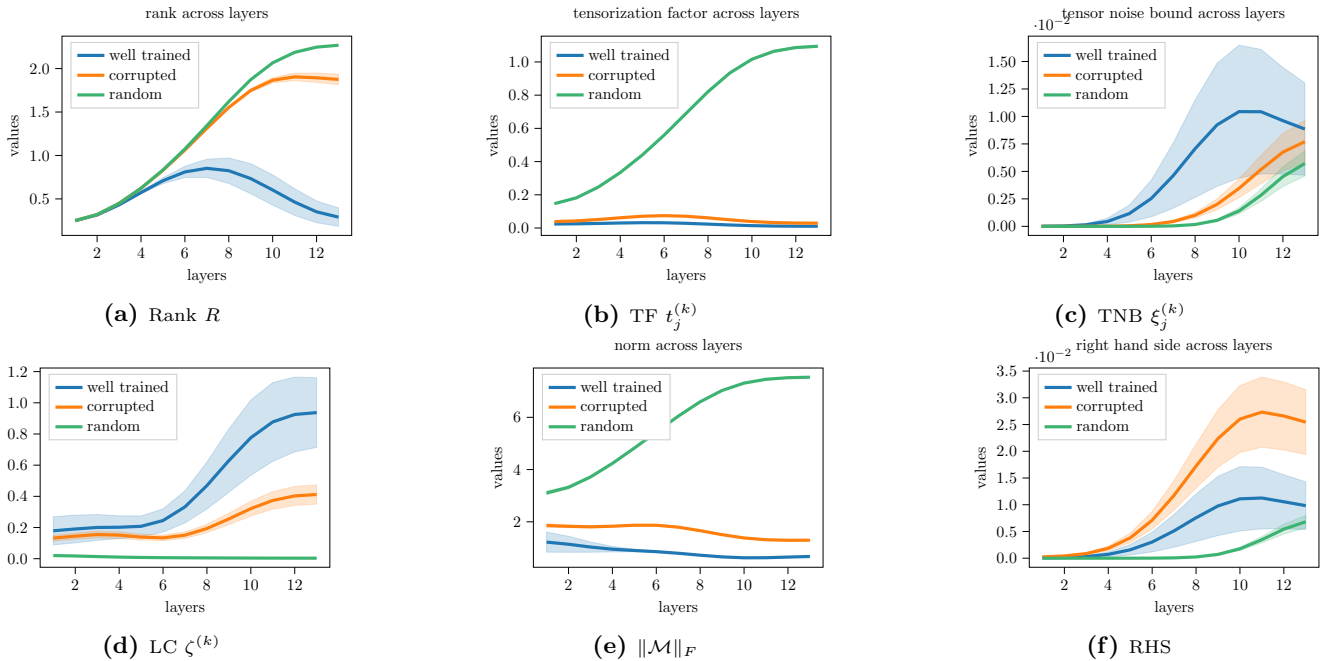

The reason why well-trained models have the largest \tflong is in Figure~\ref{fig:cp_spec_all_n} as the corrupted model usually has a very small leading value in its CP spectrum of later layers; yet as explained before, this does not necessarily indicate that corrupted models have more compressibility or low-rankness. The reason why the CP spectrum of corrupted models tend to have a small leading value is still a interesting question to study and we defer this to future work.

\textbf{Optimization settings for obtaining the well-trained, corrupted, and randomly initialized models of \ourcpnn-VGG-16.}  We obtain well-trained \ourcpnn-VGG-16 using the same hyperparameter settings as mentioned in Appendix Section~\ref{app:opt_setting}. For corrupted \ourcpnn-VGG-16, we train the model under 50\% of label noise but using the same set of hyperparameters as the well-trained models. For \ourcpnn-VGG-16 with random initialization, we just train the models for less than $1$ epoch.  For each set of these models, we obtain 200 instances using different random seeds. 


\section{Common Definitions and Propositions}
\label{app:definitions}

In this section, we will briefly review three key concepts underlying all analysis in this work, 
including {\em (multidimensional discrete Fourier transform)}, 
{\em CP decomposition} and {\em 2D-convolutional layer} in neural networks. 

\subsection{Multidimensional Discrete Fourier Transform (MDFT)}
\label{sub:Fourier}

\begin{definition} \label{def:mdft}
(Multidimensional discrete Fourier transform, MDFT)
An $m$-dimensional MDFT $\ourfft{}{m}$ defines a mapping 
from an $m$-order tensor $\mytensor{X} \in \Rbb^{N_1 \times \cdots \times N_m}$
to another {\em complex} $m$-order tensor $\Tilde{\mytensor{X}} \in \Cbb^{N_1 \times \cdots \times N_m}$ such that
\begin{equation}
\Tilde{\mytensor{X}}_{f_1,\dots, f_m}
= \left( \prod_{l = 1}^{m} N_l \right)^{-\frac{1}{2}} 
\sum_{n_1=1}^{N_1} \cdots \sum_{n_m = 1}^{N_m} 
\mytensor{X}_{n_1, \cdots, n_m} 
\left( \prod_{l = 1}^{m} \omega_{N_l}^{f_l n_l} \right)
\label{eq:mdft}
\end{equation}
where $\omega_{N_l}=\exp \left(-\mathrm{j} 2\pi/N_l\right)$ 
and $\left( \prod_{l = 1}^{m} N_l \right)^{-\frac{1}{2}}$ is the {\em normalization factor} that makes $\ourfft{}{m}$ unitary.
Through out the paper, we will use symbols with tilde (e.g. $\Tilde{\mytensor{X}}$) to denote tensors after MDFT.

MDFT can also be applied on a subset of the dimensions $\mathcal{I} \subseteq [m]$, 
and in this case we denote the mapping as $\ourfft{\mathcal{I}}{m}$.
\begin{equation}
\Tilde{\mytensor{X}}_{i_1,\dots, i_m}
= \left( \prod_{l \in \mathcal{I}} N_l \right)^{-\frac{1}{2}} 
\sum_{\forall l \in \mathcal{I}} 
\mytensor{X}_{n_1, \cdots, n_m} 
\left( \prod_{l \in \mathcal{I}} \omega_{N_l}^{f_l n_l} \right)
\label{eq:mdft-partial}
\end{equation}
where $i_l = f_l$ if $l \in \mathcal{I}$ and $i_l = n_l$ for $l \notin \mathcal{I}$. 

\end{definition}

\begin{fact} \label{fact:mdft-separability}
(Separability of MDFT)
An $m$-dimensional MDFT $\ourfft{}{m}$ is equivalent to a composition of $m$ unidimensional {DFT}s, i.e.\
\begin{equation}
\ourfft{}{m}=\ourfft{1}{m} \circ \ourfft{2}{m} \circ \cdots \circ \ourfft{m}{m}
\end{equation}
Similarly, $\ourfft{\mathcal{I}}{m}$ is identical to a composition of $|\mathcal{I}|$ unidimensional {DFT}s over corresponding dimensions. 
\end{fact}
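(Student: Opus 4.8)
```latex
\textbf{Proof plan for Fact~\ref{fact:mdft-separability} (separability of MDFT).}
The plan is to prove the claim by a direct algebraic manipulation of the defining sum in equation~\eqref{eq:mdft}, isolating the summation variables one dimension at a time. First I would observe that the kernel of the MDFT is a \emph{product} of one-dimensional kernels: the factor appearing in~\eqref{eq:mdft} is $\prod_{l=1}^{m}\omega_{N_l}^{f_l n_l}$, and the normalization constant $\big(\prod_{l=1}^m N_l\big)^{-1/2}$ likewise factors as $\prod_{l=1}^m N_l^{-1/2}$. Because each factor $\omega_{N_l}^{f_l n_l}$ depends only on the pair $(f_l,n_l)$, the multi-index sum $\sum_{n_1=1}^{N_1}\cdots\sum_{n_m=1}^{N_m}$ can be carried out iteratively: fixing $n_2,\dots,n_m$, the sum over $n_1$ produces exactly a one-dimensional DFT $\ourfft{1}{m}$ acting on the first coordinate, after which the remaining sum over $n_2,\dots,n_m$ with the residual kernel $\prod_{l=2}^m\omega_{N_l}^{f_l n_l}$ is, by the same reasoning, the composition $\ourfft{2}{m}\circ\cdots\circ\ourfft{m}{m}$ applied to the resulting tensor.

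The key steps, in order, are: (1) write out $\big(\ourfft{}{m}\mytensor{X}\big)_{f_1,\dots,f_m}$ from~\eqref{eq:mdft} and split both the normalization and the exponential kernel into their $m$ unidimensional factors; (2) use the fact that finite sums over disjoint index sets commute (Fubini for finite sums), so the order of the summations $\sum_{n_1},\dots,\sum_{n_m}$ may be chosen freely; (3) peel off the innermost sum $\sum_{n_1}N_1^{-1/2}\mytensor{X}_{n_1,n_2,\dots,n_m}\,\omega_{N_1}^{f_1 n_1}$ and recognize it, by Definition~\ref{def:mdft} with $\mathcal{I}=\{1\}$, as $\ourfft{1}{m}$ applied along dimension $1$; (4) repeat inductively on the remaining $m-1$ dimensions, which establishes $\ourfft{}{m}=\ourfft{1}{m}\circ\cdots\circ\ourfft{m}{m}$; and finally (5) note that nothing in the argument used that $\mathcal{I}=[m]$, so restricting all manipulations to the coordinates in an arbitrary $\mathcal{I}\subseteq[m]$ (leaving the others as passive spectators, exactly as in~\eqref{eq:mdft-partial}) yields the analogous factorization of $\ourfft{\mathcal{I}}{m}$ into $|\mathcal{I}|$ unidimensional DFTs.

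There is no real obstacle here — the statement is essentially a restatement of the multiplicativity of exponentials, $\omega^{a+b}=\omega^a\omega^b$ applied across coordinates, together with the distributive law for finite sums over products. The only point that deserves a word of care is bookkeeping of the normalization constant: one must check that distributing $N_l^{-1/2}$ into the $l$-th unidimensional transform matches the normalization convention used in Definition~\ref{def:mdft} for a single DFT, so that the composition is literally equal (not merely proportional) to $\ourfft{}{m}$; this is immediate since $\big(\prod_l N_l\big)^{-1/2}=\prod_l N_l^{-1/2}$ and each $N_l^{-1/2}$ is precisely the one-dimensional normalization factor. I would therefore present the proof compactly: one displayed line factoring the kernel, one sentence invoking commutativity of finite sums, and one sentence invoking the induction, with the partial-transform case following verbatim by restricting to $\mathcal{I}$.
```
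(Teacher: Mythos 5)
Your proposal is correct: factoring the product kernel $\prod_l \omega_{N_l}^{f_l n_l}$ and the normalization $\bigl(\prod_l N_l\bigr)^{-1/2}$ into unidimensional pieces and iterating the finite sums is exactly the standard argument, and your care about the normalization convention is the only point that needed checking. The paper states this as a Fact without proof, so there is no alternative route to compare against; your argument is the one the paper implicitly relies on.
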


\begin{fact} \label{fact:mdft-unitary}
(MDFT is unitary) For an MDFT $\ourfft{}{}$, its adjoint $\ourfft{\ast}{}$ is equal to its inverse $\ourfft{-1}{}$, i.e.\ $\ourfft{\ast}{} = \ourfft{-1}{}$. An immediate corollary of this property is that the operator norm is invariant to MDFT: Given an operator $\mathcal{A}$, its operator norm of $\mathcal{A}$ is equal to $\ourfft{\ast}{} \mathcal{A} \ourfft{}{}$, i.e.\ $\|\mathcal{A}\| = \| \ourfft{\ast}{} \mathcal{A} \ourfft{}{} \|$.
\end{fact}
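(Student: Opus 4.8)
The plan is to reduce the multidimensional statement to the one-dimensional case using the separability already recorded in Fact~\ref{fact:mdft-separability}, and then verify unitarity of the one-dimensional DFT by the standard roots-of-unity orthogonality identity. I would begin by fixing the inner products: on each space $\Cbb^{N_1\times\cdots\times N_m}$ take the usual Hermitian inner product $\langle \mytensor{X},\mytensor{Y}\rangle = \sum \mytensor{X}_{n_1\dots n_m}\overline{\mytensor{Y}_{n_1\dots n_m}}$, so that ``adjoint'' and ``unitary'' are unambiguous; this is the inner product under which the claim holds.

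First I would recall that a single $N$-point DFT is the linear map on $\Cbb^N$ with matrix $F$, $F_{fn} = N^{-1/2}\omega_N^{fn}$, $\omega_N = \exp(-\mathrm{j}2\pi/N)$. Its adjoint (conjugate transpose) $F^\ast$ has entries $(F^\ast)_{nf} = N^{-1/2}\overline{\omega_N^{fn}} = N^{-1/2}\omega_N^{-fn}$. I would then compute $(F^\ast F)_{n n'} = N^{-1}\sum_{f=0}^{N-1}\omega_N^{f(n-n')}$ and invoke the geometric-series identity that $\sum_{f=0}^{N-1}\omega_N^{fk}$ equals $N$ when $k$ is a multiple of $N$ and $0$ otherwise, giving $F^\ast F = I$, i.e. $F^\ast = F^{-1}$; note the normalization $N^{-1/2}$ is exactly what makes the constant come out to $1$. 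By Fact~\ref{fact:mdft-separability}, $\ourfft{}{m} = \ourfft{1}{m}\circ\ourfft{2}{m}\circ\cdots\circ\ourfft{m}{m}$ is a composition of one-dimensional DFTs along distinct (commuting) coordinate axes; since a composition of unitaries is unitary and $(\ourfft{1}{m}\circ\cdots\circ\ourfft{m}{m})^\ast = (\ourfft{m}{m})^\ast\circ\cdots\circ(\ourfft{1}{m})^\ast$, we get $\ourfft{\ast}{} = \ourfft{-1}{}$. The argument applies verbatim to the partial transform $\ourfft{\mathcal{I}}{m}$, which is the composition over the axes in $\mathcal{I}$ only, and one checks that the normalization $(\prod_{l} N_l)^{-1/2}$ in~\eqref{eq:mdft} distributes as one factor $N_l^{-1/2}$ per axis.

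For the corollary I would use the elementary fact that conjugation by a unitary preserves the operator norm. Writing $U = \ourfft{}{}$, for any operator $\mathcal{A}$ and any unit vector $x$ we have $\|U^\ast \mathcal{A} U x\| = \|\mathcal{A} U x\|$ since $U^\ast$ is an isometry; and as $x$ ranges over the unit sphere so does $y = U x$, because $U$ is an isometry onto. Hence $\|U^\ast \mathcal{A} U\| = \sup_{\|x\|=1}\|\mathcal{A} U x\| = \sup_{\|y\|=1}\|\mathcal{A} y\| = \|\mathcal{A}\|$. There is essentially no hard step here; the only thing requiring care is bookkeeping --- keeping the complex inner product fixed throughout, making sure the normalization constant splits one factor per axis, and checking the partial-transform version inherits the argument --- so I would state the inner products explicitly at the outset to avoid ambiguity.
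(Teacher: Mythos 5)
Your proof is correct. The paper states this as a Fact without supplying any proof, so there is nothing to compare against; your argument --- unitarity of the one-dimensional DFT via the roots-of-unity orthogonality relation (where the $N^{-1/2}$ normalization is exactly what makes $F^\ast F = I$), extension to the multidimensional and partial transforms via the separability in Fact~\ref{fact:mdft-separability}, and invariance of the operator norm under conjugation by isometries onto --- is the standard and complete verification, and your care in fixing the Hermitian inner product (so that the induced operator norm is the one with respect to the Frobenius norm on tensors, as used later in Lemma~\ref{lem:cnn-operator-norm}) is exactly the right bookkeeping.
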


\subsection{CP decomposition}
\label{sub:cp-decomposition}

\begin{definition} \label{def:cp-decomposition}
(CP decomposition) Given an $m$-order tensor $\mytensor{T} \in \Rbb^{N_1 \times \cdots \times N_m}$, a CP decomposition factorizes $\mytensor{T}$ into $m$ {\em core factors} $\{\mymatrix{K}^{l} \}_{l = 1}^{m}$ with $\mymatrix{K}^{l} \in \Rbb^{R \times N_l}$ (with its $r^{\tha}$ column as $\myvector{k}^{l}_{r} \in \Rbb^{N_l}$) such that
\begin{subequations}
\begin{gather}
\mytensor{T} =  \sum_{r = 1}^{R} \lambda_r \myvector{k}^{1}_{r} \otimes \cdots \otimes \myvector{k}^{m}_{r} 
\label{eq:cp-decomposition} \\
\mytensor{T}_{n_1, \cdots, n_m} = \sum_{r = 1}^{R} \lambda_r \mymatrix{K}^{1}_{r, n_1} \cdots \mymatrix{K}^{m}_{r, n_m}
\end{gather}
\end{subequations}
where each column $\myvector{k}^{l}_{r}$ has unit $\ell_2$ norm, 
i.e.\ $\|\myvector{k}^{l}_{r}\|_2 = 1, \forall r \in [R], l \in [m]$. 
Without loss of generality, we assume the {\em CP eigenvalues} are positive and sorted in decreasing order, 
i.e.\ $\lambda_1 \geq \lambda_2 \geq \cdots \geq \lambda_m > 0$. 
If the columns in $\mymatrix{K}^{l}$ are orthogonal, i.e.\ $\langle \myvector{k}^{l}_{r}, \myvector{k}^{l}_{r^{\prime}} \rangle = 1$ for $r \neq r^{\prime}$, the factorization is further named as {\em orthogonal CP decomposition}.
\end{definition}  

\begin{lemma} \label{lem:cp-mdft}
(MDFT of CP decomposition) If an $m$-order tensor $\mytensor{T} \in \Rbb^{N_1 \times \cdots \times N_m}$ takes a CP decomposition as in Eq.~\eqref{eq:cp-decomposition}, its (all-dimensional) MDFT $\mytensor{\Tilde{T}} = \ourfft{}{m}(\mytensor{T}) \in \Cbb^{N_1 \times \cdots \times N_m}$ also takes a CP format as
\begin{gather}
\Tilde{\mytensor{T}} = \sum_{r = 1}^{R} \lambda_r 
\tilde{\myvector{k}}^{1}_r \otimes \cdots \otimes \tilde{\myvector{k}}^{m}_r
\label{eq:cp-mdft} \\
\Tilde{\mytensor{T}}_{f_1, \cdots, f_m} = \sum_{r = 1}^{R} \lambda_r \tilde{\mymatrix{K}}^{1}_{r, f_1} \cdots \tilde{\mymatrix{K}}^{m}_{r, f_m}
\end{gather}
where $\tilde{\mymatrix{K}}^{l} = \ourfft{2}{2}(\mymatrix{K}^{l}), \forall l \in [m]$. The result can be extended to MDFT where a subset of dimensions are transformed.  
\end{lemma}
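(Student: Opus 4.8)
\textbf{Proof proposal for Lemma~\ref{lem:cp-mdft} (MDFT of CP decomposition).}

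The plan is to exploit the multilinearity of the outer product together with the separability of the MDFT (Fact~\ref{fact:mdft-separability}). First I would recall that the MDFT is a linear map on the space of $m$-order tensors, so applying $\ourfft{}{m}$ to the finite sum $\mytensor{T} = \sum_{r=1}^{R} \lambda_r \, \myvector{k}^{1}_{r} \otimes \cdots \otimes \myvector{k}^{m}_{r}$ commutes with the summation and with the scalars $\lambda_r$; hence it suffices to show that a single rank-one term transforms as $\ourfft{}{m}(\myvector{k}^{1}_{r} \otimes \cdots \otimes \myvector{k}^{m}_{r}) = \tilde{\myvector{k}}^{1}_{r} \otimes \cdots \otimes \tilde{\myvector{k}}^{m}_{r}$, where $\tilde{\myvector{k}}^{l}_{r}$ is the (unidimensional) DFT of $\myvector{k}^{l}_{r}$.

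The core step is a direct index computation. Writing the $(f_1,\dots,f_m)$ entry using Eq.~\eqref{eq:mdft} and substituting the rank-one form $\mytensor{X}_{n_1,\dots,n_m} = \prod_{l=1}^{m} \myvector{k}^{l}_{r}(n_l)$, the $m$-fold sum factors across the indices $n_1,\dots,n_m$ because the summand is a product of terms each depending on a single $n_l$, and the normalization factor $(\prod_{l} N_l)^{-1/2}$ likewise factors as $\prod_{l} N_l^{-1/2}$. This yields
\begin{equation}
\prod_{l=1}^{m} \left( N_l^{-1/2} \sum_{n_l=1}^{N_l} \myvector{k}^{l}_{r}(n_l)\, \omega_{N_l}^{f_l n_l} \right) = \prod_{l=1}^{m} \tilde{\myvector{k}}^{l}_{r}(f_l),
\end{equation}
which is exactly the $(f_1,\dots,f_m)$ entry of $\tilde{\myvector{k}}^{1}_{r} \otimes \cdots \otimes \tilde{\myvector{k}}^{m}_{r}$. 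Summing over $r$ with weights $\lambda_r$ gives Eq.~\eqref{eq:cp-mdft}. Alternatively, one can phrase this same argument as an immediate consequence of Fact~\ref{fact:mdft-separability}: the composition $\ourfft{1}{m} \circ \cdots \circ \ourfft{m}{m}$ applied to an outer product acts on the $l$-th mode independently, and a unidimensional DFT along mode $l$ of $\myvector{k}^{1}_{r} \otimes \cdots \otimes \myvector{k}^{m}_{r}$ simply replaces the $l$-th factor $\myvector{k}^{l}_{r}$ by $\tilde{\myvector{k}}^{l}_{r}$ while leaving the others untouched.

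Finally, to identify the core factors: stacking the vectors $\tilde{\myvector{k}}^{l}_{r}$ as columns gives the matrix $\tilde{\mymatrix{K}}^{l}$, and applying a one-dimensional DFT to each column of $\mymatrix{K}^{l} \in \Rbb^{R \times N_l}$ is precisely $\ourfft{2}{2}(\mymatrix{K}^{l})$ in the paper's notation (transform along the second, length-$N_l$, axis), establishing $\tilde{\mymatrix{K}}^{l} = \ourfft{2}{2}(\mymatrix{K}^{l})$. For the partial-transform extension, I would simply restrict the product and the factoring argument to the dimensions in $\mathcal{I}$, leaving the remaining modes' factors unchanged, which follows verbatim from Eq.~\eqref{eq:mdft-partial} and the same separability reasoning. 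I do not anticipate a genuine obstacle here — the only point requiring a little care is bookkeeping the normalization constant and confirming it distributes correctly across modes, and making sure the notation $\ourfft{2}{2}$ is interpreted as "DFT along axis 2" so that the per-column statement is literally correct.
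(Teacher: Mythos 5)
Your proposal is correct and follows essentially the same route as the paper: substitute the CP form into the MDFT definition, exchange the sum over $r$ with the $m$-fold index sums, and factor the summand and the normalization constant across modes so that each core factor is replaced by its unidimensional DFT. The additional remarks on separability and the partial-transform case are consistent with the paper's (brief) treatment and introduce no gap.
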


\begin{proof} (of Lemma~\ref{lem:cp-mdft}) 
According to the definition of multidimensional discrete Fourier transform, we have
\begin{align}
\Tilde{\mytensor{T}}_{n_1, \cdots, n_m} 
& = \left( \prod_{l = 1}^{m} N_l \right)^{-\frac{1}{2}} 
\sum_{n_1=1}^{N_1} \cdots \sum_{n_m = 1}^{N_m} 
\mytensor{T}_{n_1, \cdots, n_m} 
\left( \prod_{l = 1}^{m} \omega_{N_l}^{f_l n_l} \right) \\
& = \left( \prod_{l = 1}^{m} N_l \right)^{-\frac{1}{2}} 
\sum_{n_1=1}^{N_1} \cdots \sum_{n_m = 1}^{N_m} 
\left( \sum_{r = 1}^{R} \lambda_r \tilde{\mymatrix{K}}^{1}_{r, n_1} \cdots \tilde{\mymatrix{K}}^{m}_{r, n_m} \right)
\left( \prod_{l = 1}^{m} \omega_{N_l}^{f_l n_l} \right) \\
& = \sum_{r = 1}^{R} \lambda_r 
\left( N_1^{-1/2} \sum_{n_1 = 1}^{N_1} \mymatrix{K}^{1}_{r, n_1} \omega_{N_1}^{f_1 n_1} \right) \cdots
\left( N_m^{-1/2} \sum_{n_m = 1}^{N_m} \mymatrix{K}^{m}_{r, n_m} \omega_{N_m}^{f_m n_m} \right) \\
& = \sum_{r = 1}^{R} \lambda_r \tilde{\mymatrix{K}}^{1}_{r, f_1} \cdots \tilde{\mymatrix{K}}^{m}_{r, f_m}
\end{align}
which completes the proof.
\end{proof}

\subsection{2D-Convolutional Layer in Neural Networks}
\label{sub:cnn}

\begin{definition} \label{def:conv-layer}
(2D-convolutional layer) 
In {CNN}s, a 2D-convolutional layer is 
parametrized by a $4^{\tha}$-order tensor $\mytensor{M} \in \Rbb^{k_x \times k_y \times T \times S}$ (with $k_x \times k_y$ kernels). 
It defines a mapping from a $3^{{\mbox{\tiny rd}}}$-order input tensor $\mytensor{X} \in \Rbb^{H \times W \times S}$ (with $S$ channels)
to another $3^{{\mbox{\tiny rd}}}$-order output tensor $\mytensor{Y} \in \Rbb^{H \times W \times T}$ (with $T$ channels).
\begin{gather}
\mytensor{Y}_{\bm{:}, \bm{:}, t} 
= \sum_{s = 1}^{S} \mytensor{M}_{\bm{:}, \bm{:}, t, s} \ast \mytensor{X}_{\bm{:}, \bm{:}, s} \\
\mytensor{Y}_{i, j, t} = \sum_{s = 1}^{S} \sum_{p, q} \mytensor{M}_{i - p, j - q, t, s} \mytensor{X}_{p, q, s}
\label{eq:conv-layer}
\end{gather}
where $\ast$ represents a 2D-convolution operator.
\end{definition}

\begin{lemma} \label{lem:cnn-mdft}
(Convolutional theorem of 2D-convolutional layer) 
Suppose 
$\mytensor{\Tilde{X}} =  \ourfft{1,2}{3}(\mytensor{X}) \in \Cbb^{H \times W \times S}$, 
$\mytensor{\Tilde{M}} = \ourfft{1,2}{4}(\mytensor{M}) \in \Cbb^{H \times W \times T \times S}$ and 
$\mytensor{\Tilde{Y}} =  \ourfft{1,2}{3}(\mytensor{Y}) \in \Cbb^{H \times W \times T}$ 
are the MDFT of input, weights and outputs tensors 
$\mytensor{X}$, $\mytensor{W}$ and $\mytensor{Y}$ respectively, then 
these three tensors satisfy the following equation:
\begin{equation}
\mytensor{\Tilde{Y}}_{f, g, t} = \sqrt{H W} \sum_{s = 1}^{S} \mytensor{\Tilde{M}}_{f, g, t, s} \mytensor{\Tilde{X}}_{f, g, s}
\label{eq:cnn-mdft}
\end{equation}
Notice that the equation has a constant $\sqrt{H W}$ since we use a normalized MDFT. 
\end{lemma}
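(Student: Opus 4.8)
The plan is to prove Lemma~\ref{lem:cnn-mdft} by direct substitution: insert the definition of the 2D-convolutional layer, Eq.~\eqref{eq:conv-layer}, into the definition of the partial transform $\ourfft{1,2}{3}$, Eq.~\eqref{eq:mdft-partial}, interchange the order of summation, and perform a shift of the spatial summation index --- the classical argument behind the convolution theorem --- while carefully tracking the normalization factors introduced by our unitary convention for the MDFT.

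Concretely, I would first write, for a fixed output channel $t$ and frequency pair $(f,g)$,
\[
\mytensor{\Tilde{Y}}_{f,g,t}
= (HW)^{-1/2}\sum_{i=1}^{H}\sum_{j=1}^{W}
\Bigl(\sum_{s=1}^{S}\sum_{p,q}\mytensor{M}_{i-p,\,j-q,\,t,s}\,\mytensor{X}_{p,q,s}\Bigr)
\,\omega_H^{fi}\,\omega_W^{gj},
\]
where the spatial indices $i-p$ and $j-q$ are read cyclically (modulo $H$ and $W$, with $\mytensor{M}$ zero-padded to spatial size $H\times W$), i.e.\ the spatial convolution is viewed as circular --- the standard convention under which the convolution theorem is exact. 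Next I would move the sums over $s,p,q$ outside and isolate the inner sum $\sum_{i,j}\mytensor{M}_{i-p,\,j-q,\,t,s}\,\omega_H^{fi}\,\omega_W^{gj}$. Substituting $i'=i-p$, $j'=j-q$ and using $\omega_H^{f(i'+p)}=\omega_H^{fi'}\,\omega_H^{fp}$ (valid since $\omega_H^{fH}=1$) and likewise for $\omega_W$, the reindexing is a bijection of the summation range and the inner sum becomes $\omega_H^{fp}\,\omega_W^{gq}\cdot\sqrt{HW}\,\mytensor{\Tilde{M}}_{f,g,t,s}$ by the definition of $\mytensor{\Tilde{M}}=\ourfft{1,2}{4}(\mytensor{M})$. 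Substituting back, the residual sum $\sum_{p,q}\mytensor{X}_{p,q,s}\,\omega_H^{fp}\,\omega_W^{gq}$ equals $\sqrt{HW}\,\mytensor{\Tilde{X}}_{f,g,s}$; the three normalization factors combine as $(HW)^{-1/2}\cdot(HW)^{1/2}\cdot(HW)^{1/2}=\sqrt{HW}$, which yields $\mytensor{\Tilde{Y}}_{f,g,t}=\sqrt{HW}\sum_{s=1}^{S}\mytensor{\Tilde{M}}_{f,g,t,s}\,\mytensor{\Tilde{X}}_{f,g,s}$, as claimed.

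An alternative route would be to invoke the separability of the MDFT (Fact~\ref{fact:mdft-separability}), reduce the 2D transform to a composition of two 1D DFTs, establish the 1D circular convolution theorem, and compose; but the direct computation above is self-contained and no more work. The only genuinely delicate point --- and the main thing to get right --- is the bookkeeping: one must consistently (i) treat the spatial sum as a circular convolution so that the reindexing $i\mapsto i-p$ is a bijection of $\{1,\dots,H\}$, and (ii) keep track of the $(HW)^{-1/2}$ prefactors appearing in all three transforms, since it is precisely their mismatch that produces the $\sqrt{HW}$ in the statement. Everything else is routine reindexing.
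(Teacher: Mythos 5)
Your proposal is correct and follows essentially the same route as the paper's own proof: apply the partial MDFT to the convolution formula, split the twiddle factor as $\omega_H^{if}=\omega_H^{(i-p)f}\omega_H^{pf}$, reindex, and identify the two DFT factors, with the $(HW)^{-1/2}\cdot(HW)^{1/2}\cdot(HW)^{1/2}=\sqrt{HW}$ bookkeeping matching exactly. The only difference is that you make the circular-convolution convention explicit, which the paper leaves implicit.
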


\begin{proof} (of Lemma~\ref{lem:cnn-mdft})
The theorem can be easily proved by applying MDFT on both sides of Eq.~\eqref{eq:conv-layer}.
\begin{align}
\mytensor{\Tilde{Y}}_{f, g, t} 
& = \frac{1}{\sqrt{H W}} \sum_{i, j} \mytensor{Y}_{i, j, t} \omega_{H}^{i f} \omega_{W}^{j g} \\
& = \frac{1}{\sqrt{H W}} \sum_{i, j} \left( \sum_{s = 1}^{S} \sum_{p, q} \mytensor{M}_{i - p, j - q, t, s} \mytensor{X}_{p, q, s} \right)  \omega_{H}^{i f} \omega_{W}^{j g} \\
& = \sqrt{H W} \sum_{s = 1}^{S} 
\left( \frac{1}{\sqrt{H W}} \sum_{i, j} \mytensor{M}_{i - p, j - q, t, s} \omega_{H}^{(i - p)f} \omega_{W}^{(j - q)g} \right)  
\left( \frac{1}{\sqrt{H W}} \sum_{p, q} \mytensor{X}_{p, q, s} \omega_{H}^{p f} \omega_{W}^{q g} \right) \\
& = \sqrt{H W} \sum_{s = 1}^{S} \mytensor{\Tilde{M}}_{f, g, t, s} \mytensor{\Tilde{X}}_{f, g, s}
\end{align}
\end{proof}

\begin{lemma} \label{lem:cnn-operator-norm}
(Operator norm of 2D-convolutional layer) Suppose we rewrite the tensors in matrix/vector form, i.e.\
$\mytensor{\tilde{X}}_{f, g, s} = \myvector{\tilde{x}}^{(f, g)}_s$, 
$\mytensor{\tilde{M}}_{f, g, t} = \mymatrix{\tilde{M}}^{(f, g)}_{t, s}$, 
$\mytensor{\tilde{Y}}_{f, g, t} = \myvector{\tilde{y}}^{(f, g)}_t$, then Eq.~\eqref{eq:cnn-mdft} can be written using matrix/vector products:
\begin{equation}
\myvector{\Tilde{y}}^{(f, g)}_{t} = 
\sum_{s = 1}^{S} \mymatrix{\tilde{M}}^{(f, g)}_{t, s} \myvector{\tilde{y}}^{(f, g)}_t, ~ \forall f, g
\end{equation}
The operator norm of $\mytensor{M}$, 
defined as $\|\mytensor{M}\| = \max_{\|\mytensor{X}\|_{F} = 1} \|\mytensor{Y}\|_{F}$, 
can be obtained by spectral norms of $\mymatrix{\Tilde{M}}^{(f, g)}$ as:
\begin{equation}
\|\mytensor{M}\| = \sqrt{H W} \max_{f, g} \left\| \mymatrix{M}^{(f, g)} \right\|_{2} 
\end{equation}
\end{lemma}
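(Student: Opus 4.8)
The plan is to prove Lemma~\ref{lem:cnn-operator-norm} by reducing the operator norm of the $4$-th order convolutional tensor $\mytensor{M}$ to a maximum of spectral norms of the slices $\mymatrix{\Tilde{M}}^{(f,g)}$ over frequency pairs, exploiting that MDFT is unitary (Fact~\ref{fact:mdft-unitary}) and that the convolution becomes a block-diagonal multiplication in the frequency domain (Lemma~\ref{lem:cnn-mdft}). First I would observe that since $\ourfft{1,2}{3}$ is unitary, it preserves Frobenius norms: $\fronorm{\mytensor{X}} = \fronorm{\mytensor{\Tilde{X}}}$ and $\fronorm{\mytensor{Y}} = \fronorm{\mytensor{\Tilde{Y}}}$. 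Hence the optimization defining $\|\mytensor{M}\| = \max_{\fronorm{\mytensor{X}}=1}\fronorm{\mytensor{Y}}$ can be carried out entirely in the frequency domain, where by Lemma~\ref{lem:cnn-mdft} (rewritten in matrix/vector form) we have $\myvector{\Tilde{y}}^{(f,g)} = \sqrt{HW}\, \mymatrix{\Tilde{M}}^{(f,g)} \myvector{\Tilde{x}}^{(f,g)}$ for each $(f,g)$, with no coupling between different frequencies.

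The second step is the key decoupling argument. Because $\fronorm{\mytensor{\Tilde{X}}}^2 = \sum_{f,g}\norm{\myvector{\Tilde{x}}^{(f,g)}}^2$ and likewise for $\mytensor{\Tilde{Y}}$, we get
\begin{equation}
\fronorm{\mytensor{\Tilde{Y}}}^2 = HW \sum_{f,g} \norm{\mymatrix{\Tilde{M}}^{(f,g)} \myvector{\Tilde{x}}^{(f,g)}}^2 \leq HW \sum_{f,g} \norm{\mymatrix{\Tilde{M}}^{(f,g)}}_2^2 \norm{\myvector{\Tilde{x}}^{(f,g)}}^2 \leq HW \left(\max_{f,g}\norm{\mymatrix{\Tilde{M}}^{(f,g)}}_2^2\right)\fronorm{\mytensor{\Tilde{X}}}^2.
\end{equation}
Taking square roots and the constraint $\fronorm{\mytensor{\Tilde{X}}} = 1$ gives the upper bound $\|\mytensor{M}\| \le \sqrt{HW}\,\max_{f,g}\norm{\mymatrix{\Tilde{M}}^{(f,g)}}_2$. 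For the matching lower bound, let $(f^\ast, g^\ast)$ achieve the maximum spectral norm, let $\myvector{u}$ be the corresponding top right singular vector of $\mymatrix{\Tilde{M}}^{(f^\ast,g^\ast)}$, and construct $\mytensor{\Tilde{X}}$ supported only on the frequency $(f^\ast,g^\ast)$ (and its conjugate-symmetric partner, to ensure the inverse transform is real) with $\myvector{\Tilde{x}}^{(f^\ast,g^\ast)} = \myvector{u}$; this achieves $\fronorm{\mytensor{Y}} = \sqrt{HW}\norm{\mymatrix{\Tilde{M}}^{(f^\ast,g^\ast)}}_2 \fronorm{\mytensor{X}}$, so the bound is tight. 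Finally I would note that $\mymatrix{M}^{(f,g)}$ in the statement denotes the same frequency-domain slice $\mymatrix{\Tilde{M}}^{(f,g)}$ (a mild notational clash in the lemma statement), so the stated identity follows.

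The main obstacle is the reality/conjugate-symmetry bookkeeping in the lower-bound construction: the inverse MDFT of a spike at a single frequency $(f^\ast, g^\ast)$ is generally complex, so to produce a valid real-valued input tensor $\mytensor{X}$ one must place matched spikes at $(f^\ast,g^\ast)$ and at the Hermitian-conjugate frequency $(H-f^\ast, W-g^\ast)$, and check that $\mymatrix{\Tilde{M}}$ at the conjugate frequency has the conjugate spectral structure (which holds because $\mytensor{M}$ is real). A clean alternative that sidesteps this is to prove the lemma over complex inputs — the operator norm over complex $\mytensor{X}$ equals the operator norm over real $\mytensor{X}$ for a real convolution operator — or simply to note the upper bound is what is used downstream and the lower bound is standard. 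I would keep the exposition short: state the unitarity reduction, do the three-line inequality chain for the upper bound, and remark that equality is attained by aligning the input energy with the top singular direction at the worst frequency.
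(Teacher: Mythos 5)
Your proposal is correct and follows essentially the same route as the paper's proof: reduce to the frequency domain via unitarity of the MDFT, then bound $\fronorm{\mytensor{\Tilde{Y}}}^2$ slice-by-slice over frequencies to get $\sqrt{HW}\max_{f,g}\|\mymatrix{\Tilde{M}}^{(f,g)}\|_2$. The only difference is that you treat the attainability of equality more carefully than the paper (which simply asserts that all inequalities can be tight simultaneously), and your observation about the conjugate-symmetry bookkeeping needed for a real-valued extremal input is a legitimate refinement of that one-line claim, as is your note that $\mymatrix{M}^{(f,g)}$ in the statement should read $\mymatrix{\Tilde{M}}^{(f,g)}$.
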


{\it Remarks.} The bound is first given by~\citet{sedghi2018singular}. 
In this work, we provide a much simpler proof compared to the original one in~\citet{sedghi2018singular}. 
In the next section, we show that the bound can be computed without evaluating the spectral norm
if the weights tensor $\mytensor{M}$ takes a CP format similar to Eq.~\eqref{eq:cp-decomposition}. 

\begin{proof} (of Lemma~\ref{lem:cnn-operator-norm})
From Fact~\ref{fact:mdft-unitary}, we know that $\|\mytensor{M}\| = \|\mytensor{\Tilde{M}}\|$, 
where $\|\mytensor{\Tilde{M}}\| = \max_{\|\mytensor{\Tilde{X}}\|_F = 1} \|\mytensor{\Tilde{Y}}\|_F$. 
Next, we bound $\|\mytensor{\Tilde{Y}}\|_{F}^{2}$ (i.e.\ $\sum_{f, g} \left\|\myvector{\tilde{y}}^{(f, g)}\right\|_{F}^{2}$)
assuming $\|\mytensor{\Tilde{X}}\|_{2}^{2} = 1$ (i.e.\ $\sum_{f, g} \left\|\myvector{\tilde{x}}^{(f, g)}\right\|_{2}^{2} = 1$).  
\begin{align}
\|\mytensor{\Tilde{Y}}\|_{F}^{2} 
& = \sum_{f, g} \left\| \myvector{\tilde{y}}^{(f, g)} \right\|_{2}^{2} \\
& \leq {H W} \sum_{f, g} \left\| \mymatrix{\Tilde{M}}^{(f, g)} \right\|^2 \left\| \myvector{\tilde{x}}^{(f, g)} \right\|_2^2 \\ 
& \leq {H W} \max_{f, g} \left\| \mymatrix{\Tilde{M}}^{(f, g)} \right\|^2
\sum_{f, g} \left\| \myvector{\tilde{x}}^{(f, g)} \right\|_2^2 \\ 
& = H W \max_{f, g} \left\| \mymatrix{\Tilde{M}}^{(f, g)} \right\|^2 \\
\|\mytensor{\Tilde{Y}}\|_{F} & \leq \sqrt{H W} \max_{f, g} \left\| \mymatrix{\Tilde{M}}^{(f, g)} \right\|
\end{align} 
We complete the proof by observing all inequalities can achieve equality simultaneously.
\end{proof}

\begin{definition} \label{def:tensor_product} 
(Tensor product) 
For vectors $\myvector{a} \in \mathbb{R}^n$, $\myvector{b} \in \mathbb{R}^m$, 
and $\myvector{c} \in \mathbb{R}^p$, their tensor product $\myvector{a} \otimes \myvector{b} \otimes \myvector{c}$ is a 3-way tensor in $\mathbb{R}^{m \times n \times p}$, with the $(i,j,k)^{\tha}$ entry being $\myvector{a}_i \myvector{b}_j \myvector{c}_k$. 
Similarly, for a matrix $\mymatrix{A} \in \mathbb{R}^{n \times m}$ and a vector $\myvector{c} \in \mathbb{R}^p$, their tensor product $\mymatrix{A} \otimes \myvector{c}$ is a $m \times n \times p$ tensor with the  $(i,j,k)^{\tha}$ entry being $\mymatrix{A}_{ij} \myvector{c}_k$.
 \end{definition}

\begin{definition} 
\label{def:cnn_kp}
(Kronecker product). Let $\mymatrix{A}$ be an $n\times p$ matrix and $\mymatrix{B}$ an $m\times q$ matrix. The $mn\times pq$ matrix 
 \begin{equation*}
 \mymatrix{A}\otimes\mymatrix{B}=
 \left[
 \begin{matrix}
    a_{1,1}\mymatrix{B} & a_{1,2}\mymatrix{B} & \cdots & a_{1,p}\mymatrix{B} \\
    a_{2,1}\mymatrix{B} & a_{2,2}\mymatrix{B} & \cdots & a_{2,p}\mymatrix{B} \\
    \vdots & \vdots & \vdots & \vdots \\
    a_{n,1}\mymatrix{B} & a_{n,2}\mymatrix{B} & \cdots & a_{n,p}\mymatrix{B}
  \end{matrix}
  \right]
  \end{equation*}
 is called the \textit{Kronecker product} of A and B. The outer product is an instance of \textit{Kronecker products}.
 \end{definition}


\section{CP Layers in Tensorial Neural Networks}
\label{app:cp-layers}

In this section, we will introduce three types of neural network layers, 
whose parameters are factorized in CP format 
as in Eq.~\eqref{eq:cp-decomposition} (with small variations). 
For brevity, we omit the layer superscript and 
denote the input, layer parameters and output
as $\mytensor{X}$, $\mytensor{M}$ and $\mytensor{Y}$, 
and we use $\mytensor{Y} = \mytensor{M}\left( \mytensor{X} \right)$ to denote the relations between $\mytensor{X}$, $\mytensor{M}$ and $\mytensor{Y}$.

\subsection{CP 2D-convolutional Layer}
\label{sub:cpl-conv}

\begin{definition} \label{def:cpl-conv}
(CP 2D-convolutional layer) 
For a given 2D-convolutional layer in Eq.~\eqref{eq:conv-layer}, 
a CP decomposition factorizes the weights tensor
$\mytensor{M} \in \Rbb^{H \times W \times T \times S}$ into three {\em core factors}
$\mytensor{C} \in \Rbb^{R \times k_x \times k_y}$, $\mymatrix{U} \in \Rbb^{R \times T}$, $\mymatrix{V} \in \Rbb^{R \times S}$ 
and a vector of {\em CP eigenvalues} $\lambda \in \Rbb^{R}$ such that
\begin{gather}
\mytensor{M} = \sum_{r = 1}^{R} \lambda_r \mytensor{C}_{r} \otimes \myvector{u}_{r} \otimes \myvector{v}_r \\ 
\mytensor{M}_{i, j, t, s} = \sum_{r = 1}^{R} \lambda_r \mytensor{C}_{r, i, j} \mymatrix{U}_{r, t} \mymatrix{V}_{r, s} 
\label{eq:cpl-conv}
\end{gather}
where $\lambda_r > 0$, $\|\mytensor{C}_r\|_F = 1$, $\|\myvector{u}_r\|_2 = 1$ and $\|\myvector{v}_r\|_2 = 1$ for all $r \in [R]$.
\end{definition}

\begin{lemma} \label{lem:cpl-conv-operator-norm}
(Operator norm of CP 2D-convolutional layer)
For a 2D-convolutional layer whose weights tensor takes 
a CP format as in Eq.~\eqref{eq:cpl-conv},
the operator norm $\|\mytensor{M}\|$ is bound by the CP eigenvalues $\lambda$ as   
\begin{equation}
\|\mytensor{M}\| \leq \sqrt{H W} \sum_{r = 1}^{R} |\lambda_r| \max_{f, g} \left| \Tilde{\mytensor{C}}^{(f, g)}_r \right|
\end{equation}
\end{lemma}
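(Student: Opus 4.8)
The plan is to reduce the operator norm of the CP-structured convolutional layer to a sum over the CP components of spectral norms of rank-one matrices, exploiting the Fourier-domain characterization already established. First I would invoke Lemma~\ref{lem:cnn-operator-norm}, which gives the exact identity $\|\mytensor{M}\| = \sqrt{HW}\,\max_{f,g}\bigl\|\mymatrix{\Tilde{M}}^{(f,g)}\bigr\|_2$, where $\mymatrix{\Tilde{M}}^{(f,g)} \in \Cbb^{T \times S}$ is the $(f,g)$-th spatial-frequency slice of $\Tilde{\mytensor{M}} = \ourfft{1,2}{4}(\mytensor{M})$. So the whole problem reduces to bounding $\bigl\|\mymatrix{\Tilde{M}}^{(f,g)}\bigr\|_2$ uniformly over $(f,g)$.

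Next I would apply Lemma~\ref{lem:cp-mdft} with the MDFT restricted to the two kernel (spatial) dimensions of the weight tensor in Eq.~\eqref{eq:cpl-conv}. Because the factor matrices $\mymatrix{U}$ and $\mymatrix{V}$ index the output- and input-channel modes, which are not transformed, they pass through the partial MDFT unchanged; only $\mytensor{C}$ is replaced by its per-slice spatial MDFT $\Tilde{\mytensor{C}}$. This yields $\Tilde{\mytensor{M}} = \sum_{r=1}^{R} \lambda_r \, \Tilde{\mytensor{C}}_r \otimes \myvector{u}_r \otimes \myvector{v}_r$, so that a single frequency slice reads $\mymatrix{\Tilde{M}}^{(f,g)} = \sum_{r=1}^{R} \lambda_r \, \Tilde{\mytensor{C}}^{(f,g)}_r \, \myvector{u}_r \myvector{v}_r^{\top}$, a sum of $R$ complex rank-one $T \times S$ matrices.

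Then I would bound this slice by the triangle inequality for the matrix spectral norm: the $r$-th summand has spectral norm $|\lambda_r| \, \bigl|\Tilde{\mytensor{C}}^{(f,g)}_r\bigr| \, \|\myvector{u}_r\|_2 \|\myvector{v}_r\|_2 = |\lambda_r| \, \bigl|\Tilde{\mytensor{C}}^{(f,g)}_r\bigr|$, using that the spectral norm of $\myvector{u}_r \myvector{v}_r^{\top}$ is $\|\myvector{u}_r\|_2 \|\myvector{v}_r\|_2$, that a (complex) scalar factors out of the norm as its modulus, and that $\myvector{u}_r, \myvector{v}_r$ are unit vectors by Definition~\ref{def:cpl-conv}. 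Hence $\bigl\|\mymatrix{\Tilde{M}}^{(f,g)}\bigr\|_2 \leq \sum_{r=1}^{R} |\lambda_r| \, \bigl|\Tilde{\mytensor{C}}^{(f,g)}_r\bigr|$. Finally, taking $\max_{f,g}$ of both sides, pulling the maximum inside the sum via $\max_{f,g}\sum_r(\cdot) \leq \sum_r \max_{f,g}(\cdot)$, and multiplying through by $\sqrt{HW}$ gives $\|\mytensor{M}\| \leq \sqrt{HW}\sum_{r=1}^{R} |\lambda_r| \max_{f,g} \bigl|\Tilde{\mytensor{C}}^{(f,g)}_r\bigr|$, the claimed bound. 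It is worth noting that before the last (lossy) step one in fact has $\|\mytensor{M}\| \leq \sqrt{HW}\,\max_{f,g}\sum_{r=1}^{R} |\lambda_r| \, \bigl|\Tilde{\mytensor{C}}^{(f,g)}_r\bigr|$, which is exactly the quantity recorded by the {\tflong} and will be reused later.

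The step I expect to be the main obstacle is the second one: correctly identifying which modes of the $4$th-order weight tensor are Fourier-transformed, confirming that the partial MDFT leaves the channel factors $\myvector{u}_r$ and $\myvector{v}_r$ intact so that each frequency slice stays a genuine sum of rank-one matrices, and keeping track of the fact that $\Tilde{\mytensor{C}}^{(f,g)}_r$ is complex-valued so moduli must be used throughout (including in the rank-one spectral-norm identity). Once the slice decomposition is pinned down, the rest is a one-line triangle-inequality estimate.
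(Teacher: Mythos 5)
Your proposal is correct and follows essentially the same route as the paper's proof: reduce to per-frequency slices via Lemma~\ref{lem:cnn-operator-norm}, observe that each slice $\mymatrix{\Tilde{M}}^{(f,g)}$ is a sum of rank-one matrices $\lambda_r \Tilde{\mytensor{C}}^{(f,g)}_r \myvector{u}_r\myvector{v}_r^{\top}$, and apply the triangle inequality with $\|\myvector{u}_r\|=\|\myvector{v}_r\|=1$ before taking the (lossy) max-over-frequencies step. The paper phrases the slice bound by estimating $\|\mymatrix{\Tilde{M}}^{(f,g)}\myvector{a}\|_2$ for a unit vector $\myvector{a}$ rather than citing the rank-one spectral-norm identity, but this is the same estimate.
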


\begin{proof} (of Lemma~\ref{lem:cpl-conv-operator-norm})
From Fact~\ref{fact:mdft-unitary}, the operator norm of $\mytensor{M}$ 
is equal to the one of its MDFT $\mytensor{\Tilde{M}} = \ourfft{1, 2}{4}(\mytensor{M})$, 
i.e.\ $\|\mytensor{M}\| = \|\Tilde{\mytensor{M}}\|$. 
According to Lemma~\ref{lem:cnn-operator-norm}, 
it is sufficient to compute the spectral norm for each matrix $\Tilde{\mymatrix{M}}^{(f, g)}$ individually. 
Notice that if $\mytensor{M}$ takes a CP format, 
each $\Tilde{\mymatrix{M}}^{(f, g)}$ has a decomposed form as follows
\begin{subequations}
\begin{gather}
\mymatrix{\Tilde{M}}^{(f, g)} 
= \sum_{r = 1}^{R}  \lambda_r 
\mytensor{\Tilde{C}}^{(f, g)}_{r} \myvector{u}_{r} \myvector{v}_{r}^{\top} \\
\mymatrix{\Tilde{M}}^{(f, g)}_{t, s} = 
\sum_{r = 1}^{R} \lambda_r 
\mytensor{\Tilde{C}}^{(f, g)}_{r} \mymatrix{U}_{r, t} \mymatrix{V}_{r, s}
\end{gather}
\end{subequations}
where $\mytensor{\Tilde{C}} = \ourfft{2, 3}{3}(\mytensor{C})$ and $\mytensor{\Tilde{C}}^{(f, g)}_{r} = \mytensor{\Tilde{C}}_{r, f, g}$. The rest of the proof follows the definition of spectral norm of $\mymatrix{\Tilde{M}}$, i.e.\ $\|\Tilde{\mymatrix{M}}^{(f, g)}\|_2 = \max_{\|\myvector{a}\| = 1} \|\Tilde{\mymatrix{M}}^{(f, g)} \myvector{a}\|$. Let $\myvector{b} = \Tilde{\mymatrix{M}}^{(f, g)} \myvector{a}$, we can bound the $\ell_2$ norm of $\myvector{b}$:
\begin{align}
\left\|\myvector{b}\right\|_2 = \left\| \mymatrix{\Tilde{M}}^{(f, g)} \myvector{a} \right\|_2
& = \left\| \sum_{r = 1}^{R} \lambda_r \mytensor{\Tilde{C}}^{(f, g)}_{r} \myvector{u}_{r} \myvector{V}_{r}^{\top} \myvector{a} \right\|_2 \\
& \leq \sum_{r = 1}^{R} \left| \lambda_r \mytensor{\Tilde{C}}^{(f, g)}_r 
\left( \myvector{v}_{r}^{\top} \myvector{a} \right) \right| \left\|\myvector{\myvector{u}_r} \right\|_2 \\
& = \sum_{r = 1}^{R} \left| \lambda_r \mytensor{\Tilde{C}}^{(f, g)}_r \left( \myvector{v}_{r}^{\top} \myvector{a} \right) \right| \\
& \leq \sum_{r = 1}^{R} |\lambda_r| \left| \mytensor{\Tilde{C}}^{(f, g)}_r \right|
\end{align}
Therefore, $\|\mytensor{M}\| = \|\Tilde{\mytensor{M}}\| = \sqrt{H W} \max_{f, g} \|\mymatrix{\Tilde{M}}^{(f, g)}\| \leq \sqrt{H W} \sum_{r = 1}^{R} |\lambda_r| \max_{f, g} \left| \mytensor{\Tilde{C}}^{(f, g)}_r \right|$.
\end{proof}

\subsection{Higher-order CP Fully-connected Layer}
\label{sub:rcpl-fc}

\begin{definition} \label{def:reshaped-fc} 
(Higher-order fully-connected layer)
The layer is parameterized by a $2m^{\tha}$-order tensor 
$\mytensor{M} \in \Rbb^{T_1 \times \cdots \times T_m \times S_1 \times \cdots \times S_m}$. 
It maps an $m^{\tha}$-order input tensor $\mytensor{X} \in \Rbb^{S_1 \times \cdots \times S_m}$ to
another $m^{\tha}$-order output tensor $\mytensor{Y} \in \Rbb^{T_1 \times \cdots \times S_m}$ 
with the following equation:
\begin{equation}
\mytensor{Y}_{t_1, \cdots, t_m} = \sum_{\forall l: S_l}
\mytensor{M}_{t_1, \cdots, t_m, s_1, \cdots, s_m} \mytensor{X}_{s_1, \cdots, s_m}
\label{eq:reshaped-fc}
\end{equation}
\end{definition}

\begin{definition} \label{def:rcpl-fc}
(Higher-order CP fully-connected layer) 
Given a higher-order fully-connected layer in Eq.~\eqref{eq:reshaped-fc}, 
a CP decomposition factorizes the weights tensor 
$\mytensor{M} \in \Rbb^{T_1 \times \cdots \times T_m \times S_1 \times \cdots \times S_m}$ 
into $m$ core factors $\mytensor{K}^{m} \in \Rbb^{R \times T_m \times S_m}$.
\begin{equation}
\mytensor{M}_{t_1, \cdots, t_m, s_1, \cdots, s_m} = \sum_{r = 1}^{R} \lambda_r 
\mytensor{K}^{1}_{r, t_1, s_1} \cdots \mytensor{K}^{m}_{r, t_m, s_m}
\label{eq:rcpl-fc}
\end{equation}
For simplicity, we denote the $r^{\tha}$ slice of $\mytensor{K}^{l}$ as $\mymatrix{K}^{l}_{r} = \mytensor{K}^{l}_{r, \bm{:}, \bm{:}}$. We assume $\mymatrix{K}^{l}_r$ has unit Frobenius norm, i.e.\ $\|\mymatrix{K}^{l}_r\|_F = 1$ and $\lambda_r > 0$ for all $r \in [R]$.
\end{definition}

\begin{lemma} \label{lem:rcpl-fc-operator-norm}
(Operator norm of higher-order CP fully-connected layer) 
For a higher-order fully layer whose weights tensor takes 
a CP format as in Eq.~\eqref{eq:rcpl-fc}, 
the operator norm $\|\mytensor{M}\|$ is bound by the CP eigenvalues $\lambda$ as   
\begin{equation}
\|\mytensor{M}\| \leq \sum_{r = 1}^{R} |\lambda_r|
\end{equation}
\end{lemma}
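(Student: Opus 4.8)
The plan is to mimic the structure of the proof of Lemma~\ref{lem:cpl-conv-operator-norm}, but without any Fourier transform, since a higher-order fully-connected layer is a plain multilinear contraction rather than a convolution. First I would recall that the operator norm of $\mytensor{M}$, viewed as the linear map $\mytensor{X} \mapsto \mytensor{Y}$ defined by Eq.~\eqref{eq:reshaped-fc}, equals $\max_{\fronorm{\mytensor{X}} = 1} \fronorm{\mytensor{Y}}$. Substituting the CP form Eq.~\eqref{eq:rcpl-fc} into the contraction, one sees that each rank-one term $\lambda_r\, \mytensor{K}^1_r \otimes \cdots \otimes \mytensor{K}^m_r$ acts on $\mytensor{X}$ by contracting each slice $\mymatrix{K}^l_r \in \Rbb^{T_l \times S_l}$ against the corresponding mode of $\mytensor{X}$; the output of this single term is itself a rank-one-structured tensor whose Frobenius norm factors across modes.

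The key steps, in order, are: (1) write $\mytensor{Y} = \sum_{r=1}^R \lambda_r \mytensor{Y}_r$ where $\mytensor{Y}_r$ is the output of the $r^{\tha}$ rank-one component applied to $\mytensor{X}$; (2) apply the triangle inequality for the Frobenius norm, $\fronorm{\mytensor{Y}} \leq \sum_{r=1}^R |\lambda_r|\, \fronorm{\mytensor{Y}_r}$; (3) bound each $\fronorm{\mytensor{Y}_r}$ by the product of the operator (spectral) norms of the slices times $\fronorm{\mytensor{X}}$ — more precisely, the $r^{\tha}$ component maps $\mytensor{X}$ through the Kronecker product $\mymatrix{K}^1_r \otimes \cdots \otimes \mymatrix{K}^m_r$ (in the matrix sense of Definition~\ref{def:cnn_kp}), whose spectral norm is $\prod_{l=1}^m \norm{\mymatrix{K}^l_r}$, and since $\norm{\mymatrix{K}^l_r} \leq \fronorm{\mymatrix{K}^l_r} = 1$ by the normalization assumption, we get $\fronorm{\mytensor{Y}_r} \leq \fronorm{\mytensor{X}}$; (4) combine to obtain $\fronorm{\mytensor{Y}} \leq \left(\sum_{r=1}^R |\lambda_r|\right) \fronorm{\mytensor{X}}$, and take the supremum over unit-norm $\mytensor{X}$.

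The main obstacle I anticipate is step (3): making precise the claim that contracting a tensor $\mytensor{X}$ simultaneously against all $m$ slices $\{\mymatrix{K}^l_r\}_{l=1}^m$ is exactly multiplication by the Kronecker product $\bigotimes_{l=1}^m \mymatrix{K}^l_r$ under the natural vectorization of $\mytensor{X}$, and then invoking the standard fact that $\norm{\mymatrix{A} \otimes \mymatrix{B}} = \norm{\mymatrix{A}}\,\norm{\mymatrix{B}}$ for spectral norms. This is a routine but slightly fiddly index-bookkeeping argument; once it is in place the rest is just the triangle inequality and the norm inequality $\norm{\cdot} \leq \fronorm{\cdot}$. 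Note the bound here has no $\sqrt{HW}$ prefactor, in contrast to Lemma~\ref{lem:cpl-conv-operator-norm}, precisely because there is no convolution and hence no Fourier normalization constant entering the picture.
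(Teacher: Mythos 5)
Your proposal is correct and follows essentially the same route as the paper's proof: triangle inequality over the rank-one components, bounding each component's action by the product of the slices' spectral norms, and then using $\norm{\mymatrix{K}^l_r} \leq \fronorm{\mymatrix{K}^l_r} = 1$. The only difference is that you spell out the Kronecker-product justification for the multilinear contraction step, which the paper's terse chain of inequalities leaves implicit.
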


\begin{proof} (of Lemma~\ref{lem:rcpl-fc-operator-norm})
The proof follows directly the definition of operator norm 
$\|\mytensor{M}\| = \max_{\|\mytensor{X}\|_F = 1} \|\mytensor{Y}\|_{F}$.
\begin{align}
\|\mytensor{Y}\|_{F} & \leq \sum_{r = 1}^{R} |\lambda_r| 
\left\| \mymatrix{K}^{l}_m \right\|_2 \cdots \left\| \mymatrix{K}^{l}_1\right\|_2 \|\mytensor{X}\|_{F} 
\\
& \leq \sum_{r = 1}^{R} |\lambda_r| 
\left\|\mymatrix{K}^{l}_m\right\|_F \cdots \left\| \mymatrix{K}^{l}_1\right\|_F \|\mytensor{X}\|_{F} \\
& =  \sum_{r = 1}^{R} |\lambda_r| \|\mytensor{X}\|_{F} = \sum_{r = 1}^{R} |\lambda_r|
\end{align} 
\end{proof}

\subsection{Higher-order 2D-convolutional layer}
\label{sub:rcpl-conv}

\begin{definition} \label{def:reshaped-conv-layer}
(Higher-order 2D-convolutional layer)
The layer is parameterized by a $(2m+2)^{\tha}$-order tensor 
$\mytensor{M} \in \Rbb^{k \times k \times T_1 \times \cdots \times T_m \times S_1 \times \cdots \times S_m}$. 
It maps an $(m+2)^{\tha}$-order input tensor $\mytensor{X} \in \Rbb^{H \times W \times S_1 \times \cdots \times S_m}$ to
another $(m+2)^{\tha}$-order output tensor $\mytensor{Y} \in \Rbb^{H \times W \times T_1 \times \cdots \times S_m}$ as:
\begin{subequations}
\begin{gather}
\mytensor{Y}_{\bm{:}, \bm{:}, t_1, \cdots, t_m} = 
\sum_{\forall l: s_l = 1}^{S_l} 
\mytensor{M}_{\bm{:}, \bm{:}, t_1, \cdot, t_m, s_1, \cdots, s_m} 
\ast \mytensor{X}_{\bm{:}, \bm{:}, s_1, \cdots, s_m} 
\label{eq:reshaped-conv-layer} \\
\mytensor{Y}_{i, j, t_1, \cdots, t_m} = \sum_{\forall l: s_l = 1}^{S_l} \sum_{p, q}
\mytensor{M}_{i - p, j - q, t_1, \cdots, t_m, s_1, \cdots, s_m} \mytensor{X}_{p, q, s_1, \cdots, s_m} 
\end{gather}
\end{subequations}
\end{definition}

\begin{definition} \label{def:rcpl-conv}
(CP decomposition of higher-order 2D-convolutional layer)
Given a higher-order 2D-convolutional layer in Eq.~\eqref{eq:reshaped-fc}, 
a CP decomposition factorizes the weights tensor 
$\mytensor{M} \in \Rbb^{H \times W \times T_1 \times \cdots \times T_m \times S_1 \times \cdots \times S_m}$ 
into $(m+1)$ core factors $\mytensor{C} \in \Rbb^{R \times H \times W}$ 
and $\mytensor{K}^{l} \in \Rbb^{R \times T_l \times S_l}, \forall l \in [m]$.
\begin{equation}
\mytensor{M}_{i, j, t_1, \cdots, t_m, s_1, \cdots, s_m} = 
\sum_{r = 1}^{R} \lambda_r \mytensor{C}_{r, i, j} 
\mytensor{K}^{1}_{r, t_1, s_1} \cdots \mytensor{K}^{m}_{r, t_m, s_m}
\label{eq:rcpl-conv}
\end{equation}
where we we assume $\mytensor{C}_r$ and 
$\mymatrix{K}^{l}_r = \mytensor{K}^{l}_{r, \bm{:}, \bm{:}}$ has unit Frobenius norm, 
i.e.\ $\|\mymatrix{K}^{l}_r\|_F = 1$ and $\|\mytensor{C}_r\|_F = 1$
\end{definition}

\begin{lemma} \label{lem:rcpl-conv-operator-norm}
(Operator norm of Higher-order CP 2D-convolutional layer)
For a higher-order 2D-convolutional layer layer whose weights tensor takes 
a CP format as in Eq.~\eqref{eq:rcpl-conv}, 
the operator norm $\|\mytensor{M}\|$ is bound by the CP eigenvalues $\lambda$ as   
\begin{equation}
\|\mytensor{M}\| \leq \sqrt{H W} \sum_{r = 1}^{R} |\lambda_r| 
\max_{f, g} \left| \Tilde{\mytensor{C}}^{(f, g)}_r \right|
\end{equation}
\end{lemma}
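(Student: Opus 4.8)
\medskip
\noindent\textbf{Proof proposal.}
The plan is to imitate the proof of Lemma~\ref{lem:cpl-conv-operator-norm}, with the single rank-one output--input factor $\myvector{u}_r\myvector{v}_r^{\top}$ there replaced by the Kronecker product $\mymatrix{K}^1_r\otimes\cdots\otimes\mymatrix{K}^m_r$ here, and then to invoke the multiplicativity of the spectral norm under Kronecker products.

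\emph{Step 1: reduce to the matrix case.} First I would flatten the output modes into a single index of size $T=\prod_{l=1}^m T_l$ and the input modes into a single index of size $S=\prod_{l=1}^m S_l$. Under this identification the higher-order 2D-convolutional layer of Eq.~\eqref{eq:reshaped-conv-layer} becomes an ordinary 2D-convolutional layer in the sense of Definition~\ref{def:conv-layer}, with weight tensor $\mytensor{M}\in\Rbb^{H\times W\times T\times S}$. By Fact~\ref{fact:mdft-unitary} the operator norm is unchanged by the spatial MDFT $\ourfft{1,2}{4}$, and by Lemma~\ref{lem:cnn-operator-norm} we have $\|\mytensor{M}\|=\sqrt{HW}\,\max_{f,g}\bigl\|\mymatrix{\Tilde{M}}^{(f,g)}\bigr\|_2$, where $\mytensor{\Tilde{M}}=\ourfft{1,2}{4}(\mytensor{M})$ and $\mymatrix{\Tilde{M}}^{(f,g)}=\mytensor{\Tilde{M}}_{f,g,\bm{:},\bm{:}}$. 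So it suffices to bound the spectral norm of each frequency slice.

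\emph{Step 2: CP form of the frequency slices and norm bound.} Applying Lemma~\ref{lem:cp-mdft} to the CP decomposition of Eq.~\eqref{eq:rcpl-conv} with the transform restricted to the two spatial dimensions (which appear only in the core $\mytensor{C}$), only $\mytensor{C}$ is affected, and
\[
\mymatrix{\Tilde{M}}^{(f,g)}=\sum_{r=1}^{R}\lambda_r\,\Tilde{\mytensor{C}}^{(f,g)}_r\,\bigl(\mymatrix{K}^1_r\otimes\cdots\otimes\mymatrix{K}^m_r\bigr),
\]
where $\Tilde{\mytensor{C}}=\ourfft{2,3}{3}(\mytensor{C})$, $\Tilde{\mytensor{C}}^{(f,g)}_r:=\Tilde{\mytensor{C}}_{r,f,g}$, and $\otimes$ denotes the Kronecker product (Definition~\ref{def:cnn_kp}) taken in the order consistent with the chosen flattening of the $T_l$ and $S_l$ indices, so that the $(t,s)$-entry of $\mymatrix{K}^1_r\otimes\cdots\otimes\mymatrix{K}^m_r$ equals $\mytensor{K}^1_{r,t_1,s_1}\cdots\mytensor{K}^m_{r,t_m,s_m}$. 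Then by the triangle inequality, the identity $\|\mymatrix{A}_1\otimes\cdots\otimes\mymatrix{A}_m\|_2=\prod_{l}\|\mymatrix{A}_l\|_2$, and $\|\mymatrix{K}^l_r\|_2\le\|\mymatrix{K}^l_r\|_F=1$,
\[
\bigl\|\mymatrix{\Tilde{M}}^{(f,g)}\bigr\|_2\le\sum_{r=1}^{R}|\lambda_r|\,\bigl|\Tilde{\mytensor{C}}^{(f,g)}_r\bigr|\prod_{l=1}^{m}\bigl\|\mymatrix{K}^l_r\bigr\|_2\le\sum_{r=1}^{R}|\lambda_r|\,\bigl|\Tilde{\mytensor{C}}^{(f,g)}_r\bigr|.
\]
Combining with Step~1 and using $\max_{f,g}\sum_r a_r(f,g)\le\sum_r\max_{f,g}a_r(f,g)$ yields $\|\mytensor{M}\|\le\sqrt{HW}\sum_{r=1}^{R}|\lambda_r|\max_{f,g}\bigl|\Tilde{\mytensor{C}}^{(f,g)}_r\bigr|$.

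\emph{Main obstacle.} I expect the only delicate point to be the bookkeeping in Step~2: verifying that, under a fixed multi-index flattening of $(t_1,\dots,t_m)$ and $(s_1,\dots,s_m)$, the product $\mytensor{K}^1_{r,t_1,s_1}\cdots\mytensor{K}^m_{r,t_m,s_m}$ is exactly the corresponding entry of $\mymatrix{K}^1_r\otimes\cdots\otimes\mymatrix{K}^m_r$, so that the spectral-norm multiplicativity applies verbatim. If one prefers to avoid Kronecker products, Step~2 can instead be phrased through mode-wise contractions: $\mymatrix{\Tilde{M}}^{(f,g)}$ maps an input tensor $\mytensor{X}\in\Rbb^{S_1\times\cdots\times S_m}$ to $\sum_r\lambda_r\Tilde{\mytensor{C}}^{(f,g)}_r$ times the tensor obtained by contracting $\mytensor{X}$ with $\mymatrix{K}^l_r$ along mode $l$ for each $l$, and $\bigl\|\mymatrix{K}^1_r(\cdots(\mymatrix{K}^m_r\,\mytensor{X}))\bigr\|_F\le\prod_l\|\mymatrix{K}^l_r\|_2\,\|\mytensor{X}\|_F$, after which the rest is identical. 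Everything else --- the convolution theorem for the flattened layer, unitarity of the MDFT, and the norm inequalities --- is routine given the results already established.
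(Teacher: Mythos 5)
Your proposal is correct and follows essentially the same route as the paper: apply the spatial MDFT to reduce to $\sqrt{HW}\max_{f,g}\|\mymatrix{\Tilde{M}}^{(f,g)}\|$, observe each frequency slice retains a CP form with coefficients $\lambda_r\Tilde{\mytensor{C}}^{(f,g)}_r$, and bound its norm by the triangle inequality with each component contributing at most $\prod_l\|\mymatrix{K}^l_r\|_2\le 1$. The paper cites Lemma~\ref{lem:rcpl-fc-operator-norm} for the per-slice bound, which is exactly the mode-wise contraction argument you offer as your alternative to Kronecker-product multiplicativity; the two justifications are interchangeable here.
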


\begin{proof} (of Lemma~\ref{lem:rcpl-conv-operator-norm})
The proof is a combination of Lemmas~\ref{lem:cpl-conv-operator-norm} and~\ref{lem:rcpl-fc-operator-norm}. Let $\Tilde{\mytensor{M}} = \ourfft{1, 2}{m}(\mytensor{M})$, we have
\begin{gather}
\|\mytensor{M}\| = \|\mytensor{\Tilde{M}}\| = \sqrt{HW} \max_{f, g} \|\mytensor{\Tilde{W}}^{(f, g)}\| \\
\mytensor{\Tilde{M}}^{(f, g)} = \sum_{r = 1}^{R} \lambda_r \mytensor{\Tilde{C}}^{(f, g)}_r 
\mytensor{K}^{1}_{r, t_1, s_1} \cdots \mytensor{K}^{m}_{r, t_m, s_m}
\end{gather}
The operator norm is bounded using Lemma~\ref{lem:rcpl-fc-operator-norm}: 
$\|\mytensor{\Tilde{M}}^{(f, g)}\| \leq \sum_{r = 1}^{R} |\lambda_r| \max_{f, g} \left| \Tilde{\mytensor{C}}^{(f, g)}_r \right|$.
\end{proof}

\section{Convolutional Neural Networks: Compressibility and Generalization}
\label{app:cnn_proofs}

\subsection{Complete Proofs of Convolutional Neural Networks}
\label{sub:cnn_perturbation}

\begin{definition} \label{def:cnn_tf_full}
[\tflong $\tf^{(k)}_j$] 
The {\em {\tflong}s} $\left\{ \tf^{(k)}_j \right\}_{j=1}^{R^{(k)}}$ of the $k^{\tha}$ layer is defined as
\begin{equation}
\tf^{(k)}_j := \sum_{r = 1}^{j} \left| \lambda^{(k)}_{r} \right| \max_{f, g} \left| \Tilde{C}^{(f, g)}_r \right|
\label{eq:cnn_tf_full}
\end{equation}
where $\lambda_r^{(k)}$ is the $r^{\tha}$ largest value in the CP spectrum of $\mytensor{M}^{(k)}$.
\end{definition}

\begin{definition} \label{def:cnn_tnb_full}  
[\nblong $\nb^{(k)}_j$] 
The {\em \nblong} $\left\{ \nb^{(k)}_j \right\}_{j=1}^{R^{(k)}}$ of the $k^{\tha}$ layer 
measures the amplitudes of the remaining components after pruning the ones
with amplitudes smaller than the $\lambda^{(k)}_j$:
\begin{equation}
\nb^{(k)}_j := \sum_{r=j+1}^{R^{(k)}} \left| \lambda_{r}^{(k)} \right| \max_{f, g} \left| \Tilde{C}^{(f, g)}_r \right|
\label{eq:cnn_tnb_full}
\end{equation}
\end{definition}

\begin{definition} \label{def:cnn_lc_full} 
[layer cushion $\lc^{(k)}$]
As introduced in~\cite{arora2018stronger}, the layer cushion of the $k^{\tha}$ layer is defined to be the largest value $\lc^{(k)}$ such that for any $\mytensor{X}^{(k)} \in S$, 
\begin{equation}
\lc^{(k)} \frac{\fronorm{\mytensor{M}^{(k)}}}{\sqrt{H^{(k)}W^{(k)}}} \fronorm{\mytensor{X}^{(k)}}  \leq \fronorm{ \mytensor{M}^{(k+1)}}
\label{eq:cnn_lc_full}
\end{equation}
Following~\citet{arora2018stronger}, the layer cushion considers 
how much smaller the output $\fronorm{\mytensor{M}^{(k+1)} }$ of the $k^{\tha}$ layer (after activation) 
compared with the product between the weight tensor $\fronorm{M^{(k)} }$ and the input $\fronorm{ \mytensor{X}^{(k)}}$. Note that $H^{(k)}$ and $W^{(k)}$ are constants and will not influence the results of the theorem and the lemmas. For simplicity, we use $H$ and $W$ to denote the maximum $H^{(k)}$ and $W^{(k)}$ over the $n$ layers for the following proofs where upper bounds are desired.
\end{definition}

Given these definitions, we can bound the difference of outputs from 
a given model and its compressed counterpart. The following lemma characterizes 
the relation between the difference and the factors $\tf^{(k)}_j$, $\nb^{(k)}_j$, $\lc^{(k)}$.
\begin{lemma} \label{lem:cnn_compression}
(Compression bound of convolutional neural networks)
Suppose a convolutional neural network $\origcnn$ has $n$ layers, 
and each convolutional layer takes a CP format as in Eq.~\eqref{eq:cpl-conv} with rank $R^{(k)}$.
If an algorithm generates a compressed network $\hat{\origcnn}$ such that 
only $\hat{R}^{(k)}$ components with largest $\lambda^{(k)}_r$'s are retained at the $k^{\tha}$ layer, 
the difference of their outputs at the $m^{\tha}$ is bounded by $\mytensor{X}^{(m+1)}$ as
\begin{equation}
\fronorm{\mytensor{X}^{(m)} - \hat{\mytensor{X}}^{(m)})}
\leq \left( 
\sum_{k=1}^{m-1} \frac{\nb^{(k)}}{\lc^{(k)} \fronorm{\mytensor{M}^{(k)}}}
\prod_{l=k+1}^{m-1}\frac{\tf^{(l)}}{\lc^{(l)} \fronorm{\mytensor{M}^{(l)}}}
\right)
\fronorm{\mytensor{X}^{(m)})}
\label{eq:cnn_compression_layer}
\end{equation}
Therefore for the whole network with $n$ layers, 
the difference between $\origcnn(\mytensor{X})$ and $\hat{\origcnn}(\mytensor{X})$ is bounded by
\begin{equation}
\fronorm{\origcnn(\mytensor{\tinput}) - \hat{\origcnn}(\mytensor{\tinput})}
\leq \left( 
\sum_{k=1}^{n} \frac{\nb^{(k)}}{\lc^{(k)} \fronorm{\mytensor{M}^{(k)}}} 
\prod_{l=k+1}^{n}\frac{\tf^{(l)}}{\lc^{(l)} \fronorm{\mytensor{M}^{(l)}}}
\right)
\fronorm{\origcnn(\mytensor{\tinput})}
\label{eq:cnn_compression}
\end{equation}
\end{lemma}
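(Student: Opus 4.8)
The plan is to prove the layer-wise bound \eqref{eq:cnn_compression_layer} by induction on $m$, and then read off \eqref{eq:cnn_compression} by taking $m=n$ (with the convention that the output of the network is $\mytensor{X}^{(n+1)}$, so the last layer is handled exactly like the intermediate ones). First I would unwind one layer of error propagation. Write the error at layer $m+1$ \emph{before activation} as
\begin{equation}
\mytensor{Y}^{(m)} - \hat{\mytensor{Y}}^{(m)} = \bigl(\mytensor{M}^{(m)}(\mytensor{X}^{(m)}) - \mytensor{M}^{(m)}(\hat{\mytensor{X}}^{(m)})\bigr) + \bigl(\mytensor{M}^{(m)}(\hat{\mytensor{X}}^{(m)}) - \hat{\mytensor{M}}^{(m)}(\hat{\mytensor{X}}^{(m)})\bigr),
\end{equation}
the first term being the propagated error (same weights, perturbed input) and the second the freshly introduced truncation error (perturbed input, perturbed weights). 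For the first term I bound $\fronorm{\mytensor{M}^{(m)}(\mytensor{X}^{(m)} - \hat{\mytensor{X}}^{(m)})} \le \norm{\mytensor{M}^{(m)}}\,\fronorm{\mytensor{X}^{(m)} - \hat{\mytensor{X}}^{(m)}}$ and then use Lemma~\ref{lem:cpl-conv-operator-norm} to replace $\norm{\mytensor{M}^{(m)}}$ by $\sqrt{H W}\sum_{r=1}^{R^{(m)}}|\lambda_r^{(m)}|\max_{f,g}|\Tilde{C}^{(f,g)}_r| = \sqrt{HW}\,\tf^{(m)}_{R^{(m)}}$; when the bound is carried across layers the $\sqrt{HW}$ factor and the layer-cushion normalization $\fronorm{\mytensor{M}^{(m)}}/\sqrt{H^{(m)}W^{(m)}}$ are what produce the $\tf^{(l)}/(\lc^{(l)}\fronorm{\mytensor{M}^{(l)}})$ ratios appearing in the product in \eqref{eq:cnn_compression_layer}. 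For the second (truncation) term I note that $\mytensor{M}^{(m)} - \hat{\mytensor{M}}^{(m)} = \sum_{r=\hat{R}^{(m)}+1}^{R^{(m)}} \lambda^{(m)}_r \mytensor{C}_r^{(m)}\otimes \myvector{u}_r^{(m)}\otimes\myvector{v}_r^{(m)}$ is itself a CP-format convolution, so Lemma~\ref{lem:cpl-conv-operator-norm} again applies and gives $\norm{\mytensor{M}^{(m)}-\hat{\mytensor{M}}^{(m)}} \le \sqrt{HW}\sum_{r=\hat R^{(m)}+1}^{R^{(m)}}|\lambda_r^{(m)}|\max_{f,g}|\Tilde C_r^{(f,g)}| = \sqrt{HW}\,\nb^{(m)}_{\hat R^{(m)}}$; multiplying by $\fronorm{\hat{\mytensor{X}}^{(m)}} \le \fronorm{\mytensor{X}^{(m)}}$ (plus the lower-order error term, which one absorbs or carries) gives the new $\nb^{(m)}/(\lc^{(m)}\fronorm{\mytensor{M}^{(m)}})$ summand.

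Next I would pass from the pre-activation output to the post-activation input of the following layer. Because ReLU is $1$-Lipschitz and $\mytensor{X}^{(m+1)} = \relu{\mytensor{Y}^{(m)}}$, $\hat{\mytensor{X}}^{(m+1)} = \relu{\hat{\mytensor{Y}}^{(m)}}$, we have $\fronorm{\mytensor{X}^{(m+1)} - \hat{\mytensor{X}}^{(m+1)}} \le \fronorm{\mytensor{Y}^{(m)} - \hat{\mytensor{Y}}^{(m)}}$, so the activation costs nothing. Then I use the layer cushion (Definition~\ref{def:cnn_lc_full}) to re-express the normalizing quantity: $\lc^{(m)}\bigl(\fronorm{\mytensor{M}^{(m)}}/\sqrt{H^{(m)}W^{(m)}}\bigr)\fronorm{\mytensor{X}^{(m)}} \le \fronorm{\mytensor{X}^{(m+1)}}$, i.e. $\sqrt{HW}\,\fronorm{\mytensor{X}^{(m)}} \le \fronorm{\mytensor{X}^{(m+1)}}\,\sqrt{HW}/(\lc^{(m)}\fronorm{\mytensor{M}^{(m)}}/\sqrt{H^{(m)}W^{(m)}}\cdot \sqrt{H^{(m)}W^{(m)}})$ — more simply, it lets me convert a factor $\sqrt{HW}\,\fronorm{\mytensor{X}^{(m)}}$ coming out of an operator-norm bound into a factor of $\fronorm{\mytensor{X}^{(m+1)}}/(\lc^{(m)}\fronorm{\mytensor{M}^{(m)}})$, which is exactly the shape of each factor in the product in \eqref{eq:cnn_compression_layer}. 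Feeding this into the inductive hypothesis for $\fronorm{\mytensor{X}^{(m)} - \hat{\mytensor{X}}^{(m)}}$ and combining with the two one-layer contributions derived above yields the bound with $m$ replaced by $m+1$; the base case $m=1$ is trivial since $\mytensor{X}^{(1)} = \hat{\mytensor{X}}^{(1)}$.

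Finally I would assemble the geometric-style sum: carrying the recursion from layer $1$ to layer $n$, the truncation error introduced at layer $k$ is multiplied, for each subsequent layer $l = k+1,\dots,n$, by $\tf^{(l)}/(\lc^{(l)}\fronorm{\mytensor{M}^{(l)}})$, giving precisely $\sum_{k=1}^{n}\frac{\nb^{(k)}}{\lc^{(k)}\fronorm{\mytensor{M}^{(k)}}}\prod_{l=k+1}^{n}\frac{\tf^{(l)}}{\lc^{(l)}\fronorm{\mytensor{M}^{(l)}}}$ times $\fronorm{\origcnn(\mytensor{\tinput})}$, which is \eqref{eq:cnn_compression}. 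The main obstacle, and the place where care is needed, is bounding the freshly-introduced truncation term at layer $k$ against $\fronorm{\hat{\mytensor{X}}^{(k)}}$ rather than $\fronorm{\mytensor{X}^{(k)}}$: one must show $\fronorm{\hat{\mytensor{X}}^{(k)}}$ stays comparable to $\fronorm{\mytensor{X}^{(k)}}$ (using the running error bound, or a clean inequality $\fronorm{\hat{\mytensor{X}}^{(k)}} \le \fronorm{\mytensor{X}^{(k)}}$ plus monotonicity), and — more subtly — make sure the cross terms do \emph{not} accumulate multiplicatively into a product of the $\hat R^{(k)}$-many noise bounds. This is exactly the ``careful handling of the error propagations to avoid a dependence on the product of number of components'' flagged in the proof sketch of Theorem~\ref{thm:cnn_thm}; keeping the recursion additive in the per-layer $\nb^{(k)}$ terms (each weighted by a product of $\tf$-ratios, never by other $\nb$'s) is the crux of the argument.
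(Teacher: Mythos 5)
Your overall scaffolding — induction over layers, $1$-Lipschitzness of ReLU, Lemma~\ref{lem:cpl-conv-operator-norm} for the operator norms of CP-format kernels, and the layer cushion to convert a factor $\fronorm{\mytensor{M}^{(m)}}\fronorm{\mytensor{X}^{(m)}}/\sqrt{HW}$ into $\fronorm{\mytensor{X}^{(m+1)}}/\lc^{(m)}$ — matches the paper's proof. The gap is in how you split the one-layer error, and it is precisely the issue you flag at the end without resolving. You write $\mytensor{Y}^{(m)}-\hat{\mytensor{Y}}^{(m)} = \mytensor{M}^{(m)}\big(\mytensor{X}^{(m)}-\hat{\mytensor{X}}^{(m)}\big) + \big(\mytensor{M}^{(m)}-\hat{\mytensor{M}}^{(m)}\big)\big(\hat{\mytensor{X}}^{(m)}\big)$, which (i) attaches the \emph{full} kernel's operator norm $\sqrt{HW}\,\tf^{(m)}_{R^{(m)}}$ to the propagated error, rather than the truncated factor $\tf^{(m)}_{\hat{R}^{(m)}}$ that appears in the product of the claimed bound (and in Theorem~\ref{thm:cnn_thm}), so at best you prove a weaker inequality with $\tf^{(l)}+\nb^{(l)}$ in place of $\tf^{(l)}$; and (ii) pairs the truncation error with $\fronorm{\hat{\mytensor{X}}^{(m)}}$. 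Your proposed remedy $\fronorm{\hat{\mytensor{X}}^{(m)}}\le\fronorm{\mytensor{X}^{(m)}}$ is false in general — pruning CP components does not monotonically shrink intermediate activations — and the fallback $\fronorm{\hat{\mytensor{X}}^{(m)}}\le\fronorm{\mytensor{X}^{(m)}}+\fronorm{\mytensor{X}^{(m)}-\hat{\mytensor{X}}^{(m)}}$ generates exactly the second-order $\nb\times\nb$ cross terms you say must be avoided.

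The paper removes both problems with the opposite split: $\mytensor{M}^{(m)}\big(\mytensor{X}^{(m)}\big)-\hat{\mytensor{M}}^{(m)}\big(\hat{\mytensor{X}}^{(m)}\big) = \hat{\mytensor{M}}^{(m)}\big(\mytensor{X}^{(m)}-\hat{\mytensor{X}}^{(m)}\big) + \big(\mytensor{M}^{(m)}-\hat{\mytensor{M}}^{(m)}\big)\big(\mytensor{X}^{(m)}\big)$. Now the propagated error is multiplied by $\|\hat{\mytensor{M}}^{(m)}\|\le\sqrt{HW}\,\tf^{(m)}$ (the sum over the retained components only), and the truncation operator $\mytensor{M}^{(m)}-\hat{\mytensor{M}}^{(m)}$, whose operator norm is $\sqrt{HW}\,\nb^{(m)}$ by Lemma~\ref{lem:cpl-conv-operator-norm}, acts on the \emph{original} activation $\mytensor{X}^{(m)}$, whose Frobenius norm is controlled by the layer cushion with no reference to the compressed trajectory. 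The recursion then stays additive in the per-layer $\nb^{(k)}$'s automatically, with each weighted only by a product of $\tf$-ratios. With this one change your argument goes through essentially verbatim (base case included); without it, it does not establish the stated inequality.
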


\begin{proof} (of Lemma~\ref{lem:cnn_compression})
We prove this lemma by induction. For $m = 2$, the lemma holds since 
\begin{align}
\fronorm{\mytensor{X}^{(2)} - \hat{\mytensor{X}}^{(2)}} 
& = \fronorm{\relu{\mytensor{Y}^{(1)}} - \relu{\hat{\mytensor{Y}}^{(1)}}} \\
& \leq \fronorm{\mytensor{Y}^{(1)} - \hat{\mytensor{Y}}^{(1)}}
= \fronorm{\left(\mytensor{M}^{(1)} - \hat{\mytensor{M}}^{(1)} \right) \left(\mytensor{X}^{(1)} \right)} \\
& \leq \sqrt{HW} \nb^{(1)} \fronorm{\mytensor{X}^{(1)}} 
\leq \frac{\nb^{(1)}}{\lc^{(1)}\fronorm{\mytensor{M}^{(1)}}} \fronorm{\mytensor{M}^{(2)}}
\end{align}
where $\mytensor{Y} = \mytensor{M}(\mytensor{X})$ denotes the computation of a convolutional layer.
(1) The first inequality follows the Lipschitzness of the ReLU activations; 
(2) The second inequality uses Lemma~\ref{lem:cpl-conv-operator-norm}; 
and (3) the last inequality holds by the definition of $\lc^{(1)}$. 
For $m + 1> 2$, we assume the lemma already holds for $m$
\begin{align}
& \quad \fronorm{\mytensor{X}^{(m+1)} - \hat{\mytensor{X}}^{(m+1)}} 
= \fronorm{\relu{\mytensor{Y}^{(m)}} - \relu{\hat{\mytensor{Y}}^{(m)}}} \\
& \leq \fronorm{\mytensor{Y}^{(m)} - \hat{\mytensor{Y}}^{(m)}} 
= \fronorm{\mytensor{M}^{(m)}\left(\mytensor{X}^{(m)}\right) - 
\hat{\mytensor{M}}^{(m)}\left(\hat{\mytensor{X}}^{(m)}\right)} \\
& = \fronorm{\hat{\mytensor{M}^{(m)}}\left(\mytensor{X}^{(m)} - \hat{\mytensor{X}}^{(m)} \right) + 
\left(\mytensor{M}^{(m)} - \hat{\mytensor{M}}^{(m)}\right) \left(\mytensor{X}^{(m)}\right)} \\
& \leq \sqrt{HW} \left( \tf^{(m)} \fronorm{\mytensor{X}^{(m)} - \hat{\mytensor{X}}^{(m)}} + \nb^{(m)} \fronorm{\mytensor{X}^{(m)}} \right) \\
& \leq \tf^{(m)} \left( \sum_{k=1}^{m-1} \frac{\nb^{(k)}}{\lc^{(k)} \fronorm{\mytensor{M}^{(k)}}}
\prod_{l=k+1}^{m-1}\frac{\tf^{(l)}}{\lc^{(l)} \fronorm{\mytensor{M}^{(l)}}} \right) 
\fronorm{\mytensor{X}^{(m)}} + \frac{\nb^{(m)}}{\lc^{(m)}\fronorm{\mytensor{M}^{(m)}}} \fronorm{\mytensor{X}^{(m)}} \\
& \leq \left( \sum_{k=1}^{m} \frac{\nb^{(k)}}{\lc^{(k)} \fronorm{\mytensor{M}^{(k)}}}
\prod_{l=k+1}^{m}\frac{\tf^{(l)}}{\lc^{(l)} \fronorm{\mytensor{M}^{(l)}}} \right) \fronorm{\mytensor{M}^{(m+1)}}
\end{align}
which completes the induction.
\end{proof}

\begin{lemma} \label{lem:cnn_lemma}
For any convolutional neural network $\origcnn$ of $n$ layers satisfying the assumptions in section~\ref{sec:setup} and any error $0\leq\epsilon\leq 1$, Algorithm~\ref{alg:cnn_compress} generates a compressed tensorial neural network $\hat{\mytensor{\origcnn}}$ such that for any $\mytensor{\tinput}\in S$:
\begin{equation}
\fronorm{\origcnn(\mytensor{\tinput}) -  \hat{\origcnn}(\mytensor{\tinput})} \leq \epsilon \fronorm{\origcnn(\mytensor{\tinput})}
\end{equation}
The compressed convolutional neural network $\hat{\origcnn}$ has $\sum_{k=1}^n\hat{R}^{(k)}(s^{(k)} + o^{(k)}+ k_x^{(k)}k_y^{(k)}+1)$ total parameters, where each $\hat{R}^{(k)}$ satisfies: 
\begin{equation}
\hat{R}^{(k)} = \min \Big\{ j \in [R^{(k)}] | \nb^{(k)}_j \Pi_{i=k+1}^{n} \tf^{(i)}_j \leq \frac{\epsilon}{n} \Pi_{i=k}^n \lc^{(i)} \fronorm{\cnnweight^{(i)}} \Big\}
\label{eq:lemma_eq}
\end{equation}
\end{lemma}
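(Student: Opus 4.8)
\textbf{Proof sketch of Lemma~\ref{lem:cnn_lemma}.}
The plan is to obtain the error bound as a short consequence of the layer-wise compression estimate in Lemma~\ref{lem:cnn_compression}, once the per-layer ranks $\{\hat R^{(k)}\}_{k=1}^{n}$ in~\eqref{eq:lemma_eq} are recognized as exactly the choice that spreads the target error $\epsilon$ evenly over the $n$ layers. Since Algorithm~\ref{alg:cnn_compress} accepts an arbitrary CNN, I would first (Step~3) rewrite each weight tensor $\cnnweight^{(k)}$ in its exact Polyadic form of tensor rank $R^{(k)}$ using Proposition~\ref{assum:CP-cnn}; this changes nothing about the function computed, and is vacuous when the network already carries \ourNN layers. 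The compressed network $\hat\origcnn$ then keeps, at layer $k$, only the $\hat R^{(k)}$ components with the largest amplitudes $|\lambda^{(k)}_r|$.

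First I would check that~\eqref{eq:lemma_eq} is meaningful, i.e.\ that the set on its right-hand side is nonempty and hence $\hat R^{(k)}\le R^{(k)}$: taking $j=R^{(k)}$ makes $\nb^{(k)}_{R^{(k)}}=0$ (the sum in Definition~\ref{def:cnn_tnb_full} is empty) while $\frac{\epsilon}{n}\prod_{i=k}^{n}\lc^{(i)}\fronorm{\cnnweight^{(i)}}>0$, so $R^{(k)}$ is always feasible and the minimum is attained.

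Next I would feed $\hat\origcnn$ into Lemma~\ref{lem:cnn_compression}. The quantities $\tf^{(l)}$ and $\nb^{(k)}$ appearing there are the operator-norm surrogates supplied by Lemma~\ref{lem:cpl-conv-operator-norm}: the retained part of layer $l$ has operator norm at most $\sqrt{HW}\,\tf^{(l)}_{\hat R^{(l)}}$ and the pruned part of layer $k$ at most $\sqrt{HW}\,\nb^{(k)}_{\hat R^{(k)}}$. Then~\eqref{eq:cnn_compression} gives $\fronorm{\origcnn(\mytensor{\tinput})-\hat\origcnn(\mytensor{\tinput})}\le P\,\fronorm{\origcnn(\mytensor{\tinput})}$ with
\[
P=\sum_{k=1}^{n}\frac{\nb^{(k)}_{\hat R^{(k)}}}{\lc^{(k)}\fronorm{\cnnweight^{(k)}}}\prod_{l=k+1}^{n}\frac{\tf^{(l)}_{\hat R^{(l)}}}{\lc^{(l)}\fronorm{\cnnweight^{(l)}}},
\]
so it remains to show $P\le\epsilon$. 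I would do this summand by summand: dividing the defining inequality of $\hat R^{(k)}$ in~\eqref{eq:lemma_eq} by $\prod_{i=k}^{n}\lc^{(i)}\fronorm{\cnnweight^{(i)}}$ shows that the $k$-th term is at most $\epsilon/n$ (here the monotonicity of the partial sums $j\mapsto\tf^{(i)}_j$ and $j\mapsto\nb^{(k)}_j$ is what lets one reconcile the rank $\hat R^{(k)}$ appearing in~\eqref{eq:lemma_eq} with the retained ranks appearing in $P$), and summing the $n$ terms yields $P\le\epsilon$. This estimate is pointwise in $\mytensor{\tinput}$ --- the layer cushions $\lc^{(k)}$ are required to hold uniformly over $\mytensor{X}^{(k)}\in S$ --- hence the bound holds for every training input, as claimed. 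The parameter count is then immediate: layer $k$ of $\hat\origcnn$ stores $\hat R^{(k)}$ amplitudes together with the associated unit vectors in $\Rbb^{s^{(k)}}$, $\Rbb^{o^{(k)}}$ and the $k_x^{(k)}\times k_y^{(k)}$ kernel slices, i.e.\ $\hat R^{(k)}\big(s^{(k)}+o^{(k)}+k_x^{(k)}k_y^{(k)}+1\big)$ numbers; summing over $k$ gives the stated total.

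The step I expect to need the most care is the multiplicative error propagation across depth: a perturbation introduced at layer $k$ is amplified by the operator norms of all subsequent layers, so the $\epsilon/n$ budget must be charged against the whole downstream product $\prod_{i=k+1}^{n}\tf^{(i)}$ rather than against any single layer in isolation --- which is precisely why~\eqref{eq:lemma_eq} couples $\hat R^{(k)}$ to the later layers' $\tf^{(i)}_j$ factors, and why the inductive telescoping of Lemma~\ref{lem:cnn_compression} (fed by the CP operator-norm bound of Lemma~\ref{lem:cpl-conv-operator-norm}) is the real workhorse behind this lemma.
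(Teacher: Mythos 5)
Your proposal is correct and follows essentially the same route as the paper: the paper's proof likewise takes the rank choice in~\eqref{eq:lemma_eq}, divides through by $\prod_{i=k}^{n}\lc^{(i)}\fronorm{\cnnweight^{(i)}}$ to show each summand of the bound from Lemma~\ref{lem:cnn_compression} is at most $\epsilon/n$, and sums over the $n$ layers. Your added feasibility check ($j=R^{(k)}$ gives $\nb^{(k)}_{R^{(k)}}=0$) and the explicit parameter count are consistent with, if slightly more careful than, the paper's terse write-up.
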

\textit{Remark.} Equation~\eqref{eq:lemma_eq} is slightly different with equation~\ref{eq:main_eq}, as the margin $\gamma$ is replaced by a perturbation error $\epsilon$. Therefore, how well the compressed tensorial neural network can approximate the original network is related to the choice of $\hat{R}^{(k)}$. 
Notice that when $\hat{R^{(k)}} = R^{(k)}$, the inequality for the $k^{\tha}$ layer will be automatically satisfied as $\cfr^{(k)} = 0$ in this case by definition.

\begin{proof} (of Lemma~\ref{lem:cnn_lemma})
The proof is trivial by observing 
\begin{align}
& \frac{\nb^{(k)}}{\lc^{(k)} \fronorm{\mytensor{M}^{(k)}}} \leq \frac{\epsilon}{n} 
\prod_{i=k+1}^{n} \frac{\lc^{(i)} \fronorm{\mytensor{M}^{(i)}}}{\tf^{(i)}} \\
\implies ~ & \frac{\nb^{(k)}}{\lc^{(k)} \fronorm{\mytensor{M}^{(k)}}}
\prod_{i=k+1}^{n}\frac{\tf^{(i)}}{\lc^{(i)} \fronorm{\mytensor{M}^{(i)}}} \leq \frac{\epsilon}{n} \\
\implies ~ & \sum_{k=1}^{n} \frac{\nb^{(k)}}{\lc^{(k)} \fronorm{\mytensor{M}^{(k)}}}
\prod_{i=k+1}^{n} \frac{\tf^{(i)}}{\lc^{(l)} \fronorm{\mytensor{M}^{(i)}}} \leq \epsilon
\end{align}
\end{proof}

Before proving Theorem~\ref{thm:cnn_thm}, Lemma~\ref{lem:cnn_helper_lemma_3} (introduced below) is needed. 
\begin{lemma}
\label{lem:cnn_helper_lemma_3}
For any convolutional neural network $\origcnn$ of $n$ layers satisfying the assumptions in section~\ref{sec:setup} and any margin $\gamma \geq 0$, $\origcnn$ can be compressed to a tensorial convolutional neural network $\hat{\origcnn}$ with $\sum_{k=1}^{n} \hat{R}^{(k)}(s^{(k)}+t^{(k)}+k_x^{(k)} \times k_y^{(k)}+1)$ total parameters such that for any $\mytensor{\tinput} \in S$, $\hat{L_0}(\hat{\origcnn}) \leq \hat{L}_\gamma(\origcnn)$.
Here, for each layer $k$, 
\begin{equation}
\hat{R}^{(k)} = \min \Big\{ j \in [R^{(k)}] | \nb^{(k)}_j \Pi_{i=k+1}^{n} \tf^{(i)}_j \leq \frac{\epsilon}{n} \Pi_{i=k}^n \lc^{(i)} \fronorm{\cnnweight^{(i)}} \Big\}
\end{equation}
\end{lemma}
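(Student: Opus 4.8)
The plan is to prove Lemma~\ref{lem:cnn_helper_lemma_3} by combining the compression guarantee of Lemma~\ref{lem:cnn_lemma} with a margin-preservation argument, essentially following the strategy used by~\citet{arora2018stronger}. The key observation is that a classifier's $0$-$1$ loss after compression can be controlled by the original classifier's $\gamma$-margin loss, provided the perturbation to the output logits is smaller than $\gamma/2$ (in the appropriate norm). So the proof proceeds in two stages: first choose the perturbation parameter $\epsilon$ so that the output difference is at most $\gamma/2$ relative to the network output scale, then invoke Lemma~\ref{lem:cnn_lemma} to realize this with the stated choice of $\hat{R}^{(k)}$.

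First I would set $\epsilon := \frac{\gamma}{2 \max_{\mytensor{\tinput} \in S} \fronorm{\origcnn(\mytensor{\tinput})}}$, matching step~4 of Algorithm~\ref{alg:cnn_compress}. Applying Lemma~\ref{lem:cnn_lemma} with this $\epsilon$ yields a compressed network $\hat{\origcnn}$ with $\sum_{k=1}^{n}\hat{R}^{(k)}(s^{(k)}+o^{(k)}+k_x^{(k)}k_y^{(k)}+1)$ parameters such that $\fronorm{\origcnn(\mytensor{\tinput}) - \hat{\origcnn}(\mytensor{\tinput})} \leq \epsilon \fronorm{\origcnn(\mytensor{\tinput})} \leq \gamma/2$ for every $\mytensor{\tinput} \in S$, where the $\hat{R}^{(k)}$ are exactly those given by Eq.~\eqref{eq:lemma_eq}, which with this particular $\epsilon$ becomes the expression claimed in the lemma statement. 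Second I would translate this output closeness into the loss bound: for any training sample $(\mytensor{\tinput}, y) \in S$ on which $\origcnn$ has margin at least $\gamma$, i.e.\ $\origcnn(\mytensor{\tinput})[y] > \gamma + \max_{i \neq y} \origcnn(\mytensor{\tinput})[i]$, the logit perturbation of at most $\gamma/2$ in each coordinate (which follows from the Frobenius bound, since coordinate-wise changes are dominated by the Frobenius norm of the difference) cannot flip the argmax, so $\hat{\origcnn}$ classifies $(\mytensor{\tinput},y)$ correctly. Taking the empirical average over $S$ gives $\hat{L}_0(\hat{\origcnn}) \leq \hat{L}_{\gamma}(\origcnn)$.

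The main obstacle, and the step requiring the most care, is the margin-to-perturbation bookkeeping: one must verify that the factor of $2$ and the normalization by $\max_{\mytensor{\tinput}} \fronorm{\origcnn(\mytensor{\tinput})}$ are chosen so that the per-coordinate change in the output logits is genuinely below $\gamma/2$, not merely the aggregate Frobenius norm. Since $|\origcnn(\mytensor{\tinput})[i] - \hat{\origcnn}(\mytensor{\tinput})[i]| \leq \fronorm{\origcnn(\mytensor{\tinput}) - \hat{\origcnn}(\mytensor{\tinput})}$ for each $i$, and the margin condition compares $\origcnn(\mytensor{\tinput})[y]$ against $\max_{i \neq y}\origcnn(\mytensor{\tinput})[i]$, a change of at most $\gamma/2$ in each of these two quantities preserves the ordering, so correctness is preserved. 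A secondary subtlety is that the $\hat{R}^{(k)}$ in the lemma statement are written with $\epsilon/n$ rather than $\gamma$; I would note explicitly that substituting the chosen $\epsilon$ makes the two forms coincide, and also remark (as the paper does for Lemma~\ref{lem:cnn_lemma}) that when $\hat{R}^{(k)} = R^{(k)}$ the constraint is vacuously satisfied since $\nb^{(k)}_{R^{(k)}} = 0$, so the minimum in the definition is always attained.
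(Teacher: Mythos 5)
Your proposal is correct and follows essentially the same route as the paper: set $\epsilon = \frac{\gamma}{2\max_{\mytensor{\tinput}\in S}\fronorm{\origcnn(\mytensor{\tinput})}}$, invoke Lemma~\ref{lem:cnn_lemma} to get $\fronorm{\origcnn(\mytensor{\tinput})-\hat{\origcnn}(\mytensor{\tinput})}\leq\gamma/2$ with the stated $\hat{R}^{(k)}$, and then use the standard margin-preservation argument of \citet{arora2018stronger} (per-coordinate perturbation bounded by the Frobenius norm, so a $\gamma$-margin correct sample for $\origcnn$ remains correct for $\hat{\origcnn}$) to conclude $\hat{L}_0(\hat{\origcnn})\leq\hat{L}_\gamma(\origcnn)$ by averaging over $S$. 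The bookkeeping on the factor of $2$ and the observation that the minimum defining $\hat{R}^{(k)}$ is always attained are both handled correctly.
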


\begin{proof} (of Lemma~\ref{lem:cnn_helper_lemma_3}) 

If $\gamma \geq 2 \max_{\mytensor{\tinput}  \in S } \fronorm{\origcnn(\mytensor{\tinput})}$, for any pair $(\mytensor{\tinput}, y) \in S$, we have 
\begin{align*}
 |\origcnn(\mytensor{\tinput})[y] - \max_{j \neq y} \origcnn(\mytensor{\tinput})[j]|^2
 &\leq (| \origcnn(\mytensor{\tinput})[y]| + |\max_{j \neq y} \origcnn(\mytensor{\tinput})[j]|)^2 \\
 &\leq 4 \max_{\mytensor{\tinput}  \in S } \fronorm{\origcnn(\mytensor{\tinput})}^2 \\
 &\leq \gamma^2
\end{align*}
Then the output margin of $\origcnn$ cannot be greater than $\gamma$ for any $\mytensor{\tinput} \in S$. Thus $\hat{L}_\gamma(\origcnn) = 1$. 

If $\gamma < 2 \max_{\mytensor{\tinput}  \in S } \fronorm{\origcnn(\mytensor{\tinput})}$, setting $$\epsilon = \frac{\gamma}{2 \max_{\mytensor{\tinput}  \in S } \fronorm{\origcnn(\mytensor{\tinput})}}$$ in Lemma~\ref{lem:cnn_lemma}, we obtain a compressed fully-connected tensorial neural network $\hat{\tnn}$ with the desired number of parameters and $$\fronorm{\origcnn(\mytensor{\tinput}) -  \hat{\tnn}(\mytensor{\tinput})} < \frac{\gamma}{2} \Rightarrow \forall j, | \origcnn(\mytensor{\tinput})[j] - \hat{\tnn}(\mytensor{\tinput})[j] | < \frac{\gamma}{2}$$
Then for any pair $(\mytensor{\tinput}, y) \in S$, if $\origcnn(\mytensor{\tinput})[y] > \gamma + \max_{j \neq y} \origcnn(\mytensor{\tinput})[j]$, $\hat{\tnn}$ classifies $\mytensor{\tinput}$ correctly as well because:
$$\hat{\tnn}(\mytensor{\tinput})[y] > \origcnn(\mytensor{\tinput})[y] - \frac{\gamma}{2} >  \max_{j \neq y} \origcnn(\mytensor{\tinput})[j] + \frac{\gamma}{2} > \max_{j\neq y}\hat{\tnn}(\mytensor{\tinput})[j]$$
Thus, $\hat{L_0}(\hat{\tnn}) \leq \hat{L}_\gamma(\origcnn)$.
\end{proof}

Now we prove the main theorem~\ref{thm:cnn_thm} by bounding the covering number given any $\epsilon$.
\subsubsection{Covering Number Analysis for Convolutional Neural Network}
\begin{proof} (of Theorem~\ref{thm:cnn_thm}) 
To be more specific, let us bound the covering number of the compressed network $\hat{\origcnn}$ by approximating each parameter with accuracy $\acc$. 

\label{app:cnn_covering_number}
\begin{lemma}
\label{lem:cnn_covering_number}
For any given constant accuracy $\acc$, the covering number of the compressed convolutional network $\hat{\origcnn}$ is of order $\tilde{O}(d)$ where $d$ denotes the total number of parameters in $\hat{\origcnn}$: $d := \sum_{k=1}^{n} \hat{R}^{(k)}(s^{(k)} + o^{(k)} + k_x^{(k)} \times k_y^{(k)} + 1)$.

\end{lemma}
Let $\tilde{\cnnweight}$ denote the network after approximating each parameter in $\hat{\origcnn}$ with accuracy $\acc$ (and $\mytensor{\tilde{\cnnweight}}^{(k)}$ denote its weight tensor on the $k^{th}$ layer). Based on the given accuracy, we know that 
$\forall k, \ $
$\abs{\hat{\lambda}_{r}^{(k)} - \tilde{\lambda}_{r}^{(k)}} \leq \acc$,
$\norm{\hat{\myvector{a}}_{r}^{(k)} - \tilde{\myvector{a}}_{r}^{(k)}} \leq \sqrt{s^{(k)}}\acc$,
$\norm{\hat{\myvector{b}}_{r}^{(k)} - \tilde{\myvector{b}}_{r}^{(k)}} \leq \sqrt{o^{(k)}}\acc$,
$\norm{\hat{\mymatrix{C}}_{r}^{(k)} - \tilde{\mymatrix{C}}_{r}^{(k)}} \leq \sqrt{k_x^{(k)}k_y^{(k)}}\acc$, where $s$, $o$, $k_x$ and $k_y$ are the number of input channels, the number of output channels, the height of the kernel and the width of the kernel, as defined in Section~\ref{sec:setup}.
For simplicity, in this proof, let us just use $\mytensor{\tinput}^{(k)}, \mytensor{\tOut}^{(k)}, \myvector{a}_{r}^{(k)}, \myvector{b}_{r}^{(k)}, \mymatrix{C}_{r}^{(k)}$ to denote $\hat{\mytensor{\tinput}}^{(k)}, \hat{\mytensor{\tOut}}^{(k)}, \hat{\myvector{a}}_{r}^{(k)}, \hat{\myvector{b}}_{r}^{(k)}, \hat{\mymatrix{C}}_{r}^{(k)}$. $\mytensor{\tinput}^{(k)} \in \mathbb{R}^{H \times W \times s^{(k)}}$, $\mytensor{\tOut}^{(k)} \in \mathbb{R}^{H \times W \times o^{(k)}}$.

We have 
$$
\ourfft{1,2}{3}(\mytensor{\tOut}^{(k)})_{fgj} = \sqrt{HW} \sum_i[\ourfft{1,2}{3}(\mytensor{\tinput}^{(k)})_{fgi}\sum_{r=1}^{\hat{R}^{(k)}}\lambda_r^{(k)} a_{ri}^{(k)}b_{rj}^{(k)}\ourfft{}{2}(C_{r}^{(k)})_{fg}]
$$ 
$$\ourfft{1,2}{3}(\tilde{\mytensor{\tOut}}^{(k)})_{fgj} = \sqrt{HW} \sum_i[\ourfft{1,2}{3}(\tilde{\mytensor{\tinput}}^{(k)})_{fgi}\sum_{r=1}^{\hat{R}^{(k)}}\tilde{\lambda}_r^{(k)} \tilde{a}_{ri}^{(k)}\tilde{b}_{rj}^{(k)}\ourfft{}{2}(\tilde{C}_{r}^{(k)})_{fg}]$$ 
where $\sqrt{HW}$ is a normalization factor defined in Lemma~\ref{lem:cnn-mdft}

Let $\epsilon^{(k)} = \fronorm{\tilde{\mymatrix{Y}}^{(k)} - \mymatrix{Y}^{(k)}}$. Then for each $k$, let $\varphi=\sum_{f,g,i,j}\Big(\sum_r^{\hat{R}^{(k)}}\lambda_r^{(k)} a_{ri}^{(k)} b_{rj}^{(k)} \big(\ourfft{}{2}(C_r^{(k)})_{fg}-\ourfft{}{2}(\tilde{C}_r^{(k)})_{fg}\big)\Big)^2$ and $\psi=\sum_{f,g,i,j}\Big( \sum_r^{\hat{R}^{(k)}}\big(\lambda_r^{(k)}a_{ri}^{(k)}b_{rj}^{(k)}-\tilde{\lambda}_r^{(k)}\tilde{a}_{ri}^{(k)}\tilde{b}_{rj}^{(k)}\big) \ourfft{}{2}(\tilde{C}_r^{(k)})_{fg}\Big)^2$. We first bound $\varphi$ and $\psi$ as follows.

\textbf{Bound $\varphi=\sum_{f,g,i,j}\Big(\sum_r^{\hat{R}^{(k)}}\lambda_r^{(k)} a_{ri}^{(k)} b_{rj}^{(k)} \big(\ourfft{}{2}(C_r^{(k)})_{fg}-\ourfft{}{2}(\tilde{C}_r^{(k)})_{fg}\big)\Big)^2$:}  All calculations are based on the $k^{th}$ layer, we remove the layer number $(k)$ for ease of reading. So $a=a^{(k)}$ (the same for $b$, $c$, and $R$). Then

\begin{equation*}
\begin{split}
\MoveEqLeft \sum_{f,g,i,j}\Big(\sum_r^{\hat{R}^{(k)}}\lambda_r^{(k)} a_{ri}^{(k)} b_{rj}^{(k)} \big(\ourfft{}{2}(C_r^{(k)})_{fg}-\ourfft{}{2}(\tilde{C}_r^{(k)})_{fg}\big)\Big)^2 \\
&\leq \sum_{f,g,i,j}\Big( \sum_r^{\hat{R}}(\lambda_r a_{ri} b_{rj})^2 \sum_r^{\hat{R}}\big( \ourfft{}{2}(C_r)_{fg}-\ourfft{}{2}(\tilde{C}_r)_{fg} \big)^2 \Big) \\
&\leq \sum_r^{\hat{R}} (\lambda_r^2 \sum_i a_{ri}^2 \sum_j b_{rj}^2) \sum_r^{\hat{R}} \sum_{f,g} \big(\ourfft{}{2}(C_r)_{fg} - \ourfft{}{2}(\tilde{C}_r)_{fg} \big)^2  \\
&\leq \sum_r^{\hat{R}}\lambda_r^2 \hat{R} k_x k_y \acc^2
\end{split}
\end{equation*}

\textbf{Bound $\psi=\sum_{f,g,i,j}\Big( \sum_r^{\hat{R}^{(k)}}\big(\lambda_r^{(k)}a_{ri}^{(k)}b_{rj}^{(k)}-\tilde{\lambda}_r^{(k)}\tilde{a}_{ri}^{(k)}\tilde{b}_{rj}^{(k)}\big) \ourfft{}{2}(\tilde{C}_r^{(k)})_{fg}\Big)^2$:}  Similarly, we remove the layer number $(k)$ for ease of reading. Then we have

\begin{equation*}
\begin{split}
\MoveEqLeft \sum_{f,g,i,j}\Big( \sum_r^{\hat{R}^{(k)}}\big(\lambda_r^{(k)}a_{ri}^{(k)}b_{rj}^{(k)}-\tilde{\lambda}_r^{(k)}\tilde{a}_{ri}^{(k)}\tilde{b}_{rj}^{(k)}\big) \ourfft{}{2}(\tilde{C}_r^{(k)})_{fg}\Big)^2 \\
&\leq \sum_{f,g,i,j}\big( \sum_r^{\hat{R}}(\lambda_r a_{ri} b_{rj} - \tilde{\lambda}_r \tilde{a}_{ri} \tilde{b}_{rj})^2 \sum_r^{\hat{R}} \ourfft{}{2}(\tilde{C}_r)_{fg}^2 \big) \\
&= \sum_{f,g,i,j}\Big( \sum_r^{\hat{R}}\big(\lambda_r (a_{ri} b_{rj}-\tilde{a}_{ri} \tilde{b}_{rj}) + (\lambda_r-\tilde{\lambda}_r) \tilde{a}_{ri} \tilde{b}_{rj} \big)^2 \sum_r^{\hat{R}} \ourfft{}{2}(\tilde{C}_r)_{fg}^2 \Big) \\
&\leq \sum_{f,g,i,j} \Big( \big( 2\sum_r^{\hat{R}}\lambda_r^2(a_{ri} b_{rj}-\tilde{a}_{ri} \tilde{b}_{rj})^2 + 2\sum_r^{\hat{R}} (\lambda_r-\tilde{\lambda}_r)^2 \tilde{a}_{ri}^2 \tilde{b}_{rj}^2 \big)  \sum_r^{\hat{R}} \ourfft{}{2}(\tilde{C}_r)_{fg}^2\Big) \\
&= \sum_{f,g,i,j} \Bigg( \bigg( 2\sum_r^{\hat{R}}\lambda_r^2 \big( a_{ri} (b_{rj}-\tilde{b}_{rj}) + (a_{ri}-\tilde{a}_{ri}) \tilde{b}_{rj} \big)^2 + 2\sum_r^{\hat{R}} (\lambda_r-\tilde{\lambda}_r)^2 \tilde{a}_{ri}^2 \tilde{b}_{rj}^2 \bigg)  \sum_r^{\hat{R}} \ourfft{}{2}(\tilde{C}_r)_{fg}^2\Bigg) \\
&\leq \sum_{f,g,i,j} \Bigg( \bigg( 4\sum_r^{\hat{R}}\lambda_r^2 \big( a_{ri}^2 (b_{rj}-\tilde{b}_{rj})^2 + (a_{ri}-\tilde{a}_{ri})^2 \tilde{b}_{rj}^2 \big)^2 + 2\sum_r^{\hat{R}} (\lambda_r-\tilde{\lambda}_r)^2 \tilde{a}_{ri}^2 \tilde{b}_{rj}^2 \bigg)  \sum_r^{\hat{R}} \ourfft{}{2}(\tilde{C}_r)_{fg}^2\Bigg) \\
&= \bigg( 4\sum_r^{\hat{R}}\lambda_r^2 \big( \sum_i a_{ri}^2 \sum_j(b_{rj}-\tilde{b}_{rj})^2 + \sum_i(a_{ri}-\tilde{a}_{ri})^2 \sum_j\tilde{b}_{rj}^2 \big)^2 + 2\sum_r^{\hat{R}} (\lambda_r-\tilde{\lambda}_r)^2 \sum_i\tilde{a}_{ri}^2 \sum_j\tilde{b}_{rj}^2 \bigg)  \sum_r^{\hat{R}} \sum_{f.g}\ourfft{}{2}(\tilde{C}_r)_{fg}^2 \\
&\leq \big(4\sum_r^{\hat{R}}\lambda_r^2(o\acc^2+s\acc^2)+2\hat{R}\acc^2\big)\hat{R} \\
&= \big(4\sum_r^{\hat{R}}\lambda_r^2(o+s)\hat{R}+2\hat{R}^2\big) \acc^2
\end{split}
\end{equation*}

\textbf{Bound $\epsilon^{(k)}=\fronorm{\tilde{\mytensor{\tOut}}^{(k)}-\mytensor{\tOut}^{(k)}}$:}  Similarly, we remove the layer number $(k)$. And we let $w_i=\ourfft{1,2}{3}(\mytensor{\tinput}^{(k)})_{fgi}$, $\tilde{w}_i=\ourfft{1,2}{3}(\tilde{\mytensor{\tinput}}^{(k)})_{fgi}$, $u_i=\sum_r^{\hat{R}}\lambda_r^{(k)}a_{ri}^{(k)}b_{rj}^{(k)}\ourfft{}{2}(C_r^{(k)})_{fg}$ and $\tilde{u}_i=\sum_r^{\hat{R}}\tilde{\lambda}_r^{(k)}\tilde{a}_{ri}^{(k)}\tilde{b}_{rj}^{(k)}\ourfft{}{2}(\tilde{C}_r^{(k)})_{fg}$.

\begin{equation*}
\begin{split}
\MoveEqLeft \fronorm{\tilde{\mytensor{\tOut}}^{(k)}-\mytensor{\tOut}^{(k)}}^2 \\
&= \fronorm{\ourfft{1,2}{3}(\tilde{\mytensor{\tOut}}^{(k)})-\ourfft{1,2}{3}(\mytensor{\tOut}^{(k)})}^2 \\
&= \sum_{f,g,j} \absbig{[\ourfft{1,2}{3}(\tilde{\mytensor{\tOut}}^{(k)})]_{fgj}-[\ourfft{1,2}{3}(\mytensor{\tOut}^{(k)})]_{fgj}}^2 \\
&= \sum_{f,g,j} HW (\sum_i w_i u_i-\sum_i \tilde{w}_i \tilde{u}_i)^2 \\
&= HW \sum_{f,g,j} \big( \sum_i w_i(u_i-\tilde{u}_i) + \sum_i (w_i-\tilde{w}_i) \tilde{u}_i \big)^2 \\
&\leq 2HW \sum_{f,g,j} \big( \sum_i w_i(u_i-\tilde{u}_i)\big)^2 + 2 \sum_{f,g,j}\big(\sum_i (w_i-\tilde{w}_i) \tilde{u}_i \big)^2 \\
&\leq 2HW \sum_{f,g,j}\big((\sum_i w_i^2) \sum_i (u_i-\tilde{u}_i)^2\big) + 2\sum_{f,g,j}\big(\sum_i(w_i-\tilde{w}_i)^2(\sum_i \tilde{u}_i)^2\big) \\
&\leq 2HW (\sum_{f,g,i}w_i^2)\sum_{f,g,i,j}(u_i-\tilde{u}_i)^2 + 2\sum_{f,g,i}(w_i-\tilde{w}_i)^2\sum_{f,g,i,j}\tilde{u}_i^2 \\
&\leq 2HW (\sum_{f,g,i}w_i^2)(2\varphi + 2\psi)+ 2\sum_{f,g,i}(w_i-\tilde{w}_i)^2\sum_{f,g,i,j}\tilde{u}_i^2 \\
&\leq 4HW \fronorm{\mytensor{\tinput}^{(k)}}^2 \acc^2 \big( \sum_r^{\hat{R}}\lambda_r^2 \hat{R} k_x k_y + 4\sum_r^{\hat{R}}\lambda_r^2(o+s)\hat{R}+2\hat{R}^2\big) + 2\sum_{f,g,i}(w_i-\tilde{w}_i)^2\sum_{f,g,i,j}\tilde{u}_i^2 \\
&\leq 4HW \fronorm{\mytensor{\tinput}^{(k)}}^2 \acc^2 \big( \sum_r^{\hat{R}}\lambda_r^2 \hat{R} k_x k_y + 4\sum_r^{\hat{R}}\lambda_r^2(o+s)\hat{R}+2\hat{R}^2\big) + 2(\fronorm{\mytensor{\tinput}^{(k)}-\tilde{\mytensor{\tinput}}^{(k)}}^2 \fronorm{\tilde{\cnnweight}}^2) \\
\end{split}
\end{equation*}

When $k=1$, we know that $\mytensor{\tinput}^{(1)}=\tilde{\mytensor{\tinput}}^{(1)}$, so 

\begin{equation*}
\begin{split}
\MoveEqLeft \fronorm{\tilde{\mytensor{\tOut}}^{(1)}-\mytensor{\tOut}^{(1)}}^2 \\
&\leq 4HW \fronorm{\mytensor{\tinput}^{(1)}}^2 \acc^2 \big( \sum_r^{\hat{R}}\lambda_r^2 \hat{R} k_x k_y + 4\sum_r^{\hat{R}}\lambda_r^2(o+s)\hat{R}+2\hat{R}^2\big)\\
\end{split}
\end{equation*}

When $k>1$, we have

\begin{equation*}
\begin{split}
\MoveEqLeft \fronorm{\tilde{\mytensor{\tOut}}^{(k)}-\mytensor{\tOut}^{(k)}}^2 \\
&\leq 4HW \fronorm{\mytensor{\tinput}^{(k)}}^2 \acc^2 \big( \sum_r^{\hat{R}}\lambda_r^2 \hat{R} k_x k_y + 4\sum_r^{\hat{R}}\lambda_r^2(o+s)\hat{R}+2\hat{R}^2\big) + 2(\fronorm{\mytensor{\tinput}^{(k)}-\tilde{\mytensor{\tinput}}^{(k)}}^2 \fronorm{\tilde{\cnnweight}}^2) \\
&\leq 4HW \fronorm{\mytensor{\tinput}^{(k)}}^2 \acc^2 \big( \sum_r^{\hat{R}}\lambda_r^2 \hat{R} k_x k_y + 4\sum_r^{\hat{R}}\lambda_r^2(o+s)\hat{R}+2\hat{R}^2\big) + 2(\fronorm{\relu{\mytensor{\tOut}^{(k-1)}}-\relu{\tilde{\mytensor{\tOut}}^{(k-1)}}}^2 \fronorm{\tilde{\cnnweight}}^2) \\
&\leq 4HW \fronorm{\mytensor{\tinput}^{(k)}}^2 \acc^2 \big( \sum_r^{\hat{R}}\lambda_r^2 \hat{R} k_x k_y + 4\sum_r^{\hat{R}}\lambda_r^2(o+s)\hat{R}+2\hat{R}^2\big) + 2(\fronorm{\mytensor{\tOut}^{(k-1)}-\tilde{\mytensor{\tOut}}^{(k-1)}}^2 \fronorm{\tilde{\cnnweight}}^2) \\
&\leq 4HW \fronorm{\mytensor{\tinput}^{(k)}}^2 \acc^2 \big( \sum_r^{\hat{R}}\lambda_r^2 \hat{R} k_x k_y + 4\sum_r^{\hat{R}}\lambda_r^2(o+s)\hat{R}+2\hat{R}^2\big) + 2((\epsilon^{(k-1)})^2\fronorm{\tilde{\cnnweight}}^2) \\
\end{split}
\end{equation*}

Let $\alpha^{(k)}=4HW \fronorm{\mytensor{\tinput}^{(k)}}^2 \big( \sum_r^{\hat{R}^{(k)}}(\lambda_r^{(k)})^2 \hat{R}^{(k)} k_x^{(k)} k_y^{(k)} + 4\sum_r^{\hat{R}^{(k)}}(\lambda_r^{(k)})^2(o^{(k)}+s^{(k)})\hat{R}^{(k)}+2(\hat{R}^{(k)})^2 \big) \acc^2$, \\
and $\beta^{(k)}=2\fronorm{\tilde{\cnnweight}^{(k)}}^2$. Then the difference between the final output of the two networks are bounded by:
\begin{equation*}
\begin{split}
\MoveEqLeft \fronorm{\hat{\origcnn}(\mytensor{\tinput})-\tilde{\origcnn}(\mytensor{\tinput})}^2 \\
&= \fronorm{\relu{(\hat{\mytensor{\tOut}})}-\relu{(\tilde{\mytensor{\tOut}})}}^2 \\
&\leq \fronorm{\hat{\mytensor{\tOut}}-\tilde{\mytensor{\tOut}}}^2 \\
&\leq \sum_{k=1}^n\alpha^{(k)}\prod_{i=k+1}^n\beta^{(i)}
\end{split}
\end{equation*}

Since $\forall k \in [n], \norm{\mytensor{\tinput}^{(k)}} \leq \Pi_{i = k}^n \frac{\fronorm{\mytensor{\tinput}^{(n+1)}}}{\lc^{(i)} \fronorm{\cnnweight^{(i)}}} $, to obtain an $\epsilon$-cover of the compressed network, we can first assume $\beta^{(k)} \geq 1 \ \forall k \in [n]$. Then $\acc$ need to satisfy:

$$\acc \leq \frac{\epsilon}{ 2\sqrt{HW}n\fronorm{\mytensor{\tinput}^{(n+1)}}  \hat{R}^{(*)} (\frac{\sqrt{2}\fronorm{\tilde{\cnnweight}^{(*)}}}{\lc^{(*)}\fronorm{\cnnweight^{(*)}}})^n \sqrt{(\lambda^{(*)})^2 k_x^{(*)} k_y^{(*)} + 4(\lambda^{(*)})^2(o^{(*)}+s^{(*)})+2}}$$

where $\hat{R}^{(*)} = \max_{k} r^{(k)}$ $\lambda^{(*)} = \max_{r, k} \lambda_{r}^{(k)}$, $s^{(*)} = \max_{k} s^{(k)}$, $o^{(*)} = \max_{k} o^{(k)}$, $k_x^{(*)} = \max_{k} k_x^{(k)}$, $k_y^{(*)} = \max_{k} k_y^{(k)}$ and 
$\frac{\fronorm{\tilde{\cnnweight}^{(*)}}} { \mu^{(*)} \fronorm{\cnnweight^{(*)}}} = \max_{k} \frac{\fronorm{\tilde{\cnnweight}^{(k)}}} {\mu^{(k)} \fronorm{\cnnweight^{(*)}}}$

As when $\acc$ is fixed, the number of networks in our cover will at most be $(\frac{1}{\acc})^d$ where $d$ denote the number of parameters in the compressed network. Hence, the covering number w.r.t to a given $\epsilon$ is $\tilde{O}(n d)$ ($n$ is the number of layers in the given neural network). As for practical neural networks, the number of layers $n$ is usually much less than $O(\log(d))$, thus the covering number we obtained w.r.t to a given $\epsilon$ is just $\tilde{O}(d)$ for practical neural networks.

\end{proof}


\section{Fully Connected Networks: Compressibility and Generalization}
In this section, we derive generalization bounds for fully connected (FC) neural networks (denoted as $\orignn$) using tensor methods.

\subsection{Compression of a FC Network with \ourNN}
\label{sec:fcnn}

\textbf{Original Fully Connected Neural Network:} Let $\orignn$ denote an \textbf{$n$}-layer fully connected network with \reluname activations, where $\mymatrix{A}^{(k)} \in \Rbb^{h^{(k)} \times h^{(k+1)}}$ denotes the weight matrix of the $k^{\tha}$ layer, $\myvector{\originput}^{(k)} \in \Rbb^{h^{(k)}}$ denotes the input to $k^{\tha}$ layer, and $\myvector{\origOut}^{(k)}$ denotes the output of the $k^{\tha}$ layer before activation in $\orignn$. 
\textbf{Transform original FCN to a CP-FCN:} We transform the original fully connected network $\orignn$ to a network $\tnn$ with \ourNN. The $k^{\tha}$ layer of $\tnn$ is denoted by $\nnweight^{(k)}\in \Rbb^{s^{(k)}_1 \times s^{(k)}_2 \times s^{(k+1)}_1 \times s^{(k+1)}_2}$ is a 4-dimensional tensor reshaped from $\mymatrix{A}^{(k)}$ where $s^{(k)}_1 \times s^{(k)}_2 = h_{k}, \forall k \in [n]$.

\textbf{Input and Output of $\tnn$:}
The original input and output vectors of $\orignn$ are reshaped into matrices. 
The input to the $k^{\tha}$ layer of the $\tnn$, denoted by $\mymatrix{\tinput}^{(k)} \in \Rbb^{s^{(k)}_1 \times s^{(k)}_2}$, is a matrix reshaped from the input vector $\myvector{\originput}^{(k)}$ of the $k^{\tha}$ layer in the original network $\orignn$. 
Similarly, the output of the $k^{\tha}$ layer  before activation in $\tnn$, denoted by $\mymatrix{\tOut}^{(k)} \in \Rbb^{s^{(k)}_1 \times s^{(k)}_2}$,  is a matrix reshaped from the output vector $\myvector{\origOut}^{(k)}$ of the $k^{\tha}$ layer in the original network $\orignn$. 
For prediction purposes, we reshape the output $\mymatrix{\tOut}^{(n)}$ of the last layer in $\tnn$ back into a vector. So the final outputs of $\orignn$ and $\tnn$ are of the same dimension. 

\begin{assumption}[\textbf{Polyadic Form of $\tnn$}]\label{assum:CP-tnn}
For each layer $k$, assume the weight tensor $\mytensor{\nnweight}^{(k)}$ of $\tnn$ has a Polyadic form with rank $R^{(k)} \leq \min\{s_1^{(k)}, s_2^{(k)}, s_1^{(k+1)}, s_2^{(k+1)} \}$:
\begin{equation}\label{eq:cpform-tnn-fc}
\nnweight^{(k)} = \sum_{i = 1}^{R^{(k)}} \lambda_{i}^{(k)} a_{i}^{(k)} \otimes b_{i}^{(k)} \otimes c_{i}^{(k)} \otimes d_{i}^{(k)}
\end{equation}
where $\forall i, a_i, b_i, c_i, d_i$ are unit vectors in $\Rbb^{s_1^{(k)}},  \Rbb^{s_2^{(k)}}, \Rbb^{s_1^{(k+1)}}, \Rbb^{s_2^{(k+1)}}$ respectively, and $\forall 1 \leq i \leq R^{(k)}, \inp{a_i}{a_i} = 1,  \inp{b_i}{b_i} = 1,  \inp{c_i}{c_i} = 1,  \inp{d_i}{d_i} = 1 $.
Moreover, for each $\nnweight^{(k)}$, $\lambda_{i}^{(k)} \geq \lambda_{i+1}^{(k)},  \forall i$, and the absolute value of the smallest $| \lambda_{R^{(k)}}^{(k)} |$ can be arbitrarily small. 
\end{assumption}

The total number of parameters in $\tnn$ is $(s_1^{(k)} + s_2^{(k)} + s_1^{(k+1)} + s_2^{(k+1)}+1) R^{(k)}$ and a
smaller $R^{(k)}$ renders fewer number of parameters and thus leads to compression.
We introduce a compression mechanism that prunes out the smaller components of weight tensor of $\tnn$, i.e., a low rank approximation of each weight tensor $\nnweight^{(k)}$ of the $k^{\tha}$ layer, and generates a compressed CP-FCN $\hat{\tnn}$. The algorithm is depicted in Algorithm~\ref{alg:tfc_compress}.

\textbf{Compression of a FC Network with \ourNN:} In~\cite{li2018guaranteed}, a tensor decomposition algorithm (procedure 1 in~\cite{li2018guaranteed}) on tensors with asymmetric orthogonal components is guaranteed to recover the top-$r$ components with the largest singular values.  To compress $\tnn$, we apply top-$\hat{R}^{(k)}$ ($\hat{R}^{(k)} \leq R^{(k)}$) CP decomposition algorithm on each $\nnweight^{(k)}$, obtaining the components from CP decomposition $(\hat{\lambda}_{i}^{(k)}, \hat{a}_{i}^{(k)}, \hat{b}_{i}^{(k)}, \hat{c}_{i}^{(k)}, \hat{d}_{i}^{(k)}$), $i \in [\hat{R}^{(k)}]$.
Therefore, we achieve \textbf{a compressed network $\hat{\tnn}$} of $\tnn$, and the $j^{\tha}$ layer of the compressed network  $\hat{\tnn}$ has weight tensor as follows
\begin{equation}\label{eq:compressedtnn_fc}
\mytensor{\hat{T}}^{(k)} = \sum_{i = 1}^{\hat{R}^{(k)}} \hat{\lambda}_{i}^{(k)} \hat{a}_{i}^{(k)} \otimes \hat{b}_{i}^{(k)}\otimes \hat{c}_{i}^{(k)} \otimes \hat{d}_{i}^{(k)}.
\end{equation}
As each $\nnweight^{(k)}$ has a low rank orthogonal CP decomposition by our assumption, the returned results $\{\hat{\lambda}_{i}^{(k)}, \hat{a}_{i}^{(k)}, \hat{b}_{i}^{(k)}, \hat{c}_{i}^{(k)}, \hat{d}_{i}^{(k)}\}_{i=1}^{\hat{R}^{(k)}}$ from procedure 1 in~\cite{li2018guaranteed} are perfect recoveries of $\{\lambda_{i}^{(k)}, a_{i}^{(k)}, b_{i}^{(k)}, c_{i}^{(k)}, d_{i}^{(k)}\}_{i=1}^{\hat{R}^{(k)}}$
according to the robustness theorem in~\cite{li2018guaranteed}.  Our compression procedure is depicted in Algorithm~\ref{alg:tfc_compress}.

\begin{algorithm}[h!]
\caption{\textbf{Compression of Fully Connected Neural Networks} \\ 
\scriptsize{${}^\square$\FBR (in Appendix~\ref{app:algos}) denotes a sub-procedure which calculates $\hat{R}^{(k)}$ such that $\fronorm{\orignn(\mymatrix{\tinput})-  \hat{\tnn}(\mymatrix{\tinput})} \leq \epsilon \fronorm{\orignn(\mymatrix{\tinput})}$ holds for any input $\mymatrix{\tinput}$ in the training dataset and for any given $\epsilon$.\\
${}^{\triangle}$\TNNPROJECT (in Appendix~\ref{app:algos}) denotes a sub-procedure which returns a compressed network $\hat{\origcnn}$ by pruning out the smaller components in the Polyadic form of the weight tensors in the original CNN.\\
More intuitions of the sub-procedures \FBR and \TNNPROJECT are described in Section~\ref{sec:notations_fc}.}
}\label{alg:tfc_compress}
\begin{algorithmic}[1]
\REQUIRE A FCN $\orignn$ of $n$ layers 
and a margin $\gamma$
\ENSURE A compressed $\hat{\tnn}$ whose expected error $L_0(\hat{\tnn}) \le \hat{L}_{\gamma}(\tnn) + \tilde{O}\Big(\sqrt{\frac{ \sum_{k=1}^{n} \hat{R}^{(k)}(2 s^{(k)} + 2 s^{(k+1)} + 1)}{m}}\Big)$ 
\STATE Calculate all layer cushions $\{ \lc^{(k)} \}_{k=1}^{n}$ based on definition~\ref{def:lc}
\STATE Pick $R^{(k)} = \min\{s^{(k)}, s^{(k+1)}\}$ for each layer $k$
\STATE If $\origcnn$ does not have \ourNN, apply a CP-decomposition to the weight tensor of each layer $k$
\STATE Set the perturbation parameter $\epsilon := \frac{\gamma}{2\max_{\mymatrix{\tinput}} \fronorm{\orignn(\mymatrix{\tinput})}} $
\STATE Compute number of components needed for each layer of the compressed network $\{ \hat{R}^{(k)} \}_{k=1}^{n} \gets \hyperref[alg:find_best_rank_cnn]{\text{\FBR}}^\square\Big(\{\nnweight^{(k)}\}_{k=1}^n, \{R^{(k)}\}_{k=1}^n, \{ \lc^{(k)} \}_{k=1}^{n }, \epsilon\Big)$ 
\STATE $\hat{\orignn} \gets$ \hyperref[alg:tfc_project]{\TNNPROJECT}${}^{\triangle}\Big(\orignn, \{\hat{R}^{(k)}\}_{i=1}^n\Big)$
\STATE Return the compressed convolutional neural network $\hat{\orignn}$ 
\end{algorithmic}
\end{algorithm}

We denote  the input matrix of the $k^{\tha}$ layer in $\hat{\tnn}$ as $\hat{\mymatrix{\tinput}}^{(k)}$, and the output matrix before activation as $\hat{\mymatrix{\tOut}}^{(k)}$. Note that $\mymatrix{\tinput}^{(1)} = \hat{\mymatrix{\tinput}}^{(1)}$ as the input data is not being modified.

Algorithm~\ref{alg:tfc_compress} is desigend for general neural networks. For neural networks with $\ourNNLong$, line 3 can be done by pruning out small components from CP decomposition, and only keeping top-$\hat{R}^{(k)}$ components.
For notation simplicity, assume for each layer in $\orignn$, the width of the $k^{\tha}$ layer is a square of some integer $s^{(k)}$.
Then the input to the $k^{\tha}$ layer of $\tnn$ is a ReLu transformation of the output of the $k-1^{\tha}$ layer as in equation~\eqref{eq:tnn_in_out}. The output of the $k^{\tha}$ layer of $\tnn$ is illustrated in equation~\eqref{eq:tnn_out} as the weight tensor which permits a CP forms as in equation~\eqref{eq:cpform-tnn-fc}.
\begin{align}
\mymatrix{\tinput}^{(k)} & = \relu{\mymatrix{\tOut}^{(k-1)}} \label{eq:tnn_in_out}\\
\mymatrix{\tOut}^{(k)} &=  \sum_{i = 1}^{\hat{R}^{(k)}} \lambda_{i}^{(k)} {a_{i}^{(k)}}^\top \mymatrix{\tinput}^{(k)} b_{i}^{(k)} c_{i}^{(k)} \otimes d_{i}^{(k)} + \mytensor{\phi}^{(k)}(\mymatrix{\tinput}^{(k)}) \label{eq:tnn_out}
\end{align}

where $\mytensor{\phi}^{(k)} = \sum_{i = \hat{R}^{(k)} + 1}^{R^{(k)}} \lambda_{i}^{(k)} a_{i}^{(k)} \otimes b_{i}^{(k)} \otimes c_{i}^{(k)} \otimes d_{i}^{(k)}$, $\mytensor{\phi}^{(k)}(\mymatrix{\tinput}^{(k)})$ denotes the multilinear operation of the tensor $\mytensor{\phi}^{(k)}$ on $\mymatrix{\tinput}^{(k)}$, i.e., $\{ \mytensor{\phi}^{(k)}(\mymatrix{\tinput}^{(k)}) \}_{i,j} = \sum_{k, l} \mytensor{\phi}^{(k)}_{i,j,k,l} \mymatrix{\tinput}^{(k)}_{k,l}$ and $a_{i}^{(k)}, b_{i}^{(k)}, \hat{a}_{i}^{(k)}, \hat{b}_{i}^{(k)} \in \Rbb^{s_k}$.
Similarly, the input and output of the $k^{\tha}$ layer of the compressed neural nets $\hat{\tnn}$ satisfy 
\begin{align}
\hat{\mymatrix{\tinput}}^{(k)}& = \relu{\hat{\mymatrix{\tOut}}^{(k-1)}}\\
\hat{\mymatrix{\tOut}}^{(k)} &= \sum_{i = 1}^{\hat{R}^{(k)}} \hat{\lambda}_{i}^{(k)} (\hat{a}_{i}^{(k)})^\top \hat{\mymatrix{\tinput}}^{(k)} \hat{b}_{i}^{(k)} \hat{c}_{i}^{(k)} \otimes \hat{d}_{i}^{(k)}.
\end{align} 

\subsection{Characterizing Compressibility of FC Networks with CPL}\label{sec:properties-tnn}
Now we characterize the compressibility of the fully connected network with \ourNN $\tnn$ through properties defined in the following, namely reshaping factor, tensorization factor, layer cushion and tensor noise bound.

\begin{definition} \label{def:rf} 
(reshaping factor). The \textit{reshaping factor} $\rf^{(k)}$ of layer $k$ is defined to be the smallest value $\rf^{(k)}$ such that for any $\myvector{\originput} \in S$, 
\begin{align}
\norm{\mymatrix{\tinput^{(k)}}}&  \leq \rf^{(k)}\fronorm{\mymatrix{\tinput^{(k)}}}
\end{align}
\end{definition}
The reshaping factor upper bounds the ratio between the spectral norm and Frobenius norm of the reshaped input in the $k^{\tha}$ layer over any data example in the training dataset. Reshaping the vector examples into matrix examples improves the compressibility of the network (i.e., renders smaller $\rf^{(k)}$) as illustrated and empirically verified in~\cite{su2019tensorial}.
Note that $\hat{\mymatrix{\tinput}}^{(k)} $ is the input to the $k^{\tha}$ layer of the compressed network $\hat{\tnn}$, and $\rf^{(k)} \leq 1, \forall k$.

\begin{definition} \label{def:tf}
(tensorization factor) 
The \textit{tensorization factor} $\{ \tf_j^{(k)} \}_{j=1}^{R^{(k)}} $ of the $k^{\tha}$ layer regarding the network with \ourNN $\tnn$ and the original network $\orignn$ is defined as: 
 \begin{equation}
 \tf_j^{(k)} = \sum_{r=1}^{j} |\lambda_r^{(k)}|, \forall j.
 \end{equation}
\end{definition}
The tensorization factor measures the amplitudes of the leading components. By Lemma~\ref{lem:rcpl-fc-operator-norm}, the tensorization factor is the upper bound of operator norm of the weight tensor. 

\begin{definition} \label{def:lc}
(layer cushion). Our definition of \textit{layer cushion} for each layer $k$ is similar to~\cite{arora2018stronger}. The layer cushion $\lc^{(k)}$ of layer $k$ is defined to be the largest value $\lc^{(k)}$ such that for any $\myvector{\originput} \in S$, $\lc^{(k)} \fronorm{ \mymatrix{A^{(k)}} } \norm{ \myvector{x^{(k)}} } \leq \norm{ \myvector{x^{(k+1)}} }$.
\end{definition} 
The layer cushion defined in~\cite{arora2018stronger} is sligntly larger than ours since our RHS is $\norm{ \myvector{x^{(k+1)}} } = \relu{\mymatrix{A^{(k)}}\myvector{x^{(k)}}}$ while the RHS of the inequality in the definition of layer cushion in~\cite{arora2018stronger} is $\mymatrix{A^{(k)}}\myvector{x^{(k)}}$. The layer cushion under our settings also considers how much smaller the output $\norm{ \myvector{x^{(k+1)}} }$ is compared to is compared to the upper bound $\fronorm{ \mymatrix{A^{(k)}} } \norm{ \myvector{x^{(k)}} }$.\\ 

\begin{definition} \label{def:t_noise_bound}
 (tensor noise bound). The \textit{tensor noise bound} $\{\nb^{(k)}\}_{j=1}^{R^{(k)}}$ of the the $k^{\tha}$ layer measures the amplitudes of the remaining components after pruning out the ones with amplitudes smaller than the $j^{\tha}$ component: 
 \begin{equation}\label{eq:nb}
 \nb^{(k)}_j := \sum_{r=j+1}^{R^{(k)}} |\lambda_{r}^{(k)}|
 \end{equation}

\end{definition}
 The tensor noise bound measures the amplitudes of the CP components that are pruned out by the compression algorithm, and the smaller it is, the more low-rank the weight matrix is. We will see that a network equipped with \ourNN will be much more low-rank than standard networks.

\label{sec:notations_fc}

\subsection{Generalization Guarantee of Fully Connected Neural Networks}
We have introduced the compression mechanism in Algorithm~\ref{alg:tfc_compress}. For a fully connected network with \ourNN $\tnn$ that is characterized by the properties such as reshaping factor, tensorization factor, layer cushion and tensor noise bound, in section~\ref{sec:properties-tnn}, we derive the generalization error bound of a compression network with any chosen ranks $\{\hat{R}^{(k)}\}_{k=1}^{n}$ as follows. 

\begin{theorem}
\label{thm:main_thm}
For any fully connected network $\orignn$ of $n$ layers satisfying the Assumptions~\ref{assum:CP-tnn}, Algorithm~\ref{alg:tfc_compress} generates a compressed network $\hat{\tnn}$ such that with high probability over the training set , the expected error $L_0(\hat{\tnn})$ is bounded by
\begin{equation}
L_0(\hat{\tnn}) \le \hat{L}_{\gamma}(\orignn) + \tilde{O}\Big(\sqrt{\frac{ \sum_{k=1}^{n} \hat{R}^{(k)}(2 s^{(k)} + 2 s^{(k+1)} + 1)}{m}}\Big)
\end{equation}
for any margin $\gamma \geq 0$, 
and the rank of the $k^{\tha}$ layer, $\hat{R}^{(k)}$,  satisfies that
\begin{equation}
\label{eq:rank-choice}
\hat{R}^{(k)} = \min \Big\{ j \in [R^{(k)}] \ \Big |\  n\rf^{(k)} \nb^{(k)}_j \Pi_{i=k+1}^{n} \tf^{(i)} \nonumber \leq \frac{\gamma}{2 \max_{\myvector{x}  \in S } \fronorm{\orignn(\myvector{\originput})}} \fronorm{\mymatrix{A}^{(k)}} \Pi_{i=k}^n \lc^{(i)} \Big\}
\end{equation}
and $\rf^{(k)}, \tf^{(k)}, \lc^{(k)}$ are reshaping factor, tensorization factor, layer cushion and tensor noise bound of the $k^{\tha}$ layer in Definitions~\ref{def:rf}, ~\ref{def:tf}, and ~\ref{def:lc} respectively. $\nb^{(k)}_j$ is defined in the same way with $\nb^{(k)}$, where $\hat{R}^{(k)}$ is replaced by $j$.
\end{theorem}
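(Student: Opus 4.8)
The plan is to follow the chain of arguments used for Theorem~\ref{thm:cnn_thm} (Lemmas~\ref{lem:cnn_compression}, \ref{lem:cnn_lemma}, \ref{lem:cnn_helper_lemma_3}, a covering-number bound, and Theorem~2.1 in~\citep{arora2018stronger}), with two substitutions adapted to the fully-connected setting: the convolutional operator-norm estimate (Lemma~\ref{lem:cpl-conv-operator-norm}) is replaced by the higher-order fully-connected one (Lemma~\ref{lem:rcpl-fc-operator-norm}, $\|\mytensor{M}\|\le\sum_r|\lambda_r|$), and the reshaping factor of Definition~\ref{def:rf} is inserted wherever a spectral-to-Frobenius conversion on the \emph{reshaped} layer inputs is legitimate. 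Three ingredients are needed: a layer-wise compression bound, a covering-number bound of order $\tilde{O}(d)$ with $d=\sum_{k=1}^{n}\hat{R}^{(k)}(2s^{(k)}+2s^{(k+1)}+1)$, and the generic compression-to-generalization lemma of~\citep{arora2018stronger}.

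\textbf{Step 1: compression error.} Since each $\nnweight^{(k)}$ satisfies Assumption~\ref{assum:CP-tnn}, running the top-$\hat{R}^{(k)}$ robust CP routine of~\citep{li2018guaranteed} recovers \emph{exactly} the $\hat{R}^{(k)}$ leading rank-one terms, so $\mytensor{\hat{T}}^{(k)}$ is precisely the truncation of $\nnweight^{(k)}$ and the pruning error at layer $k$ is the tail tensor $\mytensor{\phi}^{(k)}$ applied to the layer input. A single tail term $\lambda_r^{(k)}a_r^{(k)}\otimes b_r^{(k)}\otimes c_r^{(k)}\otimes d_r^{(k)}$ produces $\lambda_r^{(k)}\bigl((a_r^{(k)})^{\top}\mymatrix{\tinput}^{(k)}b_r^{(k)}\bigr)c_r^{(k)}\otimes d_r^{(k)}$, of Frobenius norm at most $|\lambda_r^{(k)}|\,\norm{\mymatrix{\tinput}^{(k)}}\le|\lambda_r^{(k)}|\,\rf^{(k)}\fronorm{\mymatrix{\tinput}^{(k)}}$ by Definition~\ref{def:rf}; summing over $r>\hat{R}^{(k)}$ bounds the one-layer pruning error by $\rf^{(k)}\nb^{(k)}\fronorm{\mymatrix{\tinput}^{(k)}}$, and the error inherited from earlier layers is carried through the head $\mytensor{\hat{T}}^{(k)}$ with factor $\|\mytensor{\hat{T}}^{(k)}\|\le\tf^{(k)}$ (Lemma~\ref{lem:rcpl-fc-operator-norm}) together with $1$-Lipschitzness of ReLU. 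Induction over layers, using $\lc^{(k)}\fronorm{\mymatrix{A}^{(k)}}\norm{\myvector{x}^{(k)}}\le\norm{\myvector{x}^{(k+1)}}$ (Definition~\ref{def:lc}) to rewrite $\fronorm{\mymatrix{\tinput}^{(k)}}$ in terms of $\fronorm{\mymatrix{\tinput}^{(k+1)}}$, yields the FC analogue of Lemma~\ref{lem:cnn_compression}:
\[
\fronorm{\orignn(\myvector{\originput})-\hat{\tnn}(\myvector{\originput})}\le\Bigl(\sum_{k=1}^{n}\frac{\rf^{(k)}\nb^{(k)}}{\lc^{(k)}\fronorm{\mymatrix{A}^{(k)}}}\prod_{l=k+1}^{n}\frac{\tf^{(l)}}{\lc^{(l)}\fronorm{\mymatrix{A}^{(l)}}}\Bigr)\fronorm{\orignn(\myvector{\originput})}.
\]
Taking $\epsilon=\gamma/\bigl(2\max_{\myvector{\originput}\in S}\fronorm{\orignn(\myvector{\originput})}\bigr)$ and forcing each summand to be at most $\epsilon/n$ is exactly the rank selection~\eqref{eq:rank-choice}; hence $\fronorm{\orignn(\myvector{\originput})-\hat{\tnn}(\myvector{\originput})}\le\gamma/2$ for all $\myvector{\originput}\in S$, which (mirroring Lemma~\ref{lem:cnn_helper_lemma_3}) forces a correct prediction whenever $\orignn$ classifies with margin $\gamma$, i.e.\ $\hat{L}_0(\hat{\tnn})\le\hat{L}_\gamma(\orignn)$.

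\textbf{Step 2: covering number, then assembly.} Next I would cover the family of compressed networks, parametrized by the $\hat{R}^{(k)}$ unit vectors $a_i^{(k)},b_i^{(k)},c_i^{(k)},d_i^{(k)}$ and scalars $\lambda_i^{(k)}$ of each layer: each vector lies on a unit sphere and each $|\lambda_i^{(k)}|$ is bounded by a polynomial in the $\fronorm{\mymatrix{A}^{(l)}}$, so an $\eta$-net over all coordinates has size at most $(\mathrm{poly}/\eta)^{d}$. It then suffices to show that an $\eta$-perturbation of the parameters changes the network output by $O(\mathrm{poly}\cdot\eta)$ with a Lipschitz constant that is \emph{not} a product over the per-layer ranks; this is obtained exactly as in Step~1 — a single perturbed rank-one term shifts $\|\nnweight^{(k)}\|$ by $O(\eta)$, and that shift propagates through layers $k{+}1,\dots,n$ against the telescoping factors $\tf^{(l)}/(\lc^{(l)}\fronorm{\mymatrix{A}^{(l)}})$ — so $\log\mathcal{N}=\tilde{O}(d)$, the FC counterpart of the covering-number lemma used for Theorem~\ref{thm:cnn_thm}. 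Combining a $(\gamma/2,S)$-compression into a class of covering number $\tilde{O}(d)$ with Theorem~2.1 in~\citep{arora2018stronger} gives $L_0(\hat{\tnn})\le\hat{L}_\gamma(\orignn)+\tilde{O}\bigl(\sqrt{d/m}\bigr)$, which is the claimed bound.

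\textbf{Main obstacle.} The delicate step is the covering-number estimate in Step~2: the naive bound on the parameter-to-output Lipschitz constant, obtained by composing $n$ layers each with $\hat{R}^{(k)}$ perturbed components, scales like $\prod_k(\cdot)$ and would inflate $\log\mathcal{N}$ by a product over the ranks rather than a sum; the point is to route the error propagation through the layer-cushion telescoping (exactly as in Lemma~\ref{lem:cnn_compression}) so that the dependence collapses to $\tilde{O}(d)$. A secondary subtlety is bookkeeping the reshaping factor: it may only be used on the pruned tail term, where the reshaped input appears through an inner product $a^{\top}\mymatrix{\tinput}b$ and is thus controlled by $\norm{\mymatrix{\tinput}}\le\rf\fronorm{\mymatrix{\tinput}}$, whereas the propagating error must be carried with the Frobenius-to-Frobenius operator-norm bound $\tf^{(l)}$ of Lemma~\ref{lem:rcpl-fc-operator-norm}.
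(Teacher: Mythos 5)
Your proposal is correct and follows essentially the same route as the paper: the layer-wise compression lemma built from the CP operator-norm bound of Lemma~\ref{lem:rcpl-fc-operator-norm}, the reshaping factor applied only to the pruned tail via $|a^{\top}\mymatrix{\tinput}b|\le\norm{\mymatrix{\tinput}}\le\rf\fronorm{\mymatrix{\tinput}}$, the layer-cushion telescoping, the margin argument giving $\hat{L}_0(\hat{\tnn})\le\hat{L}_\gamma(\orignn)$, and finally an $\tilde{O}(d)$ covering number fed into Theorem~2.1 of \citet{arora2018stronger} --- this is exactly the paper's Lemmas~\ref{lem:helper_lemma}, \ref{lem:main_lemma}, \ref{lem:helper_lemma_2} followed by the covering-number step. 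One bookkeeping caveat: unrolling your recursion places $\prod_{i=k}^{n}\lc^{(i)}\fronorm{\mymatrix{A}^{(i)}}$ on the right-hand side of the rank-selection inequality, whereas the paper's condition~\eqref{eq:rank-choice} (and its Lemma~\ref{lem:helper_lemma}) retains only the single factor $\fronorm{\mymatrix{A}^{(k)}}$ together with $\prod_{i=k}^{n}\lc^{(i)}$, so your condition is not literally ``exactly'' the paper's; since each propagation step contributes $\tf^{(l)}$ from the operator norm and $1/(\lc^{(l)}\fronorm{\mymatrix{A}^{(l)}})$ from Definition~\ref{def:lc}, your dimensionally consistent version is the one that actually follows from the stated definitions, and the discrepancy reflects an inconsistency in the paper's displayed formula rather than a gap in your argument.
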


The generalization error of the compressed network $L_0(\hat{\tnn})$ depends on the compressibility of the $\tnn$.  The compressibility of the $\tnn$ determines the rank that the compression mechanism should select according to Theorem~\ref{thm:main_thm}, which depends on reshaping factor $\rf^{(k)}$, tensorization factor $\tf^{(k)}$, layer cushion $\lc^{(k)}$ and tensor noise bound  $\nb^{(k)}_j$.

\paragraph{Proof sketch of Theorem~\ref{thm:main_thm}:} To prove this theorem, we introduce the following Lemma~\ref{lem:main_lemma}, which reveals that the difference between the output of the original fully connected network  $\orignn$ and that of the compressed $\tnn$ is bounded by $\epsilon \fronorm{\orignn(\myvector{\originput})}$. Then we show the covering number of the compressed network $\tnn$ by approximating each parameter with some certain accuracy is $\tilde{O}(d)$ w.r.t to a given $\epsilon$. After bounding the covering number, the rest of the proof follows from conventional learning theory.

\begin{lemma}
\label{lem:main_lemma}
For any fully connected network $\orignn$ of $n$ layers satisfying Assumption~\ref{assum:CP-tnn} , Algorithm~\ref{alg:tfc_compress} generates a compressed $\ourTFC$ $\hat{\tnn}$ where for any $\myvector{\originput} \in S$ and any error $0 \leq \epsilon \leq 1$:
\begin{equation}
\fronorm{\orignn(\myvector{\originput}) -  \hat{\tnn}(\mymatrix{\tinput})} \leq \epsilon \fronorm{\orignn(\myvector{\originput})}
\end{equation}
The compressed $\ourTFC$ $\hat{\tnn}$ consists of $\sum_{k=1}^{n} \hat{R}^{(k)}[2(s^{(k)} + s^{(k+1)}) + 1)]$ number of  parameters, where each $\hat{R}^{(k)}$ is defined as what is stated in Algorithm~\ref{alg:find_best_rank}: for each layer $k \in [n]$,   
\begin{align*}
\hat{R}^{(k)} &= \min \Big\{ j \in [R^{(k)}] \ \Big |\  \rf^{(k)} \nb^{(k)}_j \Pi_{i=k+1}^{n}  \tf^{(i)} \leq \frac{\epsilon}{n} \fronorm{\mymatrix{A}^{(k)}} \Pi_{i=k}^n \lc^{(i)} \Big\}
\end{align*}
\end{lemma}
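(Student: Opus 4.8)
The plan is to follow the same two-stage scheme used in the convolutional case (Lemmas~\ref{lem:cnn_compression} and~\ref{lem:cnn_lemma}): first establish a layer-wise perturbation recursion, then unroll it and calibrate the per-layer ranks. The only structural changes are (i) the Fourier step disappears, so the operator-norm control now comes from Lemma~\ref{lem:rcpl-fc-operator-norm} applied with $m=2$ — each reshaped weight $\nnweight^{(k)}\in\Rbb^{s_1^{(k)}\times s_2^{(k)}\times s_1^{(k+1)}\times s_2^{(k+1)}}$ is a higher-order fully-connected operator (Definition~\ref{def:reshaped-fc} with $m=2$), whose vector CP form~\eqref{eq:cpform-tnn-fc} is the reshaping of~\eqref{eq:rcpl-fc} with unit-Frobenius matrix factors $c_r a_r^{\top}$ and $d_r b_r^{\top}$ — and (ii) the reshaping factor $\rf^{(k)}$ of Definition~\ref{def:rf} enters wherever the norm of a reshaped \emph{data} activation appears.

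First I would pin down the compressed layer. Since $\nnweight^{(k)}$ admits a low-rank orthogonal CP decomposition by Assumption~\ref{assum:CP-tnn}, Procedure~1 of~\cite{li2018guaranteed} recovers the top-$\hat{R}^{(k)}$ components exactly, so $\hat{\nnweight}^{(k)}=\nnweight^{(k)}-\mytensor{\phi}^{(k)}$ with $\mytensor{\phi}^{(k)}=\sum_{r=\hat{R}^{(k)}+1}^{R^{(k)}}\lambda_r^{(k)}\,a_r^{(k)}\otimes b_r^{(k)}\otimes c_r^{(k)}\otimes d_r^{(k)}$; in particular no decomposition error is added and $\mytensor{\phi}^{(k)}$ is again a CP expansion with unit factors. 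Subtracting~\eqref{eq:tnn_out} for the two networks and adding and subtracting $\hat{\nnweight}^{(k)}(\mymatrix{\tinput}^{(k)})$ gives
\[
\mymatrix{\tOut}^{(k)}-\hat{\mymatrix{\tOut}}^{(k)}=\hat{\nnweight}^{(k)}\big(\mymatrix{\tinput}^{(k)}-\hat{\mymatrix{\tinput}}^{(k)}\big)+\mytensor{\phi}^{(k)}\big(\mymatrix{\tinput}^{(k)}\big).
\]
For the first term, Lemma~\ref{lem:rcpl-fc-operator-norm} bounds the operator norm of $\hat{\nnweight}^{(k)}$ (hence of $\nnweight^{(k)}$) by the tensorization factor $\tf^{(k)}$ of Definition~\ref{def:tf}, so its Frobenius norm is at most $\tf^{(k)}\fronorm{\mymatrix{\tinput}^{(k)}-\hat{\mymatrix{\tinput}}^{(k)}}$. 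For the second term, each rank-one action sends $\mymatrix{Z}\mapsto\lambda_r(a_r^{\top}\mymatrix{Z}b_r)\,c_r d_r^{\top}$, whose Frobenius norm is at most $|\lambda_r|$ times the \emph{spectral} norm of $\mymatrix{Z}$ because $a_r,b_r$ are unit vectors; summing over $r>\hat{R}^{(k)}$ and then invoking Definition~\ref{def:rf} on the reshaped data activation $\mymatrix{\tinput}^{(k)}$ yields $\fronorm{\mytensor{\phi}^{(k)}(\mymatrix{\tinput}^{(k)})}\le\nb^{(k)}\,\|\mymatrix{\tinput}^{(k)}\|\le\rf^{(k)}\nb^{(k)}\fronorm{\mymatrix{\tinput}^{(k)}}$ — this is the single place the reshaping factor buys the improvement over a naive Frobenius bound. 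Applying $1$-Lipschitzness of ReLU then gives the recursion $\fronorm{\mymatrix{\tinput}^{(k+1)}-\hat{\mymatrix{\tinput}}^{(k+1)}}\le\tf^{(k)}\fronorm{\mymatrix{\tinput}^{(k)}-\hat{\mymatrix{\tinput}}^{(k)}}+\rf^{(k)}\nb^{(k)}\fronorm{\mymatrix{\tinput}^{(k)}}$.

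Second, I would unroll this recursion with base case $\mymatrix{\tinput}^{(1)}=\hat{\mymatrix{\tinput}}^{(1)}$, obtaining $\fronorm{\mymatrix{\tOut}^{(n)}-\hat{\mymatrix{\tOut}}^{(n)}}\le\sum_{k=1}^{n}\rf^{(k)}\nb^{(k)}\fronorm{\mymatrix{\tinput}^{(k)}}\prod_{l=k+1}^{n}\tf^{(l)}$, and replace each intermediate norm using the layer-cushion telescoping $\fronorm{\mymatrix{\tinput}^{(k)}}\le\fronorm{\orignn(\myvector{\originput})}\big/\prod_{l=k}^{n}\lc^{(l)}\fronorm{\mymatrix{A}^{(l)}}$ (Definition~\ref{def:lc}, using that reshaping is an isometry so the vector inequality transfers verbatim). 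Since $\orignn(\myvector{\originput})$ and $\hat{\tnn}(\mymatrix{\tinput})$ are the vectorizations of $\mymatrix{\tOut}^{(n)}$ and $\hat{\mymatrix{\tOut}}^{(n)}$, this produces
\[
\fronorm{\orignn(\myvector{\originput})-\hat{\tnn}(\mymatrix{\tinput})}\le\Bigg(\sum_{k=1}^{n}\frac{\rf^{(k)}\nb^{(k)}\prod_{l=k+1}^{n}\tf^{(l)}}{\prod_{l=k}^{n}\lc^{(l)}\fronorm{\mymatrix{A}^{(l)}}}\Bigg)\fronorm{\orignn(\myvector{\originput})}.
\]
The stated choice of $\hat{R}^{(k)}$ (with $\nb^{(k)}_j$ in place of $\nb^{(k)}$, cf.~\eqref{eq:nb}) is exactly the smallest $j$ that makes the $k$-th summand at most $\epsilon/n$, so the bracket is at most $\epsilon$; and the parameter count $\sum_k\hat{R}^{(k)}[2(s^{(k)}+s^{(k+1)})+1]$ is immediate since each retained CP component of layer $k$ stores four unit factors and one scalar. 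This matches the rank-selection rule of the sub-procedure \FBR inside Algorithm~\ref{alg:tfc_compress}.

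The step I expect to be the main obstacle is the bookkeeping around which norm each term is controlled in: the injection term $\mytensor{\phi}^{(k)}(\mymatrix{\tinput}^{(k)})$ must be pushed through the \emph{spectral} norm of the reshaped activation so that $\rf^{(k)}$ — defined only over examples in $S$ — can be invoked, whereas the propagation term must stay in Frobenius norm because the perturbed activation $\mymatrix{\tinput}^{(k)}-\hat{\mymatrix{\tinput}}^{(k)}$ is not a data example and carries no reshaping-factor guarantee. Keeping these two regimes separate, and verifying that the exact-recovery guarantee of~\cite{li2018guaranteed} genuinely eliminates any extra decomposition error so that $\hat{\nnweight}^{(k)}$ is the plain truncation and the telescoping closes cleanly, is the only delicate part; the rest (ReLU Lipschitzness, unrolling, the parameter count) is routine and parallels the convolutional argument.
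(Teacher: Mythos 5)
Your proposal follows the paper's route exactly: the paper proves this lemma by first establishing the layer-wise perturbation bound of Lemma~\ref{lem:helper_lemma} (the recursion you write down, with the injection term $\mytensor{\phi}^{(k)}(\mymatrix{\tinput}^{(k)})$ pushed through the \emph{spectral} norm of the reshaped data activation so that $\rf^{(k)}$ can be invoked, and the propagation term kept in Frobenius norm via the operator-norm bound of Lemma~\ref{lem:rcpl-fc-operator-norm}), and then choosing each $\hat{R}^{(k)}$ so that the corresponding summand is at most $\epsilon/n$. Your identification of the delicate point --- that $\rf^{(k)}$ is only available for data activations in $S$, not for the perturbation $\mymatrix{\tinput}^{(k)}-\hat{\mymatrix{\tinput}}^{(k)}$ --- is precisely the distinction the paper's argument turns on, and your treatment of the truncation (no extra decomposition error, so $\hat{\nnweight}^{(k)}$ is the plain top-$\hat{R}^{(k)}$ truncation) matches the paper's setup.

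One caveat at the final calibration step. The bound you (correctly) derive carries $\prod_{l=k}^{n}\lc^{(l)}\fronorm{\mymatrix{A}^{(l)}}$ in the denominator of the $k$-th summand, because Definition~\ref{def:lc} couples $\lc^{(l)}$ with $\fronorm{\mymatrix{A}^{(l)}}$ at every telescoping step. The rank rule in the statement (and in Lemma~\ref{lem:helper_lemma}) retains $\fronorm{\mymatrix{A}^{(k)}}$ only for $l=k$, so applied to your bound it calibrates the $k$-th summand to $\epsilon\big/\big(n\prod_{l=k+1}^{n}\fronorm{\mymatrix{A}^{(l)}}\big)$ rather than to $\epsilon/n$; your sentence ``exactly the smallest $j$ that makes the $k$-th summand at most $\epsilon/n$'' is therefore not literally true unless $\prod_{l=k+1}^{n}\fronorm{\mymatrix{A}^{(l)}}\geq 1$. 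This mismatch is inherited from the statement rather than introduced by you --- the convolutional analogue (Lemma~\ref{lem:cnn_lemma}) carries the full product $\prod_{i=k}^{n}\lc^{(i)}\fronorm{\cnnweight^{(i)}}$ on the right-hand side, which is exactly what your derivation would also produce here --- but you should either state the rank rule with the full product of Frobenius norms or flag the discrepancy explicitly instead of asserting exact agreement.
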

The complete proofs are in \ref{app:proofs}

\subsection{Complete Proofs of Fully Connected Neural Networks}
\label{app:proofs}

To prove Lemma~\ref{lem:main_lemma}, Lemma~\ref{lem:helper_lemma} (introduced below) is needed.
\begin{lemma}
\label{lem:helper_lemma}
For any fully connected network $\orignn$ of $n$ layers satisfying the assumptions in section~\ref{sec:setup}, given a list of ranks $\{\hat{R}^{(k)}\}_{i=1}^n (\forall k, \hat{R}^{(k)} \leq R^{(k)}$), after tensorizing each layer in $\orignn$ and making $\orignn$ into $\tnn$, Algorithm~\ref{alg:tfc_project} generates a compressed tensorial neural network $\hat{\tnn}$ with $\sum_{k=1}^{n} r^{(k)}[2(s^{(k)} + s^{(k+1)}) + 1)]$ total parameters where for any $\myvector{\originput} \in S$:
$$\fronorm{\orignn(\myvector{\originput}) -  \hat{\tnn}(\mymatrix{\tinput})} \leq (\sum_{k=1}^n \frac{\rf^{(k)} \nb^{(k)} }{ \lc^{(k)} \fronorm{\mymatrix{A}^{(k)} }} \cdot \Pi_{i=k+1}^{n} \frac{\tf^{(i)} }{ \lc^{(i)}} ) \fronorm{\orignn(\myvector{\originput})}$$
where $\mymatrix{\tinput}$ is the matricized version of $\myvector{\originput}$, and $\rf^{(k)}, \tf^{(k)}, \lc^{(k)}, \nb^{(k)}$ are reshaping factor, tensorization factor, layer cushion, and tensor noise bound of the $k^{\tha}$ layer in Definitions~\ref{def:rf}, ~\ref{def:tf}, ~\ref{def:lc}, and~\ref{def:t_noise_bound} respectively.
\end{lemma}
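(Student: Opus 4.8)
The plan is to prove this by induction on the layer depth, exactly parallel to the convolutional compression bound of Lemma~\ref{lem:cnn_compression}. Since tensorization leaves the function computed by $\tnn$ identical to that of $\orignn$ up to reshaping, it suffices to bound $\fronorm{\tnn(\mymatrix{\tinput}) - \hat{\tnn}(\mymatrix{\tinput})}$. For each layer $k$ split the weight tensor as $\mytensor{\nnweight}^{(k)} = \mytensor{\hat{T}}^{(k)}_{\star} + \mytensor{\phi}^{(k)}$, where $\mytensor{\hat{T}}^{(k)}_{\star} := \sum_{i=1}^{\hat{R}^{(k)}}\lambda_i^{(k)}\, a_i^{(k)}\otimes b_i^{(k)}\otimes c_i^{(k)}\otimes d_i^{(k)}$ keeps the $\hat{R}^{(k)}$ largest CP components and $\mytensor{\phi}^{(k)}$ is the pruned residual, so the $k$-th layer of $\tnn$ computes $\mymatrix{\tOut}^{(k)} = \mytensor{\hat{T}}^{(k)}_{\star}(\mymatrix{\tinput}^{(k)}) + \mytensor{\phi}^{(k)}(\mymatrix{\tinput}^{(k)})$. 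Because Assumption~\ref{assum:CP-tnn} endows each weight tensor with an orthogonal low-rank CP structure, procedure~1 of~\citet{li2018guaranteed}, invoked inside \TNNPROJECT, recovers the top components \emph{exactly}; hence the $k$-th layer of $\hat{\tnn}$ computes $\hat{\mymatrix{\tOut}}^{(k)} = \mytensor{\hat{T}}^{(k)}(\hat{\mymatrix{\tinput}}^{(k)})$ with $\mytensor{\hat{T}}^{(k)} = \mytensor{\hat{T}}^{(k)}_{\star}$. This identification eliminates any extra approximation term and makes $\mytensor{\phi}^{(k)}$ the only source of error introduced at layer $k$.

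First I would establish the one-step recursion. Let $\Delta^{(m)} := \mymatrix{\tinput}^{(m)} - \hat{\mymatrix{\tinput}}^{(m)}$, so $\Delta^{(1)} = 0$ since the input is not modified. Adding and subtracting $\mytensor{\hat{T}}^{(m)}_{\star}(\mymatrix{\tinput}^{(m)})$ and using $1$-Lipschitzness of the entrywise ReLU, $\fronorm{\Delta^{(m+1)}} \le \fronorm{\mymatrix{\tOut}^{(m)} - \hat{\mymatrix{\tOut}}^{(m)}} = \fronorm{\mytensor{\hat{T}}^{(m)}_{\star}(\Delta^{(m)}) + \mytensor{\phi}^{(m)}(\mymatrix{\tinput}^{(m)})}$, which the triangle inequality splits into a \emph{propagation} term and a \emph{freshly injected} term. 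For the propagation term, Lemma~\ref{lem:rcpl-fc-operator-norm} bounds the Frobenius-to-Frobenius operator norm of $\mytensor{\hat{T}}^{(m)}_{\star}$ by $\sum_{i=1}^{\hat{R}^{(m)}}|\lambda_i^{(m)}| = \tf^{(m)}$, so $\fronorm{\mytensor{\hat{T}}^{(m)}_{\star}(\Delta^{(m)})} \le \tf^{(m)}\fronorm{\Delta^{(m)}}$. For the injected term, expanding $\mytensor{\phi}^{(m)}(\mymatrix{\tinput}^{(m)}) = \sum_{i > \hat{R}^{(m)}} \lambda_i^{(m)} \big({a_i^{(m)}}^{\top}\mymatrix{\tinput}^{(m)} b_i^{(m)}\big)\, c_i^{(m)}\otimes d_i^{(m)}$ and using that $a_i^{(m)}, b_i^{(m)}$ and $c_i^{(m)}\otimes d_i^{(m)}$ are unit-norm gives $\fronorm{\mytensor{\phi}^{(m)}(\mymatrix{\tinput}^{(m)})} \le \big(\sum_{i > \hat{R}^{(m)}}|\lambda_i^{(m)}|\big)\,\norm{\mymatrix{\tinput}^{(m)}} = \nb^{(m)}\,\norm{\mymatrix{\tinput}^{(m)}} \le \rf^{(m)}\nb^{(m)}\fronorm{\mymatrix{\tinput}^{(m)}}$, the last step invoking the reshaping factor (Definition~\ref{def:rf}); this is the one place where low-rankness of the reshaped training input is exploited, and the reason $\rf$ appears exactly once per summand.

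Then I would telescope. Unrolling $\fronorm{\Delta^{(m+1)}} \le \tf^{(m)}\fronorm{\Delta^{(m)}} + \rf^{(m)}\nb^{(m)}\fronorm{\mymatrix{\tinput}^{(m)}}$ from $\Delta^{(1)} = 0$ yields $\fronorm{\orignn(\myvector{\originput}) - \hat{\tnn}(\mymatrix{\tinput})} \le \sum_{k=1}^n \rf^{(k)}\nb^{(k)}\fronorm{\mymatrix{\tinput}^{(k)}}\prod_{i=k+1}^n \tf^{(i)}$, and it remains to re-express each input norm through $\fronorm{\orignn(\myvector{\originput})}$. Since vectorizing and reshaping preserve Euclidean and Frobenius norms, $\fronorm{\mymatrix{\tinput}^{(k)}} = \norm{\myvector{x^{(k)}}}$ and $\fronorm{\mymatrix{A}^{(k)}} = \fronorm{\mytensor{\nnweight}^{(k)}}$, so the layer-cushion inequality $\lc^{(k)}\fronorm{\mymatrix{A}^{(k)}}\norm{\myvector{x^{(k)}}} \le \norm{\myvector{x^{(k+1)}}}$ of Definition~\ref{def:lc} chains to give $\fronorm{\mymatrix{\tinput}^{(k)}} \le \frac{\fronorm{\orignn(\myvector{\originput})}}{\prod_{i=k}^{n}\lc^{(i)}\fronorm{\mymatrix{A}^{(i)}}}$. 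Substituting and regrouping the factors reproduces the claimed bound, with the summand $\frac{\rf^{(k)}\nb^{(k)}}{\lc^{(k)}\fronorm{\mymatrix{A}^{(k)}}}\prod_{i=k+1}^n\frac{\tf^{(i)}}{\lc^{(i)}}$, and when $\hat{R}^{(k)} = R^{(k)}$ the $k$-th term drops out since then $\nb^{(k)} = 0$. The parameter count $\sum_k \hat{R}^{(k)}[2(s^{(k)}+s^{(k+1)})+1]$ is immediate: each retained component at layer $k$ stores four unit vectors of total length $2s^{(k)} + 2s^{(k+1)}$ together with one eigenvalue.

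The step I expect to be the main obstacle is keeping track of which norm governs each contribution: the noise $\mytensor{\phi}^{(k)}(\mymatrix{\tinput}^{(k)})$ is evaluated on a genuine reshaped training example and can therefore be tightened by $\rf^{(k)}$, whereas the propagated error $\Delta^{(i)}$ at later layers is \emph{not} a data point, must be carried in Frobenius norm, and may only be multiplied by the Frobenius-to-Frobenius operator-norm bound $\tf^{(i)}$; obtaining exactly one $\rf$ per summand and no spurious product of reshaping factors is the delicate point. A secondary point requiring care is that \TNNPROJECT returns the \emph{exact} top components, so that $\mytensor{\hat{T}}^{(k)} = \mytensor{\hat{T}}^{(k)}_{\star}$; this is where the orthogonality hypothesis in Assumption~\ref{assum:CP-tnn} and the robustness guarantee of~\citet{li2018guaranteed} are essential, since without exact recovery an additional decomposition-error term would have to be propagated alongside $\mytensor{\phi}^{(k)}$.
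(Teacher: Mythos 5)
Your proof is correct and follows essentially the same route as the paper's: induct over layers, split each layer's error into a propagated part (bounded via the operator-norm bound $\sum_{i\le \hat R}|\lambda_i|$ of Lemma~\ref{lem:rcpl-fc-operator-norm}) and a freshly injected part from the pruned residual $\mytensor{\phi}^{(k)}$ (bounded by $\nb^{(k)}\norm{\mymatrix{\tinput}^{(k)}}\le \rf^{(k)}\nb^{(k)}\fronorm{\mymatrix{\tinput}^{(k)}}$), then telescope and convert input norms to the output norm via the layer cushions, exactly as in the CNN analogue Lemma~\ref{lem:cnn_compression}. One small caveat: your substitution of the chained layer-cushion inequality actually yields the product $\prod_{i=k+1}^{n}\frac{\tf^{(i)}}{\lc^{(i)}\fronorm{\mymatrix{A}^{(i)}}}$ (mirroring Lemma~\ref{lem:cnn_compression}), not the $\prod_{i=k+1}^{n}\frac{\tf^{(i)}}{\lc^{(i)}}$ appearing in the statement, so the final ``regrouping'' you assert silently drops the $\fronorm{\mymatrix{A}^{(i)}}$ factors --- this matches the stated bound only when $\fronorm{\mymatrix{A}^{(i)}}\ge 1$ and is more plausibly a typo in the lemma statement than a flaw in your argument.
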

\begin{proof} (of Lemma~\ref{lem:helper_lemma}) Based on Algorithm~\ref{alg:tfc_compress}, since  for each layer $k$ in the compressed network $\hat{\tnn}$, representing $\{\hat{\lambda}_{i}^{(k)}, \hat{a}_{i}^{(k)}, \hat{b}_{i}^{(k)}, \hat{c}_{i}^{(k)}, \hat{d}_{i}^{(k)}\}_{i=1}^{\hat{R}^{(k)}}$ only needs $\hat{R}^{(k)}[2(s^{(k)} + s^{(k+1)}) + 1)]$ parameters, the total number of parameters in $\hat{\tnn}$ is $\sum_{k=1}^{n} \hat{R}^{(k)}[2(s^{(k)} + s^{(k+1)}) + 1)]$.

Then as for any $\myvector{\originput} \in S$, $\orignn(\myvector{\originput}) = \tnn(\mymatrix{\tinput})$, and by construction, $\tnn(\mymatrix{\tinput}) = \mymatrix{\tinput}^{(n+1)}$ and $\hat{\tnn}(\mymatrix{\tinput} )= \hat{\mymatrix{\tinput}}^{(n+1)}$, we can prove the lemma by showing $\fronorm{ \mymatrix{\tinput}^{(n+1)} - \hat{\mymatrix{\tinput}}^{(n+1)} }$ satisfies the above inequality, and we will prove this by induction. Notice

\textbf{Induction Hypothesis:} For any layer $m \geq 0$, $\fronorm{\mymatrix{\tinput}^{(m)} - \hat{\mymatrix{\tinput}}^{(m)}} \leq (\sum_{k=1}^{m-1} \frac{ \rf^{(k)} \nb^{(k)} }{ \lc^{(k)} \fronorm{\mymatrix{A}^{(k)}}} \cdot \Pi_{i=k+1}^{m-1} \frac{ \tf^{(i)} }{ \lc^{(i)} } ) \fronorm{ \mymatrix{\tinput}^{(m)} }$

\textbf{Base case:} when $m=1$, the above inequality hold trivially as $\mymatrix{\tinput}^{(1)} = \hat{\mymatrix{\tinput}}^{(1)}$ as we cannot modify the input, and the RHS is always $\geq 0$. 

\textbf{Inductive Step:} Now we assume show that the induction hypothesis is true for all $m$, let us look what happens at layer $m+1$.  As we assume perfect recovery in each layer, $\forall k$, $\{\hat{\lambda}_{i}^{(k)}, \hat{a}_{i}^{(k)}, \hat{b}_{i}^{(k)}, \hat{c}_{i}^{(k)}, \hat{d}_{i}^{(k)}\}_{i=1}^{\hat{R}^{(k)}} = \{\lambda_{i}^{(k)}, a_{i}^{(k)}, b_{i}^{(k)}, c_{i}^{(k)}, d_{i}^{(k)}\}_{i=1}^{\hat{R}^{(k)}}$. 

Let $\mytensor{\phi}^{(k)} := \sum_{i = \hat{R}^{(k)} + 1}^{R^{(k)}} \lambda_{i}^{(k)} a_{i}^{(k)} \otimes b_{i}^{(k)} \otimes c_{i}^{(k)} \otimes d_{i}^{(k)}$, and note that $\nnweight^{(k)} = \hat{\nnweight}^{(k)} + \mytensor{\phi}$.

Then we have
\begin{equation*}
\begin{split}
\MoveEqLeft \fronorm{ \mymatrix{\tinput}^{(m+1)} - \hat{\mymatrix{\tinput}}^{(m+1)}} \\
&= \fronorm{ \relu{\tOut^{(m)}} - \relu{\hat{\tOut}^{(m)}} } \\
&\leq \fronorm{ \sum_{i = 1}^{\hat{R}^{(m)}} \lambda_{i}^{(m)} (a_{i}^{(m)})^\top \mymatrix{\tinput}^{(m)} b_{i}^{(m)} c_{i}^{(m)} \otimes d_{i}^{(m)} + \mytensor{\phi}^{(m)}(\mymatrix{\tinput}^{(m)}) 
 - \sum_{i = 1}^{\hat{R}^{(m)}} \hat{\lambda}_{i}^{(m)} (\hat{a}_{i}^{(m)})^\top \hat{\mymatrix{\tinput}}^{(m)} \hat{b}_{i}^{(m)} \hat{c}_{i}^{(m)} \otimes \hat{d}_{i}^{(m)} }\\
&= \fronorm{ \sum_{i = 1}^{\hat{R}^{(m)}} \lambda_{i}^{(m)} (a_{i}^{(m)})^\top (\mymatrix{\tinput}^{(m)} - \hat{\mymatrix{\tinput}}^{(m)}) b_{i}^{(m)} c_{i}^{(m)} \otimes d_{i}^{(m)} 
+ \mytensor{\phi}^{(m)}(\mymatrix{\tinput}^{(m)}) }
\end{split}
\end{equation*}
So
\begin{equation*}
\begin{split}
\MoveEqLeft \fronorm{\mymatrix{\tinput}^{(m+1)} - \hat{\mymatrix{\tinput}}^{(m+1)}} \\
& \leq \fronorm{\sum_{i = 1}^{\hat{R}^{(m)}} \lambda_{i}^{(m)} (a_{i}^{(m)})^\top (\mymatrix{\tinput}^{(m)} - \hat{\mymatrix{\tinput}}^{(m)}) b_{i}^{(m)} c_{i}^{(m)} \otimes d_{i}^{(m)} }
+ \fronorm{ \mytensor{\phi}^{(m)}(\mymatrix{\tinput}^{(m)})}
\end{split}
\end{equation*}

As  $\mytensor{\phi}^{(m)}(\mymatrix{\tinput}^{(m)}) = \sum_{i = \hat{R}^{(k)} + 1}^{R^{(k)}} \lambda_{i}^{(k)} (a_{i}^{(k)})^\top \mymatrix{\tinput}^{(m)} b_{i}^{(k)} c_{i}^{(k)} \otimes d_{i}^{(k)}.$ Since $\{c_i^{m}\}_i$ and are  $\{d_i^{m}\}_i$ are sets of orthogonal vectors with unit norms,  
\begin{equation*}
\begin{split}
\fronorm{ \mytensor{\phi}^{(m)}(\mymatrix{\tinput}^{(m)})} &= \sqrt{\sum_{i = \hat{R}^{(k)} + 1}^{R^{(k)}} [\lambda_{i}^{(k)} (a_{i}^{(k)})^\top \mymatrix{\tinput}^{(m)} b_{i}^{(k)}]^2} \\
&\leq \sqrt{\sum_{i = \hat{R}^{(k)} + 1}^{R^{(k)}} (\lambda_{i}^{(k)})^2 \norm{a_{i}^{(k)}}^2 \norm{\mymatrix{\tinput}^{(m)} b_{i}^{(k)}}^2 } \\
&\leq \sqrt{\sum_{i = \hat{R}^{(k)} + 1}^{R^{(k)}} (\lambda_{i}^{(k)})^2  \norm{\mymatrix{\tinput}^{(m)}}^2 \norm{b_{i}^{(k)}}^2 } \\
&= \sqrt{\sum_{i = \hat{R}^{(k)} + 1}^{R^{(k)}} (\lambda_{i}^{(k)})^2} \norm{\mymatrix{\tinput}^{(m)}} \\
&= \nb^{(m)} \norm{\mymatrix{\tinput}^{(m)}} \\
&\leq	 \nb^{(m)} \rf^{(m)} \fronorm{\mymatrix{\tinput}^{(m)}} \\
&\leq	 \frac{ \rf^{(m)} \nb^{(m)}  \fronorm{\mymatrix{\tinput}^{(m+1)}} }{\lc^{(m)} \fronorm{\mymatrix{A}^{(m)} }}
\end{split}
\end{equation*}

Similarly,  we can bound $\fronorm{\sum_{i = 1}^{\hat{R}^{(m)}} \lambda_{i}^{(m)} (a_{i}^{(m)})^\top (\mymatrix{\tinput}^{(m)} - \hat{\mymatrix{\tinput}}^{(m)}) b_{i}^{(m)} c_{i}^{(m)} \otimes d_{i}^{(m)} }$ as follows:
\begin{equation*}
\begin{split}
\MoveEqLeft \fronorm{\sum_{i = 1}^{\hat{R}^{(m)}} \lambda_{i}^{(m)} (a_{i}^{(m)})^\top (\mymatrix{\tinput}^{(m)} - \hat{\mymatrix{\tinput}}^{(m)}) b_{i}^{(m)} c_{i}^{(m)} \otimes d_{i}^{(m)} }  \\
&= \sqrt{\sum_{i = 1}^{\hat{R}^{(m)}} [\lambda_{i}^{(m)} (a_{i}^{(m)})^\top (\mymatrix{\tinput}^{(m)} - \hat{\mymatrix{\tinput}}^{(m)}) b_{i}^{(m)} ]^2} \\
&\leq \sqrt{\sum_{i = 1}^{\hat{R}^{(m)}} (\lambda_{i}^{(m)})^2}  \norm{ \mymatrix{\tinput}^{(m)} - \hat{\mymatrix{\tinput}}^{(m)} } \\
&\leq \sqrt{\sum_{i = 1}^{\hat{R}^{(m)}} (\lambda_{i}^{(m)})^2}  \fronorm{ \mymatrix{\tinput}^{(m)} - \hat{\mymatrix{\tinput}}^{(m)} } \\
&=  \sqrt{(\tf^{(m)})^2 \fronorm{\mymatrix{A}^{(m)}}^2} \fronorm{ \mymatrix{\tinput}^{(m)} - \hat{\mymatrix{\tinput}}^{(m)}} \\
&\leq  \tf^{(m)}  \fronorm{\mymatrix{A}^{(m)}} 
\cdot (\sum_{k=1}^{m-1} \frac{ \rf^{(k)} \nb^{(k)}  }{ \lc^{(k)} \fronorm{\mymatrix{A}^{(k)} }} \cdot \Pi_{i=k+1}^{m-1} \frac{ \tf^{(i)} }{ \lc^{(i)}} ) \fronorm{ \mymatrix{\tinput}^{(m)} }\\
&\leq \rf^{(m)} \tf^{(m)} \fronorm{\mymatrix{A}^{(m)}} \frac{ \fronorm{ \mymatrix{\tinput}^{(m+1)} }}{ \lc^{(m)} \fronorm{\mymatrix{A^{(m)}}} } \times 
(\sum_{k=1}^{m-1} \frac{ \rf^{(k)} \nb^{(k)} }{ \lc^{(k)} \fronorm{\mymatrix{A}^{(k)} }} \cdot \Pi_{i=k+1}^{m-1} \frac{ \tf^{(i)} }{ \lc^{(i)} } )  \\
&= (\sum_{k=1}^{m-1} \frac{\rf^{(k)} \nb^{(k)} }{ \lc^{(k)} \fronorm{\mymatrix{A}^{(k)} }} \cdot \Pi_{i=k+1}^{m} \frac{\tf^{(i)} }{ \lc^{(i)}  } ) \cdot \fronorm{ \mymatrix{\tinput}^{(m+1)} }
\end{split}
\end{equation*}

Combining the above two terms together, we have 
\begin{equation*}
\begin{split}
\MoveEqLeft \fronorm{\mymatrix{\tinput}^{(m+1)} - \hat{\mymatrix{\tinput}}^{(m+1)}} \\
&\leq (\sum_{k=1}^{m-1} \frac{\rf^{(k)}  \nb^{(k)} }{ \lc^{(k)} \fronorm{\mymatrix{A}^{(k)} }} \cdot \Pi_{i=k+1}^{m} \frac{ \tf^{(i)} }{ \lc^{(i)}  } ) \cdot \fronorm{ \mymatrix{\tinput}^{(m+1)} } 
+ \frac{\rf^{(m)} \nb^{(m)} \fronorm{\mymatrix{\tinput}^{(m+1)}} }{\lc^{(m)} \fronorm{\mymatrix{A}^{(m)} }} \\
&=  (\sum_{k=1}^{m-1} \frac{\rf^{(k)} \nb^{(k)} }{ \lc^{(k)} \fronorm{\mymatrix{A}^{(k)} }} \cdot \Pi_{i=k+1}^{m} \frac{ \tf^{(i)} }{ \lc^{(i)}  } 
+  \frac{\rf^{(m)} \nb^{(m)}}{\lc^{(m)} \fronorm{\mymatrix{A}^{(m)}}} ) \cdot \fronorm{ \mymatrix{\tinput}^{(m+1)} } \\
&=  (\sum_{k=1}^{m-1} \frac{ \rf^{(k)} \nb^{(k)} }{ \lc^{(k)} \fronorm{\mymatrix{A}^{(k)} }} \cdot \Pi_{i=k+1}^{m} \frac{\tf^{(i)} }{ \lc^{(i)}  } 
+  \frac{\rf^{(m)}\nb^{(m)}}{\lc^{(m)} \fronorm{\mymatrix{A}^{(m)}}}   \cdot \Pi_{i=m+1}^{m} \frac{\tf^{(i)} }{ \lc^{(i)}} ) \cdot \fronorm{ \mymatrix{\tinput}^{(m+1)} } \\
&= (\sum_{k=1}^{m} \frac{\rf^{(k)} \nb^{(k)} }{ \lc^{(k)} \fronorm{\mymatrix{A}^{(k)} }} \cdot \Pi_{i=k+1}^{m} \frac{\tf^{(i)} }{ \lc^{(i)} }) \cdot \fronorm{ \mymatrix{\tinput}^{(m+1)} } 
\end{split}
\end{equation*}
Where the second to the last equality is due to the fact that for any $\alpha_i, \beta \in \Rbb$, $(\Pi_{i=k+1}^{k} \alpha_i) \times \beta = \beta$.
\end{proof}
Then we can proceed to prove Lemma~\ref{lem:main_lemma}: 
\begin{proof} (of Lemma~\ref{lem:main_lemma}) 
Based on the assumptions of the components from CP decomposition for each $\nnweight^{(k)}$ in section \ref{sec:setup}, the $\{\hat{R}^{(k)} \}_{k=1}^{n}$ returned by Algorithm~\ref{alg:find_best_rank} will satisfy: 
\begin{itemize}
\item $\forall k, \ \hat{R}^{(k)} \leq R^{(k)}$
\item $\rf^{(k)} \nb^{(k)} \Pi_{i=k+1}^{n} \tf^{(i)} \leq \frac{\epsilon}{n}\fronorm{\mymatrix{A}^{(k)}} \Pi_{i=k}^n \lc^{(i)}$
\end{itemize}
Thus, 
$$\frac{ \rf^{(k)} \nb^{(k)} }{ \lc^{(k)} \fronorm{\mymatrix{A}^{(k)} }} \cdot \Pi_{i=k+1}^{n} \frac{ \tf^{(i)} }{ \lc^{(i)}} \leq \frac{\epsilon}{n}$$
 Then by lemma~\ref{lem:helper_lemma}, $$\fronorm{\orignn(\myvector{\originput}) -  \hat{\tnn}(\mymatrix{\tinput})} \leq \epsilon \fronorm{\orignn(\myvector{\originput})}$$
\end{proof}
Before proving Theorem~\ref{thm:main_thm}, Lemma~\ref{lem:helper_lemma_2} (introduced below) is needed. 
\begin{lemma}
\label{lem:helper_lemma_2}
For any fully connected network $\orignn$ of $n$ layers satisfying the assumptions in section~\ref{sec:setup} and any margin $\gamma \geq 0$, $\orignn$ can be compressed to a fully-connected tensorial neural network $\hat{\tnn}$ with $\sum_{k=1}^{n} \hat{R}^{(k)}[2(s^{(k)} + s^{(k+1)}) + 1)]$ total parameters such that for any $\myvector{\originput} \in S$, $\hat{L_0}(\hat{\tnn}) \leq \hat{L}_\gamma(\orignn)$.
Here, for each layer $k$, 
\begin{align*}
\hat{R}^{(k)} &= \min \Big\{ j \in [R^{(k)}] \ \Big |\  \rf^{(k)} \nb^{(k)}_j \Pi_{i=k+1}^{n} \tf^{(i)}\leq \frac{\epsilon}{n} \fronorm{\mymatrix{A}^{(k)}} \Pi_{i=k}^n \lc^{(i)} \Big\}
\end{align*}
\end{lemma}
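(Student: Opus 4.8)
The plan is to reduce this lemma to Lemma~\ref{lem:main_lemma} by choosing the perturbation level so that the compression error is at most $\gamma/2$, and then to run the standard margin-loss argument. Concretely, I would run Algorithm~\ref{alg:tfc_compress} with the perturbation parameter it already prescribes, namely $\epsilon := \frac{\gamma}{2 \max_{\myvector{\originput} \in S}\fronorm{\orignn(\myvector{\originput})}}$ (if $\gamma$ is so large that $\epsilon \geq 1$, the claim is vacuous, since the algorithm may retain all components of every layer and $\hat{\tnn}$ then computes the same function as $\orignn$). Feeding this $\epsilon$ into Lemma~\ref{lem:main_lemma} yields a compressed network $\hat{\tnn}$ whose $k^{\tha}$-layer rank is exactly the $\hat{R}^{(k)}$ stated here — the defining inequality $\rf^{(k)} \nb^{(k)}_j \Pi_{i=k+1}^{n} \tf^{(i)} \leq \frac{\epsilon}{n} \fronorm{\mymatrix{A}^{(k)}} \Pi_{i=k}^{n} \lc^{(i)}$ is inherited verbatim — and whose parameter count is $\sum_{k=1}^{n} \hat{R}^{(k)}[2(s^{(k)}+s^{(k+1)})+1]$, since each retained CP component of a reshaped weight tensor contributes two length-$s^{(k)}$ factors, two length-$s^{(k+1)}$ factors, and one scalar amplitude. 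Lemma~\ref{lem:main_lemma} also gives, for every $\myvector{\originput} \in S$,
\[
\fronorm{\orignn(\myvector{\originput}) - \hat{\tnn}(\mymatrix{\tinput})} \le \epsilon \fronorm{\orignn(\myvector{\originput})} \le \frac{\gamma}{2},
\]
where $\mymatrix{\tinput}$ is the matricization of $\myvector{\originput}$ and the second inequality uses the definition of $\epsilon$.

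Next I would upgrade this Frobenius-norm bound to a coordinatewise bound and invoke the usual margin argument. Since the output vector's $\ell_2$ norm equals the Frobenius norm of its matricization, each logit satisfies $\bigl|\orignn(\myvector{\originput})[i]-\hat{\tnn}(\mymatrix{\tinput})[i]\bigr|\le \gamma/2$. Hence, for any training example $(\myvector{\originput},y)$ on which $\orignn$ achieves margin strictly larger than $\gamma$, we have $\hat{\tnn}(\mymatrix{\tinput})[y]\ge\orignn(\myvector{\originput})[y]-\gamma/2>\gamma/2+\max_{i\ne y}\orignn(\myvector{\originput})[i]\ge\max_{i\ne y}\hat{\tnn}(\mymatrix{\tinput})[i]$, so $\hat{\tnn}$ classifies that example correctly at margin $0$. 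Consequently the set of training points contributing to $\hat{L}_0(\hat{\tnn})$ is contained in the set contributing to $\hat{L}_\gamma(\orignn)$, which gives $\hat{L}_0(\hat{\tnn})\le\hat{L}_\gamma(\orignn)$, as claimed.

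As for difficulty: the genuinely nontrivial content — keeping the accumulated layer-wise perturbation under control so that the stated choice of $\hat{R}^{(k)}$ actually guarantees $\fronorm{\orignn(\myvector{\originput})-\hat{\tnn}(\mymatrix{\tinput})}\le\epsilon\fronorm{\orignn(\myvector{\originput})}$ — is already discharged inside Lemma~\ref{lem:main_lemma} and its helper Lemma~\ref{lem:helper_lemma}, in particular the telescoping over layers that relies on the reshaping factor, tensorization factor, layer cushion, and tensor noise bound (together with the operator-norm bound of Lemma~\ref{lem:rcpl-fc-operator-norm}). What remains here is the bookkeeping to (i) pick $\epsilon$ so that the per-layer slack products telescope to exactly $\gamma/2$ and (ii) pass from a Frobenius-norm output bound to a per-coordinate one; I anticipate no obstacle beyond that, and indeed this lemma is essentially the bridge from the compression guarantee to the empirical margin loss that feeds the proof of Theorem~\ref{thm:main_thm}.
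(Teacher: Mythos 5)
Your proposal is correct and follows essentially the same route as the paper: set $\epsilon = \frac{\gamma}{2\max_{\myvector{\originput}\in S}\fronorm{\orignn(\myvector{\originput})}}$, invoke Lemma~\ref{lem:main_lemma} to get $\fronorm{\orignn(\myvector{\originput})-\hat{\tnn}(\mymatrix{\tinput})}\le\gamma/2$ with exactly the stated $\hat{R}^{(k)}$ and parameter count, and then pass from the $\ell_2$ output bound to a per-logit bound to conclude that every example correctly classified by $\orignn$ at margin $\gamma$ is correctly classified by $\hat{\tnn}$ at margin $0$. Your handling of the strict-inequality bookkeeping in the margin definition and of the degenerate case $\epsilon\ge 1$ is also fine.
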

\begin{proof} (of Lemma~\ref{lem:helper_lemma_2}) 
If $\gamma \geq 2 \max_{\myvector{x}  \in S } \fronorm{\orignn(\myvector{\originput})}$, for any pair $(\originput, y) \in S$, we have 
\begin{align*}
 |\orignn(\myvector{\originput})[y] - \max_{j \neq y} \orignn(\myvector{\originput})[j]|^2
 &\leq (| \orignn(\myvector{\originput})[y]| + |\max_{j \neq y} \orignn(\myvector{\originput})[j]|)^2 \\
 &\leq 4 \max_{\myvector{x}  \in S } \fronorm{\orignn(\myvector{\originput})}^2 \\
 &\leq \gamma^2
\end{align*}
Then the output margin of $\orignn$ cannot be greater than $\gamma$ for any $\myvector{\originput} \in S$. Thus $\hat{L}_\gamma(\orignn) = 1$. 

If $\gamma < 2 \max_{\myvector{x}  \in S } \fronorm{\orignn(\myvector{\originput})}$, setting $$\epsilon = \frac{\gamma}{2 \max_{\myvector{x}  \in S } \fronorm{\orignn(\myvector{\originput})}}$$ in Lemma~\ref{lem:main_lemma}, we obtain a compressed fully-connected tensorial neural network $\hat{\tnn}$ with the desired number of parameters and $$\fronorm{\orignn(\myvector{\originput}) -  \hat{\tnn}(\mymatrix{\tinput})} < \frac{\gamma}{2} \Rightarrow \forall j, | \orignn(\myvector{\originput})[j] - \hat{\tnn}(\mymatrix{\tinput})[j] | < \frac{\gamma}{2}$$
Then for any pair $(\originput, y) \in S$, if $\orignn(\myvector{\originput})[y] > \gamma + \max_{j \neq y} \orignn(\myvector{\originput})[j]$, $\hat{\tnn}$ classifies $\myvector{\originput}$ correctly as well because:
$$\hat{\tnn}(\mymatrix{\tinput})[y] > \orignn(\myvector{\originput})[y] - \frac{\gamma}{2} >  \max_{j \neq y} \orignn(\myvector{\originput})[j] + \frac{\gamma}{2} > \max_{j\neq y}\hat{\tnn}(\mymatrix{\tinput})[j]$$
Thus, $\hat{L_0}(\hat{\tnn}) \leq \hat{L}_\gamma(\orignn)$.
\end{proof}
Now we prove the main theorem~\ref{thm:main_thm} by bounding the covering number given any $\epsilon$.
\begin{proof} (of Theorem~\ref{thm:main_thm}) 
To be more specific, let us bound the covering number of the compressed network $\hat{\tnn}$ by approximating each parameter with accuracy $\acc$. 

\textbf{Covering Number Analysis for Fully Connected Neural Network} Let $\tilde{T}$ denote the network after approximating each parameter in $\hat{\tnn}$ with accuracy $\acc$ (and $\mytensor{\tilde{T}}^{(k)}$ denote its weight tensor on the $k^{th}$ layer). Based on the given accuracy, we know that 
$\forall k, \ $
$\abs{\hat{\lambda}_{i}^{(k)} - \tilde{\lambda}_{i}^{(k)}} \leq \acc$ and 
$\norm{\hat{\myvector{a}}_{i}^{(k)} - \tilde{\myvector{a}}_{i}^{(k)}} \leq \sqrt{s^{(k)}}\acc$
 (similar inequalities also hold for $\hat{\myvector{b}}_{i}^{(k)}, \hat{\myvector{c}}_{i}^{(k)}, \hat{\myvector{d}}_{i}^{(k)}$). For simplicity, in this proof, let us just use $\myvector{a}_{i}^{(k)}, \myvector{b}_{i}^{(k)}, \myvector{c}_{i}^{(k)}, \myvector{d}_{i}^{(k)}$ to denote $\hat{\myvector{a}}_{i}^{(k)}, \hat{\myvector{b}}_{i}^{(k)}, \hat{\myvector{c}}_{i}^{(k)}, \hat{\myvector{d}}_{i}^{(k)}$

Notice that
$$\mymatrix{Y}^{(k)} =  \sum_{i = 1}^{r^{(k)}} \lambda_{i}^{(k)} (\myvector{a}_{i}^{(k)})^\top \mymatrix{\tinput}^{(k)} \myvector{b}_{i}^{(k)} \myvector{c}_{i}^{(k)} \otimes \myvector{d}_{i}^{(k)}$$ 
$$\tilde{\mymatrix{Y}}^{(k)} = \sum_{i = 1}^{r^{(k)}} \tilde{\lambda}_{i}^{(k)} (\tilde{\myvector{a}}_{i}^{(k)})^\top \tilde{\mymatrix{\tinput}}^{(k)} \tilde{\myvector{b}}_{i}^{(k)} \tilde{\myvector{c}}_{i}^{(k)} \otimes \tilde{\myvector{d}}_{i}^{(k)}$$ 

Let $\epsilon^{(k)} = \fronorm{\tilde{\mymatrix{Y}}^{(k)} - \mymatrix{Y}^{(k)}}$. Then for each $k$, let us first bound 
$\abs{(\myvector{a}_{i}^{(k)})^\top \mymatrix{\tinput}^{(k)} \myvector{b}_{i}^{(k)} - (\tilde{\myvector{a}}_{i}^{(k)})^\top \tilde{\mymatrix{\tinput}}^{(k)} \tilde{\myvector{b}}_{i}^{(k)}}$ and
$\fronorm{\myvector{c}_{i}^{(k)} \otimes \myvector{d}_{i}^{(k)} - \tilde{\myvector{c}}_{i}^{(k)} \otimes \tilde{\myvector{d}}_{i}^{(k)}}$ separately. 

\textbf{Bound $\abs{(\myvector{a}_{i}^{(k)})^\top \mymatrix{\tinput}^{(k)} \myvector{b}_{i}^{(k)} - (\tilde{\myvector{a}}_{i}^{(k)})^\top \tilde{\mymatrix{\tinput}}^{(k)} \tilde{\myvector{b}}_{i}^{(k)}}$:} When $k =1$, we know that $\mymatrix{\tinput}^{(1)} = \tilde{\mymatrix{\tinput}}^{(1)}$. Let us first consider the base case where $k =1$. For simplicity, let $\myvector{a} = \myvector{a}_{i}^{(1)}$, $\tilde{\myvector{a}} = \tilde{\myvector{a}}_{i}^{(1)}$, $\myvector{b} = \myvector{b}_{i}^{(1)}$, $\tilde{\myvector{b}} = \tilde{\myvector{b}}_{i}^{(1)}$, and $\mymatrix{X} = \mymatrix{\tinput}^{(1)}$. 
Then
\begin{align*} 
\begin{split}
\MoveEqLeft \abs{(\myvector{a}_{i}^{(1)})^\top \mymatrix{\tinput}^{(1)} \myvector{b}_{i}^{(1)} - (\tilde{\myvector{a}}_{i}^{(1)})^\top \tilde{\mymatrix{\tinput}}^{(1)} \tilde{\myvector{b}}_{i}^{(1)}} \\
&= \abs{\myvector{a}^\top \mymatrix{X} \myvector{b} - \tilde{\myvector{a}}^\top \mymatrix{X} \tilde{\myvector{b}}} \\
&= \abs{\myvector{a}^\top \mymatrix{X} \myvector{b} - \myvector{a}^\top \mymatrix{X} \tilde{\myvector{b}} + \myvector{a}^\top \mymatrix{X} \tilde{\myvector{b}} - \tilde{\myvector{a}}^\top \mymatrix{X} \tilde{\myvector{b}}} \\
&= \abs{\myvector{a}^\top \mymatrix{X}(\myvector{b} - \tilde{\myvector{b}})  + (\myvector{a} - \tilde{\myvector{a}})^\top \mymatrix{X} \tilde{\myvector{b}}} \\
& \leq  \abs{\myvector{a}^\top \mymatrix{X}(\myvector{b} - \tilde{\myvector{b}})} + \abs{(\myvector{a} - \tilde{\myvector{a}})^\top \mymatrix{X} \tilde{\myvector{b}}} \\
& \leq  \norm{\mymatrix{X}^\top \myvector{a} } \norm{\myvector{b} - \tilde{\myvector{b}}} + \norm{ \myvector{a} - \tilde{\myvector{a}}} \norm{\mymatrix{X} \tilde{\myvector{b}}} \\
& \leq \acc  \sqrt{s^{(1)}} \norm{\mymatrix{X}}( \norm{\myvector{a}} + \norm{\tilde{\myvector{b}}})\\
& \leq 2 \acc  \sqrt{s^{(1)}} \norm{\mymatrix{X}} 
\end{split}
\end{align*}
The second to the last inequality is because singular values are invariant to matrix transpose.

When $k \geq 1$, similarly, let $\myvector{a} = \myvector{a}_{i}^{(k)}$, $\tilde{\myvector{a}} = \tilde{\myvector{a}}_{i}^{(k)}$ (define $b$ in a similar way),  $\mymatrix{X} = \mymatrix{\tinput}^{(k)}$, and $\tilde{\mymatrix{X}} = \tilde{\mymatrix{\tinput}}^{(k)}$. Let $\mymatrix{Y} = \mymatrix{Y}^{(k-1)}$, and $\tilde{\mymatrix{Y}} = \tilde{\mymatrix{Y}}^{(k-1)}$ (basically the output from the $(k-1)^{th}$ layer before activation). Then
\begin{align*} 
\begin{split}
\MoveEqLeft \abs{(\myvector{a}_{i}^{(k)})^\top \mymatrix{\tinput}^{(k)} \myvector{b}_{i}^{(k)} - (\tilde{\myvector{a}}_{i}^{(k)})^\top \tilde{\mymatrix{\tinput}}^{(k)} \tilde{\myvector{b}}_{i}^{(k)}} \\
&= \abs{\myvector{a}^\top \mymatrix{X} \myvector{b} - \tilde{\myvector{a}}^\top \tilde{\mymatrix{X}} \tilde{\myvector{b}}} \\
&= \abs{\myvector{a}^\top \mymatrix{X} \myvector{b} - \tilde{\myvector{a}}^\top \mymatrix{X} \tilde{\myvector{b}} + \tilde{\myvector{a}}^\top \mymatrix{X} \tilde{\myvector{b}} - \tilde{\myvector{a}}^\top \tilde{\mymatrix{X}} \tilde{\myvector{b}}} \\
&\leq \abs{\myvector{a}^\top \mymatrix{X} \myvector{b} - \tilde{\myvector{a}}^\top \mymatrix{X} \tilde{\myvector{b}}} + \abs{\tilde{\myvector{a}}^\top \mymatrix{X} \tilde{\myvector{b}} - \tilde{\myvector{a}}^\top \tilde{\mymatrix{X}} \tilde{\myvector{b}}} \\
&\leq 2 \acc  \sqrt{s^{(k)}} \norm{\mymatrix{X}} +  \norm{\mymatrix{X} - \tilde{\mymatrix{X}}}, \text{by base case $k = 1$} \\
&= 2 \acc \sqrt{s^{(k)}} \norm{\mymatrix{X}} +  \norm{\relu{\mymatrix{Y}} - \relu{\tilde{\mymatrix{Y}}}} \\
&\leq 2 \acc \sqrt{s^{(k)}} \norm{\mymatrix{X}} +  \fronorm{\relu{\mymatrix{Y}} - \relu{\tilde{\mymatrix{Y}}}} \\
&\leq 2 \acc \sqrt{s^{(k)}} \norm{\mymatrix{X}} + \fronorm{\mymatrix{Y} - \tilde{\mymatrix{Y}}} \\
&= 2 \acc \sqrt{s^{(k)}} \norm{\mymatrix{X}} + \epsilon^{(k-1)}
\end{split}
\end{align*}

Then we can also bound  $\abs{ \lambda_{i}^{(k)} (\myvector{a}_{i}^{(k)})^\top \mymatrix{\tinput}^{(k)} \myvector{b}_{i}^{(k)} - \tilde{\lambda}_{i}^{(k)} (\tilde{\myvector{a}}_{i}^{(k)})^\top \tilde{\mymatrix{\tinput}}^{(k)} \tilde{\myvector{b}}_{i}^{(k)}}$. For simplicity, let $\lambda =  \lambda_{i}^{(k)}$, 
$\tilde{\lambda} =  \tilde{\lambda}_{i}^{(k)}$, $x = (\myvector{a}_{i}^{(k)})^\top \mymatrix{\tinput}^{(k)} \myvector{b}_{i}^{(k)}$, and $\tilde{x} = (\tilde{\myvector{a}}_{i}^{(k)})^\top \tilde{\mymatrix{\tinput}}^{(k)} \tilde{\myvector{b}}_{i}^{(k)}$. Then
\begin{align*} 
\begin{split}
\MoveEqLeft \abs{ \lambda_{i}^{(k)} (\myvector{a}_{i}^{(k)})^\top \mymatrix{\tinput}^{(k)} \myvector{b}_{i}^{(k)} - \tilde{\lambda}_{i}^{(k)} (\tilde{\myvector{a}}_{i}^{(k)})^\top \tilde{\mymatrix{\tinput}}^{(k)} \tilde{\myvector{b}}_{i}^{(k)}} \\
&= \abs{\lambda x - \hat{\lambda} \hat{x}} \\
&\leq \abs{\lambda - \hat{\lambda}} \abs{x} + \abs{\hat{\lambda}} \abs{x - \hat{x}} \\
&\leq \abs{\lambda - \hat{\lambda}} \abs{x} + \abs{\lambda} \abs{x - \hat{x}} \text{, we can pick $\abs{\hat{\lambda}} \leq \abs{\lambda}$} \\
&\leq \acc \abs{x} +  \abs{\lambda}  \times (2 \acc \sqrt{s^{(k)}} \norm{\mymatrix{X}^{(k)}} + \epsilon^{(k-1)}) \\
&\leq \acc  \norm{\mymatrix{X}^{(k)}} + 2 \acc \norm{\mymatrix{X}^{(k)}} \abs{\lambda} \sqrt{s^{(k)}}  + \abs{\lambda} \epsilon^{(k-1)} \\
&= \acc  \norm{\mymatrix{X}^{(k)}} (1 + 2\abs{\lambda} \sqrt{s^{(k)}}) + \abs{\lambda} \epsilon^{(k-1)}
\end{split}
\end{align*}

\textbf{Bound $\fronorm{\myvector{c}_{i}^{(k)} \otimes \myvector{d}_{i}^{(k)} - \tilde{\myvector{c}}_{i}^{(k)} \otimes \tilde{\myvector{d}}_{i}^{(k)}}$:} Similarly let $\myvector{c} = \myvector{c}_{i}^{(k)}$ and $\tilde{\myvector{c}} = \tilde{\myvector{c}}_{i}^{(k)}$ ( define $\myvector{d}$ and $\tilde{\myvector{d}}$ in a similar way). Then
\begin{align*} 
\begin{split}
\MoveEqLeft \fronorm{\myvector{c}_{i}^{(k)} \otimes \myvector{d}_{i}^{(k)} - \tilde{\myvector{c}}_{i}^{(k)} \otimes \tilde{\myvector{d}}_{i}^{(k)}}^2  \\
&= \fronorm{\myvector{c}  \myvector{d}^\top - \tilde{\myvector{c}} \tilde{\myvector{d}}^\top}^2 \\
& = \Tr( (\myvector{c}  \myvector{d}^\top - \tilde{\myvector{c}} \tilde{\myvector{d}}^\top)^\top (\myvector{c}  \myvector{d}^\top - \tilde{\myvector{c}} \tilde{\myvector{d}}^\top)) \\
 & = \Tr( ( \myvector{d} \myvector{c}^\top - \tilde{\myvector{d}} \tilde{\myvector{c}}^\top) (\myvector{c}  \myvector{d}^\top - \tilde{\myvector{c}} \tilde{\myvector{d}}^\top))  \\ 
 &= \Tr( \myvector{d} \myvector{c}^\top \myvector{c}  \myvector{d}^\top - \tilde{\myvector{d}} \tilde{\myvector{c}}^\top \myvector{c}  \myvector{d}^\top - \myvector{d} \myvector{c}^\top \tilde{\myvector{c}} \tilde{\myvector{d}}^\top + \tilde{\myvector{d}} \tilde{\myvector{c}}^\top \tilde{\myvector{c}} \tilde{\myvector{d}}^\top) \\
 &= \Tr( \myvector{d} \myvector{c}^\top \myvector{c}  \myvector{d}^\top) - \Tr(\tilde{\myvector{d}} \tilde{\myvector{c}}^\top \myvector{c}  \myvector{d}^\top) - \Tr( \myvector{d} \myvector{c}^\top \tilde{\myvector{c}} \tilde{\myvector{d}}^\top) + \Tr(\tilde{\myvector{d}} \tilde{\myvector{c}}^\top \tilde{\myvector{c}} \tilde{\myvector{d}}^\top) \\
 &= \Tr(\myvector{c}^\top \myvector{c}  \myvector{d}^\top \myvector{d}) - \Tr(\myvector{c}^\top \tilde{\myvector{c}} \tilde{\myvector{d}}^\top \myvector{d}) +  \Tr(\tilde{\myvector{c}}^\top \tilde{\myvector{c}} \tilde{\myvector{d}}^\top \tilde{\myvector{d}}) -  \Tr(\tilde{\myvector{c}}^\top \myvector{c}  \myvector{d}^\top \tilde{\myvector{d}})\\
 &=\Tr(\myvector{c}^\top (\myvector{c} \myvector{d}^\top - \tilde{\myvector{c}}\tilde{\myvector{d}}^\top) \myvector{d} + \tilde{\myvector{c}}^\top(\tilde{\myvector{c}}\tilde{\myvector{d}}^\top - \myvector{c} \myvector{d}^\top)\tilde{\myvector{d}}) \\
 &=\myvector{c}^\top (\myvector{c} \myvector{d}^\top - \tilde{\myvector{c}}\tilde{\myvector{d}}^\top)\myvector{d} + \tilde{\myvector{c}}^\top(\tilde{\myvector{c}}\tilde{\myvector{d}}^\top - \myvector{c} \myvector{d}^\top)\tilde{\myvector{\tilde{d}}} \\
 &\leq \norm{\myvector{c}} \norm{\myvector{c} \myvector{d}^\top - \tilde{\myvector{c}}\tilde{\myvector{d}}^\top} \norm{\myvector{d}}  + \norm{\tilde{\myvector{c}}} \norm{\tilde{\myvector{c}} \tilde{\myvector{d}}^\top - \myvector{c} \myvector{d}^\top} \norm{\myvector{d}}\\
 &\leq 2\norm{\myvector{c} \myvector{d}^\top - \tilde{\myvector{c}}\tilde{\myvector{d}}^\top}, \text{as the norms of $\myvector{c}, \myvector{d}, \tilde{\myvector{c}}, \tilde{\myvector{d}}$ are $\leq 1$} \\
 & = 2\norm{\myvector{c} \myvector{d}^\top - \myvector{c}\tilde{\myvector{d}}^\top + \myvector{c}\tilde{\myvector{d}}^\top - \tilde{\myvector{c}}\tilde{\myvector{d}}^\top} \\ 
 & = 2\norm{\myvector{c}( \myvector{d}^\top - \tilde{\myvector{d}}^\top) + (\myvector{c} - \tilde{\myvector{c}})\tilde{\myvector{d}}^\top} \\
 & \leq 2(\norm{\myvector{c}( \myvector{d}^\top - \tilde{\myvector{d}}^\top)} + \norm{(\myvector{c} - \tilde{\myvector{c}})\tilde{\myvector{d}}^\top})  \\ 
&\leq 2(\norm{\myvector{c}} \norm{\myvector{d} - \tilde{\myvector{d}}} + \norm{\myvector{d}} \norm{\myvector{c} - \tilde{\myvector{c}}} ), \text{as they are rank 1 matrices} \\ 
 & \leq 4 \sqrt{s^{(k+1)}} \acc
\end{split}
\end{align*}

\textbf{Bound $\epsilon^{(k)} = \fronorm{\tilde{\mymatrix{Y}}^{(k)} - \mymatrix{Y}^{(k)}}$:} Similarly, for simplicity, let $w_i = \lambda_{i}^{(k)} (\myvector{a}_{i}^{(k)})^\top \mymatrix{\tinput}^{(k)} \myvector{b}_{i}^{(k)}$, $\tilde{w}_i = \tilde{\lambda}_{i}^{(k)} (\tilde{\myvector{a}}_{i}^{(k)})^\top \tilde{\mymatrix{\tinput}}^{(k)} \tilde{\myvector{b}}_{i}^{(k)}$, $\mymatrix{U_i} = \myvector{c}_{i}^{(k)} \otimes \myvector{d}_{i}^{(k)}$, and $\tilde{\mymatrix{U}_i} = \tilde{\myvector{c}}_{i}^{(k)} \otimes \tilde{\myvector{d}}_{i}^{(k)}$.

Since $\fronorm{\tilde{\mymatrix{Y}}^{(k)} - \mymatrix{Y}^{(k)}} = \fronorm{ \sum_{i=1}^{r^{(k)}} w_i \mymatrix{U_i}  - \sum_{i=}^{r^{(k)}} \tilde{d}_i \tilde{\mymatrix{U}_i}}$, 
\begin{equation} 
\label{eq:cov_rr}
\begin{split}
\MoveEqLeft \fronorm{ \sum_{i=1}^{r^{(k)}} w_i \mymatrix{U_i}  - \sum_{i=}^{r^{(k)}} \tilde{w}_i \tilde{\mymatrix{U}_i}} \\
&= \fronorm{ \sum_{i=1}^{r^{(k)}} (w_i \mymatrix{U_i} - \tilde{w}_i \tilde{\mymatrix{U}_i}) } \\
&\leq \sum_{i=1}^{r^{(k)}} \fronorm{w_i \mymatrix{U_i} - \tilde{w}_i \tilde{\mymatrix{U}_i}} \\
&= \sum_{i=1}^{r^{(k)}} \fronorm{w_i \mymatrix{U_i} - w_i \tilde{\mymatrix{U}_i}  + w_i\tilde{\mymatrix{U}_i}  - \tilde{w}_i \tilde{\mymatrix{U}_i}} \\
&\leq \sum_{i=1}^{r^{(k)}} \fronorm{w_i \mymatrix{U_i} - w_i \tilde{\mymatrix{U}_i}}  + \fronorm{w_i \tilde{\mymatrix{U}_i}  - \tilde{w}_i \tilde{\mymatrix{U}_i}} \\
&= \sum_{i=1}^{r^{(k)}} \fronorm{w_i (\mymatrix{U_i} - \tilde{\mymatrix{U}_i})}  + \fronorm{(w_i  - \tilde{w}_i) \tilde{\mymatrix{U}_i}} \\
&= \sum_{i=1}^{r^{(k)}} \abs{w_i}\fronorm{\mymatrix{U_i} - \tilde{\mymatrix{U}_i}}  + \abs{w_i  - \tilde{w}_i} \fronorm{\tilde{\mymatrix{U}_i}} \\
&\leq  \sum_{i=1}^{r^{(k)}}   \abs{w_i} \times \sqrt{4\sqrt{s^{(k+1)}} \acc} 
 + \big( \acc  \norm{\mymatrix{X}^{(k)}} (1 + 2\abs{\lambda_{i}} \sqrt{s^{(k)}}) + \abs{\lambda_{i}} \epsilon^{(k-1)} \big) \times \fronorm{\tilde{\mymatrix{U}_i}} \\
&\leq  \sum_{i=1}^{r^{(k)}}   \abs{ \lambda_{i}^{(k)}} \norm{\mymatrix{\tinput}^{(k)}} \times \sqrt{4\sqrt{s^{(k+1)}} \acc}  
 + \big(\acc  \norm{\mymatrix{X}^{(k)}} (1 + 2\abs{\lambda_{i}} \sqrt{s^{(k)}}) + \abs{\lambda_{i}} \epsilon^{(k-1)}\big) 
   \times \fronorm{ \tilde{\myvector{c}}_{i}^{(k)} \otimes \tilde{\myvector{d}}_{i}^{(k)} }\\
&=  \sum_{i=1}^{r^{(k)}}  2 \abs{ \lambda_{i}^{(k)}} \norm{\mymatrix{\tinput}^{(k)}} \times \sqrt{\sqrt{s^{(k+1)}} \acc} 
+ \acc  \norm{\mymatrix{X}^{(k)}} (1 + 2\abs{\lambda_{i}} \sqrt{s^{(k)}}) +  \abs{\lambda_{i}} \epsilon^{(k-1)} \\
&\leq \sum_{i=1}^{r^{(k)}} \acc \norm{\mymatrix{\tinput}^{(k)}} \big(1+ 2 \abs{\lambda_{i}^{(k)}} (\sqrt{s^{(k)}} + \sqrt{s^{(k+1)}}) \big)
+  \abs{\lambda_{i}^{(k)}} \epsilon^{(k-1)} \text{, assume $\sqrt{\sqrt{s^{(k+1)}} \acc} \leq  \sqrt{s^{(k+1)}} \acc$ } \\
&\leq r^{(k)} \times \{ \acc \norm{\mymatrix{\tinput}^{(k)}} \big( 1+ 2 \abs{\lambda_{max}^{(k)}} (\sqrt{s^{(k)}} + \sqrt{s^{(k+1)}}) \big) 
+  \abs{\lambda_{max}^{(k)}} \epsilon^{(k-1)}\}  \\
&\leq \acc r^{(k)} [1+ 2 \abs{\lambda_{max}^{(k)}} (\sqrt{s^{(k)}} + \sqrt{s^{(k+1)}}) ] \norm{\mymatrix{\tinput}^{(k)}} 
+  r^{(k)} \abs{\lambda_{max}^{(k)}} \epsilon^{(k-1)}
\end{split}
\end{equation}

Let $\alpha^{(k)} := \acc r^{(k)} [1+ 2 \abs{\lambda_{max}^{(k)}} (\sqrt{s^{(k)}} + \sqrt{s^{(k+1)}}) ] \norm{\mymatrix{\tinput}^{(k)}}$, and $\beta^{(k)} = r^{(k)} \abs{\lambda_{max}^{(k)}}$, then by the recurrence relationship in~\ref{eq:cov_rr}, the difference between the final output of the two networks are bounded by:
\begin{align*} 
\begin{split}
\MoveEqLeft \fronorm{\hat{\tnn}(\mymatrix{\tinput}) - \tilde{\tnn}(\mymatrix{\tinput})} \\
&= \fronorm{\relu{\hat{\mymatrix{Y}}^{(n)}} - \relu{\mymatrix{Y}^{(n)}}} \text{\ ($= \mymatrix{\tinput}^{(n+1)} - \mymatrix{\tinput}^{(n+1)}$)}  \\
&\leq \fronorm{\tilde{\mymatrix{Y}}^{(n)}  - \mymatrix{Y}^{(n)} } \\
&\leq \sum_{k=1}^{n} \alpha^{(k)} \Pi_{i = k+1}^{n} \beta^{(i)}
 \end{split}
\end{align*}

Since $\forall k \in [n], \norm{\mymatrix{\tinput}^{(k)}} \leq \Pi_{i = k}^n \frac{\rf^{(i)}}{\lc^{(i)} \fronorm{\mymatrix{A}^{(i)}}} \fronorm{\mymatrix{\tinput}^{(n+1)}}$, to obtain an $\epsilon$-cover of the compressed network, we can first assume $\beta^{(k)} \geq 1 \ \forall k \in [n]$. Then $\acc$ need to satisfy:
$$\acc \leq \frac{\epsilon}{ (r^{(*)} \abs{\lambda^*})^n 
\fronorm{\mymatrix{\tinput^{(n+1)}}} n r^{(*)}  (1+ 4 \abs{\lambda^{(*)}} \sqrt{s^{(*)} }) (\frac{\rf^{(*)}}{\mu^{(*)} \fronorm{\mymatrix{A^{(*)} }} })^n }$$
where $r^{(*)} = \max_{k} r^{(k)} \lambda^{(*)} = \max_{i, k} \lambda_{i}^{(k)}$, $s^{(*)} = \max_{k} s^{(k)}$, and $\frac{\rf^{(*)}}{\mu^{(*)} \fronorm{\mymatrix{A^{(*)} }} } = \max_{k} \frac{\rf^{(k)}}{\mu^{(k)} \fronorm{\mymatrix{A^{(k)} }} }$

As when $\acc$ is fixed, the number of networks in our cover will at most be $(\frac{1}{\acc})^d$ where $d$ denote the number of parameters in the original network. Hence, the covering number w.r.t to a given $\epsilon$ is $\tilde{O}(n d)$ (n is the number of layers in the given neural network). As for practical neural networks, the number of layers $n$ is usually much less than $O(\log(d))$, thus the covering number we obtained w.r.t to a given $\epsilon$ is just $\tilde{O}(d)$ for practical neural networks.
\end{proof}

\section{Neural Networks with Skip Connections}
\label{sec:sc}
\subsection{Problem Setup}
\label{subsec:sc_notations}
For neural nets with skip connections, the current theoretical analyses consider convolutional neural networks with one skip connection used on each layer, since our theoretical results can easily extend to general neural nets with skip connections. Therefore, we used the same the notations for neural nets with skip connections as what we defined for convolutional neural networks. 

\textbf{Forward pass functions}
Under the above assumptions, the only difference that we need to take into account between our analysis of CNN with skip connections and our analysis of standard CNN is the forward pass functions. In neural networks with skip connections, we have
$$\mytensor{\tinput}^{(k)}=\relu{\mytensor{\tOut}^{(k-1)}}$$
$$\mytensor{\tOut}^{(k)}= \cnnweight^{(k)} \left(\mytensor{\tinput}^{(k)}\right) +\mytensor{\tinput}^{(k)}$$
and 
$$\hat{\mytensor{\tinput}}^{(k)}=\relu{\hat{\mytensor{\tOut}}^{(k-1)}}$$
$$\hat{\mytensor{\tOut}}^{(k)}=\hat{\cnnweight}^{(k)}\left(\hat{\mytensor{\tinput}}^{(k)}\right)+\hat{\mytensor{\tinput}}^{(k)}$$
where $\cnnweight^{(k)}\left(\mytensor{\tinput}^{(k)}\right) $ and $\hat{\cnnweight}^{(k)}\left(\hat{\mytensor{\tinput}}^{(k)}\right)$ 
compute the outputs of the $k^{\tha}$ convolutional layer.

Similarly, we use \textit{tensorization factor}, \textit{tensor noise bound} and \textit{layer cushion} as in convolutional neural network defined in \ref{def:cnn_tf}, \ref{def:cnn_tnb} and \ref{def:cnn_lc}. But note that the input $\mytensor{\tinput}^{(k)}$ in the definition of \textit{layer cushion} is the input of $k^{\tha}$ layer after skip connection.

\subsection{Generalization Guarantee of Compressed Network Proposed}

\begin{theorem}
\label{thm:sc_thm}
For any convolutional neural network $\origcnn$ of $n$ layers with skip connection satisfying the assumptions in section~\ref{sec:setup} and any margin $\gamma \geq 0$, Algorithm~\ref{alg:cnn_compress} generates a compressed network $\hat{\origcnn}$ such that with high probability over the training set, the expected error $L_0(\hat{\origcnn})$ is bounded by
\begin{equation}
\hat{L}_{\gamma}(\origcnn) + \tilde{O}\Big(\sqrt{\frac{ \sum_{k=1}^{n} \hat{R}^{(k)}(s^{(k)} + o^{(k)} + k_x^{(k)} \times k_y^{(k)} + 1)}{m}}\Big)
\end{equation}
where 
\begin{equation}
\begin{aligned}
\hat{R}^{(k)} = \min \Big\{ j \in [R^{(k)}] | \nb^{(k)}_j \Pi_{i=k+1}^{n} (\tf^{(i)}_j + 1) \leq \frac{\gamma}{2 n \max_{\mytensor{\tinput}  \in S } \fronorm{\origcnn(\mytensor{\tinput})}} \Pi_{i=k}^n \lc^{(i)} \fronorm{\cnnweight^{(i)}} \Big\}
\end{aligned}
\end{equation}
\end{theorem}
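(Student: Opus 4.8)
The plan is to reuse, almost verbatim, the three-step architecture of the proof of Theorem~\ref{thm:cnn_thm}: (i) show that Algorithm~\ref{alg:cnn_compress} produces a compressed CNN $\hat{\origcnn}$ whose output differs from that of $\origcnn$ by at most $\gamma/2$ in Frobenius norm on every $\mytensor{\tinput}\in S$; (ii) bound the covering number of the resulting hypothesis class by $\tilde{O}(d)$ with $d=\sum_{k=1}^n \hat{R}^{(k)}(s^{(k)}+o^{(k)}+k_x^{(k)}k_y^{(k)}+1)$; and (iii) invoke the margin-loss generalization theorem (Theorem~2.1 of~\citep{arora2018stronger}). Since a skip connection changes only the forward pass and adds no parameters, only step (i) needs rework --- specifically, the compression estimate of Lemma~\ref{lem:cnn_compression} and its downstream consequences Lemma~\ref{lem:cnn_lemma} and Lemma~\ref{lem:cnn_helper_lemma_3} must be reproven with the recursion $\mytensor{\tOut}^{(k)} = \cnnweight^{(k)}(\mytensor{\tinput}^{(k)}) + \mytensor{\tinput}^{(k)}$.

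First I would redo the induction behind Lemma~\ref{lem:cnn_compression}. The base case $m=2$ is unchanged because $\mytensor{\tinput}^{(1)}=\hat{\mytensor{\tinput}}^{(1)}$, so the two identity branches cancel. For the inductive step, write
\begin{align*}
\mytensor{\tOut}^{(m)} - \hat{\mytensor{\tOut}}^{(m)}
&= \hat{\cnnweight}^{(m)}\big(\mytensor{\tinput}^{(m)} - \hat{\mytensor{\tinput}}^{(m)}\big)
+ \big(\cnnweight^{(m)} - \hat{\cnnweight}^{(m)}\big)\big(\mytensor{\tinput}^{(m)}\big)
+ \big(\mytensor{\tinput}^{(m)} - \hat{\mytensor{\tinput}}^{(m)}\big).
\end{align*}
The new third summand contributes an extra $\fronorm{\mytensor{\tinput}^{(m)} - \hat{\mytensor{\tinput}}^{(m)}}$, i.e.\ the operator norm of the identity, namely $1$. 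Using Lemma~\ref{lem:cpl-conv-operator-norm} to bound the operator norm of the truncated kernel by $\sqrt{H W}\,\tf^{(m)}_{\hat{R}^{(m)}}$ and of the pruned remainder by $\sqrt{H W}\,\nb^{(m)}_{\hat{R}^{(m)}}$, together with the Lipschitzness of ReLU, the factor multiplying the propagated error at layer $m$ becomes $\tf^{(m)}_{\hat{R}^{(m)}}+1$ (the $H,W$ constants being absorbed against the layer cushions exactly as in Definition~\ref{def:cnn_lc_full}) rather than $\tf^{(m)}_{\hat{R}^{(m)}}$. Telescoping yields
\begin{equation*}
\fronorm{\origcnn(\mytensor{\tinput}) - \hat{\origcnn}(\mytensor{\tinput})}
\le \Big(\sum_{k=1}^{n}\frac{\nb^{(k)}_{\hat{R}^{(k)}}}{\lc^{(k)}\fronorm{\cnnweight^{(k)}}}\prod_{i=k+1}^{n}\frac{\tf^{(i)}_{\hat{R}^{(i)}}+1}{\lc^{(i)}\fronorm{\cnnweight^{(i)}}}\Big)\fronorm{\origcnn(\mytensor{\tinput})}.
\end{equation*}
The rank rule $\nb^{(k)}_j\prod_{i=k+1}^n(\tf^{(i)}_j+1)\le \frac{\gamma}{2n\max_{\mytensor{\tinput}\in S}\fronorm{\origcnn(\mytensor{\tinput})}}\prod_{i=k}^n\lc^{(i)}\fronorm{\cnnweight^{(i)}}$ then forces the parenthesized sum below $\gamma/(2\max_{\mytensor{\tinput}\in S}\fronorm{\origcnn(\mytensor{\tinput})})$, so $\fronorm{\origcnn(\mytensor{\tinput})-\hat{\origcnn}(\mytensor{\tinput})}\le\gamma/2$; these are precisely the skip-connection analogues of Lemma~\ref{lem:cnn_lemma} and Lemma~\ref{lem:cnn_helper_lemma_3} with every $\tf$ replaced by $\tf+1$, and the latter gives $L_0(\hat{\origcnn})\le\hat{L}_\gamma(\origcnn)$.

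For steps (ii)--(iii), note that $\hat{\origcnn}$ has exactly the same number of parameters as in the skip-free case (the residual branch is parameter-free), and the covering-number estimate of Lemma~\ref{lem:cnn_covering_number} uses only the per-layer operator-norm control and the layer-wise error-propagation mechanism --- both of which we have just re-established in the skip-connection setting, with the benign $\tf^{(i)}+1$ factors already accounted for in the choice of $\hat{R}^{(k)}$. Hence the covering number is still $\tilde{O}(d)$, and plugging this and the $\gamma/2$-closeness into Theorem~2.1 of~\citep{arora2018stronger} yields the stated bound.

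The main obstacle is the same one flagged for Theorem~\ref{thm:cnn_thm}: keeping the covering number at order $\tilde{O}(d)$ rather than letting it scale with $\prod_k \hat{R}^{(k)}$. The skip connection makes this more delicate, because the quantization error introduced at layer $k$ now propagates through every later layer with an extra $(\,\cdot+1)$ amplification from the identity branch; one must verify that these amplifications are exactly the ones already paid for in the rank-selection inequality (so no new blow-up appears) and that the $\sqrt{H^{(k)}W^{(k)}}$ prefactors still cancel against the layer cushions. The remaining arguments are a routine rerun of the convolutional case.
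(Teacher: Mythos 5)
Your proposal is correct and follows essentially the same route as the paper: the paper's Lemma~\ref{lem:sc_helper_lemma} is exactly your modified induction with the extra identity summand producing the $\tf^{(l)}+1$ amplification factors, Lemma~\ref{lem:sc_lemma} is the same telescoping consequence of the rank-selection rule, and the theorem then follows from the unchanged parameter count, the $\tilde{O}(d)$ covering-number bound, and Theorem~2.1 of \citet{arora2018stronger}. Your observation that the base case is unaffected because the identity branches cancel at the first layer, and that only the forward-pass recursion (not the parameterization) changes, matches the paper's treatment.
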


To prove this theorem, Lemma~\ref{lem:sc_lemma} is needed.

\begin{lemma}
\label{lem:sc_lemma}
For any convolutional neural network $\origcnn$ of $n$ layers with skip connection satisfying the assumptions in section~\ref{sec:setup} and any error $0\leq\epsilon\leq 1$, Algorithm~\ref{alg:cnn_compress} generates a compressed tensorial neural network $\hat{\mytensor{\origcnn}}$ such that for any $\tinput\in S$:
\begin{equation}
\fronorm{\origcnn(\mytensor{\tinput}) -  \hat{\origcnn}(\mytensor{\tinput})} \leq \epsilon \fronorm{\origcnn(\mytensor{\tinput})}
\end{equation}
The compressed convolutional neural network $\hat{\origcnn}$ has with $\sum_{k=1}^n\hat{R}^{(k)}(s^{(k)}+o^{(k)}+k_x^{(k)} \times k_y^{(k)}+1)$ total parameters, where each $\hat{R}^{(k)}$ satisfies:
\begin{equation}
\begin{aligned}
\hat{R}^{(k)} = \min \Big\{ j \in [R^{(k)}] | \nb^{(k)}_j \Pi_{i=k+1}^{n} (\tf^{(i)}_j + 1) \leq \frac{\epsilon}{n} \Pi_{i=k}^n \lc^{(i)} \fronorm{\cnnweight^{(i)}} \Big\}
\end{aligned}
\end{equation}

\end{lemma}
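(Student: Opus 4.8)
The plan is to mirror the induction that underlies Lemma~\ref{lem:cnn_compression} and its consequence Lemma~\ref{lem:cnn_lemma}, adding one extra term to the per-layer error recursion to account for the identity branch. Recall that for a plain CNN one uses $1$-Lipschitzness of \relu{\cdot} together with Lemma~\ref{lem:cpl-conv-operator-norm} to split $\cnnweight^{(m)}(\mytensor{\tinput}^{(m)}) - \hat{\cnnweight}^{(m)}(\hat{\mytensor{\tinput}}^{(m)})$ into a ``kept components'' part, which acts as a contraction with gain $\tf^{(m)}$ on the running error $\fronorm{\mytensor{\tinput}^{(m)} - \hat{\mytensor{\tinput}}^{(m)}}$, plus a ``pruned components'' part bounded by the additive noise term $\nb^{(m)}\fronorm{\mytensor{\tinput}^{(m)}}$. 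With a skip connection, $\mytensor{\tOut}^{(m)} = \cnnweight^{(m)}(\mytensor{\tinput}^{(m)}) + \mytensor{\tinput}^{(m)}$, so
\[
\mytensor{\tOut}^{(m)} - \hat{\mytensor{\tOut}}^{(m)} = \hat{\cnnweight}^{(m)}\big(\mytensor{\tinput}^{(m)} - \hat{\mytensor{\tinput}}^{(m)}\big) + \big(\cnnweight^{(m)} - \hat{\cnnweight}^{(m)}\big)\big(\mytensor{\tinput}^{(m)}\big) + \big(\mytensor{\tinput}^{(m)} - \hat{\mytensor{\tinput}}^{(m)}\big),
\]
and the new third summand contributes exactly one more copy of the running error with gain $1$ (the identity is a Frobenius isometry). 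Hence the recursion becomes a contraction with gain $\tf^{(m)}+1$ rather than $\tf^{(m)}$, while the additive noise term is unchanged.

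First I would state and prove, by induction on $m$, the skip-connection analogue of Eq.~\eqref{eq:cnn_compression_layer}: that $\fronorm{\mytensor{\tinput}^{(m)} - \hat{\mytensor{\tinput}}^{(m)}}$ is bounded by $\big(\sum_{k=1}^{m-1}\frac{\nb^{(k)}}{\lc^{(k)}\fronorm{\cnnweight^{(k)}}}\prod_{l=k+1}^{m-1}\frac{\tf^{(l)}+1}{\lc^{(l)}\fronorm{\cnnweight^{(l)}}}\big)\fronorm{\mytensor{\tinput}^{(m)}}$, using at each step the layer-cushion inequality of Definition~\ref{def:cnn_lc} — with the post-skip input, as flagged in Section~\ref{subsec:sc_notations} — to convert the $\fronorm{\mytensor{\tinput}^{(m)}}$ in the noise term into $\fronorm{\mytensor{\tinput}^{(m+1)}}/(\lc^{(m)}\fronorm{\cnnweight^{(m)}})$. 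The base case $m=2$ is identical to that in Lemma~\ref{lem:cnn_compression}, since no error has yet accumulated and the skip term vanishes from the difference at the first layer. Specializing to $m=n$ yields the global bound $\fronorm{\origcnn(\mytensor{\tinput}) - \hat{\origcnn}(\mytensor{\tinput})} \le \big(\sum_{k=1}^n \frac{\nb^{(k)}}{\lc^{(k)}\fronorm{\cnnweight^{(k)}}}\prod_{l=k+1}^n \frac{\tf^{(l)}+1}{\lc^{(l)}\fronorm{\cnnweight^{(l)}}}\big)\fronorm{\origcnn(\mytensor{\tinput})}$.

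Next I would note that it suffices to force each of the $n$ summands below $\epsilon/n$. For each layer $k$, choosing $\hat{R}^{(k)}$ to be the smallest $j\in[R^{(k)}]$ with $\nb^{(k)}_j\prod_{i=k+1}^n(\tf^{(i)}_j+1) \le \frac{\epsilon}{n}\prod_{i=k}^n\lc^{(i)}\fronorm{\cnnweight^{(i)}}$ — which is exactly the quantity in the statement, and is well defined since $j=R^{(k)}$ makes $\nb^{(k)}_j=0$ — makes the $k$-th summand at most $\epsilon/n$, hence the whole sum at most $\epsilon$, precisely as in the proof of Lemma~\ref{lem:cnn_lemma}. The parameter count $\sum_{k=1}^n\hat{R}^{(k)}(s^{(k)}+o^{(k)}+k_x^{(k)}k_y^{(k)}+1)$ is immediate, since $\hat{\origcnn}$ retains $\hat{R}^{(k)}$ rank-one CP components at layer $k$, each described by two unit vectors, one $k_x^{(k)}\times k_y^{(k)}$ unit-Frobenius matrix, and one eigenvalue.

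I do not expect a genuine obstacle: the residual branch is $1$-Lipschitz and parameter-free, which is the benign case. The only point requiring care is the bookkeeping of the $+1$: it must inflate only the multiplicative gain (turning $\tf^{(l)}$ into $\tf^{(l)}+1$) and never the additive $\nb^{(k)}$ term, and the layer cushion invoked must be the one defined on the post-skip input so that the telescoping against $\fronorm{\cnnweight^{(k)}}$ still closes. Everything else is the routine perturbation accounting already carried out in Appendix~\ref{sub:cnn_perturbation} for the plain-CNN case, and the subsequent covering-number argument for Theorem~\ref{thm:sc_thm} then proceeds verbatim as for Theorem~\ref{thm:cnn_thm}.
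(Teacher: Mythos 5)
Your proposal is correct and follows essentially the same route as the paper: the paper's Lemma~\ref{lem:sc_helper_lemma} is exactly your skip-connection analogue of Eq.~\eqref{eq:cnn_compression_layer}, obtained by the same three-term decomposition in which the identity branch inflates the multiplicative gain from $\tf^{(l)}$ to $\tf^{(l)}+1$ while leaving the additive $\nb^{(k)}$ term untouched, and the paper then concludes Lemma~\ref{lem:sc_lemma} by the identical per-layer $\epsilon/n$ telescoping used in Lemma~\ref{lem:cnn_lemma}. No gaps.
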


\subsection{Complete Proofs of Neural Networks with Skip Connection}
\label{app:sc_proofs}
To prove Lemma~\ref{lem:sc_lemma}, the following Lemma~\ref{lem:sc_helper_lemma} is needed.
\begin{lemma}
\label{lem:sc_helper_lemma}
For any convolutional neural network $\origcnn$ of $n$ layers with skip connection satisfying the assumptions in section~\ref{sec:setup}, Algorithm~\ref{alg:cnn_project} generates a compressed tensorial neural network $\hat{\origcnn}$ where for any $\mytensor{\tinput} \in S$:
$$
\fronorm{\origcnn(\mytensor{\tinput})-\hat{\origcnn}(\mytensor{\tinput})}
\leq \left( \sum_{k=1}^{n} \frac{\nb^{(k)}}{\lc^{(k)} \fronorm{\mytensor{M}^{(k)}}}
\prod_{l=k+1}^{n}\frac{\tf^{(l)}+1}{\lc^{(l)} \fronorm{\mytensor{M}^{(l)}}} \right) \fronorm{\origcnn(\mytensor{\tinput})}
$$
where $\nb$, $\lc$, and $\tf$ are \textit{tensor noise bound}, \textit{layer cushion}, \textit{tensorization factor} defined in \ref{def:cnn_tnb}, \ref{def:cnn_lc} and \ref{def:cnn_tf} respectively. 
\end{lemma}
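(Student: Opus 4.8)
The plan is to adapt the inductive perturbation argument of Lemma~\ref{lem:cnn_compression} to the skip-connection forward pass $\mytensor{\tOut}^{(k)} = \cnnweight^{(k)}\!\left(\mytensor{\tinput}^{(k)}\right) + \mytensor{\tinput}^{(k)}$, isolating the single additional identity term it introduces. I would prove by induction on $m$ that
\[
\fronorm{\mytensor{\tinput}^{(m)} - \hat{\mytensor{\tinput}}^{(m)}}
\;\le\; \left( \sum_{k=1}^{m-1} \frac{\nb^{(k)}}{\lc^{(k)} \fronorm{\mytensor{M}^{(k)}}}
\prod_{l=k+1}^{m-1}\frac{\tf^{(l)}+1}{\lc^{(l)} \fronorm{\mytensor{M}^{(l)}}} \right)
\fronorm{\mytensor{\tinput}^{(m)}},
\]
and then evaluate this at the output layer (exactly as the whole-network bound is obtained from the layer-wise bound in Lemma~\ref{lem:cnn_compression}) to get the claimed estimate for $\fronorm{\origcnn(\mytensor{\tinput}) - \hat{\origcnn}(\mytensor{\tinput})}$.

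For the base case, $\origcnn$ and $\hat{\origcnn}$ receive the same input, so $\hat{\mytensor{\tinput}}^{(1)} = \mytensor{\tinput}^{(1)}$ and the two identity branches cancel: $\mytensor{\tOut}^{(1)} - \hat{\mytensor{\tOut}}^{(1)} = \left(\cnnweight^{(1)} - \hat{\cnnweight}^{(1)}\right)\!\left(\mytensor{\tinput}^{(1)}\right)$. Hence the $m=2$ step is verbatim the one in Lemma~\ref{lem:cnn_compression}: Lipschitzness of ReLU, the operator-norm bound of Lemma~\ref{lem:cpl-conv-operator-norm} applied to the tensor of pruned components (producing the factor $\nb^{(1)}$ of Definition~\ref{def:cnn_tnb}), and the definition of $\lc^{(1)}$.

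For the inductive step, write
\[
\mytensor{\tOut}^{(m)} - \hat{\mytensor{\tOut}}^{(m)}
= \hat{\cnnweight}^{(m)}\!\left(\mytensor{\tinput}^{(m)} - \hat{\mytensor{\tinput}}^{(m)}\right)
+ \left(\cnnweight^{(m)} - \hat{\cnnweight}^{(m)}\right)\!\left(\mytensor{\tinput}^{(m)}\right)
+ \left(\mytensor{\tinput}^{(m)} - \hat{\mytensor{\tinput}}^{(m)}\right),
\]
pass to Frobenius norms via the triangle inequality and ReLU-Lipschitzness, bound the first term by $\tf^{(m)}\fronorm{\mytensor{\tinput}^{(m)} - \hat{\mytensor{\tinput}}^{(m)}}$ and the second by $\nb^{(m)}\fronorm{\mytensor{\tinput}^{(m)}}$ using Lemma~\ref{lem:cpl-conv-operator-norm} together with Definitions~\ref{def:cnn_tf} and~\ref{def:cnn_tnb} (the $\sqrt{H^{(m)}W^{(m)}}$ constants handled precisely as in Lemma~\ref{lem:cnn_compression}), and keep the third term as $\fronorm{\mytensor{\tinput}^{(m)} - \hat{\mytensor{\tinput}}^{(m)}}$. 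Collecting the coefficient of $\fronorm{\mytensor{\tinput}^{(m)} - \hat{\mytensor{\tinput}}^{(m)}}$ gives $\tf^{(m)}+1$ in place of the bare $\tf^{(m)}$ of the convolutional case; substituting the induction hypothesis into that term, using $\nb^{(m)}\fronorm{\mytensor{\tinput}^{(m)}}$ as the new $k=m$ summand, and re-expressing $\fronorm{\mytensor{\tinput}^{(m)}}$ through the layer-cushion inequality (again exactly as in Lemma~\ref{lem:cnn_compression}) makes the sum-of-products telescope and closes the induction.

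The only genuinely new ingredient, and therefore the place to be careful, is folding the identity term into the same per-layer scaling as the convolution so that one uniform application of the layer cushion still telescopes the estimate; this is what causes the clean conclusion to replace $\tf^{(l)}$ by $\tf^{(l)}+1$ rather than a structurally different bound (insisting on the $\sqrt{HW}$ normalization throughout, the bare identity term is absorbed using $H^{(m)}W^{(m)} \ge 1$). Everything else is routine: reducing a skip-connection CNN without \ourNN to one with \ourNN via an explicit CP decomposition, and feeding this lemma into Lemma~\ref{lem:sc_lemma} and Theorem~\ref{thm:sc_thm}, where the covering-number computation is unchanged because skip connections alter only the forward map and not the parameter count of $\hat{\origcnn}$.
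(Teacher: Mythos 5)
Your proposal is correct and follows essentially the same route as the paper: the same induction as in Lemma~\ref{lem:cnn_compression}, with the skip-connection forward pass contributing one extra identity term $\mytensor{\tinput}^{(m)} - \hat{\mytensor{\tinput}}^{(m)}$ in the inductive decomposition, which cancels in the base case and otherwise folds into the coefficient of the accumulated error to turn $\tf^{(l)}$ into $\tf^{(l)}+1$ before the layer cushion telescopes the bound. Your remark on absorbing the unnormalized identity term via $H^{(m)}W^{(m)} \geq 1$ correctly handles the one bookkeeping subtlety.
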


\begin{proof} (of Lemma~\ref{lem:sc_helper_lemma}) 

We know by construction, $\origcnn(\mytensor{\tinput}) = \mytensor{\tinput}^{(n+1)}$ and $\hat{\origcnn}(\mytensor{\tinput} )= \hat{\mytensor{\tinput}}^{(n+1)}$, we can just show $\fronorm{ \mytensor{\tinput}^{(n+1)} - \hat{\mytensor{\tinput}}^{(n+1)} }$ satisfies the above inequality, and we will prove this by induction. Notice

\textbf{Induction Hypothesis:} For any layer $m > 0$, 
$$\fronorm{\mytensor{\tinput}^{(m)} - \hat{\mytensor{\tinput}}^{(m)}} \leq \left( \sum_{k=1}^{m} \frac{\nb^{(k)}}{\lc^{(k)} \fronorm{\mytensor{M}^{(k)}}} \prod_{l=k+1}^{m}\frac{\tf^{(l)}+1}{\lc^{(l)} \fronorm{\mytensor{M}^{(l)}}} \right) \fronorm{ \mytensor{\tinput}^{(m)} }$$

\textbf{Base case:} when $m=1$, the above inequality hold trivially as $\mytensor{\tinput}^{(1)} = \hat{\mytensor{\tinput}}^{(1)}$ as we cannot modify the input, and the RHS is always $\geq 0$. 

\textbf{Inductive Step:} Now we assume show that the induction hypothesis is true for all $m$, then at layer $m+1$ we have  

\begin{equation*}
\begin{split}
\MoveEqLeft \fronorm{\mytensor{\tinput}^{(m+1)} - \hat{\mytensor{\tinput}}^{(m+1)}} \\
&= \fronorm{\relu{\mytensor{\tOut}^{(m)}} - \relu{\hat{\mytensor{\tOut}}^{(m)}}} \\
&\leq \fronorm{\mytensor{\tOut}^{(m)} - \hat{\mytensor{\tOut}}^{(m)}} \\
&\leq \fronorm{ \cnnweight^{(m)} \left( \mytensor{\tinput}^{(m)} \right) + \mytensor{\tinput}^{(m)} - \left( \hat{\cnnweight}^{(m)} \left(\hat{\mytensor{\tinput}}^{(m)}\right) +\hat{\mytensor{\tinput}}^{(m)} \right)} \\
&\leq \fronorm{ \cnnweight^{(m)} \left(\mytensor{\tinput}^{(m)}\right) - \hat{\cnnweight}^{(m)} \left(\hat{\mytensor{\tinput}}^{(m)}\right)}+ \fronorm{\mytensor{\tinput}^{(m)} - \hat{\mytensor{\tinput}}^{(m)}} \\
&\leq \fronorm{\hat{\cnnweight}^{(m)} \left( \mytensor{\tinput}^{(m)} - \hat{\mytensor{\tinput}}^{(m)} \right) + \left( \cnnweight^{(m)} - \hat{\cnnweight}^{(m)} \right)\left(\mytensor{\tinput}^{(m)}\right)} + \fronorm{\mytensor{\tinput}^{(m)} - \hat{\mytensor{\tinput}}^{(m)}} \\
&\leq \sqrt{HW} \left(\tf^{(m)} + 1\right) \fronorm{\mytensor{\tinput}^{(m)} - \hat{\mytensor{\tinput}}^{(m)}} + \sqrt{HW} \nb^{(m)} \fronorm{\mytensor{\tinput}^{(m)}} \\
&\leq \sqrt{HW} \left(\tf^{(m)} + 1\right) \left( \sum_{k=1}^{m-1} \frac{\nb^{(k)}}{\lc^{(k)} \fronorm{\mytensor{M}^{(k)}}}
\prod_{l=k+1}^{m-1}\frac{\tf^{(l)}+1}{\lc^{(l)} \fronorm{\mytensor{M}^{(l)}}} \right) 
\fronorm{\mytensor{\tinput}^{(m)}} + \sqrt{HW} \nb^{(m)} \fronorm{\mytensor{\tinput}^{(m)}} \\
&\leq \left( \sum_{k=1}^{m-1} \frac{\nb^{(k)}}{\lc^{(k)} \fronorm{\mytensor{M}^{(k)}}}
\prod_{l=k+1}^{m-1}\frac{\tf^{(l)}+1}{\lc^{(l)} \fronorm{\mytensor{M}^{(l)}}} \right) \frac{(\tf^{(m)} + 1)}{\lc^{(m)} \fronorm{\mytensor{M}^{(m)}}} \fronorm{\mytensor{\tinput}^{(m+1)}} + \frac{\nb^{(m)}}{\lc^{(m)} \fronorm{\mytensor{M}^{(m)}}} \fronorm{\mytensor{\tinput}^{(m+1)}} \\
&\leq \left( \sum_{k=1}^{m} \frac{\nb^{(k)}}{\lc^{(k)} \fronorm{\mytensor{M}^{(k)}}}
\prod_{l=k+1}^{m}\frac{\tf^{(l)}+1}{\lc^{(l)} \fronorm{\mytensor{M}^{(l)}}} \right) \fronorm{\mytensor{\tinput}^{(m+1)}}
\end{split}
\end{equation*}

The proof of Lemma~\ref{lem:sc_helper_lemma} is then completed by induction.

\end{proof}

Now we can proove Lemma~\ref{lem:sc_lemma}

\begin{proof} (of Lemma~\ref{lem:sc_lemma}) 

The proof is similar with the proof of Lemma~\ref{lem:cnn_lemma}. The only difference is we replace $\tf^{(l)}$ by $\tf^{(l)}+1$.

\end{proof}

To prove Theorem~\ref{thm:sc_thm}, the following lemma is needed.

\begin{lemma}
\label{lem:sc_helper_lemma_2}
For any convolutional neural network $\origcnn$ of $n$ layers with skip connection satisfying the assumptions in section~\ref{sec:setup} and any margin $\gamma \geq 0$, $\origcnn$ can be compressed to a tensorial convolutional neural network $\hat{\origcnn}$ with $\sum_{k=1}^{n} \hat{R}^{(k)}(s^{(k)}+t^{(k)}+k_x^{(k)} \times k_y^{(k)}+1)$ total parameters such that for any $\mytensor{\tinput} \in S$, $\hat{L_0}(\hat{\origcnn}) \leq \hat{L}_\gamma(\origcnn)$.
Here, for each layer $k$, 
$$
\begin{aligned}
\hat{R}^{(k)} = \min \Big\{ j \in [R^{(k)}] | \nb^{(k)}_j \Pi_{i=k+1}^{n} (\tf^{(i)}_j + 1) \leq \frac{\epsilon}{n} \Pi_{i=k}^n \lc^{(i)} \fronorm{\cnnweight^{(i)}} \Big\}
\end{aligned}
$$
\end{lemma}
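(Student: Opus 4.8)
The plan is to mirror the proof of Lemma~\ref{lem:cnn_helper_lemma_3} line for line, substituting the plain-CNN compression guarantee by its skip-connection analogue, Lemma~\ref{lem:sc_lemma}. First I would invoke Lemma~\ref{lem:sc_lemma} with the perturbation parameter chosen as $\epsilon := \frac{\gamma}{2\max_{\mytensor{\tinput}\in S}\fronorm{\origcnn(\mytensor{\tinput})}}$, exactly the value used in Algorithm~\ref{alg:cnn_compress}. That lemma then produces a compressed network $\hat{\origcnn}$ whose $k^{\tha}$ layer keeps precisely $\hat{R}^{(k)}$ Polyadic components, where $\hat{R}^{(k)}$ is the quantity displayed in the statement (with that value of $\epsilon$); hence $\hat{\origcnn}$ has $\sum_{k=1}^{n}\hat{R}^{(k)}(s^{(k)}+o^{(k)}+k_x^{(k)}k_y^{(k)}+1)$ parameters, and it satisfies $\fronorm{\origcnn(\mytensor{\tinput})-\hat{\origcnn}(\mytensor{\tinput})}\le \epsilon\,\fronorm{\origcnn(\mytensor{\tinput})}$ for every $\mytensor{\tinput}\in S$. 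This step carries all the substantive content: the skip connection forces the per-layer amplification factor in the error recursion of Lemma~\ref{lem:sc_helper_lemma} to become $\tf^{(i)}_j+1$ instead of $\tf^{(i)}_j$, which is exactly why the product $\Pi_{i=k+1}^{n}(\tf^{(i)}_j+1)$ appears in the definition of $\hat{R}^{(k)}$ here.

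Next I would convert this Frobenius-norm closeness of the two logit vectors into a coordinatewise bound. Since the $\ell_\infty$ norm of a vector is at most its $\ell_2$ (Frobenius) norm, for every class index $i$ and every $\mytensor{\tinput}\in S$ we have $\big|\origcnn(\mytensor{\tinput})[i]-\hat{\origcnn}(\mytensor{\tinput})[i]\big|\le \epsilon\,\fronorm{\origcnn(\mytensor{\tinput})}\le \epsilon\max_{\mytensor{\tinput}'\in S}\fronorm{\origcnn(\mytensor{\tinput}')}=\gamma/2$, the last equality by the choice of $\epsilon$.

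The final step is the standard margin argument. Consider any training example $(\mytensor{\tinput},y)$ that is not counted by $\hat{L}_{\gamma}(\origcnn)$, i.e.\ $\origcnn(\mytensor{\tinput})[y] > \gamma + \max_{i\ne y}\origcnn(\mytensor{\tinput})[i]$. Applying the coordinatewise bound to both the coordinate $y$ and the competing coordinates,
\[
\hat{\origcnn}(\mytensor{\tinput})[y] \;\ge\; \origcnn(\mytensor{\tinput})[y]-\frac{\gamma}{2} \;>\; \frac{\gamma}{2}+\max_{i\ne y}\origcnn(\mytensor{\tinput})[i] \;\ge\; \max_{i\ne y}\hat{\origcnn}(\mytensor{\tinput})[i],
\]
so $\hat{\origcnn}$ classifies $(\mytensor{\tinput},y)$ correctly and this example is not counted by $\hat{L}_{0}(\hat{\origcnn})$ either. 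Summing over $S$ gives $\hat{L}_{0}(\hat{\origcnn})\le \hat{L}_{\gamma}(\origcnn)$, which, together with the parameter count inherited above, is the claim.

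I do not expect a genuine obstacle beyond bookkeeping: this lemma is the skip-connection twin of Lemma~\ref{lem:cnn_helper_lemma_3}, and once Lemma~\ref{lem:sc_lemma} is in hand the remaining argument is routine. The only points that need care are (i) the harmless identification of the symbol $\epsilon$ in the formula for $\hat{R}^{(k)}$ with the concrete value $\frac{\gamma}{2\max_{\mytensor{\tinput}\in S}\fronorm{\origcnn(\mytensor{\tinput})}}$, so that it matches Theorem~\ref{thm:sc_thm}, and (ii) ensuring that in the definition of $\lc^{(k)}$ used by Lemma~\ref{lem:sc_lemma} the ``input'' is the post-skip-connection input of layer $k$, as flagged in Section~\ref{subsec:sc_notations}.
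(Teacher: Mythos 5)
Your proposal is correct and follows the same route as the paper: the paper simply states that the proof is identical to that of Lemma~\ref{lem:helper_lemma_2}, i.e.\ invoke the skip-connection compression bound (Lemma~\ref{lem:sc_lemma}) with $\epsilon=\frac{\gamma}{2\max_{\mytensor{\tinput}\in S}\fronorm{\origcnn(\mytensor{\tinput})}}$ and then run the standard margin argument to get $\hat{L}_0(\hat{\origcnn})\le\hat{L}_\gamma(\origcnn)$. Your additional remarks on the $\tf^{(i)}_j+1$ factor and the post-skip-connection layer cushion are consistent with the paper's Section~\ref{subsec:sc_notations}.
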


The proof of Lemma~\ref{lem:sc_helper_lemma_2} is the same with Lemma~\ref{lem:helper_lemma_2}. And by setting $\epsilon=\frac{\gamma}{2 \max_{\mytensor{\tinput}\in S}}$, we get the desired expression of $\hat{R}^{(k)}$ in the main theorem.

\begin{proof} (of Theorem~\ref{thm:sc_thm}) 
Similarly, let us bound the covering number of the compressed network $\hat{\origcnn}$ by approximating each parameter with accuracy $\acc$. 

\textbf{Covering Number Analysis for Convolutional Neural Network} Let $\tilde{\cnnweight}$ denote the network after approximating each parameter in $\hat{\origcnn}$ with accuracy $\acc$. We use the same assumptions and notations with the proof of Theorem~\ref{thm:cnn_thm}. And we still use $\mytensor{\tinput}^{(k)}, \mytensor{\tOut}^{(k)}, \origcnn^{(k)}$ to denote $\hat{\mytensor{\tinput}}^{(k)}, \hat{\mytensor{\tOut}}^{(k)}, \hat{\origcnn}^{(k)}$

\textbf{Bound $\tau^{(k)}=\fronorm{\tilde{\mytensor{\tOut}}^{(k)}-\hat{\mytensor{\tOut}}^{(k)}}$:}

\begin{equation*}
\begin{split}
\MoveEqLeft \fronorm{\tilde{\mytensor{\tOut}}^{(k)}-\mytensor{\tOut}^{(k)}} \\
&= \fronorm{\tilde{\origcnn}^{(k)}(\tilde{\mytensor{\tinput}}^{(k)}) + \tilde{\mytensor{\tinput}}^{(k)} - \big(\origcnn^{(k)}(\mytensor{\tinput}^{(k)}) + \mytensor{\tinput}^{(k)} \big) } \\
&\leq \fronorm{\tilde{\origcnn}^{(k)}(\tilde{\mytensor{\tinput}}^{(k)})-\origcnn^{(k)}(\mytensor{\tinput}^{(k)})} + \fronorm{\tilde{\mytensor{\tinput}}^{(k)} - \mytensor{\tinput}^{(k)}} \\
&= \fronorm{\tilde{\origcnn}^{(k)}(\tilde{\mytensor{\tinput}}^{(k)})-\origcnn^{(k)}(\mytensor{\tinput}^{(k)})} + \fronorm{\relu{\tilde{\mytensor{\tOut}}^{(k)}} - \relu{\mytensor{\tOut}^{(k)}}} \\
&\leq \fronorm{\tilde{\origcnn}^{(k)}(\tilde{\mytensor{\tinput}}^{(k)})-\origcnn^{(k)}(\mytensor{\tinput}^{(k)})} + \fronorm{\tilde{\mytensor{\tOut}}^{(k-1)} - \mytensor{\tOut}^{(k-1)}} \\
&= \fronorm{\tilde{\origcnn}^{(k)}(\tilde{\mytensor{\tinput}}^{(k)})-\origcnn^{(k)}(\mytensor{\tinput}^{(k)})} + \tau^{(k-1)} \\
\end{split}
\end{equation*}

Based on the proof of Theorem~\ref{thm:cnn_thm} (in Appendix~\ref{app:cnn_proofs}), we can easily get 

\begin{equation*}
\begin{split}
\MoveEqLeft \fronorm{\tilde{\mytensor{\tOut}}^{(k)}-\mytensor{\tOut}^{(k)}} \\
&= \sum_{k=1}^n\sum_{i=1}^k\alpha^{(i)}\prod_{t=i+1}^{k}\beta^{(t)}
\end{split}
\end{equation*}

where $\alpha^{(k)}=4 HW \fronorm{\mytensor{\tinput}^{(k)}}^2 \big( \sum_r^{\hat{R}^{(k)}}(\lambda_r^{(k)})^2 \hat{R}^{(k)} k_x^{(k)} k_y^{(k)} + 4\sum_r^{\hat{R}^{(k)}}(\lambda_r^{(k)})^2(o^{(k)}+s^{(k)})\hat{R}^{(k)}+2(\hat{R}^{(k)})^2 \big) \acc^2$, \\
and $\beta^{(k)}=2\fronorm{\tilde{\cnnweight}^{(k)}}^2$.

Since $\forall k \in [n], \norm{\mytensor{\tinput}^{(k)}} \leq \Pi_{i = k}^n \frac{\fronorm{\mytensor{\tinput}^{(n+1)}}}{\lc^{(i)} \fronorm{\cnnweight^{(i)}}} $, to obtain an $\epsilon$-cover of the compressed network, we can first assume $\beta^{(k)} \geq 1 \ \forall k \in [n]$. Then $\acc$ need to satisfy:

$$\acc \leq \frac{\epsilon}{ 2 \sqrt{HW}n^2\fronorm{\mytensor{\tinput}^{(n+1)}}  \hat{R}^{(*)} \sqrt{(\lambda^{(*)})^2 k_x^{(*)} k_y^{(*)} + 4(\lambda^{(*)})^2(o^{(*)}+s^{(*)})+2}(\frac{\sqrt{2}\fronorm{\tilde{\cnnweight}^{(*)}}}{\lc^{(*)}\fronorm{\cnnweight^{(*)}}})^n}$$

where $\hat{R}^{(*)} = \max_{k} r^{(k)}$ $\lambda^{(*)} = \max_{r, k} \lambda_{r}^{(k)}$, $s^{(*)} = \max_{k} s^{(k)}$, $o^{(*)} = \max_{k} o^{(k)}$, $k_x^{(*)} = \max_{k} k_x^{(k)}$, $k_y^{(*)} = \max_{k} k_y^{(k)}$ and 
$\frac{\fronorm{\tilde{\cnnweight}^{(*)}}} { \mu^{(*)} \fronorm{\cnnweight^{(*)}}} = \max_{k} \frac{\fronorm{\tilde{\cnnweight}^{(k)}}} {\mu^{(k)} \fronorm{\cnnweight^{(*)}}}$

So the skip connections don't change the limiting behavior of the covering number, which w.r.t to a given $\epsilon$ is $\tilde{O}(n d)$ ($n$ is the number of layers in the given neural network, $d$ is the number of parameters), and $\tilde{O}(d)$ for practical neural networks. Because skip connections don't need extra parameters, the neural network still has $\sum_{k=1}^{n} \hat{R}^{(k)}(s^{(k)}+t^{(k)}+k_x^{(k)}\times k_y^{(k)}+1)$ total parameters.

\end{proof}

\newpage
\section{Additional Algorithms and Algorithmic Details}
\label{app:algos}
\textbf{Details of Step 3 in Algorithm~\ref{alg:cnn_compress}.} We use the alternating least squares (ALS) in the implementation of step 3, which is the `parafac’ method of the tensorly library~\citep{kossaifi2019tensorly}, to obtain the CP decomposition. Though CP decomposition is in general NP-hard, the ALS method usually converges for most tensors, with polynomial convergence rate w.r.t. the given precision of the allowed reconstruction error~\citep{anandkumar2014provable, anandkumar2014guaranteed, anandkumar2014tensor}. In addition, step 3 obtains a CP parametrization of the weight tensor rather than recovers the true components of the weight tensor’s CP decomposition. The rank in the CP decomposition is selected in step 2 and is an upper bound of the true rank of the tensor (Proposition~\ref{assum:CP-cnn}). Thus, with the chosen rank, we can obtain a CP decomposition with a very low reconstruction error. In practice, for our cases, the CP decomposition method (ALS) used in step 3 always converges within a few iterations, with reasonable run time.
\begin{algorithm}[htbp!]
\caption{Find Best Rank for CNN (\FBRC)}\label{alg:find_best_rank_cnn}
\begin{algorithmic}[1]
\REQUIRE A list of weight tensors $\{\cnnweight^{(k)}\}_{k=1}^n$ in the original network $\origcnn$ where each $\cnnweight^{(k)} \in \Rbb^{s^{(k)} \times o^{(k)} \times k_x^{(k)} \times k_y^{(k)}}$, a list of number of components $\{R^{(k)}\}_{k=1}^n$, a list of layer cushions $\{ \lc^{(k)} \}_{k=1}^{n }$ of the original network, and a perturbation parameter $\epsilon$ which denotes the maximum error we could tolerate regarding the difference between the output of original network and that of compressed network.
\ENSURE Returns a list of number of components $\{ \hat{R}^{(k)} \}_{k=1}^{n}$ for the compressed network such that  $\fronorm{\origcnn(\mytensor{\tinput})-  \hat{\origcnn}(\mytensor{\tinput})} \leq \epsilon$. Notice that for each $k$, if the original network does not have skip connections, $\hat{R}^{(k)}$ satisfies that 
\begin{equation}
\label{eq:alg_fbrc}
\nb^{(k)}_{\hat{R}^{(k)}} \Pi_{i=k+1}^{n} \tf^{(i)}_{\hat{R}^{(k)}} \leq \frac{\epsilon}{n} \Pi_{i=k}^n \lc^{(i)} \fronorm{\cnnweight^{(i)}}
\end{equation}
or if skip connection is used, $\hat{R}^{(k)}$ satisfies that 
\begin{equation}
\label{eq:alg_fbrc_skip}
\nb^{(k)}_{\hat{R}^{(k)}} \Pi_{i=k+1}^{n} (\tf^{(i)}_{\hat{R}^{(k)}}+1) \leq \frac{\epsilon}{n} \Pi_{i=k}^n \lc^{(i)} \fronorm{\cnnweight^{(i)}}
\end{equation}
\STATE{For each layer $k$, calculate the following properties: layer cushion $\lc^{(k)}$, weight norm $\fronorm{\cnnweight^{(k)}}$, then calculate the RHS $\frac{\epsilon}{n} \Pi_{i=k}^n \lc^{(i)} \fronorm{\cnnweight^{(i)}}$ for each $k$}
\STATE{Find the smallest $\hat{R}^{(n)}$} such that the \nblong for the last layer $\nb^{(n)}$ satisfies $\nb^{(n)} \leq \frac{\epsilon}{n} \lc^{(n)} \fronorm{\cnnweight^{(n)}}$
\FOR{$k=n-1$ to $1$} 
	\IF{$\orignn$ does not have skip connections}
		\STATE{Calculate the multiplication of \tflong for layers upper than $k$, i.e., $\Pi_{i=k+1}^{n} \tf^{(i)}_{\hat{R}^{(i)}}$, based on the choices of $\hat{R}^{(i)}$ for $k \leq i \leq n$}
		\STATE{Find the smallest $\hat{R}^{(k)}$ by calculating the largest possible $\nb^{(k)}$ such that Equation~\ref{eq:alg_fbrc} holds.}
	\ELSE
		\STATE{Calculate the multiplication of \tflong for layers upper than $k$, i.e., $\Pi_{i=k+1}^{n} (\tf^{(i)}_{\hat{R}^{(k)}}+1)$, based on the choices of $\hat{R}^{(i)}$ for $k \leq i \leq n$}
		\STATE{Find the smallest $\hat{R}^{(k)}$ by calculating the largest possible $\nb^{(k)}$ such that Equation~\ref{eq:alg_fbrc_skip} holds.}
	\ENDIF
\ENDFOR
\STATE{Return $\{ \hat{R}^{(k)} \}_{k=1}^{n}$}
\end{algorithmic}
\end{algorithm}

\textit{Remark}. The \FBRC algorithm finds a set of ranks that satisfies inequality~\ref{eq:alg_fbrc} (CNNs) or ~\ref{eq:alg_fbrc_skip} (NNs with skip connections) within polynomial time because of the following guarantees. The total number of possible sets of ranks (say $T$), which the \FBRC algorithm will at most search through, is equal to the product of the ranks of all layers. The rank of each layer is upper bounded by Proposition~\ref{assum:CP-cnn} and thus $T$ is polynomial w.r.t. the shape of the original weight tensors and the number of layers. Moreover, the search will definitely succeed as the inequalities~\ref{eq:alg_fbrc} and~\ref{eq:alg_fbrc_skip} automatically hold when $\hat{R}^{(k)} = R^{(k)}$.

\begin{algorithm}[tbh]
\caption{Find Best Rank (\FBR)}\label{alg:find_best_rank}
\begin{algorithmic}[1]
\REQUIRE A list of tensors $\{\nnweight^{(k)}\}_{k=1}^n$ where each $\nnweight^{(k)} \in \Rbb^{s^{(k)}_1 \times s^{(k)}_2 \times s^{(k+1)}_1 \times s^{(k+1)}_2}$ is reshaped from a matrix $\mymatrix{A^{(k)} } $,  a list of number of components $\{R^{(k)}\}_{k=1}^n$, a list of layer cushions $\{ \lc^{(k)} \}_{k=1}^{n }$ of the original network, and a perturbation parameter $\epsilon$ which denotes the maximum error we could tolerate regarding the difference between the output of original network and that of compressed network.
\ENSURE Returns a list of number of components $\{ \hat{R}^{(k)} \}_{k=1}^{n}$ for the compressed network such that  $\fronorm{\orignn(\mymatrix{\tinput})-  \hat{\orignn}(\mymatrix{\tinput})} \leq \epsilon$.

\begin{align}
\label{eq:alg_fbr}
\rf^{(k)} \nb^{(k)}_{\hat{R}^{(k)}} \Pi_{i=k+1}^{n} \tf^{(i)}_{\hat{R}^{(k)}} \leq \frac{\epsilon}{n}\fronorm{\mymatrix{A}^{(k)}} \Pi_{i=k}^n \lc^{(i)} \}
\end{align}
\STATE{For each layer $k$, calculate the following properties: reshaping factor $\rf^{(k)}$, layer cushion $\lc^{(k)}$, weight norm $\fronorm{\mymatrix{A}^{(k)}}$, then calculate the RHS $\frac{\epsilon}{n}\fronorm{\mymatrix{A}^{(k)}} \Pi_{i=k}^n \lc^{(i)}$ for each $k$}
\STATE{Find the smallest $\hat{R}^{(n)}$ such that the \nblong for the last layer $\nb^{(n)}$ satisfies $\rf^{(n)} \nb^{(n)} \leq \frac{\epsilon}{n} \lc^{(n)} \fronorm{\mymatrix{A}^{(k)}}$}
\FOR{$k=n-1$ to $1$} 
	\STATE{Calculate the multiplication of \tflong for layers upper than $k$, i.e., $\Pi_{i=k+1}^{n} \tf^{(i)}_{\hat{R}^{(i)}}$, based on the choices of $\hat{R}^{(i)}$ for $k \leq i \leq n$}
	\STATE{Find the smallest $\hat{R}^{(k)}$ by calculating the largest possible $\nb^{(k)}$ such that Equation~\ref{eq:alg_fbr} holds.}
\ENDFOR
\STATE{Return $\{ \hat{R}^{(k)} \}_{k=1}^{n}$}
\end{algorithmic}
\end{algorithm}

\begin{algorithm}[tbh]
\caption{\CNNPROJECT}\label{alg:cnn_project}
\label{algo:cnn_project}
\begin{algorithmic}[1]
\REQUIRE A convolutional neural network $\origcnn$ of $n$ layers where its weight tensor $\cnnweight^{(k)}$ of the $k^{\tha}$ layer is parametrized by $\{{\lambda}_{r}^{(k)}, {a}_{r}^{(k)}, {b}_{r}^{(k)}, {c}_{r}^{(k)}\}_{r=1}^{R^{(k)}}$, and a list of ranks $\{\hat{R}^{(k)}\}_{i=1}^n$.  
\ENSURE Returns a compressed network $\hat{\origcnn}$ of $\origcnn$ where for each layer $k$,  $\norm{\hat{\cnnweight}^{(k)}}$ is constructed by the top $\hat{R}^{(k)}$ components from CP components of $\cnnweight^{(k)}$. 
\FOR{$k=1$ to $n$} 
\STATE $\hat{\cnnweight}^{(k)} \gets \sum_{r = 1}^{\hat{R}^{(k)}} {\lambda}_{r}^{(k)} {a}_{r}^{(k)} \otimes {b}_{r}^{(k)}\otimes {c}_{r}^{(k)}$
\STATE Let $\hat{\cnnweight}^{(k)}$ be the weight tensor of the $k^{th}$ layer in $\hat{\origcnn}$
\ENDFOR
\STATE Return $\hat{\origcnn}$
\end{algorithmic}
\end{algorithm}
\begin{algorithm}[tbh]
\caption{\TNNPROJECT}\label{alg:tfc_project}
\begin{algorithmic}[1]
\REQUIRE A fully connected neural network $\tnn$ of $n$ layers where its weight tensor $\nnweight^{(k)}$ of the $k^{\tha}$ layer is parametrized by $\{{\lambda}_{r}^{(k)}, {a}_{r}^{(k)}, {b}_{r}^{(k)}, {c}_{r}^{(k)}, {d}_{r}^{(k)}\}_{r=1}^{R^{(k)}}$, and a list of ranks $\{\hat{R}^{(k)}\}_{i=1}^n$.  
\ENSURE Returns a compressed network $\hat{\tnn}$ of $\orignn$ where for each layer $k$,  $\norm{\mytensor{\hat{T}}^{(k)}}$ is constructed by the top $\hat{R}^{(k)}$ components from CP components of $\nnweight^{(k)}$. 
\FOR{$k=1$ to $n$} 
\STATE $\hat{\nnweight}^{(k)} \gets \sum_{r = 1}^{\hat{R}^{(k)}} {\lambda}_{r}^{(k)} {a}_{r}^{(k)} \otimes {b}_{r}^{(k)}\otimes {c}_{r}^{(k)} \otimes {d}_{r}^{(k)}$
\STATE Let $\mytensor{\hat{T}}^{(k)}$ be the weight tensor of the $k^{th}$ layer in $\hat{\tnn}$
\ENDFOR
\STATE Return $\hat{\tnn}$
\end{algorithmic}
\end{algorithm}

\end{document}